% 'oneside' removes blank page after chapter
% https://stackoverflow.com/questions/1358694/latex-remove-blank-page-after-a-part-or-chapter
\documentclass[twoside,openany,12pt]{book}
\usepackage{multicol}
\usepackage{graphicx}
\usepackage{rotating}
\usepackage{dblfloatfix}
\usepackage[ruled]{algorithm2e}

\usepackage[zerostyle=c,scaled=.94]{newtxtt}
\SetCommentSty{mycommfont}
\SetFuncSty{myprocname}

\usepackage{color}
\usepackage{amsthm}
\usepackage{amsmath}
\usepackage{bbm}
\usepackage{dsfont}
\usepackage{float}
\usepackage{bm}
\usepackage{amssymb}
\usepackage{upgreek}
\usepackage{multirow}
\usepackage{amsfonts}
\usepackage{here}
\usepackage[normalem]{ulem}
\usepackage{newclude}
\usepackage[percent]{overpic}

% Use enumitem

\usepackage{enumitem}
\setlist[itemize, 1]{label =\raisebox{-0.3\height}{\scalebox{1.75}{\textbullet}}} % bigger bullet

%https://tex.stackexchange.com/questions/19017/how-to-place-a-table-on-a-new-page-with-landscape-orientation-without-clearing-t
% landscape table
\usepackage{afterpage}
\usepackage{pdflscape}
\usepackage{hyperref}  % chapter-friendly references
\usepackage{subcaption}
\usepackage{url}
\usepackage[font=small, skip=5pt]{caption}
\usepackage{listings}
\usepackage{setspace}
\usepackage{booktabs}
\usepackage{changepage}
\usepackage{tabularx}
\usepackage{graphicx}
\usepackage{xcolor}
\usepackage{longtable}

\newcommand\blfootnote[1]{%
  \begingroup
  \renewcommand\thefootnote{}\footnote{#1}%
  \addtocounter{footnote}{-1}%
  \endgroup
}

\DeclareMathOperator*{\argmax}{argmax}
\DeclareMathOperator*{\argmin}{argmin}

\newcommand{\E}{\mathbb{E}}

\graphicspath{{figs/}}

\newtheorem{definition}{Definition}

\newtheorem{lemma}{Lemma}
\newcommand{\norm}[1]{\left\lVert#1\right\rVert}
\newcommand{\eg}{\emph{e.g.}}
\newcommand{\ie}{\emph{i.e.}}
\newcommand{\aka}{\emph{a.k.a.}}

% \usepackage{siunitx}
% The following packages can be found on http:\\www.ctan.org
%\usepackage{graphics} % for pdf, bitmapped graphics files
%\usepackage{epsfig} % for postscript graphics files
%\usepackage{mathptmx} % assumes new font selection scheme installed
\usepackage{times} % assumes new font selection scheme installed
\usepackage{amssymb}  % assumes amsmath package installed
\usepackage{gensymb}
\usepackage{tikz}
\DeclareMathAlphabet{\mathcal}{OMS}{cmsy}{m}{n}
\DeclareMathAlphabet\mathbfcal{OMS}{cmsy}{b}{n}

\newcommand{\mc}[1]{\mathcal{#1}}

% Some notations
\newcommand{\Sspace}{\mathcal{S}}
\newcommand{\Aspace}{\mathcal{A}}
\newcommand{\Ospace}{\mathcal{O}}
\newcommand{\Zspace}{\mathcal{Z}}
\newcommand{\MOVE}{\textsc{Move}}
\newcommand{\MOVEOP}{\textsc{MoveOp}}
\newcommand{\LOOK}{\textsc{Look}}
\newcommand{\DETECT}{\textsc{Find}}
\newcommand{\indicator}{\bm{1}}
\newcommand{\Val}{\textsc{Val}}
\newcommand{\PriorVal}{\textsc{PriorVal}}
\newcommand{\Norm}{\textsc{Norm}}
\newcommand{\Children}{\textsc{Ch}}
\newcommand{\abst}[1]{\hat{#1}}

\newcommand{\rmax}{R_{\text{max}}}
\newcommand{\rmin}{R_{\text{min}}}
\newcommand{\SO}{\text{SO}}

%% COS-POMDP notations
\newcommand{\mindeg}{\zeta_{\text{min}}}
\newcommand{\maxdeg}{\zeta_{\text{max}}}
\newcommand{\cosstate}{\xtarget,\srobot}
\newcommand{\fullstate}{\substack{x_1,\cdots,x_n,\\\xtarget,\srobot}}
\newcommand{\SFPOMDP}{\mc{S}_{\text{F}}}
\newcommand{\PrFPOMDP}{\Pr_{\text{F-POMDP}}}
\newcommand{\PrCOS}{\Pr_{\text{COS-POMDP}}}

\newcommand{\target}{\text{target}}

\newcommand{\zrobot}{z_{\text{robot}}}
\newcommand{\zobjects}{z_{\text{objects}}}
\newcommand{\srobot}{s_{\text{robot}}}
\newcommand{\srobotinit}{s_{\text{robot}}^{\text{init}}}
\newcommand{\starget}{s_{\text{target}}}

\newcommand{\xtarget}{x_{\text{target}}}
\newcommand{\btarget}{b_{\text{target}}}
\newcommand{\brobot}{b_{\text{robot}}}
\newcommand{\Srobot}{\mc{S}_{\text{robot}}}

\newcommand{\Dtarget}{D_{\text{target}}}
\newcommand{\ztarget}{z_{\text{target}}}

\newcommand{\Orobot}{O_{\text{robot}}}

\newcommand{\DECLARE}{\texttt{Done}}

\newcommand{\dsep}{d_{\text{sep}}}

%Sloop
\newcommand{\sprl}{\mathcal{R}}

\newcommand{\abs}[1]{\lvert#1\rvert}
\makeatletter
\newcommand*\bigcdot{\mathpalette\bigcdot@{.5}}
\newcommand*\bigcdot@[2]{\mathbin{\vcenter{\hbox{\scalebox{#2}{$\m@th#1\bullet$}}}}}
\makeatother

% % Table column color
% \usepackage{xcolor,colortbl}
% \definecolor{Gray}{gray}{0.85}
% \definecolor{LightCyan}{rgb}{0.88,1,1}
% \newcolumntype{a}{>{\columncolor{Gray}}c}
% \newcolumntype{b}{>{\columncolor{white}}c}
\definecolor{RoyalRed}{RGB}{157,16, 45}
\definecolor{deepblue}{RGB}{0, 0, 139}
\newcommand{\smcatg}[1]{\textcolor{deepblue}{\textbf{#1}}}
\newcommand{\sq}[1]{\textsuperscript{#1}}

% % Custom colors
% \definecolor{deepred}{rgb}{0.6,0,0}
% \definecolor{deepgreen}{rgb}{0,0.5,0}
% % Python style for highlighting
% \newcommand\pythonstyle{\lstset{
%     language=Python,
%     basicstyle=\ttm,
%     otherkeywords={self},             % Add keywords here
%     keywordstyle=\ttb\color{deepblue},
%     emph={MyClass,__init__},          % Custom highlighting
%     emphstyle=\ttb\color{deepred},    % Custom highlighting style
%     stringstyle=\color{deepgreen},
%     commentstyle=\ttm\color{olive},
%     %frame=tb,                         % Any extra options here
%     showstringspaces=false            %
% }}
% % Python environment
% \lstnewenvironment{python}[1][]
% {
% \pythonstyle
% \lstset{#1}
% }
% {}
% % Python for external files
% \newcommand\pythonexternal[2][]{{
% \pythonstyle
% \lstinputlisting[#1]{#2}}}
% % Python for inline
% \newcommand\pythoninline[1]{{\pythonstyle\lstinline!#1!}}
\usepackage{jair_phd, rawfonts}%theapa,

%\setlength{\parindent}{1em}
% \setlength{\parskip}{0.25em}

%%%%%%%% Preamble %%%%%%%%%%%%%
% Note: To have the behavior of Author et al.
% when the list of authors is longer than 3,
% use 'apaciteclassic' instead of 'natbibapa'
\usepackage[apaciteclassic]{apacite}
\usepackage{natbib}

% \doublespacing   %1.5 space
% \onehalfspacing

\usepackage[]{hyperref}  % MUST BE PLACED FIRST
\hypersetup{
    citecolor={blue!80!black}, %[rgb]{0.31, 0.21, 0.16}
    colorlinks=true,
    linkcolor=blue,
    urlcolor=blue
  }

% shorten the citations;
% \renewcommand*{\citet}[1]{\shortcitet{#1}}
\renewcommand*{\cite}[1]{\citep{#1}}
%% FONT
% \usepackage{times}
\usepackage{inconsolata}

%%%%%%%%%%%%% Margin enforcement %%%%%%%%
% comes from ieeeconf.cls
\sloppy\clubpenalty4000\widowpenalty4000\interlinepenalty500

%%%%%%%%%%%%% Date in title %%%%%%%%%%
\usepackage[mmddyyyy]{datetime}

%%%%%%%%%% TITLE %%%%%%%%%%%%%%%%%%%%%%%%
\newcommand{\thesistitle}{Generalized Object Search}

%%%%%%%%%% Heading %%%%%%%%%%%%%%%%%%%%%
\jairheading{\today}{}{}{}{}
\ShortHeadings{\thesistitle}{Kaiyu Zheng}
\firstpageno{1}

%%%%%%%%%%%% Chapter and section Heading %%%%%%%%%%%%%%%%%%%%
\usepackage{titlesec}
\titleformat{\chapter}[display]
   {\centering \normalsize \huge  \color{black}}
   {\vspace{1in}\LARGE \textsc{\chaptertitlename}\hspace{0.5ex} \thechapter}{15pt}{\bfseries\huge}
  % {\flushright \normalsize \color{RoyalRed} \MakeUppercase { \chaptertitlename \hspace{1 ex} }
  %   { \fontsize{60}{60}\selectfont \color{RoyalRed} \sffamily  \thechapter }}
  %   {10 pt}{\bfseries\huge}
\makeatletter
\newcommand\semiLarge{\@setfontsize\semiLarge{14pt}{14pt}}
\makeatother
\titleformat{\section}{\normalfont\Large\bfseries}{\thesection}{15pt}{\Large\bfseries}
\titleformat{\subsection}{\normalfont\large\bfseries}{\thesubsection}{10pt}{\large\bfseries}
\titleformat{\subsubsection}{\normalfont\normalsize\bfseries}{\thesubsubsection}{10pt}{\normalsize\bfseries}
\titleformat{\paragraph}{\normalfont\normalsize\bfseries}{\thesubsubsection}{10pt}{\normalsize\bfseries}
\titleformat{\subparagraph}{\normalfont\normalsize\bfseries}{\thesubsubsection}{10pt}{\normalsize\bfseries}

% heading
\usepackage{fancyhdr}
\fancyhead{} %clears header
\fancyhead[CO]{\hfill\nouppercase{\textsc{\leftmark}}} %puts chapter title on even page in lower-case italics
\fancyhead[CE]{\nouppercase{\textsc{\rightmark}}\hfill} %puts section title on odd page in lower-case italics
 %gets rid of line
% \renewcommand{\chaptermark}[1]{\markboth{#1}{}} %gets rid of chapter number
% \renewcommand{\sectionmark}[1]{\markright{#1}} %gets rid of section number

% Contents, List of Figures / Tables heading
\usepackage{tocloft}

%%%%%%%%%%%%%%%%% Table of Contents %%%%%%%%%%%%%%%%%%
\usepackage[nottoc]{tocbibind} %% add bibliography to table of contents

%%%%%%%%%%%%% Chapter Body %%%%%%%%%%%
\usepackage{lettrine}  % make first letter big

% paragraph spacing
\setlength{\parindent}{0.25in}
\setlength{\parskip}{0.075in}

%%%%%%%%%%%%%%%%%% references %%%%%%%%%%%%%%%%%%%%
\defcitealias{dls2021}{Roy*, Zheng*, et al., 2021}

%%%%%%%%%%%%%%%%%% Appendix: pomdp-py %%%%%%%%%%%%%%%%%%%%
% Custom colors
\usepackage{color}
\definecolor{deepblue}{rgb}{0,0,0.5}
\definecolor{deepred}{rgb}{0.6,0,0}
\definecolor{deepgreen}{rgb}{0,0.5,0}
% Python style for highlighting
\newcommand\pythonstyle{\lstset{
    language=Python,
    basicstyle=\ttm,
    otherkeywords={self},             % Add keywords here
    keywordstyle=\ttb\color{deepblue},
    emph={MyClass,__init__},          % Custom highlighting
    emphstyle=\ttb\color{deepred},    % Custom highlighting style
    stringstyle=\color{deepgreen},
    commentstyle=\ttm\color{olive},
    %frame=tb,                         % Any extra options here
    showstringspaces=false            %
}}
% Python environment
\lstnewenvironment{python}[1][]
{
\pythonstyle
\lstset{#1}
}
{}
% Python for external files

% Python for inline
\newcommand\pythoninline[1]{{\pythonstyle\lstinline!#1!}}

%%%%%%%%%%%%%%%%%%%%%% MISC %%%%%%%%%%%%%%%%%%%%%%%
%% People's comments
\definecolor{purple}{rgb}{0.858, 0.08, 0.85}
\definecolor{darkgreen}{rgb}{0.08, 0.55, 0.08}
\definecolor{blue1}{rgb}{0.2, 0.2, 0.6}
\definecolor{green1}{rgb}{0.2, 0.6, 0.2}
\definecolor{green2}{rgb}{0.1, 0.4, 0.1}

\begin{document}

%%%%%%%%%%%%%%%%%%%%%%%%%%%%% abstract page %%%%%%%%%%%%%%%%%%%%
\thispagestyle{empty}
\newgeometry{hmargin=1.25in, vmargin=1.5in}
\noindent Abstract of ''Generalized Object Search''\\ by Kaiyu Zheng, Ph.D., Brown
University, February, 2023\\

\noindent Future collaborative robots must be capable of finding objects. As such a
fundamental skill, we expect object search to eventually become an off-the-shelf
capability for any robot, similar to \eg, object detection, SLAM, and motion
planning.  However, existing approaches either make unrealistic compromises
(\eg,~reduce the problem from 3D to 2D), resort to ad-hoc, greedy search
strategies, or attempt to learn end-to-end policies in simulation that are yet
to generalize across real robots and environments. This thesis argues that
through using Partially Observable Markov Decision Processes (POMDPs) to model
object search while exploiting structures in the human world (\eg,~octrees,
correlations) and in human-robot interaction (\eg, spatial language), a
practical and effective system for generalized object search can be achieved. In
support of this argument, I develop methods and systems for (multi-)object
search in 3D environments under uncertainty due to limited field of view,
occlusion, noisy, unreliable detectors, spatial correlations between objects,
and possibly ambiguous spatial language (\eg,~"The red car is \emph{behind}
Chase Bank"). Besides evaluation in simulators such as PyGame, AirSim, and
AI2-THOR, I design and implement a robot-independent, environment-agnostic
system for generalized object search in 3D and deploy it on the Boston Dynamics
Spot robot, the Kinova MOVO robot, and the Universal Robots UR5e robotic arm, to
perform object search in different environments.  The system enables, for
example, a Spot robot to find a toy cat hidden underneath a couch in a kitchen
area in under one minute. This thesis also broadly surveys the object search literature,
proposing taxonomies in object search problem settings, methods and systems.

\restoregeometry
\newpage

%%%%%%%%%%%%%%%%%%%%%%%%%%%%% title page %%%%%%%%%%%%%%%%%%%%%%%%%%%%%%%%%%%%%%
\clearpage
\pagenumbering{roman}
\newgeometry{hmargin=1.5in, vmargin=1.5in}
{\linespread{1.0}
  \begin{center}
    %%%%%%%%%%% TITLE PAGE %%%%%%%%%%%%%%
    \thispagestyle{empty}
    $\qquad$ \\[3cm]
    {\LARGE \thesistitle }\\[4.85cm] %\todo{\textbf{DRAFT ONLY}}
    {
      {\large Kaiyu Zheng}%\\[0.2cm]
      % M.S. University of Washington, 2018\\[0.2cm]
      % B.S. University of Washington, 2017
    }\\
    [3cm]
    {
      {\large A dissertation submitted in partial fulfillment of the\\[0.1cm]
      requirements of the Degree of Doctor of Philosophy\\[0.2cm]
      in the Department of Computer Science at Brown University}
    }
    \\[3.3cm]
    {\large {Providence, Rhode Island}}\\
    [.1cm]
    {\large February 2023}\\
    [1cm]

    %%%%%%%%%%% COPYRIGHT PAGE %%%%%%%%%%%%%%
    \newpage
    \thispagestyle{empty}
    $\qquad$ \\[5cm]
    {\large \textcopyright$\ $Copyright 2023 by Kaiyu Zheng}

    %%%%%%%%%%% SIGNATURE PAGE %%%%%%%%%%%%%%
    % The format of the signature page mimics https://www.brown.edu/academics/gradschool/sites/brown.edu.academics.gradschool/files/uploads/Sample%20-%20Dissertation%20Signature%20Page.pdf
    \newpage
    {This dissertation by Kaiyu Zheng is accepted in its present form\\[0.2cm]
      by the Department of Computer Science as satisfying the\\[0.2cm]
    dissertation requirement of the degree of Doctor of Philosophy.}\\
    [2.0cm]

    {\normalsize
        \begin{tabularx}{
            \textwidth}{p{3.5cm} p{1.0cm} X}
          Date    \hrulefill &      &\hrulefill \\
                             &      &Stefanie Tellex, Advisor
        \end{tabularx}
      }\\
    [1.75cm]
    {Recommended to the Graduate Council}\\
    [1.0cm]
    {\normalsize
        \begin{tabularx}{
            \textwidth}{p{3.5cm} p{1.0cm} X}
          Date    \hrulefill &      &\hrulefill \\
                             &      &George D. Konidaris, Reader\\[1.0cm]
          Date    \hrulefill &      &\hrulefill \\
                             &      &Michael L. Littman, Reader\\[1.0cm]
          Date    \hrulefill &      &\hrulefill \\
                             &      &Ellie Pavlick, Reader\\[1.0cm]
          Date    \hrulefill &      &\hrulefill \\
                             &      &Leslie P. Kaelbling, Reader
        \end{tabularx}
      }\\
    [1.75cm]
    {Approved by the Graduate Council}\\
    [1.0cm]
    {\normalsize
        \begin{tabularx}{
            \textwidth}{p{3.5cm} p{1.0cm} X}
          Date   \hrulefill &      &\hrulefill \\
                            &      &Thomas A. Lewis, Dean of the Graduate School
        \end{tabularx}
      }\\

      %     % Reader:   & \hrulefill & \hrulefill \\
      %     % & George D.~Konidaris & Date\\[0.75cm]
      %     % Reader:   & \hrulefill & \hrulefill \\
      %     % & Michael L.~Littman & Date\\[0.75cm]
      %     % Reader:   & \hrulefill & \hrulefill \\
      %     % & Ellie Pavlick & Date\\[0.75cm]
      %     % Reader:   & \hrulefill & \hrulefill \\
      %     % & Leslie P.~Kaelbling & Date\\
      %   \end{tabularx}
      % }}

\end{center}
}

\restoregeometry

%%%%%%%%%%%%%%%%%%%%%%%%%%%%% abstract page %%%%%%%%%%%%%%%%%%%%%%%%%%%%%%%%%%%%%%
\newpage
\clearpage
\newgeometry{hmargin=1.25in, top=1.0in, bottom=1.25in}
\phantomsection
$\qquad$\\[1cm]
\begin{center}
  {\bfseries\large Abstract of ``Generalized Object Search''\\[0.1cm]
    by Kaiyu Zheng, Ph.D., Brown University, February 2023.\\[0.3cm]}
\end{center}
\addcontentsline{toc}{chapter}{Abstract}

\noindent 
\restoregeometry

%%%%%%%%%%%%%%%%%%%%%%%%%%%%% VITAE %%%%%%%%%%%%%%%%%%%%%%%%%%%%%%%%%%%%%%
\newpage
\clearpage
\newgeometry{hmargin=1.25in, top=1.0in, bottom=1.25in}
\phantomsection
$\qquad$\\[1cm]
\begin{center}
  {\large \bfseries Curriculum Vitae}
\end{center}
\addcontentsline{toc}{chapter}{Curriculum Vitae}

Kaiyu Zheng was raised in Guangzhou, China. He graduated from the Guangzhou No.~2 High School. Afterwards, he received a B.S. and M.S. with honors in computer science and a minor in mathematics from the University of Washington, Seattle. He then pursued a Ph.D. in computer science at Brown University and defended his thesis in December, 2022. His research interests lie in making robots reliably act in the human world and interact with humans under uncertainty. He received the IROS RoboCup Best Paper Award in 2021 and was selected as a Robotics: Science and Systems (RSS) Pioneer in 2022.\\

Service activities during Ph.D.:
\begin{itemize}[noitemsep,topsep=0pt,label={}]
\item Organizer, Brown Robotics Group Seminar Series, 2021
\item Head Teaching Assistant, Learning $\&$ Sequential Decision Making, 2021
\item Presenter, Staff Development Day, Brown University, 2021
\item Peer mentor, PhD Mentorship Program, Brown CS, 2020 - 2022
\item Moderator, PhD Alumni Panel, Brown CS, 2020
\item Peer mentor, International Graduate Student Orientation, 2019, 2021
\item Organizer, Hiking Group, 2019
\item Representative, Graduate Student Council, Brown CS, 2019
\end{itemize}

%%%%%%%%%%%%%%%%%%%%%%%% Acknowledgements %%%%%%%%%%%%%%%%%%%%%%%%%%%%%%%%%
\newpage
\clearpage
\newgeometry{hmargin=1.2in, top=1.0in, bottom=1.25in}
\setlength{\parskip}{0.05in}
\phantomsection
$\qquad$\\[1cm]
\begin{center}
  {{\large \bfseries Acknowlegements}}
\end{center}
\addcontentsline{toc}{chapter}{Acknowlegements}
\hyphenation{Mohit}
\hyphenation{Rajesh}

My journey in research could not have been more fortunate because of all the people who I have met and who have given me a hand along the way. First and foremost, I want to thank my amazing advisor, Stefanie Tellex, for her advice, mentorship, understanding, and unwavering support. I would always appreciate her direct and sharp advice, “do good research,” “don’t spread yourself too thin,” which have been invaluable to me.

I would like to thank my committee, George Konidaris, Michael Littman, Ellie Pavlick and Leslie Kaelbling for your feedback and advice, and coming to my defense in person. In particular, I want to thank George for his insightful perspectives, Michael for his guidance on theoretical research, Ellie for how to lead discussions, and Leslie for her critique.

I want to thank a few key people who shaped my research journey. First, I thank Andrzej Pronobis, my undergraduate advisor, for taking me on, teaching me details, and guiding me in research. Then, I want to thank Yoonchang Sung, my mentor, from whom I learned the critical attitude towards research. Finally, I want to thank Bo Wang, one of my best friends, for introducing me to research in our freshman year. Otherwise, I wouldn’t even know.

I want to thank all my collaborators, especially Rohan Chitnis, Yoonchang Sung, Deniz Bayazit, and Rebecca Matthew for the project times we shared.  I enjoyed working as a mentor with many students, including Semanti Basu, Sreshtaa Rajesh, Monica Roy, Anirudha Paul, Shangqun Yu, Eliza Sun, and Vedant Gupta. Moreover, my peers in and out of the H2R and IRL labs and the department: Eric Rosen, Sam Lobel, Thao Nguyen, Ifrah Idrees, Max Merlin, Ben Abbatematteo, Matthew Corsaro, Seungchan Kim, Johnathan Chang, Nick DeMarinis, Akhil Bagaria, Albert Webson, Josh Roy, Nishanth Kumar, Tuluhan Akbulut, Nihal Nayak, Saket Tiwari, Shane Parr, Cam Allen, Skye Thompson, Haotian Fu, Anita de Mello Koch, Lilika Markatou, Denizalp Goktas, Lucy Qin, Aaron Traylor, Franco Solleza, Ghulam Murtaza, Alessio Mazzetto, Zheng-Xin Yong, Kenny Jones, Mikhail Okunev, Rao Fu, Ji Won Chung, Aditya Ganeshan, Benjamin Spiegel, Benedict Quartey, Calvin Luo, Marilyn George, Daniel Engel, Roma Patel, Charles Lovering, Jack Merullo, Tian Yun, Ziyi Yang, Jessica Forde, Talie Massachi, Brandon Woodard, Peilin Yu, David Tao, Lishuo Pan, Rahul Sajnani, Ruochen Zhang, Catherine Chen, Vivienne Chi, Wasiwasi Mgonzo,~Amina Abdullahi, George Zerveas, Yingjie Xue, Yanqi Liu, Selena Ling, Ewina Pun, Kai Wang, Qian Zhang, Yanyan Ren, Fumeng Yang, Anna Wei  -- the list is inenumerable -- have been my community and friends that enriched this journey with moments and discussions. Thank you. In addition, I was fortunate to interact with and learn from my respectful peers across different research fields: Andreea Bobu, Weiyu Liu, Shaoxiong Wang, Yuchen Xiao, Mohit Sridhar, Anirudha Vemula, Serena Booth, Yilun Zhou, Rachel Holladay, Xiaolin Fang, Tom Silver, Jiayuan Mao, Aidan Curtis, Caris Moses, Dhruv Shah, Ted Xiao, Shivam Vats, Angela Chen, Linfeng Zhao, Lena Downes, Kevin Doherty, and the list goes on and on.

I also want to thank many senior folks who have given me memorable advice and feedback, whom I also respect:  Tomás Lozano-Pérez, Julie Shah, Dieter Fox, Vikash Kumar, R.~Iris Bahar, Vikram Saraph,  David Whitney, David Abel, Lawson Wong, David Paulius, Ankit Shah, Lucas Lehnert, Kavosh Asadi, Yu Xiang, Arunkumar Byravan, James Tompkin, Amy Greenwald, Paul Beame, Rajesh~P.~N.~Rao, Ugur Cetintemel, just to name a few. I especially want to thank Matthew Gombolay for his insightful comments following my talk at Georgia Tech, which influenced my later presentations. The same goes to Nakul Gopalan, and others in the audience.

I want to thank the wonderful people I worked with at Viam, who helped me get the UR5e arm demo done at the closure of my PhD: Fahmina Ahmed, Matt Dannenberg, Bijan Haney, Nicholas Franczak, Gautham Varadarajan, Khari Jarrett, and Raymond Bjorkman.

I want to thank the amazing staff at Brown CS, including Suzanne Alden, Lauren Clarke, Kathy Kirman, John Tracey-Ursprung, Eugenia DeGouveia, and others, as well as Ronni Edmonds who manages Brown Auxiliary Housing. You made my life easier.

I want to thank my \emph{friends} at Brown, in and out of the lab; from UW and around Seattle; from high school, middle school; from internships, elsewhere. You make up a big part of my life and will continue to do so. My writing pattern in these acknowledgements might have been dull, but the bond we share will remain lively and true. I wish you all the best.

I also want to thank all my art teachers. I enjoyed art classes the most growing up.

Finally, I want to sincerely thank all my parents. Especially, my father. All beyond paper.

Yes, I want to thank you, too.\hfill --- Kaiyu Zheng

\setlength{\parskip}{0.075in}

%%%%%%%%%%%%%%%%%%%%%%%% TABLE OF CONTENTS %%%%%%%%%%%%%%%%%%%%%%%%%%%%%%%%%
\newpage
\clearpage
\newgeometry{hmargin=1.25in, top=1.0in, bottom=1.25in}
\setcounter{tocdepth}{2}
{
  \hypersetup{linkcolor=black}
  \tableofcontents
}

%%%%%%%%%%%%%%%%%%%%%%%% LIST OF TABLES AND ILLUSTRATIONS %%%%%%%%%%%%%%%%%%%%%%%%%%%%%%%%%
\newpage
\thispagestyle{empty}
\listoftables
\newpage
\listoffigures

\newpage
\clearpage
\thispagestyle{empty}
$\qquad$
\newpage
%%%%%%%%%%%%%%%%%%%%%%%% Notations %%%%%%%%%%%%%%%%%%%%%%%%%%%%%%%%%
% \chapter*{{\normalfont\huge NOTATIONS}\hfill}
% \addcontentsline{toc}{chapter}{Notations}

% % Table~\ref{tab:notations} below provides a summary of frequently used notations in this thesis:
% \begin{table}[H]
%   \centering
%   \renewcommand{\arraystretch}{1.3}{% for the vertical padding
%     \begin{tabular}{l@{\hspace{1cm}}l}
%       $s$                                 & state\\
%       $z\ $ or $\ o$  & observation\\
%       $\srobot\ $ or $\ s_r$ & robot state\\
%       $\starget\ $ or $\ \xtarget\ $ or $\ s_i$ & target object state\\
%       $h$  & history\\
%     \end{tabular}%[1.0cm]
%   }
%   \caption{Table of Notations}
%   \label{tab:notations}
% \end{table}

%%%%%%%%%%%%%%%%%%%%%%%% MAIN %%%%%%%%%%%%%%%%%%%%%%%%%%%%%%%%
\addtocontents{toc}{\vspace{1\baselineskip}}  % This adds an empty line in TOC

\clearpage
\pagenumbering{arabic}
\pagestyle{fancy}

\chapter{Introduction}
\label{ch:intro}
\chaptermark{Introduction}
\section{Significance and Challenges of Object Search}
\label{sec:sigch}

% Why Object Search. Take my time. Motivate this from multiple perspectives. I have space!
%% Start from People. I feel that I always wanted to write like this!
\lettrine{S}{earching} for objects has been a fundamental skill for people since ages ago. From fruits to glasses, water sources to ore mines, the ability to search has enabled people's daily lives and provided much value to society.
%% Say why object search is still annoying an inevitable
However, searching is unpleasant, to say the least, and can be difficult or prohibitive depending on the situation. Imagine an elderly person searching for their missing pair of glasses. Imagine a factory worker going back and searching for their tools.  Imagine a wildfire that devastates a neighborhood, and a search and rescue team has to risk their lives in search of survivors. Now, imagine a robot that can help.  Autonomy in object search is therefore intrinsically valuable.

Besides its intrinsic value, object search is also a prerequisite for basically any task involving the objects being searched for (\aka~the ``target objects''). For intelligent, collaborative robots of the future, this means that a robot that can cook should better be able to search for the ingredients and the pan before cooking, and a robot that can clean should be able to find the cleaning tools. This point is well reflected in Figure~\ref{fig:jacob_andreas_slide} that shows six examples of a large language model (LLM) taking in a high-level task description (\eg \emph{``pick two apples then heat them''}) and outputting a sequence of sentences each corresponding to a lower-level action \citep{sharma2022skill}.  In all examples, the output begins with \emph{``Find ...''} (\eg \emph{``Find the apple''}, \emph{``Find the lettuce''}, etc.)  as the first action. The SayCan system developed by Google \citep{ahn2022can} with a similar capability exhibits the same behavior (Figure~\ref{fig:saycan_intro}). Object search is literally the prerequisite skill.

%%%% First figure %%%%%
\begin{figure}
  \centering
  \begin{subfigure}[t]{0.565\textwidth}
    \includegraphics[width=\textwidth]{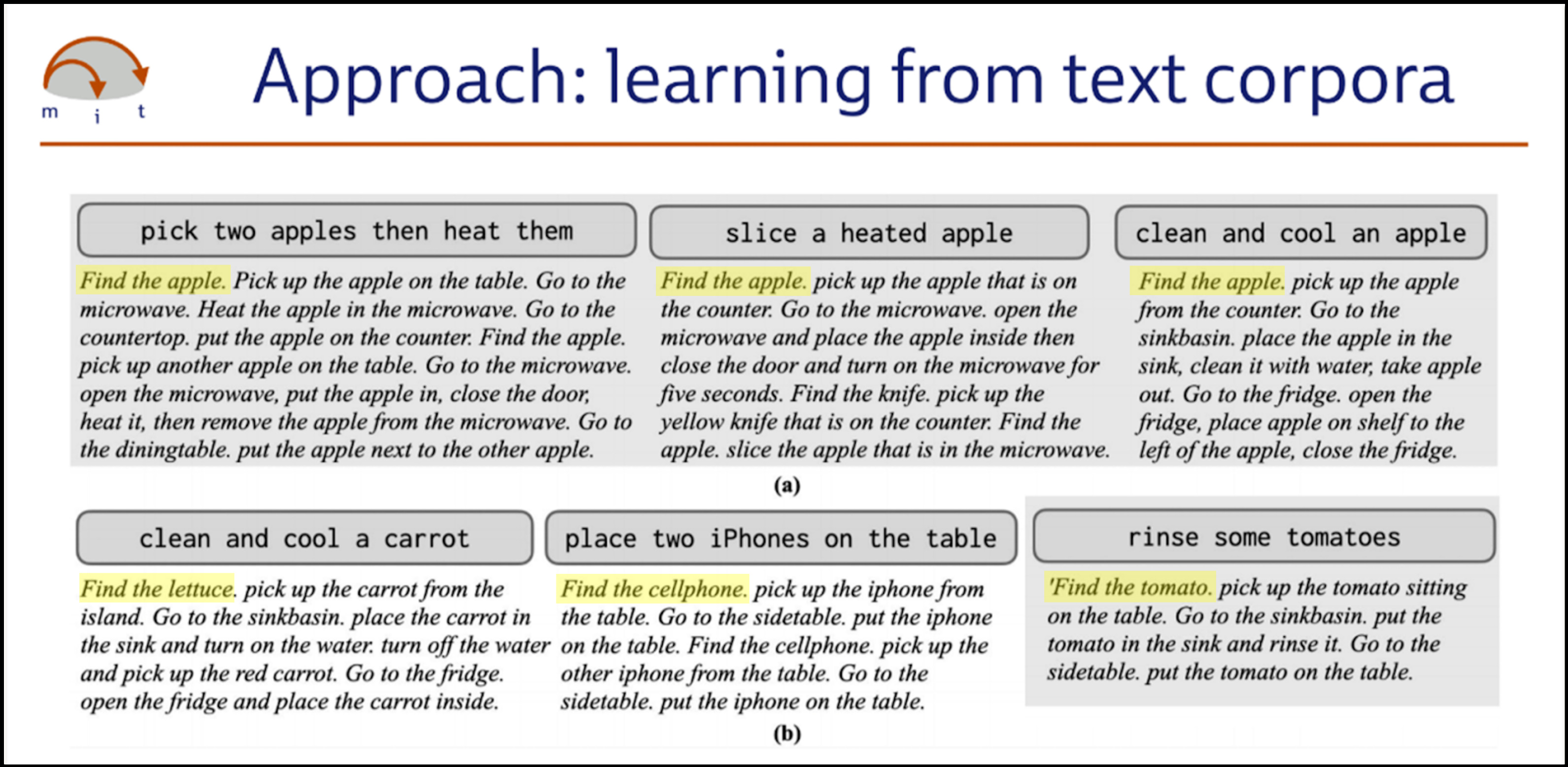}
    \caption{Examples from \citep{sharma2022skill} showing that an LLM breaks down given
      high-level descriptions into sequences of low-level actions. In all
      examples, the output begins with \emph{``Find ...''}. Source:
      slide from Jacob Andreas's presentation at RLDM 2022. Courtesy of Jacob Andreas.}
    \label{fig:jacob_andreas_slide}
  \end{subfigure}\hfill
  \begin{subfigure}[t]{0.415\textwidth}
    \raisebox{0.25in}{
      \includegraphics[width=\textwidth]{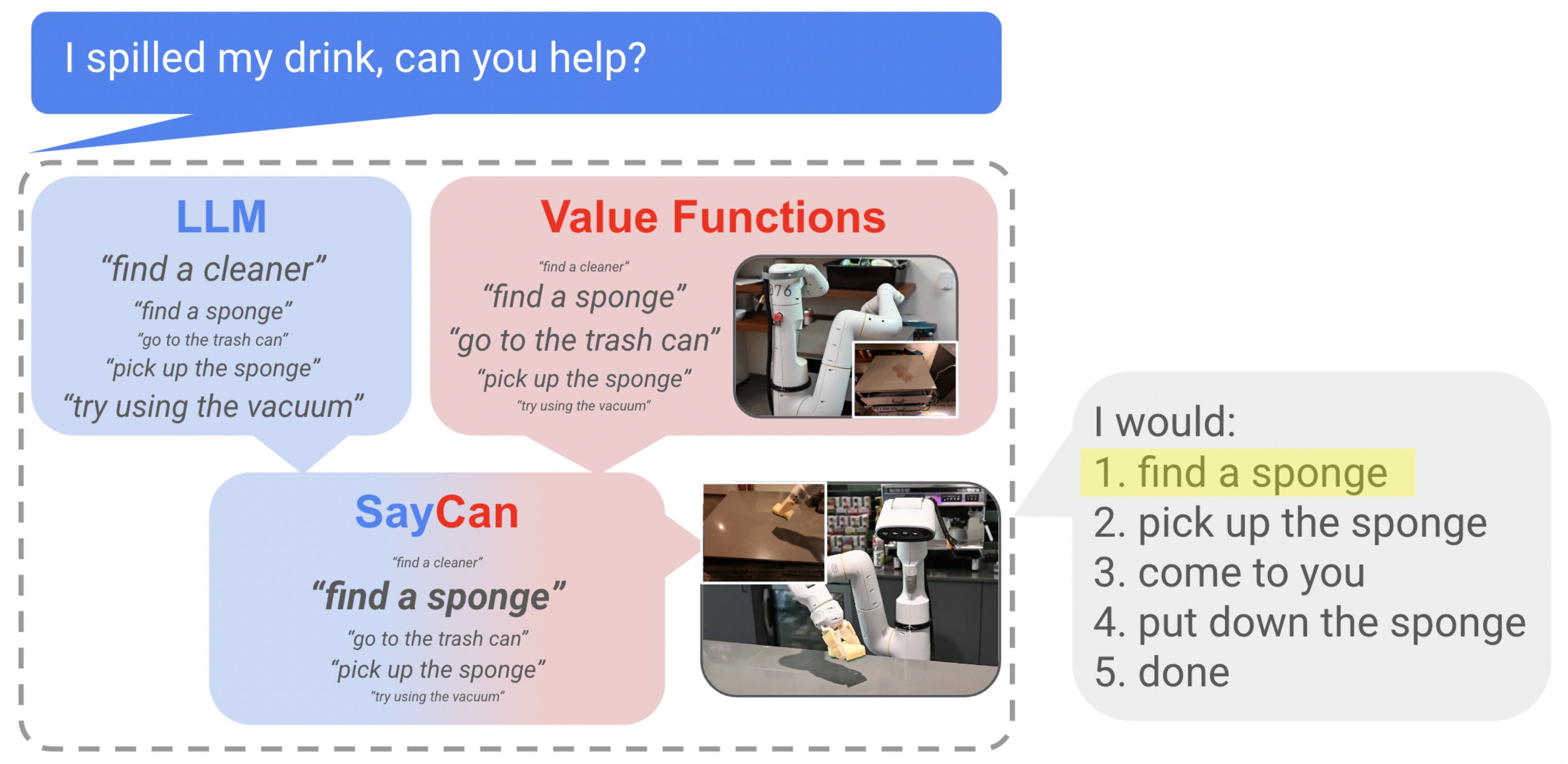}
    }
    \caption{Example trial of the SayCan system that outputs a sequence of
      low-level skills feasible for a robot given a high-level task (cleaning
      spilled drink). Again, the output begins with \emph{``Find ...''}.
      Source: \citep{ahn2022can}.}
    \label{fig:saycan_intro}
  \end{subfigure}
  \caption{The significance of object search is evident in part because it is a
    prerequisite for basically any task involving the target objects. Two
    supporting examples are provided above. In all cases, object search
    (\ie. \emph{``Find ...''}) comes first.}
  \label{fig:main}
\end{figure}

While significant, object search is also challenging.  The challenges of object search are best illustrated through a practical scenario.  Before diving into that, let us for a moment examine the phenomenon where \emph{``Find ...''} is an atomic, low-level action.  What happens in Figure~\ref{fig:jacob_andreas_slide} is in part due to the use of the ALFRED dataset \citep{shridhar2020alfred} to train the LLM, where Amazon Mechanical Turkers provide step-by-step instructions (low-level actions) for a video demonstration of a high-level task. However, rather than breaking down the steps for object search into concrete commands such as \emph{``Go to ...''} or \emph{``Turn left''}, the Turkers summarize those steps as \emph{``Find ...''}, which is vague for execution. Interestingly, SayCan (Figure~\ref{fig:saycan_intro}) exhibits the same phenomenon, using PaLM \citep{chowdhery2022palm}, a 540-billion parameter LLM trained with a huge dataset of natural language in diverse contexts (\eg~books, webpages, etc.).  Albeit speculative, one explanation is that carrying out \emph{``Find ...''}  would likely entail other sub-tasks such as navigation (\eg~maneuvering the robot's viewpoint), manipulation (\eg~opening a container), or another search task, recursively (\eg~to find an apple, one sub-task could be to first find the fridge), which is hard to elaborate when providing a single-step instruction.  This duality of object search, being both an atomic skill in people's mind (and in LLM) as well as an abstract-level skill in reality, makes it a unique problem to study in robotics.
\label{ch:intro:unique_problem}

%%% Challenges %%%
Returning to the discussion of challenges, Figure~\ref{fig:spot_challenges} shows a typical real-world scenario of object search. A Boston Dynamics Spot robot in our lab is tasked to find the book which is located in front of a monitor as quickly as possible.  In this version of Spot, the most agile viewport, and the only colored one, is the camera on the mounted arm's gripper. The robot's objective is to perform search by executing a sequence of view pose changes to look around in the lab and eventually automatically declare the book to be found at some location. The robot's performance is evaluated based on success and efficiency.  Several challenges emerge as we consider building a real-world system effective for this task:
\label{sec:sigch:challenges:begin}

\begin{figure}
  \centering
  \includegraphics[width=\textwidth]{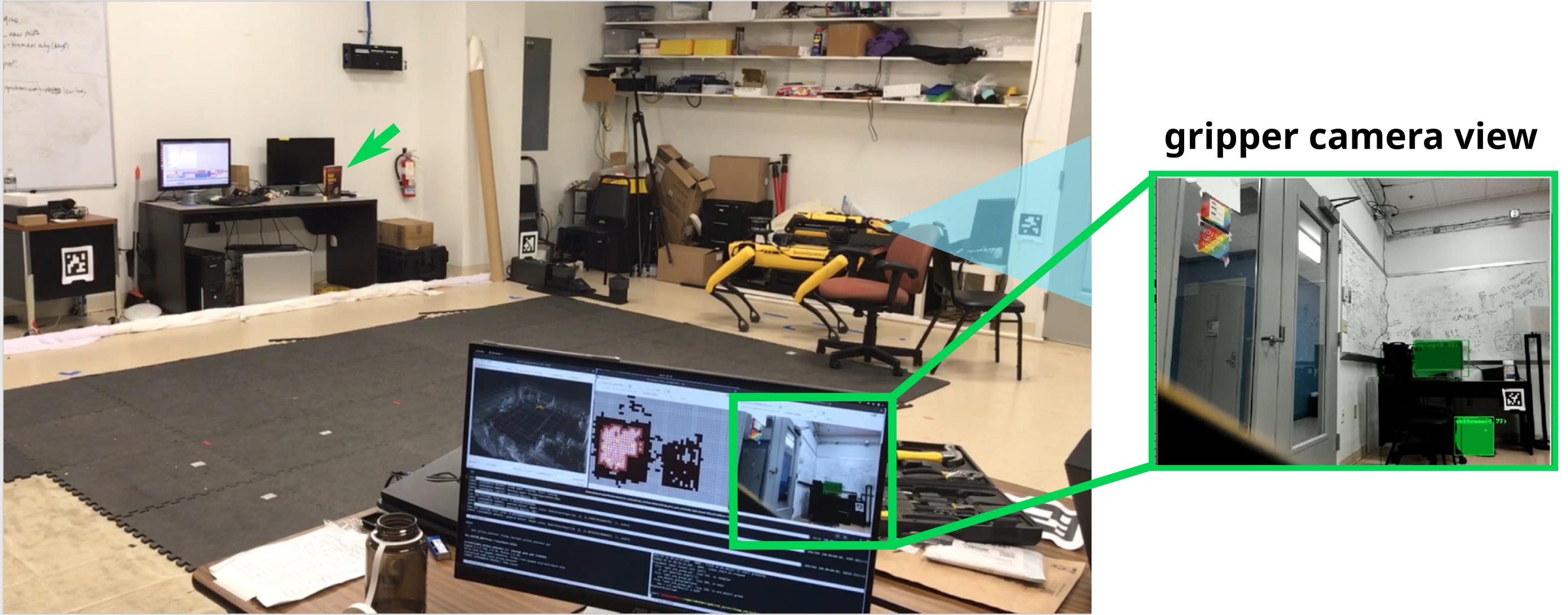}
  \caption{An example real-world object search scenario. The Spot robot (yellow legged robot) is tasked to find the book in front of a monitor (green arrow). The robot searches by navigating and controlling its 6-DOF gripper camera to look around within the room; the camera has a limited field of view subject to occlusion and the robot's perception pipeline is prone to uncertainty and errors.}
  \label{fig:spot_challenges}
\end{figure}

% List of Challenges
\begin{itemize}[itemsep=0.5pt,topsep=0pt]
\item \textbf{Partial observability.} Although, as a third-person audience, we immediately see where the book is, the robot has no knowledge of that and it has a limited field of view through its gripper camera subject to occlusion. The robot must reason about where to look under such partial observability.

\item \textbf{Perceptual uncertainty.} Unlike the human eye, which can accurately distinguish most recognizable objects within the field of view when close enough, the robot's perception system is subject to noise, uncertainty, and errors in object detection and segmentation, even using state-of-the-art computer vision models.  False negatives and false positives are inevitable for most real-world robotic systems since they often receive images not in line with the training distribution of those models. Such perceptual uncertainty is not limited to camera-based object detectors, and it is necessary to be considered by the robot for effective search using its on-board sensors.

\item \textbf{Complex, unstructured environment.}  The environment that the robot operates in is complex and unstructured, meaning that it does not just contain the robot itself and the target object. It also contains many other objects arranged in an unstructured fashion for the robot.

\item \textbf{Human input (\eg~language hints).} Since the robot performs the search in a human environment or alongside a human teammate, it would be beneficial for searching more efficiently if the robot can take advantage of human input, for example, through natural language hints (\eg,~\emph{''the book is in front of the monitor''}).

\item \textbf{Generalization (across environments, across robots).} We do not just want object search to work for this robot in this room for this book. We want the ability to search to generalize across different environments and robots. Ultimately, the ability to search should become an off-the-shelf capability that a robot can \emph{just have}.
\end{itemize}

\noindent Addressing the above challenges with an emphasis on building principled, practical robotic systems is at the core of this thesis (Chapters~\ref{ch:overarching} to \ref{ch:sloop}). Our effort aims to enable most robots today with a movable viewport to perform object search in an off-the-shelf manner; we call this problem \emph{generalized object search}. This is a feasible objective today, thanks to recent progress in mapping, navigation, motion planning, and object detection. It is also widely applicable and a first step towards realizing the value of autonomy in object search.

Note that, however, object search can become even more difficult when additional challenges are factored in, such as when a human teammate can engage in a dialogue with the robot, when target objects are dynamic or adversarial, when physical environment interactions such as container opening, decluttering, or tool use are necessary and within the robot's arsenal of abilities, or when the robot performs search while exploring an unknown region. In Chapter~\ref{ch:background}, I provide a broad review on object search, including methods designated to address these challenges. There is much left to do here for future work.

% I studied those challenges in various degrees and discuss them with some preliminary findings in Chapters~\ref{ch:dialog} and \ref{ch:open_probs}, where I also describe how the proposed methodology for generalized object search could be extended accordingly.
% \label{sec:sigch:challenges:additional}

Most prior work in object search focuses on the kind of basic but essential setting in Figure~\ref{fig:spot_challenges}. As discussed next, despite the large body of literature, previous work fails to address the above challenges in a realistic, practical and generalizable way. A more thorough review and taxonomy of previous work in object search is provided in Chapter~\ref{ch:background}.
\label{sec:sigch:challenges:end}

\section{Limitations of Previous Work}
Object search can be broken down into two subproblems: (1) which
region(s) to search in, and (2) how to search within a region.

For the first subproblem, there has been a long line of work
\cite{wixson1994using,
  kollar2009utilizing,aydemir2013active,liu2021learning}, where
the focus has been on scalability and common sense: Can the robot reduce the uncertainty down to small search region(s), such as a room, within a large search environment, such as an entire floor?  Can the robot make use of semantic knowledge (\eg~sponges are usually in kitchens) between objects and places to select the search region(s)? The pioneering work along this line by \citet{wixson1994using} made the following interesting remark:
\begin{quote}
  Once such a subregion has been selected, however, a harder problem arises, namely the problem of how to select views that search the subregion. This problem is not really addressed here.
\end{quote}
This remark nicely leads to the second subproblem, which is considered harder. Indeed, object search in a 3D region by planning views under a time budget is shown to be NP-Complete \citep{Ye1997SensorPF}. Consequently, previous work uses ad-hoc, greedy search strategies despite the problem being inherently sequential \citep{aydemir2011search, zeng2020semantic}. Other work makes unrealistic compromises, for example, by simplifying the problem from 3D to 2D \citep{kollar2009utilizing,wandzel2019oopomdp,bejjani2021occlusion}, or by manually specifying the set of viewpoints \citep{xiao_icra_2019}.

Additionally, work such as \citep{saidi2007online} and \citep{tsuru2021online} focuses on building real-world systems for object search but for specific humanoid robots. On the other hand, data-driven approaches for semantic visual navigation that map raw RGB images directly to navigation actions are popular today yet most works have only conducted evaluation in simulation \cite{yang2018visual,chaplot2020object,mayo2021visual,deitke2022procthor,schmalstieg2022learning}. Generalization of such models across different environments and robots in the real-world poses a serious challenge.\footnote{The issue of semantic visual navigation approaches being predominantly evaluated in simulation is recognized and progress is being made to evaluate those methods in the real world through sim-to-real transfer \citep{deitke2020robothor, gervet2022navigating}. However, current efforts are limited to a discrete, ego-centric action space (\eg, \emph{move forward}, \emph{turn left}) and the real-robot evaluations involve only one robot platform compatible with the agent in simulation. Changing the action space or the type of robot platform breaks applicability and generalizability of the trained models, which already suffer in sim-to-real transform on a compatible robot platform. Our approach places no such constraints.}

The focus of this thesis is on the problem of how to search within a region. Ultimately, that determines the success of object search and is arguably harder, involving the low-level challenges fundamental to an effective object search system. Compared to the data-driven approaches which aim for \emph{model generalization} (\ie, training a model that generalizes to all test scenarios), our work differs fundamentally in that we pursue \emph{methodological generalization} (\ie, proposing a method that is robot- and environment-independent).\footnote{I first learned about this dichotomy of generalization articulated by Leslie P.~Kaelbling.} We demonstrate generalization of our method by developing a system and package for generalized object search and integrate it with different robots in different environments.\footnote{In the long run (when tackling object search problems involving tool use, for example), the two ends will likely meet somewhere in the middle, I believe.}

\section{Contributions}
\label{sec:intro:contribs}
The central argument of this thesis is:
\begin{quote}
  Through using Partially Observable Markov Decision Processes (POMDPs) to model object search while exploiting structures in the human world and in human-robot interaction, a practical and effective system for generalized object search can be achieved.
\end{quote}
In support of this argument, we briefly summarize the primary contributions of this thesis as follows; Refer to Figure~\ref{fig:overview} for an overview of research projects covered in this thesis.

  \begin{figure}
  \centering
  \begin{overpic}[width=\textwidth]{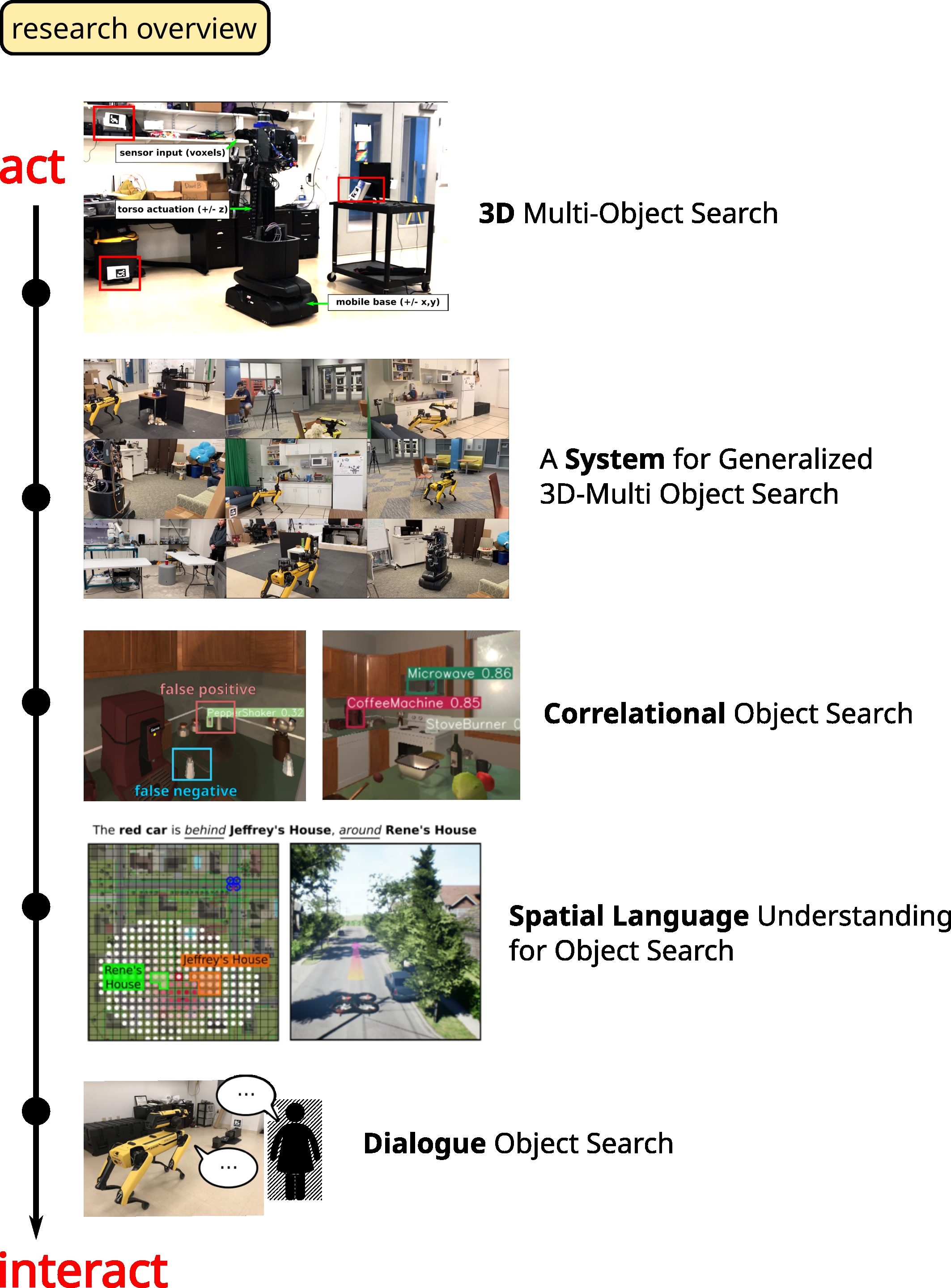}
        \put (37.25,80) {\small Chapter \ref{ch:3dmos} \cite{zheng2021multi}}
        \put (42,58) {\small Chapter \ref{ch:genmos} \cite{zheng2023system}}
        \put (42.5,41) {\small Chapter \ref{ch:cospomdp} \cite{zheng2022towards}}
        \put (39.75,22.5) {\small Chapter \ref{ch:sloop} \cite{sloop-roman-2021}}
        \put (28.25,7.5) {\small Chapter \ref{ch:dialog} \citepalias{dls2021}}
      \end{overpic}
  \caption{Overview of research projects covered in this thesis. The projects are arranged on an axis from robot \emph{acting} in human environments to robot \emph{interacting} with humans. See Section~\ref{sec:intro:contribs} for a high-level overview of each project.}
  \label{fig:overview}
\end{figure}

\begin{itemize}[itemsep=0.5pt,topsep=0pt]
\item We exploit the structure of octrees for 3D multi-object search and develop a scalable and efficient multi-resolution planning algorithm with practicality for a real robot.
\item We then design and implement the first robot-independent and environment-agnostic system for generalized 3D multi-object search and deploy it on various robot platforms in different environments.
\item We exploit the spatial correlation between easier-to-detect (\eg, microwave) and hard-to-detect objects (\eg~pepper shaker) to find hard-to-detect objects faster.
\item We integrate spatial language (\eg,~``the book is in front of the monitor'') as an additional modality of stochastic observation for more efficient search.
\item Moving along, we motivate and propose the dialogue object search task and draw insights from a pilot study between pairs of human participants.

\item We formulate a general, overarching POMDP model for generalized object search as a  ``parent model'' that ties together the above works on specific object search settings.

\item We discuss how the above POMDP model be extended to address tasks involving additional challenges such as object dynamics and environment interaction.

\item Last but not least, we provide a literature review on object search from a robotics perspective based on a survey of more than 125 related papers.\\
\end{itemize}

\noindent Here, we provide a high-level overview to elaborate the above contributions. For 3D multi-object search, we introduce \textbf{3D Multi-Object Search (3D-MOS)}, a general POMDP formulation of the problem with volumetric observation space, and we solve it with a novel multi-resolution planning algorithm that uses a new belief representation called \textbf{octree belief}. Our work demonstrates that such challenging POMDPs can be solved online efficiently and scalably with practicality for a real robot by extending existing general POMDP solvers with domain-specific structure and belief representation.

As a step further, we design and implement \textbf{GenMOS (Generalized Multi-Ob-ject Search)}, a robot-independent and environment-agnostic system of multi-object search in 3D regions. We evaluated the system by deploying it on several robotic platforms, including the Boston Dynamics Spot robot, the Kinova MOVO robot, and the Universal Robotics UR5e robotic arm. Our work makes 3D object search an off-the-shelf capability for different robots in different environments.

To find small, hard-to-detect objects, we propose \textbf{Correlational Object Search POMDP (COS-POMDP)}, which can be solved to produce search strategies that exploit spatial correlations between target objects and other objects in the environment that are easier-to-detect.
Our experiments were conducted in AI2-THOR \cite{kolve2017ai2}, a realistic simulator featuring diverse-looking environments of four types: bedroom, bathroom, kitchen and living room, and we use YOLOv5 \cite{glenn_jocher_2020_4154370} as the object detector. Results show that our method finds objects more successfully and efficiently compared to baselines, particularly for hard-to-detect objects such as srub brush and remote control.

To consider human input, we integrate spatial language into the object search POMDP model as a form of stochastic observation and derive a spatial language observation model. We call the resulting model \textbf{SLOOP (Spatial Language Object-Oriented POMDP)}. Evaluation based on crowdsourced language data, collected over areas of five cities in OpenStreetMap, shows that our approach achieves faster search and higher success rate compared to baselines, with a wider margin as the spatial language becomes more complex. We demonstrate the proposed method in AirSim, a realistic simulator where a drone is tasked to find cars in a neighborhood environment. We further deploy our system on a Boston Dynamics Spot to accomplish tasks such as that in Figure~\ref{fig:spot_challenges}.

To move farther along interaction with humans, we motivate and propose the task of \textbf{Dialogue Object Search} where a robot must simultaneously decide what to say and how to act while searching for an object in collaboration with a remote human assistant. We conduct a pilot study between human participants and analyze how humans approach the task in the place of the robot.

We tie together the problems and POMDP models proposed in the above works by defining a single, overarching POMDP model (the ``parent'' model). We briefly discuss in Chapter~\ref{ch:closing}, as future work, how this parent model may be extended to additional challenges and its potential limitations.

% them as future work
% We do care about the additional challenges
% (page \pageref{sec:sigch:challenges:additional}) and discuss how this parent model can be extended to address them as future work.

Finally, due to the fundamental value and broad applicability of object search yet a lack of survey in this field, we conduct a literature review with the goal of organizing this field from a robotics perspective, clarify the relationships between different problem variants and approaches, and provide a taxonomy of object search.

\section{Thesis Outline}
The remainder of this thesis is organized as follows:
\begin{itemize}[itemsep=0.5pt,topsep=0pt]
\item In Chapter~\ref{ch:background}, we provide the background material of our research, including a literature review on object search and an introduction of POMDPs from a robotics perspective.
\item In Chapter~\ref{ch:overarching}, we give the definition of the overarching POMDP model for object search, as hinted in the section above.
\item In Chapter~\ref{ch:3dmos}, we describe our work on 3D multi-object search and focus on addressing the algorithmic challenges of this problem that concludes with a real robot demonstration. This chapter covers the paper \emph{Multi-Resolution {POMDP} Planning for Multi-Object Search in 3D} \cite{zheng2021multi}.
\item In Chapter~\ref{ch:genmos}, we describe our work on further addressing the system-level challenges to make 3D multi-object search an off-the-shelf system and package for robots. This chapter covers the paper \emph{A System for Generalized 3D Multi-Object Search} \cite{zheng2023system}.

\item In Chapter~\ref{ch:cospomdp}, we describe our work on correlational object search for finding small, hard-to-detect objects such as a pepper shaker or a credit card. This chapter covers the paper \emph{Towards Optimal Correlational Object Search} \cite{zheng2022towards}.

\item In Chapter~\ref{ch:sloop}, we describe our work on interpreting spatial language object search in city-scale environments. This chapter covers the paper \emph{Spatial Language Understanding for Object Search in Partially Observed Cityscale Environments} \cite{sloop-roman-2021}. We include a follow-up integration of this work with the Boston Dynamic Spot robot to perform object search with spatial language understanding.

\item In Chapter~\ref{ch:dialog}, we describe our work on introducing the dialogue object search task and a pilot study. This chapter covers \emph{Dialogue Object Search} \citepalias{dls2021}. Here, * denotes equal contribution.

% \item In Chapter~\ref{ch:open_probs}, we describe how the overarching POMDP model extends to address the additional challenges discussed on page \pageref{sec:sigch:challenges:additional}, including dynamic or adversarial objects, object search with environment interaction, and object search beyond cameras. The content in this chapter has not been published and contains preliminary results.

\item In Chapter~\ref{ch:closing}, we conclude this thesis and remark on future work directions.
  %The meta-level takeaway behind this thesis's research is ``take one step at a time.''

\item Appendix~\ref{ch:pomdp-py} describes the \texttt{pomdp\_py} library which supported the implementation of all POMDP models and algorithms proposed in this thesis. It covers the paper \emph{pomdp\_py: A Framework to Build and Solve POMDP Problems} \cite{pomdp-py-2020}. The library is available at: \url{https://github.com/h2r/pomdp-py}.

\end{itemize}

\section{Remark on Terminology}
\label{sec:intro:term}
For consistency throughout this theis, we use \emph{``viewpoint''} to mean the same thing as \emph{``view
pose''}, which is a 6D camera pose.  We use \emph{"view position"} to mean the 3D position
of the camera, and \emph{"view orientation"} or \emph{"view direction"} to mean
the 3D rotation of the camera.

Additionally, we use \emph{``generalized'' } instead of \emph{``generalizeable''} in the title of this thesis to emphasize the fact that our POMDP models (\eg, 3D Multi-Object Search) are by definition not specific to any particular robot or environment, and our approach is based on general-purpose online POMDP planning algorithms (with problem-specific structure).

% This process does not involve training a model and testing its generalizability.

Finally, we refer the reader to Section~\ref{sec:bg:related_probs} for a discussion of \emph{object search} in relation to other problems, some with similar names, to clarify possible confusion.

% Why use ``object search'' instead of ``active object search'' or ``visual object search'' etc?
% \todo{give a reason}

% \todo{answer this question (somewhere else): does your approach require a map of the environment? No. We just require the localization ability.}

\chapter{Background}
\label{ch:background}
\section{Object Search in Robotics: A Review}
\label{sec:bg:objectsearchreview}
\lettrine{D}{irecting} searchers to find targets is a fundamental problem with broad applicability. Researchers with diverse backgrounds have studied variants of this problem with distinct emphases since the 1970s. If one zooms out, it does not take long before one finds the literature on this topic to be a seemingly hodgepodge. Given such diversity and long history, surveying past work is therefore valuable.

% As such a diverse topic, from problem settings and algorithms, to systems and applications, part of the challenge with studying object search in robotics is to consider this entire suite of issues.
Although this thesis mainly concentrates on how to search within a region, we look broadly here in the background chapter to take on the challenge of surveying this field. The review below is the result of surveying more than 125 papers related to object search, including taxonomies of different aspects of object search.
% and a bibliographic remark.

This chapter is organized as follows:
\begin{itemize}[itemsep=0.5pt,topsep=0pt]
\item We begin by clarifying what the term ``object search'' entails in robotics, disambiguating it from related and similar problems.

\item We provide taxonomies of basic aspects in object search research, including problem settings, methods and systems, and then apply them to organize papers from the field.

\item Finally, we briefly summarize the history of research in object search so far.
\end{itemize}

\subsubsection{Difference from Related Reviews}

\noindent The closest literature reviews compared to ours are \citet{chung2011search} and
\citet{robin2016multi}. \citet{chung2011search} focuses on pursuit-evasion in mobile robotics. They provide a taxonomy of pursuit-evasion and autonomous search problems that highlights the differences resulting from varying assumptions on the searchers, targets, and the environment, with a discussion of robotic systems. This survey is excellent, yet it has been more than a decade since it is written, and we pay direct attention on object search and not pursuit-evasion.
% We consider a taxonomy in methods and systems in addition to problem setting for object search.

\citet{robin2016multi} reviews major algorithms for different problem settings related to target management, a term that summarizes both target detection and tracking. Search is considered a variant of target detection where the target(s) are actively pursued. However, this survey concerns providing a bird's-eye view of the spectrum of problems from target detection to tracking. In contrast, we focus on object search with the end goal of achieving practical and effective robotic systems for this task.\\

\vspace{-1.65em}
\subsection{The Elements of Object Search in Robotics}
\label{sec:bg:elem}
What does ``object search'' entail in robotics? Imagine a person searching for their missing pair of glasses at home. It is most likely that the person would move around to search in different places rather than standing at a fixed location, and then be able to identify the glasses somehow, probably through vision. Analogously, when we think of ``object search'' in robotics, we envision a robot being able to do the same thing. This, in the most basic sense,  should involve the following elements:
\begin{enumerate}[itemsep=0.5pt,topsep=0.5pt]
  \item an object to be searched for,
  \item an environment where the search happens,
  \item a robot that can move itself in order to perceive different aspects of the physical environment that would not be perceivable if the robot does not move, and
  \item the robot begins without knowing where the object is, and
  \item the robot decides what to do and executes actions sequentially in search for the object.
\end{enumerate}
The above basic set of conditions makes a problem in robotics an object search problem. Variants of object search, such as moving object search, multi-object search,  mechanical search (object search by clearing clutter), etc., append additional conditions to this set.

Broadly, object search could refer to the parent problem of all variants. In a narrow sense, it refers to the problem setting in Figure~\ref{fig:spot_challenges} (Chapter~\ref{ch:intro}), where no additional conditions are added. Indeed, the variants usually do not come across plainly as object search, but with a more characterizing name. For example, if the object actively moves, then the problem is called \emph{moving object search} \cite{ding2018moving}. If the object's placement varies over time, the problem is called \emph{dynamic object search} \cite{zhang2019efficient}.
% \footnote{This is similar to how we would call object detection with both vision and sound ``multi-modal object detection'' instead of just object detection.}

% The bulk of this thesis focuses on the type of object search problem in this narrow sense (Chapters~\ref{ch:overarching} to \ref{ch:dialog}). Discussions of variants can be found in Chapter \ref{ch:open_probs}.

\subsection{Differentiating Object Search from Related Problems}
\label{sec:bg:related_probs}
Below, we describe how object search differs from several related problems that might cause confusion. This is not an exhaustive list, but hopefully sufficient for clarification.

\subsubsection{Object Detection, Object Tracking, and Object Pose Estimation}

Object search differs from problems such as \textbf{object detection}, \textbf{object tracking}, and \textbf{object pose estimation} in that object search is fundamentally a \emph{decision-making problem}, while the rest are \emph{perception problems}. The goal of object detection \cite{zou2019object} is to identify objects of certain classes given fixed raw sensor input, as an RGB image or a point cloud. Object tracking refers to estimating the trajectory of an object in the image plane or in the 3D space \cite{yilmaz2006object, weng20203d}. Object pose estimation \cite{rad2017bb8,xiang2017posecnn} is about estimating the pose (3D or 6D) of objects given sensor input. Both object detection and object pose estimation can serve as important building blocks of an object search system.

Variants of object detection or object pose estimation, such as \textbf{active object detection} where the sensor is controlled to maximize detection performance \cite{xu2021dynamic}, do involve decision-making and are often approached with similar techniques as object search, such as next-best view \cite{doumanoglou2016recovering}, reinforcement learning \cite{luo2019end}, and POMDP planning \cite{atanasov2014nonmyopic}. However, in contrast to object search, the environment typically revolves around the object to be detected, and the object typically starts out inside the field of view of the sensor. A similar comparison can be made between object search and the problem of \textbf{active object reconstruction} \cite{delmerico2018comparison}.

\subsubsection{Object Retrieval and Object Localization}

\textbf{Object retrieval} can be both a perception problem, choosing an object that matches a description from a set of objects \cite{nguyen2022affordance}, or a decision-making problem involving manipulation, where the robot is tasked to grasp and take out an object in clutter \cite{ahn2022coordination}.  Object retrieval is a more specific problem than object search since in general, retrieval is not always necessary after the object is found. Often, object search is part of an object retrieval system, considering occlusion from clutter \cite{bejjani2021occlusion}.

% (picking an object from clutter)
The term \textbf{object localization} commonly refers to the computer vision problem where the task is to produce a heatmap for an object's location given image \cite{xue2019danet, zhang2021weakly}; \textbf{active object localization} \cite{caicedo2015active} has been used to describe object detection models that can actively zoom in on an image for better detection. In robotics, object localization is often a subproblem for object retrieval in clutter \cite{liu2012fast,du2021vision}. It has been used as a synonym to object search \cite{andreopoulos2010active}, though such usage is rare.

\subsubsection{Target Capture, Target Pursuit and Target Detection}

In \textbf{target capture} or \textbf{target pursuit} \cite{eaton1962optimal}, one or a group of agents coordinate to surround one or multiple moving targets on a graph, where capture typically means an agent occupies the same node as the target \cite{isaza2008cover}, or the target is enclosed by the agents \cite{sharma2010cooperative}. This problem has important practical implications and has been studied by researchers in the game theory \cite{sticht1975sufficiency}, graph theory \cite{svensson2011target}, multi-agent control \cite{bono2022swarm} communities. If the target is adversarial, this becomes a classical problem called \textbf{pursuit-evasion} \cite{ho1965differential}. Historically, this line of work has used \textbf{moving target search (MTS)} \cite{ishida1991moving} as the name of the target capture problem, causing some confusion with the object search problem in robotics.\footnote{Searching for ``target search'' on the web leads to papers from this community, although the literal meaning of the phrase is nearly identical to object search.} Distinctively, however, object search in robotics differs in that the locations of target objects are not known and uncertainty in robot perception is inevitable, both aspects often not considered in that line of work.\footnote{Despite not considering uncertainty in perception, research in target capture strategy is valuable especially when the agents are, for example, human teammates.} Additionally, in robotics, the target and robot states are fundamentally metric and continuous instead of on top of a discrete graph.

% We refer to the problem of searching for a moving object as dynamic object search.\footnote{Please refer to the taxonomy of problem settings in Section~\ref{sec:bg:taxo_prob}}

In addition, \textbf{target detection} is yet another problem with different meanings in different communities. When the target is being actively detected by an agent, the problem is closer to target search or target capture (discussed above) \cite{robin2016multi,dadgar2016pso}. When the detection is passive, the problem is more closely related to sensor placement \cite{o1987art,sadeghi2020optimal}, detection mechanism \cite{koopman1956theory}, and signal processing \cite{tian2002target}. The sensors can be sonars \cite{mukherjee2011symbolic}, radars \cite{bekkerman2006target},  infrared \cite{chen2013local}, cameras \cite{sun2016camera}, etc.

\subsubsection{Semantic Visual Navigation}
\label{sec:bg:semanticvisnav}
Several problems share a similar practical motivation as object search but were originated from different fields.
Notably, in computer vision, the \textbf{ObjectGoal} \cite{anderson2018evaluation} and \textbf{ObjectNav} \cite{batra2020objectnav} tasks, which are instances of \textbf{semantic visual navigation}
\cite{chang2020semantic}, or sometimes just \textbf{visual navigation} \cite{gupta2017cognitive,wortsman2019learning}, can be viewed as object search problems as well.\footnote{Semantic visual navigation is the problem where an agent is placed in an unknown environment and tasked to navigate towards a given semantic target (such as ``kitchen'' or ``chair''). The agent typically has access to behavioral datasets for training on the order of millions of frames and the challenge is typically in generalization. Practically, semantic visual navigation is motivated as a home robot application \cite{batra2020objectnav} and serves the same purpose as object search using cameras in robotics.} Broadly speaking,  object search is a more general problem than semantic visual navigation as it does not require vision input or searching by navigation. Setting aside this technicality, researchers in the two communities tend to make different assumptions in problem settings, leading to the use of different methods.  Many works in object search are planning-based, so a model of the environment is given,
% \footnote{``A model of the environment is given'' also involves the robot having other on-board modules such as localization whose outputs can be fed into the object search module.}
and the evaluation culminates at real robot demonstrations.  On the other hand, end-to-end deep learning has been the predominant method for semantic visual navigation as the agent searches in an unknown environment and leverages simulators for large-scale training and benchmark testing (\eg, AI2THOR by \citet{kolve2017ai2}, Habitat by \citet{ramakrishnan2021habitat}, and Gibson by \citet{xia2018gibson}, ThreeDWorld by \cite{gan2020threedworld}). Both communities can learn from each other, although interactions have currently been limited.\footnote{For example, one improvement of ObjectNav over ObjectGoal is the introduction of \emph{intentionality} as a success criterion, which requires the agent to signal that the object is found. The same issue has been previously considered in the object search literature in robotics \cite{li2016act,wandzel2019multi}.} Nevertheless, it is hopeful that as both approach the common goal of a general robotic system for object search, more interaction will occur. This thesis takes a step in that direction.

% However, more often than not, the problem settings in current papers from the two communare similar, and

% Nevertheless,

% The research agenda of the two communities do differ, but both desire for a robotic system that can find objects in a general way. Both communities can learn from each other, although interactions between the two communities have been limited.\footnote{For example, one improvement of ObjectNav over ObjectGoal is the introduction of \emph{intentionality} as a success criterion, which requires the agent to signal that the object is found. The same issue has been previously considered in the object search literature in robotics \cite{li2016act,wandzel2019multi}.}
% The limited interaction is in part due to the difference in assumptions made in the problem setting and thus the evaluation method. The evaluation is usually based on benchmarks in simulation as well. This helps direct the community's efforts and improves and accessibility of research, but also creates barriers in terms of diversity in solution methods.

% Nevertheless, it is hopeful that as both communities approach the common goal of a general robotic system for object search, more interaction will occur. This thesis takes a step in that direction.

\subsubsection{Active Visual Search}

The terms \textbf{visual search} \cite{vogel2007non} \textbf{visual object search} \cite{druon2020visual}, \textbf{active visual search} \cite{sjoo2012topological}, \textbf{active visual object search} \cite{aydemir2011search,zeng2020semantic}, and \textbf{active object search} \cite{elfring2013active}, are all synonymous to object search using a camera in robotics. Since cameras are predominantly used by robots for perceptual tasks such as face recognition and object detection in human environments, ``visual'' is often omitted (\eg, in \citet{Ye1997SensorPF}). Since object search in robotics is inherently an active process, ``active'' is also often omitted. Note that in psychological testing \cite{erickson1964visual,williams1967effects,duncan1989visual} and computer vision \cite{tsotsos1992relative,eckstein2011visual}, ``visual search'' has historically referred to what we call object detection today. So, it makes sense that researchers coming from those backgrounds prefix it with ``active'' to mean object search by a robot. For the purpose of this thesis, we simply use object search (see Section~\ref{sec:bg:elem} for elaboration).

% and object search is by definition an active process, ``visual'' and `` is  often omitted.
% , leading to difficulty in experiment comparison.
% , and the solution methods are more diverse and system-focused than that for semantic visual navigation.
% Although the research agenda of the two communities differ, both aim for a robotic system that can find objects in a general way. However, interactions between the two communities have been limited. On the one hand, this is because the two make very different assumptions in problem setting leading to difficulty in experiment comparison,
% }. Object search, broadly speaking, is more general than semantic visual navigation as it does not require vision input.
% , and the environment can go beyond a mobile robot environment such as a home or an office (for example, tabletop, a kitchen corner etc.). It is more common in having real-robot demonstrations of the proposed method in object search, which are valuable system-building experiences. It is also often the case that the focus of a paper is entirely on building a specific system to enable a particular robot platform or scenario to perform search \cite{saidi2007online,tsuru2021online}.

\subsection{A Taxonomy of Object Search Problem Settings}
\label{sec:bg:taxo_prob}

There are several dimensions along which we can classify object search problem settings:
\begin{enumerate}[itemsep=0.5pt,topsep=0pt]
\item number of objects: single or multiple;
\item number of robots: single or multiple;
\item property of the object(s): \eg~static, dynamic, or adversarial;
\item property of the robot(s): \eg~camera-only mobile robot (including drones), eye-in-hand robotic arm, or mobile manipulator;
\item property of the environment: \eg~unknown, cluttered.
\end{enumerate}
Due to the large number of dimensions, it quickly gets out of hand if we try to enumerate all combinations. Instead, for clarity, we consider the following naming pattern:
%to help organize and articulate the meaning of the problem settings

\begin{center}
  \begin{tabular}{cccc}
[count] & [object property] & object search  & [other property]\\
    secondary & primary &   & tertiary
  \end{tabular}
\end{center}

This naming rule consists of four parts. The three parts surrounding ``object search'' are used to characterize the specific problem setting, effectively appending conditions to the basic set that define object search laid out in Section~\ref{sec:bg:elem}. These parts are organized by the following ranking:\footnote{The ordering is analogous to a radiation field centered at ``object search'', such that the radiation goes from attributes of objects to attributes of the robot, finally to the attributes of the environment.}
\begin{itemize}[itemsep=0.5pt,topsep=0pt]
\item \textbf{Primary: object property.} An adjective that describes the object's property. If not provided, this adjective is assumed to be ``static.''
\item \textbf{Secondary: count.} An adjective that describes the number of robots and the number of objects, in that order. If not provided, then both are assumed to be ``single.''  For example, we would say, ``multi-robot multi-object search'' rather than ``multi-object multi-robot search.''\footnote{In general, clarity should come first especially in paper narration. That means instead of saying ``multi-robot multiple dynamic object search,'' it is better and more natural to say ``multi-robot search for dynamic objects.'' The fact that consistent problem naming is difficult shows the multitude of object search variants.}
\item \textbf{Tertiary: other property.} A noun phrase that characterizes the robot, the environment, and (or) the task. For example, ``open-world object search'' means a robot explores in an open-world while performing search;``object search in clutter'' implies that the environment is cluttered and the robot likely can manipulate clutter.
\end{itemize}
Figure~\ref{fig:bg:complexity} is a sketch of the relationships between different variants in terms of complexity, system-level difficulty and applicability. In Table~\ref{tab:bg:survey}, we apply this taxonomy to organize object search problem settings.

\begin{figure*}[t]
  \centering
  \includegraphics[width=0.85\textwidth,draft=false]{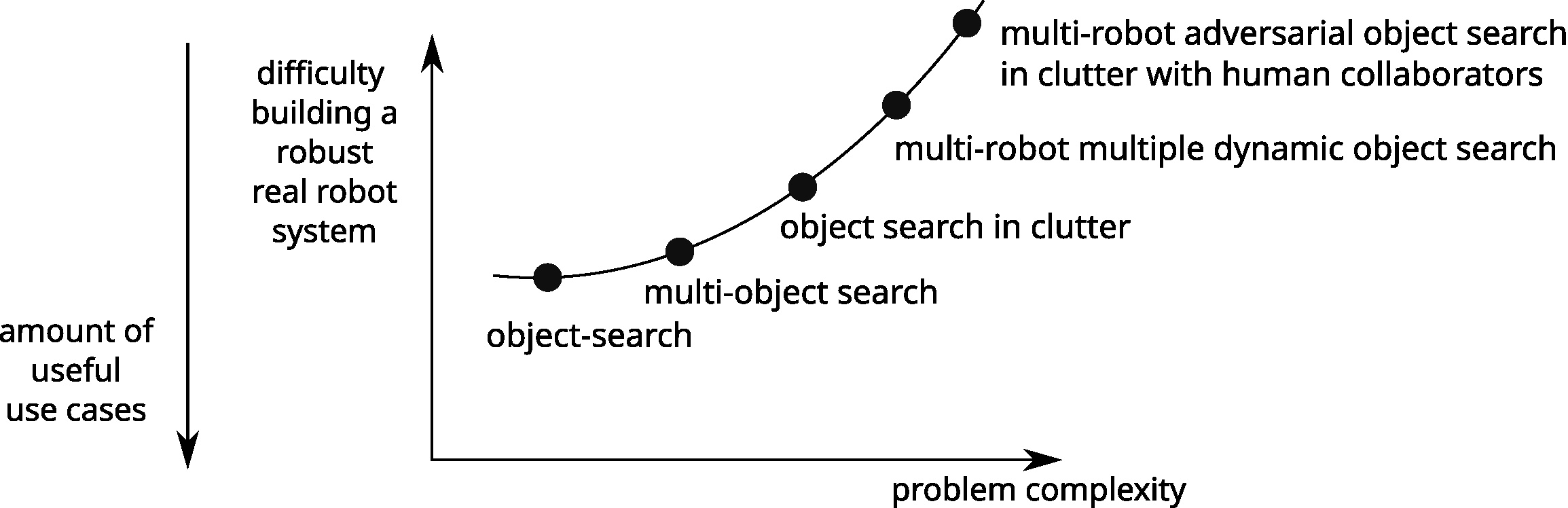}
  \caption{A sketch of the relationship between the complexity of object search variants and difficulty in building an actual system. A system that performs basic object search is arguably more broadly applicable than an intricate variant.}
  \label{fig:bg:complexity}
\end{figure*}

\subsection{A Taxonomy of Object Search Methods}
\label{sec:bg:taxo_methods}
As object search is a sequential decision-making problem, common approaches to this class of problems can be applied. We can therefore categorize the object search methods based on the type of decision-making algorithm used to generate the search strategy:
\begin{enumerate}[itemsep=0.5pt,topsep=0pt]
  \item heuristics-based (\eg, bio-inspired);
  \item mathematical analysis;
  \item greedy, next-best view;
  \item graph search;
  \item POMDP planning;
  \item reinforcement learning.
  \end{enumerate}

  % give an example of each. Then remark on their relations.
  We elaborate the above methods through examples.

  \textbf{Heuristics-based.} \citet{goldsmith1998collective} studies multi-robot search for a static object (a threat such as an explosive) and proposes an ad-hoc strategy that divides the robots into two social groups: alpha and beta. Robots in the alpha group are risk takers, while those in the beta group are conservative. Each group has a distinct rule-based policy for when and when not to move. This kind of method is typically viewed as less favorable compared to more principled approaches, but may work well in practice.

  \textbf{Mathematical analysis.}  \citet{pollock1970simple} formulated arguably the first model for moving object search where a target moves between two cells according to a Markovian model.  The problem here is framed as finding an allocation of the amount of effort (\eg, time) spent searching at each cell, which is solved it through dynamic programming. \citet{stone1976theory} generalized the problem and provides a thorough introduction on this topic. In this formulation, the searcher is given a probability distribution over the target's initial location, and a detection function that characterizes the probability density of detecting the target given a target location and an amount of effort.
  %\footnote{It is implied that, if the searcher were a robot, the searcher would be at the same location as the target when it performs search. This is clearly not realistic for real-world object search settings in robotics. }
  Later, \citet{mangel1981search} considered the case where the target moves as a diffusion process. Instead of finding a search plan (\ie, searcher's trajectory), the problem is to describe the probability density function  $f(x,S,t)$ that search has been unsuccessful up to time $t$ following a trajectory $S$, given that the target is at $x$ at $t$ (the so-called ``descriptive problem''). Analytical methods were dominant in the early days when object search was studied in operations research, but results from this body of work are often general yet abstract. Direct transfer to real robotic systems has been rare.

  % , and have not transferred well to robotics researchers and practitioners.
  %due to the difficulty in interpreting their theoretical results and their empirical implication.
  % with parameters known to the searcher. The searcher can 'look' in a region, once at a time. Pollock
  % Through mathematical analysis of, \citet{pollock1970simple} proposed
  % presented one of the first model and method for moving object search
  % formulated described
  % proposed one of the first
  % \citet{mangel1981search}

  \label{sec:bg:taxo_methods:greedy}
  \textbf{Greedy, next-best view.} \citet{Ye1997SensorPF} formulated 3D object search as choosing sensing actions over time under a cost budget; each action controls camera parameters such as position and zoom in some way and has an associated cost and probability of detecting the target. This is shown to be NP-Complete via a straightforward reduction to the Knapsack problem. Follow-up works hence often favor a greedy search strategy to build robotic systems for object search \cite{tsotsos1998playbot,shubina2010visual,andreopoulos2010active,rasouli2015attention}. \citet{andreopoulos2010active} builds a vision-based system that enables a 26-DOF humanoid robot (the Honda ASIMO) to find an object (\eg cup) in a 3D region. The system maintains a probability map over a 4$\times$4$\times$4 grid, and uses a greedy, one-step look-ahead algorithm to search. This algorithm first hypothesizes a set of candidate sensor states, each made up of a body pose and a gaze pose, and then selects a sensor state that maximizes the probability of localizing the target position. Greedy, next-best view algorithms are commonly used, often simple to implement and perform well in practice. Nevertheless, when the search actions carry different costs, or when multiple objects need to be searched for, sequential planning can consider important factors regarding action ordering that a greedy method does not.

  \textbf{Graph search.} It is uncommon, but graph search algorithms such as A$^*$ have been used for object search, primarily when searching for occluded object in clutter \cite{dogar2014object,lin2015planning}. Here, the search process is thought of as revealing occluded space by removing a sequence of visible obstacles until the target object becomes visible, where each removal action may carry a different cost.\footnote{The costs differ since removing different obstacles may affect visibility (how much volume will be revealed) and accessibility (how accessible other obstacles will become) in different ways.}
The greedy approach is shown to be suboptimal \cite{dogar2014object}. Instead, imagine a graph where a node is a subset of all visible objects, and an edge indicates the removal of a single object. Then, the object search problem becomes a graph search problem of finding the shortest path from the current node\footnote{The current node corresponds to the current state (\ie,~current subset of visible objects).} to the node corresponding to $\varnothing$ (all space revealed, target is therefore considered found). This is the approach taken by both \citet{dogar2014object} and \citet{lin2015planning}. It is elegant, yet requires the obstacles to be taken off from the environment at every step and does not consider uncertainty in perception.

\textbf{POMDP planning.} \citet{roy2005finding} uses belief compression to trade off belief space dimensionality and information loss, allowing value iteration to be feasible in enabling a mobile robot to efficiently search for a person in a hallway-and-room environment. More recently, advances in online POMDP planning algorithms have sparked research interest in applying them for object search (\eg, DESPOT \cite{somani2013despot} in \citet{li2016act} and POMCP \cite{silver2010monte} in \citet{wandzel2019multi}); refer to Section~\ref{sec:bg:pomdp_policies} for a closer look at these algorithms.) For multi-robot search, POMDP planning was unfavored computationally by \citet{hollinger2009efficient} but continue to receive attention as both fields progress \cite{rizk2019cooperative}. The appeal of POMDP planning-based approaches is in their generalizability, versatility, and reasoning over action sequences. Unpleasant theoretical results (\eg, \cite{madani1999undecidability}) and practical computational challenges have been the main reasons of criticism to this kind of approach. Previous object search works have sacrificed realism for efficiency (\eg, reducing the problem from 3D to 2D). This thesis uses POMDP's strengths in robot task modeling and mitigates its weaknesses through exploiting structures to achieve practical, effective object search systems.

% POMDP is a versatile framework that  can easily incorporate other aspects of the object search problem such as dialogue.

%and it is still

% uses belief compression to trade off belief space dimensionality and information loss, resulting in a person-finding system implemented on practical system that enables a mobile robot to search for a person in a hallway and room environment.
% x
% multi-robot has been difficult and it still is  remarked that

% POMDP planning is an for the reasons discussed above. Practicality... versatile.. dialogue systems...

\textbf{Reinforcement learning.}
\citet{zhu2017target} is a seminal work in semantic visual navigation\footnote{See Section~\ref{sec:bg:semanticvisnav} (page~\pageref{sec:bg:semanticvisnav}) for a discussion on semantic visual navigation in relation to object search.} that trained a goal-conditioned actor-critic model in the AI2-THOR simulator and demonstrated sim-to-real generalization to a mobile robot. A large body of work followed and developed various end-to-end, model-free techniques \cite{wortsman2019learning,mousavian2019visual,Savarese-RSS-19,chaplot2020object,liang2021sscnav}; Sample complexity and real-world generalization are hurdles to overcome for this approach.
% Recent works (\eg,~\cite{khandelwal2022simple}) observe that integrating large pre-trained vision-language models such as CLIP \cite{radford2021learning} improves success.
Recently, \citet{shah2022robotic}
achieved impressive outdoor visual navigation results on a mobile robot through integrating pre-trained models like CLIP \cite{radford2021learning} with a navigation planning pipeline (no sim-to-real). Separately, in target pursuit, \citet{shkurti2018model} proposed a model-based reinforcement learning approach using POMDP planners.\footnote{Note that the technique of model-based reinforcement learning with POMDP planning is equivalent to that of POMDP planning with learned models (instead of analytical). However, typically in reinforcement learning, the agent starts with no knowledge of the world and learns from scratch.} Although our work uses learned object detectors, we formulate POMDP models analytically. In the long run, inspired by AlphaGo  \cite{silver2017mastering} and AlphaZero \cite{silver2018general}, we believe that combining online tree search-based POMDP planning with learned models is a promising vein for research in object search, especially when physical interaction in unstructured environments is involved.

\subsection{A Taxonomy of Object Search Systems}
Despite often motivated from in a robotics context, papers in object search without any real robotic system demonstration outnumbers those that do. Nevertheless, the state-of-the-art in object search is best reflected by what researchers can make real robots do, which do vary a lot. It is therefore useful to come up with a taxonomy that captures a spectrum of object search systems developed to evaluate object search algorithms.

It is straightforward that on one end of the spectrum, the system is idealistic; for example, the robot and the target are points on a plane. It is more difficult to determine what is the other end. Note that we are not classifying the object search system's performance,\footnote{We do not attempt a taxonomy of the system's performance because one could always hide a target object so that a robot, or even a person, would fail to find it. Therefore, a system that ``always finds objects'' is not a realistic goal. The highest expectation in performance is that the robot can find objects as efficiently or more efficiently than a person, which is something we are so far away from today to be worth classifying.} but where it is deployed in. This includes considerations about what robotic capabilities are involved, and what environments the system is expected to operate in.

Therefore, we consider the following taxonomy to categorize object search systems:
\begin{longtable}{>{\raggedright}p{0.25in}p{5.0in}}
  \toprule
  S0 &  idealistic simulation\\
  S1 &  realistic simulation\\
 \midrule
 N0 &  single, navigation-only robot in a single environment\\
    N1 &  single, navigation-only robot in different environments\\
    N2 &  different navigation-only robots in different environments\\
 \midrule
 M0 &  single, manipulation-only robot in a single environment\\
    M1 &  single, manipulation-only robot in different environments\\
    M2 &  different manipulation-only robot in different environments\\
 \midrule
 R0 &  single, mobile manipulator robot in a single environment\\
 R1 &  single, mobile manipulator robot in different environments\\
 R2 &  different real mobile manipulator robot in different environments\\
 \bottomrule
 \caption{A taxonomy of object search systems}
\end{longtable}

\noindent This taxonomy is inspired by how levels of autonomous driving are concisely represented (from L0 to L5) \cite{briney2004state}. The name of each category (\eg, S0, N1, etc.) consists of a letter that represents the type of system and a number that represents the level of generalization of that system. To elaborate, ``S'' = simulation, ``N'' = navigation-only, ``M'' = manipulation-only, and ``R'' = mobile-manipulator (both navigation and manipulation). The number starts from 0, goes up to 1 for ``S'', and up to 2 for the rest.

A \textbf{realistic simulation} must be 3D, and the objects, motion, and events in the simulator can easily be relatable to their counterparts in the real world. \textbf{``Navigation-only''} means no manipulation is involved \emph{during} the search process; This includes common mobile robots but also, for example, an eye-in-hand robotic arm that only searches by looking through its gripper camera but does not manipulate its surroundings. \textbf{``Manipulation-only''} means the robot does \emph{not} have a mobile base and it manipulates the environment \emph{during} search, such as removing clutter or opening containers. \textbf{``Mobile manipulator''}  means the robot has a mobile base \emph{and} it manipulates the environment during search. Note that this does not require the robot to have a robotic arm -- a mobile robot that knows how to use its body to clear up movable obstacles in search for an object also counts.

Note that we do not have a category for ``different types of [...] robots in a single environment.'' This is because different robots typically are designed to be more suitable for operation in some environments than others.  It is more natural and sufficient to have a robot be able to perform search in environments suitable for it. If a system can achieve this for different robots, it achieves level 2. We provide an imaginary example for each below:
% In Table~\ref{tab:bg:survey}, we apply this taxonomy to categorize papers in the literature.

\begin{longtable}{>{\raggedright}p{0.25in}p{5.0in}}
  \toprule
  S0 & \emph{the robot and the target are points on a plane or grids in a grid world.}\\
    S1 & \emph{AI2-THOR, Habitat, Gibson, ThreeDWorld}\\
 \midrule
 N0 & \emph{a turtlebot searches in an office.}\\
 N1 & \emph{a turtlebot searches in an office and in a kitchen,\footnote{This emphasizes the capability to search in different environments, not necessarily simultaneously.} or two different offices}\\
 N2 & \emph{a turtlebot searches in an office and a drone searches outdoors.}\\
 \midrule
 M0 & \emph{a Panda arm searches over a cluttered tabletop.}\\
 M1 & \emph{a Panda arm searches over a cluttered tabletop and over a cluttered shelf.}\\
 M2 & \emph{a Panda arm searches over a cluttered tabletop, and a UR5e robot searches over a cluttered shelf.}\\
 \midrule
 R0 &  \emph{a MOVO searches in a room with a cluttered tabletop.}\\
 R1 &  \emph{a MOVO searches in a room with a cluttered tabletop and in a room with two cabinets.}\\
 R2 &  \emph{a MOVO searches in a room with a cluttered tabletop, and a Spot searches in a room with two cabinets.}\\
 \bottomrule
 \caption{Examples for the categories in the taxonomy; Application in Table~\ref{tab:bg:survey}}
\end{longtable}

\subsection{Application of the Taxonomies for Literature Survey}
Below, we follow the taxonomies introduced in the previous sections to categorize papers from the literature, which concludes this review. We only consider papers that assume the target location to be initially unknown (excluding some papers on target capture).
% which includes works that consider search on a graph and excludes some works from the moving target search (\eg~\cite{isaza2008cover}) literature where the target location is always known -- .
The result of our survey is in the table below. We divide up problem names applying the three-level taxonomy and we grouping references by methods used. The system classification is indicated by the superscript.

Notably, to the best of our knowledge, our work \cite{zheng2023system} is the first system to achieve N2 among the class of navigation-only robots.  For manipulation-only robots, the current state of the art is limited to at most M0. \citet{xiao_icra_2019} is the first and only (so far) that achieves R0, in which a mobile manipulator robot, Fetch \cite{wise2016fetch}, searches for a target object in a tabletop scene by removing clutter and looking from different viewpoints. However, the setup is restricted to only two manually specified viewpoints, one on each side of the table, from where the entire scene is captured under the field of view. Works on multi-robot search and moving object search are yet to reach N1, or any manipulation-involved search task. No work has reached R1 or R2, which can be regarded as the holy grail in terms of object search systems.

\vspace{2.0in}
\begin{center}
  THE TABLE STARTS FROM THE NEXT PAGE.
\end{center}

% into and a few notable problem settings.
\newpage
\begin{longtable}{>{\raggedright}p{1.75in}>{\small}p{3.75in}}
  \toprule
  name & {\normalsize references} \\
  \midrule
  %%%%%%%%%%%%%%%%%%%%%%%%%%%%%%%%%%%%%%%%%%%%%%%%
  %%%%%%%%%%%%%%%%%%%%%%%%%%%%%%%%%%%%%%%%%%%%%%%%
  \textbf{(Static) object search} &\textsc{\textbf{Single-object}}\\
  &\smcatg{heuristics-based:} \cite{nomatsu2015development}\sq{N0}
  \cite{izquierdo2016searching}\sq{S0},
  \cite{wixson1994using}\sq{N0}\\
  &\smcatg{analytical:} \cite{Ye1997SensorPF}\sq{N0}
  \cite{hernandez2021searching}\sq{S1}\\
  &\smcatg{greedy:} \cite{Ye1997SensorPF}\sq{N0}
  \cite{gelenbe1998autonomous}\sq{S0}
  \cite{saidi2007online}\sq{N0}
  \cite{kollar2009utilizing}\sq{N0}
  \cite{andreopoulos2010active}\sq{N0}
  \cite{shubina2010visual}\sq{N0}
  \cite{aydemir2011search}\sq{N1}
  \cite{chen2013visual}\sq{N0}
  \cite{zeng2020semantic}\sq{S1,N0}\\
  &\smcatg{graph search:} \cite{gelenbe1998autonomous}\sq{S0}
  \cite{song2020two}\sq{S0}
  \cite{zhang2021building}\sq{S1,N1}
  \cite{shah2022robotic}\sq{N1}\\
  &\smcatg{POMDP planning:} \cite{vogel2007non}\sq{S1}
  \cite{aydemir2013active}\sq{N1}
  \cite{lu2018target}\sq{N0}
  \cite{wang2018efficient}\sq{S1,N0}
  \cite{holzherr2021efficient}\sq{S0}
  \cite{zheng2022towards}\sq{S1}\\
  &\\[-0.1cm]
  %%%%%%%%%%%%%%%%%%%%%%%%%%%%%%%%%%%%%%%%%%%%%%%%
  &\textsc{\textbf{Single-object (unknown environment)}}\\
  &\smcatg{heuristics-based:} \cite{li2022remote}\sq{S1,N0}\\
  &\smcatg{graph search:} \cite{joho2011learning}\sq{N1}
  \cite{tsuru2021online}\sq{N0}\\
  &\smcatg{POMDP planning:} \cite{wang2020pomp}\sq{S1}
  \cite{giuliari2021pomp++}\sq{S1}\\
  &\smcatg{reinforcement learning:} \cite{zhu2017target}\sq{S1,N0}
  \cite{ye2018active}\sq{S1,N0}
  \cite{mousavian2019visual}\sq{S1}
  \cite{Savarese-RSS-19}\sq{S1}
  \cite{chaplot2020object}\sq{S1}
  \cite{schmid2019explore}\sq{S1}
  \cite{qiu2020learning}\sq{S1}
  \cite{liang2021sscnav}\sq{S1}
  \cite{schmalstieg2022learning}\sq{S1,N0}
  \cite{zhu2022navigating}\sq{S1,N0}\\
  &\\[-0.1cm]
  %%%%%%%%%%%%%%%%%%%%%%%%%%%%%%%%%%%%%%%%%%%%%%%%
  &\textsc{\textbf{Single-object (in clutter)}}\\
  &\smcatg{heuristics-based:} \cite{huang2022mechanical}\sq{M0}\\
  &\smcatg{analytical:} \cite{wong2013manipulation}\sq{S1}\\
  &\smcatg{greedy:} \cite{moldovan2014occluded}\sq{S1}\\
  &\smcatg{graph search:} \cite{dogar2014object}\sq{S0,M0}
  \cite{lin2015planning}\sq{M0}
  \cite{nam2019planning}\sq{S0,M0}
  \cite{huang2021mechanical}\sq{S1,M0}\\
  &\smcatg{POMDP planning:} \cite{li2016act}\sq{S1}
  \cite{nie2016searching}\sq{S0}
  \cite{xiao_icra_2019}\sq{S1,R0}
  \cite{zhao2021efficient}\sq{S1}\\
  &\smcatg{reinforcement learning:} \cite{novkovic2019object}\sq{S1,M0}
  \cite{danielczuk2019mechanical}\sq{M0}
  \cite{kurenkov2020visuomotor}\sq{S1}
  \cite{bejjani2021occlusion}\sq{S0,M0}\\
  &\\[-0.1cm]
  %%%%%%%%%%%%%%%%%%%%%%%%%%%%%%%%%%%%%%%%%%%%%%%%
  &\textsc{\textbf{Multi-object}}\\
  &\smcatg{POMDP planning:} \cite{wandzel2019multi}\sq{S0,N0}
  \cite{zheng2021multi}\sq{S0,N0}
  \cite{sloop-roman-2021}\sq{S1,N0}
  \cite{zheng2023system}\sq{S1,N2}\\
  \midrule
  &\\[0.5cm]

  %%%%%%%%%%%%%%%%%%%%%%%%%%%%%%%%%%%%%%%%%%%%%%%%
  %%%%%%%%%%%%%%%%%%%%%%%%%%%%%%%%%%%%%%%%%%%%%%%%
  \midrule
  \textbf{Multi-robot (static)} &
  \textsc{\textbf{Single-object}}\\
  \textbf{object search} & \smcatg{heuristics-based:}
  \cite{goldsmith1998collective}\sq{S0}
  \cite{zhang2008mobile}\sq{S0}\\
  %%%%%%%%%%%%%%%%%%%%%%%%%%%%%%%%%%%%%%%%%%%%%%%%
  &\\[-0.1cm]
  &\textsc{\textbf{Single-object (unknown environment)}}\\
  &\smcatg{particle swarm optimization:}
  \cite{shirsat2020multi}\sq{S1}
  \cite{tang2021gwo}\sq{S0}
  \cite{ebert2022hybrid}\sq{S0}\\
  %%%%%%%%%%%%%%%%%%%%%%%%%%%%%%%%%%%%%%%%%%%%%%%%
  &\\[-0.1cm]
  &\textsc{\textbf{Multi-object}}\\
  &\smcatg{heuristics-based:}   \cite{rybski2002mindart}\sq{N0}\\
  &\smcatg{analytical:} \cite{czyzowicz2016search}\\

  %%%%%%%%%%%%%%%%%%%%%%%%%%%%%%%%%%%%%%%%%%%%%%%%
  %%%%%%%%%%%%%%%%%%%%%%%%%%%%%%%%%%%%%%%%%%%%%%%%
  \midrule
  \textbf{Moving object search}
 &\textsc{\textbf{Single-object}}\\
  &\smcatg{analytical:}
  \cite{pollock1970simple} \cite{dobbie1974two}
  \cite{stone1979necessary} \cite{washburn1980search}
  \cite{dogan2006unmanned}\sq{S0}
  \cite{fidan2013adaptive}
  \cite{radmard2017active}\sq{N0}\\
    &\smcatg{POMDP planning:}  \cite{roy2005finding}\\

  %%%%%%%%%%%%%%%%%%%%%%%%%%%%%%%%%%%%%%%%%%%%%%%%
  %%%%%%%%%%%%%%%%%%%%%%%%%%%%%%%%%%%%%%%%%%%%%%%%

  \midrule
  \textbf{Multi-robot moving} &
  \textsc{\textbf{Single-object}}\\
  \textbf{object search}
  & \smcatg{heuristics-based}: \cite{hereford2010bio}\sq{S0,N0}
  \cite{kulich2015comparison}\sq{S0}\\
  &\smcatg{analytical:} \cite{zengin2011cooperative}\sq{S0}\\
  &\smcatg{greedy:} \cite{sarmiento2004multi}\sq{S0}
  \cite{BOURQUE201945}\sq{S0}
  \cite{kulich2015comparison}\sq{S0}\\
  &\smcatg{graph search:} \cite{hollinger2009efficient}\sq{S0,N0}\\
  %%%%%%%%%%%%%%%%%%%%%%%%%%%%%%%%%%%%%%%%%%%%%%%%
  &\\[-0.1cm]
  &\textsc{\textbf{Single-object (unknown environment)}}\\
  &\smcatg{graph search}: \cite{marjovi2009multi}\sq{S0,N0}
  \cite{kulich2015comparison}\sq{S0}\\
  &\smcatg{particle swarm optimization:}
  \cite{tang2021gwo}\sq{S0}\\
  %%%%%%%%%%%%%%%%%%%%%%%%%%%%%%%%%%%%%%%%%%%%%%%%
  &\\[-0.1cm]
  &\textsc{\textbf{Multi-object}}\\
  &\smcatg{analytical:} \cite{dames2020distributed}\sq{S0}\\
  %%%%%%%%%%%%%%%%%%%%%%%%%%%%%%%%%%%%%%%%%%%%%%%%
  &\\[-0.1cm]
  &\textsc{\textbf{Multi-object (unknown environment)}}\\
  &\smcatg{heuristics-based:} \cite{baxter2007multi}\sq{S0}\\
  \midrule

  %%%%%%%%%%%%%%%%%%%%%%%%%%%%%%%%%%%%%%%%%%%%%%%%
  %%%%%%%%%%%%%%%%%%%%%%%%%%%%%%%%%%%%%%%%%%%%%%%%
  \textbf{Ethical issues in object search\footnote{The topic ``ethical issues in object search'' is not technically an object search problem, but it is an important dimension to it. Researchers referenced have started to discuss ethical challenges related to robot's help in search and rescue; helps by robots are not equally good.}} &
  \cite{sharkey201117} \cite{harbers2017exploring} \cite{battistuzzi2021ethical}\\
  \bottomrule
  \caption{Organization of papers from the object search literature using the proposed taxonomies}
    \label{tab:bg:survey}
\end{longtable}

%%%%%%%%%%%%%%%%%%%%%%%%%% POMDP %%%%%%%%%%%%%%%%%%%%%%%%%%%%%%%%%%%%
\section{Partially Observable Markov Decision Process}
\label{sec:bg:pomdp}

POMDP was originally introduced to model control systems with incomplete state information in applied mathematics and operations research \cite{aastrom1965optimal,sondik1971optimal,smallwood1973optimal} and later introduced to robotics by \citet{kaelbling1998planning}.\footnote{See \citet{littman2009tutorial} for an excellent summary of the early literature on POMDPs.}  A POMDP models a sequential decision making problem where the environment state is not fully observable by the agent, which is almost always the case if the agent is a robot in a human environment. As a result, it has gained popularity in robotics.

In this section, I first motivate the use of POMDP in robotics from first principles. Then, I provide a precise definition of a POMDP and discuss different methods to obtain policies to a POMDP with an emphasis on practicality.
% I do also remark on the limitations of using POMDPs.
A more comprehensive literature review of POMDP in robotics can be found in \citet{thrun2005probabilistic, kurniawati2022partially, lauri2022partially}.

\subsection{Motivation from a  Robotics Perspective}
\label{sec:bg:pomdp_motiv}

A robot is a system of sensors and actuators\footnote{``Actuator'' here means more than mechanical ones, including \eg, speakers, virtual messaging, etc.} that operates in an environment. The robot can perform actions through its actuators and receive observations from its on-board sensors. The relationship between the robot and the environment can be illustrated by the perception-action loop (Figure~\ref{fig:paloop}), common in the sequential decision making literature \cite{littman1996algorithms, kochenderfer2015decision}.

\begin{figure*}[t]
  \centering
  \includegraphics[width=0.65\textwidth]{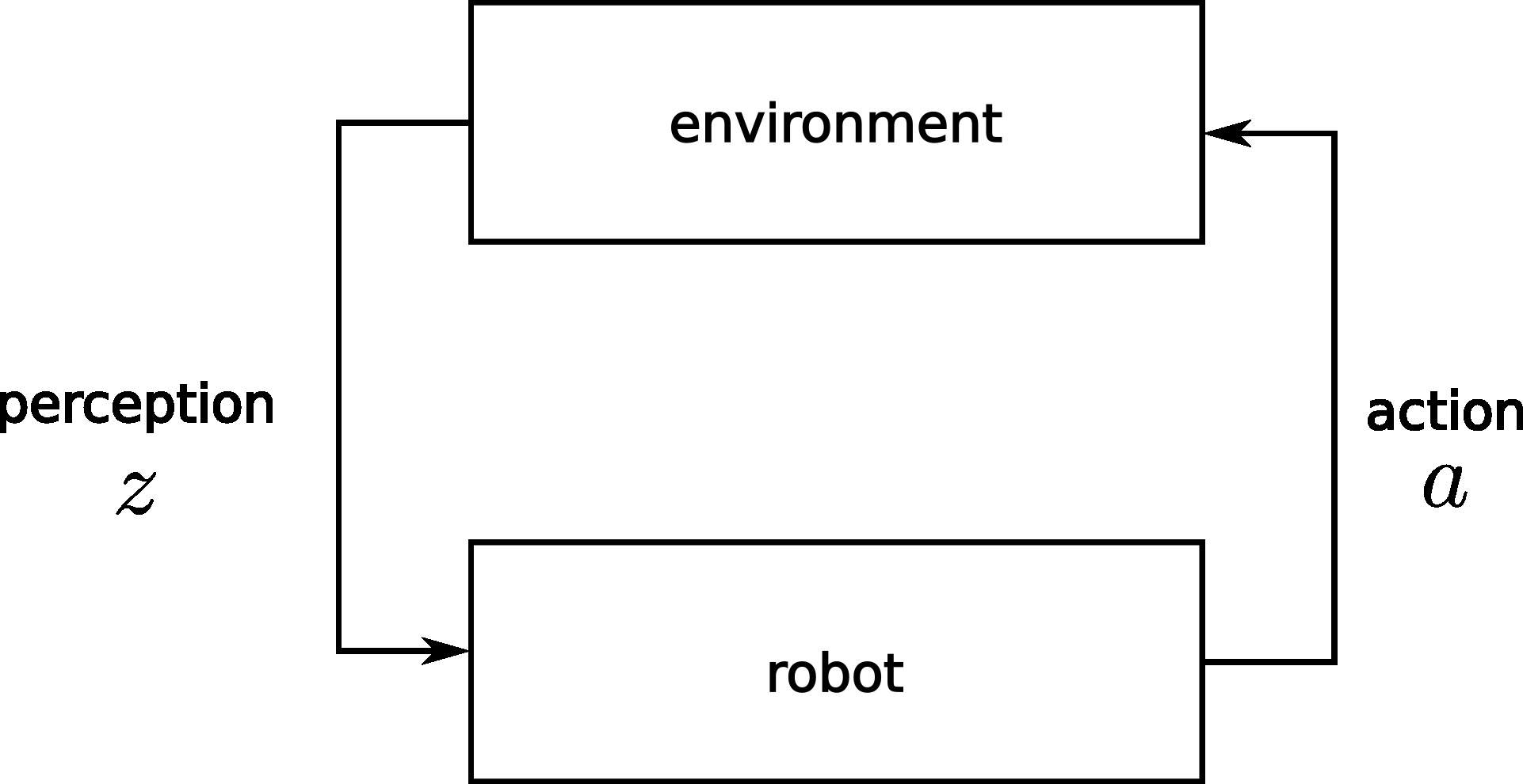}
  \caption{The perception-action loop. A robot is a system of sensors and actuators. The sensors can receive observations (perception), and the actuators can change the configuration of the robot, and perhaps, the external environment (action), which affects subsequent observations.}
  \label{fig:paloop}
\end{figure*}

Suppose the robot can choose an action from a set of possible actions, denoted as $\Aspace$, and suppose it receives observations from a set of possible observations is denoted as $\Zspace$. The perspective in Figure~\ref{fig:paloop} tells us that in the lifetime of a robot, it experiences a sequence of actions and observations $a_1,z_1,a_2,z_2,\cdots$ (and nothing else). Define the history at time $t$ to be the sequence of \emph{past} actions and observations, denoted as $h_t=(az)_{1:t-1}$.

The robot's purpose is to produce a sequence of actions that best achieves some task goal (over its lifetime). Suppose that the robot starts performing the task at time $t$.\footnote{This is in fact the setting of online POMDP planning.}  There are two questions here: how to represent the goal and how to choose an action at each step.
Assuming that the robot behaves rationally, a common idea to address both questions is to imagine that the robot observes a numeric \emph{reward} each time it takes an action.\footnote{Whether rewards are sufficient to express all goals we would ever want a robot to achieve is a topic of ongoing research \citep{silver2021reward,abel2021expressivity,bowling2022settling}. This discussion is entirely out of scope for this thesis. } Note that this reward is conditioned on the history of the robot's experience rather than just the action taken.  Then, the utility of an action sequence $a_t,a_{t+1},\cdots$ can be defined as the sum of all rewards $r_t+r_{t+1}+\cdots$ (cumulative rewards). To limit this sum to be finite, we introduce a discount factor $\gamma\in[0,1)$. So, instead of observing $r_{t+k}$ ($k\geq 0$), the robot observes $\gamma^{k}r_{t+k}$. The utility is then $\gamma^0r_t+\gamma^1r_{t+1}+\cdots$, the discounted cumulative reward.\footnote{Or, synonymously, the discounted return.}

This utility establishes the ground for rational choice over action sequences by the robot. As a consequence, following the utility theorem by \citet{von1947theory}, the robot shall behave as if it is maximizing the expectation of the utility, that is, maximizing the expectation of the discounted cumulative reward conditioned on history $h_t$:
\begin{align}
  \E\left[ \sum_{k=0}^{\infty} \gamma^{k}r_{t+k}\Big| h_t  \right]\label{eq:pomdp:objective}
\end{align}
As mentioned, each reward should be conditioned on the \emph{history} of the robot's experience. Intuitively, however, it is inconvenient to specify the goal based directly on the sequence of actions and observations the robot has experienced. This is because the robot's performance may depend on factors external to the robot, and accounting for all enumerations of history quickly gets out of hand.

Instead, define the notion of a \emph{state}, denoted by $s\in \Sspace$, which shall naturally capture the necessary and potentially external information to specify a goal through a \emph{reward function} $R$ such that $r_t=R(s_t,a_t)$.
For example, if the robot's goal is to clean a room, then whether there still exists a dirty spot in the entire room could be a useful piece of information to include in the state, even though the robot does not observe that information directly. Clearly, the robot's action may cause the state to change.

Since the robot does not observe the state (partial observability), the robot can at most maintain a distribution over states given what it knows, that is, the history. This distribution is called the \emph{belief state}, denoted $b_t(s)=\Pr(s|h_t)\equiv\Pr(s|b_t)$, where $b_t$ acts as a sufficient statistic for history $h_t$. The robot begins with an initial belief $b_1(s)=\Pr(s)$ \label{math:init_belief} based on prior knowledge, as the corresponding history $h_1$ is empty.

Now, define $V(b_t)$ as the expected utility of the optimal behavior at time $t$:
\begin{align}
  V(b_t)&=\max_{a_t,a_{t+1},\cdots\in \Aspace}\E\left[\sum_{k=0}^{\infty} \gamma^{k} r_{t+k} \Big| b_t\right]\label{eq:pomdp:value_def}
\end{align}
Note that the expectation in Equation~\ref{eq:pomdp:value_def} is a variant of that in Equation~\ref{eq:pomdp:objective} using the fact that $b_t$ is a sufficient statistic for $h_t$. We call $V$ the \emph{value function} and the value at $b_t$ means the maximum expected utility at $b_t$. From the definition of $V$ in Equation~\ref{eq:pomdp:value_def}, we can derive the following expression, which turns out to be the Bellman equation for POMDPs:\footnote{In general, $\Sspace$, $\Aspace$ and $\Omega$ can be continuous.}
%\footnote{Derivation can be found in Appendix~\ref{appdx:pomdp-val}.}
\begin{align}
    V(b_t)=\max_{a_t\in \Aspace}\left\{ \sum_{s\in \Sspace}b_t(s)R(s,a_t) + \gamma\sum_{z\in\Zspace}\Pr(z|b_t,a_t)V(b_{t+1}^{a_t,z}) \right\}\label{eq:pomdp:bellman}
\end{align}
where $b_{t+1}^{a_t,z}(s)=\Pr(s_{t+1}|b_t,a_t,z)$ is the result of recursive Bayesian state estimation (\ie, belief update) based on $b_t$ after taking action $a_t$ and receiving observation $z$ at time $t$. I provide a derivation of the Bellman equation in the appendix to this chapter (Section~\ref{appdx:pomdp:derivation}).
Equation~\ref{eq:pomdp:bellman} incorporates into the robot decision making objective two major types of uncertainty, \textbf{partial observability}, through the belief state $b_t$, and \textbf{perceptual uncertainty}, through the probability distribution $\Pr(z|b_t,a_t)$.

 It is, however, difficult to define $\Pr(z|b_t,a_t)$ directly. Instead, we can do the following expansion to derive the components that depend on states which are easier to specify:
\begin{align}
    \Pr(z|b_t,a)=\sum_{s\in \Sspace}b_t(s)\sum_{s'\in \Sspace}\Pr(s'|s,a)\Pr(z|s',a_t)
\end{align}
Define $O(s', a_t,z)=\Pr(z|s',a_t)$ and call that the \textit{observation model}. Then define $T(s_t,a,s')=\Pr(s'|s,a)$ and call that the\textit{ transition model}, which exhibits the Markov property. It so happens that a POMDP is formally defined as a tuple $\langle S, A, \Zspace, T, O, R, \gamma \rangle$.  This framework therefore encapsulates through first principles what a robot must necessarily consider to accomplish tasks during its lifetime.

Next, we provide a typical introduction of a POMDP for a quick review.

\subsection{Formal Definition of POMDP}
\label{sec:pomdp:formal}

% A POMDP models a sequential decision making problem where the environment state is
% not fully observable.
A POMDP models a sequential decision making problem where the environment state
is not fully observable by the agent. It is formally defined as a tuple
$\langle\Sspace,\Aspace,\Zspace,T,O,R,\gamma\rangle$, where
$\Sspace,\Aspace,\Zspace$ denote the state, action and observation spaces, and the functions $T(s,a,s')=\Pr(s'|s,a)$, $O(s',a,z)=\Pr(z|s',a)$, and
$R(s,a)\in\mathbb{R}$ denote the transition, observation, and reward models. The
agent takes an action $a\in\mathcal{A}$ that causes the environment state to transition from
$s\in\mathcal{S}$ to $s'\in\mathcal{S}$. The environment in turn returns the agent an observation $z\in\Zspace$ and
reward $r\in\mathbb{R}$. A \emph{history} $h_t=(az)_{1:t-1}$ captures all past actions and
observations. The agent maintains a distribution over states given current
history $b_t(s)=\Pr(s|h_t)$.
The agent updates its belief after taking an action and receiving an observation by recursive Bayesian state estimation:
\begin{align}
  b_{t+1}(s')=\eta\Pr(z|s',a)\sum_{s\in\Sspace}\Pr(s'|s,a)b_t(s)
  \label{eq:pomdp:belief_update}
\end{align}
where
$\eta={\sum_s\sum_{s'}\Pr(z|s',a)\Pr(s'|s,a)b_t(s)}$
is the normalizing constant. The solution to a POMDP is a \emph{policy}
$\pi$ that maps a belief state or the corresponding history to an action.  The \emph{value} of a POMDP at a belief under policy
$\pi$ is the expected discounted cumulative reward following that policy:
\begin{align}
  V_{\pi}(b_t)=\E\left[\sum_{k=0}^\infty \gamma^kR(s_{t+k},\pi(b_{t+k}))\Big| b_t\right]
\label{eq:pomdp:bellman_policy}
\end{align}
where $\gamma\in [0,1)$ is the discount factor. The optimal value at belief $b_t$ is
$V(b_t)=\max_{\pi}V_\pi(b_t)$.
Equation~\ref{eq:pomdp:bellman_policy} can also be written equivalently as
$V_{\pi}(h_t)=\E[\sum_{k=0}^\infty \gamma^kR(s_{t+k},\pi(h_{t+k})) | h_t]$

\subsection{Object-Oriented POMDP}

An Object-Oriented POMDP (OO-POMDP) \cite{wandzel2019oopomdp} (generalization of OO-MDP
  \cite{diuk2008object}) is a POMDP that considers the state
  and observation spaces to be factored by a set of $n$ objects, $\Sspace = \Sspace_1\times \cdots \times \Sspace_n$, $\Zspace = \Zspace_1\times\cdots\times \Zspace_n$, where each object belongs to a class with a set of attributes. A simplifying assumption is made for the 2D multi-object search domain \cite{wandzel2019oopomdp} that objects are independent so that the belief space scales linearly rather than exponentially in the number of objects: $b_t(s)=\prod_ib_t^i(s_i)$. We make this assumption for the same computational reason when we formulate multi-object search in 3D (Chapter~\ref{ch:3dmos}).

\subsection{Obtaining Policies to POMDPs}
\label{sec:bg:pomdp_policies}

General-purpose algorithms for POMDP planning can be grouped into offline algorithms and online algorithms. Data-driven approaches (\eg, end-to-end deep reinforcement learning) can be applied to learn POMDP policies for domains where POMDP models are hard to define. Here, we provide an overview for the ideas behind major POMDP planning algorithms.\footnote{We remarked on learning-based approaches for POMDP in our taxonomy of object search methods (Section~\ref{sec:bg:taxo_methods}, page~\pageref{sec:bg:taxo_methods}); Refer to~\citet{arulkumaran2017deep} and \citet{mousavi2016deep} for more discussions.}
We spend more effort on tree search-based online algorithms as we use them to produce object search policies, due to  scalability to large domains, asymptotic optimality and ease of implementation. For a more detailed review, please refer to \citet{lauri2022partially}.

\subsubsection{Offline Planning with Exact and Point-based Methods}
Exact methods for solving POMDP include linear programming \cite{smallwood1973optimal} and value iteration \cite{cassandra1994acting}. The basis of exact methods is that the value function of a POMDP is piecewise linear and convex. To elaborate, the value function under policy $\pi$ as in Equation~\ref{eq:pomdp:bellman_policy} can be written as a dot product:
\begin{align}
  V_{\pi}(b_t) = b_t\cdot\alpha_{\pi}
\end{align}
where $b_t\cdot\alpha_{\pi}=\sum_{s\in\Sspace}b_t(s)\alpha(s)$ and $\alpha_{\pi}$ is called an $\alpha$-vector \cite{smallwood1973optimal}. Each element $\alpha_{\pi}(s)$ equals to the expected discounted cumulative reward for state trajectories $sa_1o_1a_2o_2\cdots$ starting at $s$, with actions from $\pi$ and observations according to $O$:\footnote{The idea is that the policy $\pi$ can be thought of as encoding a set of trees, each rooting at some belief $b$ and a path from the root $ba_1o_1a_2o_2\cdots$ represents a belief trajectory with actions from $\pi$ and observations from $\Zspace$. Given a state $s$ sampled from $b$, a trajectory $sa_1o_1a_2o_2\cdots$ can be defined the same way.}
\begin{align}
  \alpha_{\pi}(s)=R(s,a_i)+\gamma \sum_{s'\in\Sspace}T(s,a_i,s')\sum_{z_i\in\Zspace}O(s',a_i,z_i)\alpha_{\pi}(s')
\end{align}
Geometrically, $V_{\pi}$ is a hyperplane over the belief space, and the optimal value function can be viewed as the upper envelope of the hyperplanes corresponding to all $\alpha$-vectors, which is piecewise linear and convex (refer to \citet{kaelbling1998planning} for an illustration).

Exact methods compute all $\alpha$-vectors that form the optimal value function. They often produce policies that exhibit interesting behavior but are often too slow to be practical for large domains
\cite{ross2008online}. In fact, solving POMDPs exactly is likely unwise for robotics problems since it is PSPACE-complete \cite{papadimitriou1987complexity}, and undecidable whether a desirable solution exists \cite{madani1999undecidability}.

Point-based methods \cite{pineau2003point,spaan2005perseus,kurniawati2008sarsop} take a different approach. Instead of computing the optimal value function over the entire belief space, a set of belief states (\ie, points) are selected, and one $\alpha$-vector is maintained per belief state. The set of $\alpha$-vectors then approximates the optimal value function. Point-based methods allow computing policies offline with tunable approximation and are of sustained interest in robotics \cite{lauri2022partially}, yet their effectiveness is limited to small domains due to the intractability in belief update (Equation~\ref{eq:pomdp:belief_update}) for larger domains.

\subsubsection{Online Planning with Sparse Tree Search-based Methods}
In contrast to offline planning which aims to compute or approximate the optimal policy given any belief state, online planning interleaves planning and execution, and cares about outputting the action to be executed by the robot given its current belief state $b_t$.

The idea behind sparse tree search-based algorithms for online POMDP planning is simple and appealing (Figure~\ref{fig:mcts_pomdp}). Online POMDP planning can be thought of as computing the Q-value $Q(b_t,a)$ for any $a\in\Aspace$, \ie, the expected return after taking action $a$ at belief state $b_t$ over all possible future belief trajectories. These trajectories form a \emph{belief tree}, illustrated in Figure~\ref{fig:mcts_pomdp}. This tree can get wide and deep for large domains with long horizons. Computing Q-values exactly becomes intractable as a result. One idea to work around this is to approximate the full tree with a subtree, which represents an estimation of the Q-values. This is a general and intuitive idea, and it is exactly
what sparse tree search algorithms do: A set of belief-based samples $\{s\sim b_t\}$ individually travels down the subtree to ``experience the domain'' and expands the subtree in the process.
% , and yield discounted returns as a result.
The difference between specific algorithms boils down to two key issues regarding building the subtree: (1) how a sample obtains its ``experience'' (\ie, an action-observation sequence) and (2) how the rewards observed by a sample are incorporated into the estimation of value. Currently, the two state-of-the-art sparse tree search-based algorithms are POUCT \cite{silver2010monte} and DESPOT \cite{somani2013despot,ye2017despot}. We briefly summarize how each algorithm deals with the two issues above, which reflects their differences.

\begin{figure*}[t]
  \centering
  \includegraphics[width=\textwidth, draft=false]{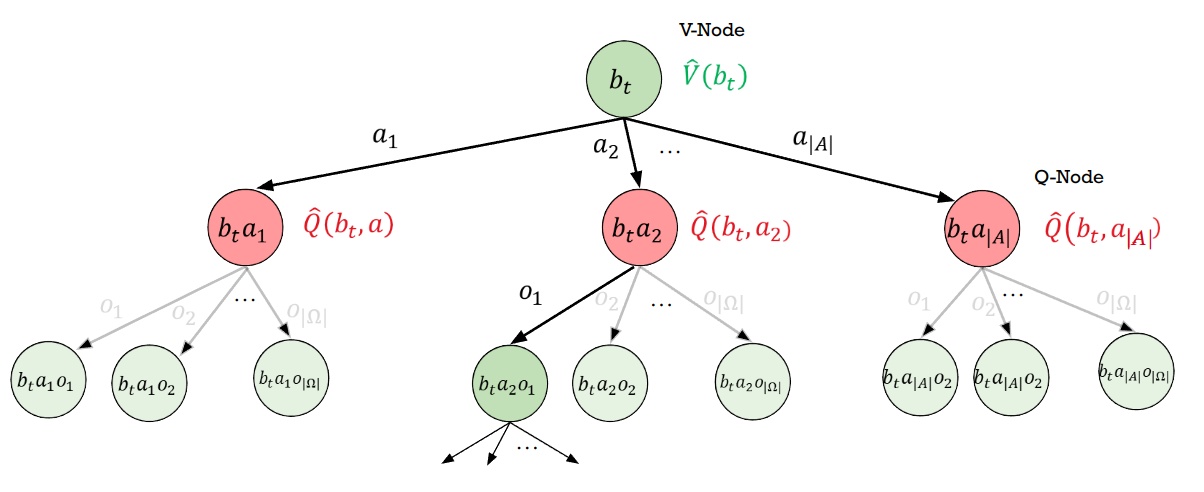}
  \caption{Computing the exact value over the full belief tree (black $+$ gray) is infeasible for practical domains. Instead, approximating the value based on a subtree of the full belief tree (black) is an intuitive and promising approach. This is the idea behind sparse tree search-based online POMDP planning algorithms.}
  \label{fig:mcts_pomdp}
\end{figure*}

\label{sec:bg:pomdp:pouct}
POUCT (Partially-Observable UCT) is based on Monte Carlo Tree Search (MCTS), which is a general sample-based sequential decision making algorithm with a track record of breaking state-of-the-art results in game playing, most notably the difficult game of Go \cite{browne2012survey,silver2017mastering}. POUCT is an extension of UCT (Upper Confidence bounds for Trees) \cite{kocsis2006bandit} to partially observable domains, which is a version of MCTS that uses the UCB1 algorithm \cite{auer2002finite} for action selection. Effectively, POUCT addresses (1) with UCB1 and generator sampling, that is, $a\gets\argmax_{a}Q(b,a)+c\sqrt{\frac{N(b)}{N(ba)}}$, and $(s',z,r)\sim\mathcal{G}(s,a)$; The latter ensures sparse observation branching. It addresses (2) with the update rule $\hat{Q}(b,a)\gets\hat{Q}(b,a) + \frac{R-\hat{Q}(b,a)}{N(ba)}$ that so that $\hat{Q}$ is the average of the observed returns so far.\footnote{Here, $R$ is the observed return and $N(b)$ denotes the visitation count to the node for belief $b$.} Steps for POUCT are shown in Algorithm~\ref{alg:pouct} (Section~\ref{sec:apdx_pouct}). Note that POMCP is a specific version of POUCT with particle-based belief representation. We use POUCT for its generality and our work does not rely on particle beliefs.
% The both algorithms consider sparse branching of observations, yet the details differ. POUCT chooses an observation by sampling from a generator

DESPOT (Determinized Sparse Partially Observable Trees) is both known as a planning algorithm as well as the name for the belief subtree built by this algorithm. DESPOT addresses issue (1) with action selection $a\gets\argmax_{a}U(b,a)$ based on an upper bound estimate $U(b,a)$ of the Q-value $Q(b,a)$, and observation selection based on a pre-determined subset of observations; This subset is deterministic under a set of $K$ pre-specified \emph{scenarios} (\ie, random seeds), which leads to sparse observation branching ahead of time, instead of during the tree search as done by POUCT. DESPOT addresses (2) by maintaining an upper and lower bound on $Q(b,a)$; Refer to \citet{somani2013despot} for details on the lower bound. % where the lower bound depends on the scenarios encountered by the node at $b$, which in turn depend on the observation selection.

POUCT is asymptotically optimal as the number of samples approaches infinity, while DESPOT outputs a near-optimal policy if $K$ is large enough. Subsequent works have extended both algorithms to handle continuous domains \cite{sunberg2018online,garg2019despot}. Empirically, one is not shown to be strictly better than the other \cite{sunberg2018online}, and both are affected by the rollout policy (\aka, default policy) used.  We build upon POUCT for planning due to its optimality and simplicity of implementation, although our POMDP formulations are general.

%%%%%%%%%%%%%%%%%%%%%%%
\newpage
\subsection{Appendix: Derivation of the POMDP Bellman Equation}
\label{appdx:pomdp:derivation}

Below, I provide a derivation of the POMDP Bellman Equation in Equation~\ref{eq:pomdp:bellman}.
Start by defining $U(b_t)$ to be the utility of some arbitrary action sequence given $b_t$,
\begin{align}
  U(b_t)&=\E\left[\sum_{k=0}^{\infty} \gamma^{k} r_{t+k} \Big| b_t\right]\label{eq:pomdp:util_def}
    \intertext{Take out the first term of the summation,}
    &=\E\left[r_t + \sum_{k=1}^\infty \gamma^{k} r_{t+k} \Big| b_t\right]\\
    \intertext{By linearity of the expectation,}
    &= \E\left[r_t | b_t\right] + \E\left[\sum_{k=1}^\infty \gamma^{k}
    r_{t+k} \Big| b_t\right]\\
    \intertext{Pull out $\gamma$ and rewriting the summation index,}
    &= \E\left[r_t | b_t\right] + \gamma\E\left[\sum_{k=0}^\infty \gamma^{k} r_{t+1+k} \Big| b_t\right]\label{eq:pomdp:pull_gamma}\\
  \intertext{Recall the Tower property of conditional expectation: $\E[X|Y] = \E[\E[X|Y,Z] | Y]$. In this case, ``$X$'' is the summation, ``$Y$'' is $b_t$, and ``$Z$'' is $b_{t+1}$.
  The utility starting at time $t+1$ (summation) is conditionally independent of $b_t$ given $b_{t+1}$,
  since the history corresponding to $b_{t+1}$ subsumes $b_t$. Therefore, we drop $b_t$ in the inner conditional expectation: }
    &=\E\left[r_t | b_t\right] + \gamma\E\left[ \E\left[\sum_{k=0}^\infty \gamma^{k} r_{t+1+k} \Big| b_{t+1}\right] \Bigg| b_t\right]\\
    \intertext{Using the definition of $U$,}
        &=\E\left[r_t | b_t\right] + \gamma\E\left[ U(b_{t+1}) \big| b_t\right]
\end{align}
Recall the definition of $V(b_t)$ in Equation~\ref{eq:pomdp:value_def}. We have:
\begin{align}
  V(b_t)&=\max_{a_t,a_{t+1},\cdots\in A}  U(b_t)\\
        &=\max_{a_t,a_{t+1},\cdots\in A}\left\{\E\left[r_t | b_t\right] + \gamma\E\left[ U(b_{t+1}) \big| b_t\right] \right\}\\
        &=\max_{a_t}\left\{\E\left[r_t | b_t\right] +\max_{a_{t+1},a_{t+2}\cdots\in A} \gamma\E\left[ U(b_{t+1}) \big| b_t\right]\right\}\\
        &=\max_{a_t}\left\{\sum_{s\in S}b_t(s)R(s,a_t)  + \max_{a_{t+1},a_{t+2}\cdots\in A}  \gamma \sum_{z\in \Zspace}\Pr(z|b_t,a_t)U{(b^{a_t,z}_{t+1})}\right\}\\
        &=\max_{a_t}\left\{\sum_{s\in S}b_t(s)R(s,a_t)  + \gamma \sum_{z\in \Zspace}\Pr(z|b_t,a_t)\max_{a_{t+1},a_{t+2}\cdots\in A}  U{(b^{a_t,z}_{t+1})}\right\}\\
  &=\max_{a_t}\left\{\sum_{s\in S}b_t(s)R(s,a_t)  + \gamma \sum_{z\in \Zspace}\Pr(z|b_t,a_t)V{(b^{a_t,z}_{t+1})}\right\}
\end{align}

\subsection{Appendix: The POUCT Algorithm}
\label{sec:apdx_pouct}
In Algorithm~\ref{alg:pouct}, we provide the pseudocode for the POUCT (Partially Observable UCT) \cite{silver2010monte} algorithm. This is the same algorithm as POMCP as presented in \citet{silver2010monte} without particle belief representation. We slightly modified the notation to match the that of Algorithm~\ref{alg:planning} (Chapter~\ref{ch:3dmos}, page~\pageref{alg:planning}).

Additional notations:

\begin{tabular}{l@{\hspace{1cm}}l}
      $\Pr(s|h)$          & belief state corresponding to history $h$ \\
      $\mathcal{G}$          & black-box generator \\
      $d$  & maximum tree depth (planning depth)\\
      $T$  & belief tree\\
      $V$  & value estimate\\
      $N$  & visitation count\\
      $R$  & observed discounted return\\
      $\gamma$  & discount factor\\
      $\pi_{rollout}$  & rollout policy\\
      $hao$  & a history from $h$ following $a$ and $o$ \\
\end{tabular}

\begin{algorithm}[h]
\caption{Partially Observable UCT $(\mathcal{G},h,d)\rightarrow a$}
\label{alg:pouct}
\SetKwFunction{search}{Search}
\SetKwFunction{simulate}{Simulate}
\SetKwFunction{rollout}{Rollout}
\SetKwProg{myproc}{procedure}{}{}
\myproc{\search{$h$}}{
\tcp{Entry function of POUCT}
\Repeat{\textsc{Timeout()}}{
    $s\sim \Pr(s|h)$\tcp*{$\Pr(s|h)$ is the belief state}
    Simulate($s,h,0$)\;
}
\KwRet{$\argmax_a V(ha)$}\tcp*{$V(ha)$ is the Q-value of action $a$}
}
\BlankLine
\BlankLine

\myproc{\simulate($s,h,depth$)}{
    \If{$depth > d$}{
        \KwRet{0}
    }
    \If{$h\not\in T$}{
        \ForEach{$a\in \mathcal{A}$}{
            $T(ha)\leftarrow (N_{init}(ha), V_{init}(ha))$\;
        }
        \KwRet{\rollout($s,h,depth$)}
    }
    $a\gets \argmax_{a} V(ha) + c\sqrt{\frac{\log N(h)}{N(ha)}}$\;
    $(s',o,r)\sim \mathcal{G}(s,a)$\;
    $R\gets r+\gamma\cdot \text{Simulate}(s',hao,depth+1)$\tcp*{$R$ is the discounted return}
    $N(h)\gets N(h)+1$\;
    $N(ha)\gets N(ha)+1$\;
    $V(ha)\gets V(ha) + \frac{R-V(ha)}{N(ha)}$\;
    \KwRet{R}
}
\BlankLine
\BlankLine
\myproc{\rollout($s,h,depth$)}{
    \If{$depth > H$}{
        \KwRet{0}
    }
    $a\gets \pi_{rollout}(h,\cdot)$\;
    $(s',o,r)\sim \mathcal{G}(s,a)$\;
    \KwRet{$r+\gamma\cdot \rollout(s',hao,depth+1)$}\;
}
\end{algorithm}

\chapter{Overarching Methodology}
\label{ch:overarching}
\lettrine{A}{t} its core, this thesis argues for modeling object search as a POMDP while exploiting structures for practicality. The idea is for the model to take advantage of structures in human environments and human-robot interaction, while being independent of any specific robot or environment. Then, a system or package that implements a solution strategy to this POMDP can be general across, and thus integrable with, different robots and environments.
What does this POMDP look like, and how do we approach ``solving'' this POMDP? The goal of this chapter is to address these questions.
% \footnote{Note that the POMDP model is a method for object search, not the object search problem itself.}\todo{remark that POMDP is the method, not the problem.}

\section{A Generic POMDP Model for Object Search}
%% First, talk about what the POMDP in my mind is.
First, what does this POMDP looks like? Here, let us consider the basic problem setting as motivated in Figure~\ref{fig:spot_challenges} (Chapter~\ref{ch:intro}, pp.~\pageref{sec:sigch:challenges:begin}-\pageref{sec:sigch:challenges:end}),
where a robot with a movable camera searches for a single static target object. I describe a sufficient and generic formulation of this POMDP as a starting point.\footnote{Refer to Section~\ref{sec:bg:pomdp} for an introduction of POMDPs.} This POMDP model can be considered the ``parent'' of the variants in later chapters, which tackle specific problem settings (\eg, multi-object search, searching in 3D, correlations, spatial language, etc.). This draws parallels with the ``parent problem'' of object search variants in Background (Section~\ref{sec:bg:elem}, page \pageref{sec:bg:elem}). I briefly discuss how this model can be extended
to address the additional challenges in Chapter~\ref{ch:closing}.
% such as dynamic objects and environment interaction (page \pageref{sec:sigch:challenges:additional}) with some preliminary results in
% Chapters~\ref{ch:dialog} and \ref{ch:open_probs}.

The formulation of this POMDP is as follows:

\begin{itemize}[itemsep=0.5pt,topsep=0pt]
\item \textbf{State space $\Sspace$.} A state $s\in\Sspace$, $s=(\srobot, \starget)$ is factored into the robot state $\srobot$ and the target state $\starget$. The robot state contains the robot pose (position and orientation of the camera). The target state contains the location of the target object.

\item \textbf{Action space $\Aspace$.} Generically, we consider just two action types \textsc{Look} and \textsc{Find}; The purpose of \textsc{Look} is to perceive some part of the search region, and the purpose of \textsc{Find} is to declare object(s) to be found at some location. When camera is used, \textsc{Look} corresponds to changing the position and orientation of the camera, either relative to the current pose or to some goal pose, and then project a field of view to receive observations. Note that these actions are abstract and can be broken down into finer-grained, parameterized action types as needed, as done, for example, in the 3D Multi-Object Search (3D-MOS) model (Section~\ref{sec:3dmos:action_space}, page~\pageref{sec:3dmos:action_space}).

\item \textbf{Observation space $\Ospace$.} An observation $z=(\zrobot,\ztarget)$ is factored into the observation of the robot itself $\zrobot$ and the observation of the target object $\ztarget$. For object search, $\ztarget$ is typically the detected location of the object (though in 3D-MOS we consider it to be a set of voxels in the field of view; see Section~\ref{sec:3dmos:observation_space}, page~\pageref{sec:3dmos:observation_space}) and $\zrobot$ should be an estimation of the robot state, for example, an estimated robot pose.

\item \textbf{Transition model $T(s,a,s')$.}  When the robot takes a \textsc{Look} action, the robot should change its pose to the desired destination with some domain-specific noise. When the robot takes \textsc{Find}, the target should be marked as found if the condition for success is satisfied, such as when the target object is within the field of view of the robot, which should be determined based on the state $s$. The target is assumed to be static.

\item \textbf{Observation model $O(s',a,z)$.} The robot observation is an estimate of the robot state. So $\zrobot$ is independent of the target observation given the robot state. Therefore, the observation model $O(s',a,z)$ can be factored as $O(s',a,z)=\Pr(z|s',a)=\Pr(\zrobot|\srobot',a)\Pr(\ztarget|s',a)$. Here, $\Pr(\zrobot|\srobot',a)$ can be regarded as a model of the robot's localization module, and $\Pr(\ztarget|s',a)$ models the object detection mechanism (\eg~through a field of view) and the uncertainty of object detection.

\item \textbf{Reward function $R(s,a)$.} The reward of a \textsc{Look} action should depend on the robot state and the viewpoint changing action, which should reflect the cost needed to complete the \textsc{Look} action, such as time or travel distance.  Taking \textsc{Find} signals a commitment by the robot its belief of the object location. If correct, then the robot successfully completes the task, receiving a high reward $\rmax$. However, the correctness of declarations can be expensive to verify, for example, by a human teammate. Therefore, taking \textsc{Find} is also significantly more costly than \textsc{Look} actions, receiving a high penalty $\rmin$.
\end{itemize}

\subsection{Remark}

This model might seem simple, but it is not obvious.  Since POMDP is such a general model for robot behavior, it may be tempting to pack too much or too little information into the model.  The key is to determine the right level of abstraction this POMDP model should live at in a practical robot system.

Our insight is that the perception and action of an object search system should happen at a level higher than the basic building blocks, such as localization, object detection, navigation or low-level control. It should also be at a lower level of abstraction than a high-level task such as \emph{``pick two apples and then heat them''} (example from Figure~\ref{fig:jacob_andreas_slide}). The model above echos that vision.  This somewhat ``middle-level'' of abstraction of this object search model also coincides with the duality of object search's role in people's mind, as discussed in Chapter~\ref{ch:intro} (page \pageref{ch:intro:unique_problem}).

\subsection{Solution Method}
%% Comment on learning
This thesis takes the explicit, online planning approach to obtaining policies for POMDPs. Concretely, this means to explicitly maintain the belief state and model the POMDP's components, such as programmatically define observation models based on analytical functions, and then apply general-purpose online POMDP planning algorithms based on Monte Carlo Tree Search to obtain an approximately optimal policy.  To improve performance and practicality, I develop and employ techniques such as multi-resolution planning, hierarchical planning, view position graph sampling, and using heuristic rollout policies depending on the specific problem setting, discussed in the following Chapters (Chapters \ref{ch:3dmos}-\ref{ch:sloop}).

\vspace{2in}
\begin{center}
  THIS IS THE END OF THIS CHAPTER.
\end{center}

% \section{Practical Considerations}
% %% Practical concerns: answer some questions that Eric asked like what about doing object detection during movement?

\chapter{3D Multi-Object Search}
\label{ch:3dmos}
%%%%%%%%%%%%%% 3D-MOS Chapter %%%%%%%%%%%%%%%
\begin{figure}[t]
\centering
\includegraphics[width=\linewidth]{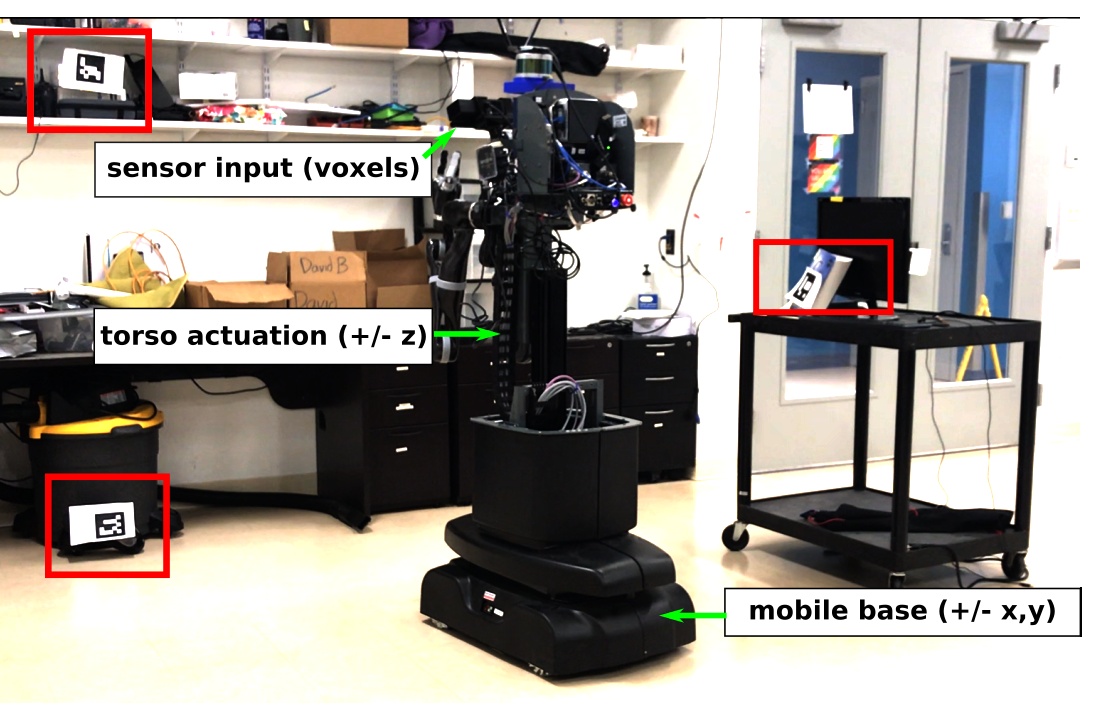}
\caption{An example of the 3D-MOS problem where a torso-actuated mobile robot is
  tasked to search for three objects placed at different heights in a lab
  environment. The objects are represented by paper AR tags marked by red
  boxes. Note that the robot must actively move itself due to limited field of
  view, and the objects can be occluded by the attached obstacles if viewed from
  the side.}
\label{fig:mos3d}
\end{figure}

%%%%%%%%%%%%% Motivation %%%%%%%%%%%%
\section{Motivation - Why 3D Object Search?}
\lettrine{R}{obots} operating in human spaces must find objects such as glasses, books, or cleaning supplies that could be on the floor, shelves, or tables. This search space is naturally 3D. When multiple objects must be searched for, such as a cup and a mobile phone, an intuitive strategy is to first hypothesize likely search regions for each target object based on semantic knowledge or past experience~\cite{kollar2009utilizing,aydemir2013active}, then search carefully within those regions. Since the latter directly determines the success of the search, it is \emph{essential} for the robot to produce an efficient search policy within a designated 3D search region under limited field of view (FOV), where target objects could be partially or completely occluded. In this chapter and next, we consider the problem setting where a robot must search for multiple objects in a 3D search region by actively moving its camera, with as few steps as possible (Figure~\ref{fig:mos3d}).

This chapter begins by articulating both algorithmic and system-level challenges for 3D multi-object search, followed by remark on previous work and a summary of our contributions. From then on, the chapter focuses on the theoretical side of this problem and tackling algorithm-level challenges. The next chapter (Chapter~\ref{ch:genmos}) tackles system-level challenges by presenting a system for generalized 3D multi-object search deployed on different robots.

% remainder of this chapter

% This chapter is organized as follows:
% \begin{itemize}[itemsep=0.5pt,topsep=0pt]
%   \item We begin by articulating
% \end{itemize}

\subsection{Algorithm-Level Challenges}
Searching for objects in a large search region requires acting over long horizons under various sources of uncertainty in a partially observable environment. For this reason, previous works have used Partially Observable Markov Decision Process (POMDP) as a principled decision-theoretic framework for object search \cite{xiao_icra_2019,atanasov2014nonmyopic,danielczuk2019mechanical}.  However, to ensure the POMDP is manageable to solve, previous works reduce the search space or robot mobility to 2D \cite{aydemir2013active,wandzel2019multi,li2016act}, although objects exist in rich 3D environments. The key challenges lie in the intractability of maintaining exact belief due to large state space \cite{silver2010monte}, and the high branching factor for planning due to large observation space \cite{sunberg2018pomcpow,garg2019despot}.

\subsection{System-Level Challenges}
As such a valuable and fundamental skill for robots, we expect that eventually object search becomes an \emph{off-the-shelf} ability any robot can acquire to search for objects in the environment that it operates in, similar to other capabilities such as object detection, SLAM, and motion planning. However, unlike the other aforementioned robotic capabilities, to the best of our knowledge, there is no general-purpose object search package available for robotics researchers and practitioners. Sophisticated mobile robot platforms, such as the Kinova MOVO \cite{movo} and the Boston Dynamics Spot \cite{bdspot}, do not come equipped with an object search system, despite their otherwise impressive capabilities. This thesis takes the first step towards a robot-independent, environment-agnostic system and package for generalized object search.

\subsection{Remark on Previous Work}
Searching for a single, static object in 3D by planning sensing parameters (\eg position, orientation, and zoom of a camera) under a time budget is NP-complete \citep{Ye1997SensorPF}.\footnote{See Section~\ref{sec:bg:taxo_methods} page~\pageref{sec:bg:taxo_methods:greedy} for elaboration.}
Previous work primarily address the computational complexity of object search by
hypothesizing likely regions based on object co-occurrence \cite{kollar2009utilizing,wixson1994using}, semantic knowledge \cite{aydemir2013active} or language \cite{wandzel2019multi}, reducing the state space from 3D to 2D \cite{wandzel2019multi,wang2018efficient,sarmiento2003efficient,nie2016searching}, or constrain the sensor to be stationary \cite{danielczuk2019mechanical,dogar2014object}. The work in this chapter focuses on multi-object search within a 3D region where the robot actively moves the mounted camera, for example, through pan or tilt, or by moving itself.

Several works explicitly reason over the arrangement of occluded objects based on given geometry models of clutter \cite{xiao_icra_2019,nie2016searching,wong2013manipulation}. Our approach considers occlusion as part of the observation that contains no information about target locations and we do not require geometry models.

Many works formulate object search as a POMDP. Notably,
\citet{aydemir2013active} first infer a room to search in then perform search by
calculating candidate viewpoints in a 2D plane.  \citet{li2016act} plan sensor
movements online, yet assume objects are placed at the same surface level in a
container with partial occlusion. \citet{xiao_icra_2019} address object fetching
on a cluttered tabletop where the robot's FOV fully covers the scene, and that
occluding obstacles are removed permanently during
search. \citet{wandzel2019multi} formulate the multi-object search (MOS) task
on a 2D map using the proposed Object-Oriented POMDP (OO-POMDP). We extend that
work to 3D and tackle additional challenges by proposing a new observation model
and belief representation, and a multi-resolution planning algorithm. In
addition, our POMDP formulation allows fully occluded objects and can be in
principle applied on different robots such as mobile robots or drones.

Beginning with CARMEN~\citep{montemerlo2003perspectives}, open source libraries for
SLAM have greatly lowered the barrier to entry into
robotics~\citep{grisetti2007improved, hess2016real}.  Similarly, for
motion planning, libraries such as
OMPL~\citep{sucan2012the-open-motion-planning-library} and
MoveIt!~\citep{chitta2016moveit} have broadened access to motion
planning to a variety of different robotic platforms.  Our work aims
to do the same thing for object search.

\citet{wixson1994using} remarked that selecting views for object search in a local region is a harder problem than the selection of which region to search in. Most works that demonstrate real-world robotic search are constrained within a 2D search region or reduce some aspect of the problem (\eg, the observation or action space) \cite{aydemir2013active,wandzel2019multi,li2016act,zeng2020semantic,bejjani2021occlusion,holzherr2021efficient,giuliari2021pomp++,schmalstieg2022learning}.

Deep learning methods that typically map raw observations to actions \cite{yang2018visual,chaplot2020object,mayo2021visual,deitke2022procthor,schmalstieg2022learning} can enable 3D object search, yet it is hard to train such a model on a new robot and ensure generalization to a new real-world environment; ongoing work (\eg, by \citet{deitke2020robothor, schmalstieg2022learning, gervet2022navigating}) is addressing this challenge through sim-to-real transfer.
% recent work by \citet{schmalstieg2022learning} demonstrates a learning method that generalizes across hallways for a particular 2D navigation robot.
In contrast, our approach only requires basic perception capabilities such as object detection and localization to enable object search; point cloud observations can be optionally considered by our system to be occlusion-aware.

% Our work extends the 3D-MOS \cite{zheng2020multi} approach for 3D local region search, by proposing algorithms for making octree belief more applicable, and by developing a practical system that uses octree-belief as the representation of uncertainty.
%%%%%%%%%%%%%%%%%%%%%%%%%%%%%%%%%%%%%%%%%%%%%%%%%%%%%%%%%%

\section{Contributions}
\label{sec:3dmos:contribs}
The contributions of our work are as follows:

\begin{itemize}[itemsep=0.5pt,topsep=0pt]
  \item We introduce \textbf{3D Multi-Object Search (3D-MOS)}, a general POMDP formulation for the multi-object search task with 3D state and action spaces, and a realistic observation space in the form of labeled voxels within the viewing frustum from a mounted camera. Following the Object-Oriented POMDP (OO-POMDP) framework proposed by \citet{wandzel2019multi}, the state, observation spaces are factored by independent objects, allowing the belief space to scale linearly instead of exponentially in the number of objects.

  \item We address the algorithmic challenges of computational complexity in solving 3D-MOS by developing several techniques that converge to an online multi-resolution planning algorithm:
    \begin{itemize}[itemsep=0.5pt,topsep=0pt, label=$\circ$]
    \item \emph{First,} we propose a per-voxel observation model which drastically
      reduces the size of the observation space necessary for planning.

    \item \emph{Next,} we present a novel belief representation, called
      \textbf{octree belief}, that captures beliefs at different resolutions and
      allows efficient and exact belief updates.

    \item \emph{Then,} we exploit the octree structure and derive abstractions of the
      ground problem at different resolution levels leveraging abstraction
      theory for MDPs \cite{li2006towards,bai2016markovian}.

    \item \emph{Finally,} a Monte-Carlo Tree Search (MCTS) based online planning
      algorithm, called Partially-Observable Upper Confidence bounds for Trees
      (POUCT) \cite{silver2010monte}, is employed to solve these abstract
      instances in parallel, and the action with highest value in its MCTS tree
      is selected for execution.

    \end{itemize}

    We evaluate the proposed algorithm in a simulated, discretized 3D
    domain where a robot with a 6 DOF camera searches for objects of different
    shapes and sizes randomly generated and placed in a grid environment. In
    addition, we demonstrate our approach as a proof-of-concept system on a
    torso-actuated mobile robot in our lab. %(Figure~\ref{fig:seq}).

  \item To address system-level challenges, we present \textbf{GenMOS
      (Generalized Multi-Ob-ject Search)}, a general-purpose object search system that is robot-independent and environment-agnostic. GenMOS takes as input point cloud observations of the local region (when available), 3D object detection bounding boxes (if detection occurs), and localization of robot camera pose, and outputs a viewpoint to move to as the result of sequential online planning. The point cloud observations are
    used in three ways: (1) simulate occlusion; (2) inform occupancy and initialize octree belief; and (3) sample a belief-based graph of view positions
    % \footnote{Note on terminology: Recall from Section~\ref{sec:intro:term} that "view position" means the 3D position
    %   of the camera, and ``viewpoint'' or ``view pose'' means the 6D camera pose.}
    that avoids obstacles.
    \begin{itemize}[itemsep=0.5pt,topsep=0pt, label=$\circ$]
    \item I implemented this system as a software package based on
      gRPC~\cite{grpc}; Besides evaluating it in simulation, I deployed it
      on the Boston Dynamics Spot robot, the Kinova MOVO robot, and the
      Universal Robotics UR5e robotic arm, performing object search in different
      environments.
    \end{itemize}

  \end{itemize}

\section{Formulation of the 3D-MOS POMDP}
\label{sec:mos3d}

% The environment can be scaled to increase difficulty of the problem;

The robot is tasked to search for $n$ static target objects (e.g. cup and book) of known type but unknown location in a search space that also contains static non-target obstacles. We assume the robot has access to detectors for the objects that it is searching for. The search region is a 3D grid map environment denoted by $G$. Let $g\in G\subseteq{R}^3$ be a 3D grid cell in the environment. We use $G^l$ to denote a grid at \emph{resolution level} $l\in\mathbb{N}$, and $g^l\in G^l$ to denote a grid cell at this level. When $l$ is omitted, it is assumed that $g$ is at the ground resolution level. We introduce the 3D-MOS domain as an OO-POMDP as follows. See Figure~\ref{fig:3dmospomdp} for illustrations. This model is a multi-object extension and a 3D specialization of the overarching object search POMDP described in Chapter~\ref{ch:overarching}.

\begin{figure}[t]
\centering
\includegraphics[width=\linewidth]{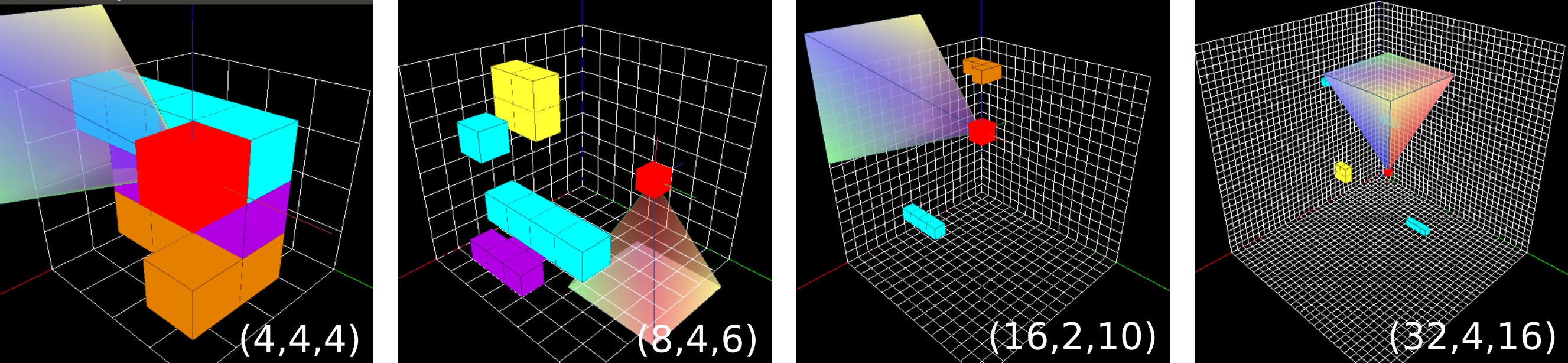}
\caption{Example illustrations of the 3D-MOS POMDP model.  The robot (represented as a red cube) can project a viewing frustum to observe the search space, in which objects are represented by sets of cubes.  In these examples, the tuple $(m,n,d)$ at lower-right of each image means that the search space in total has $m\times m\times m$ grid cells, with $n$ randomly placed objects, and the robot can project a 45-degree frustum with a far plane at distance $d$ grid cells to the robot. The percentage of search space covered by each viewing frustum, parameterized by field-of-view depth $d$, decreases as the world size increases.}
\label{fig:3dmospomdp}
\end{figure}

\subsection{State space $\Sspace$}
An environment state $s=\{s_1,\cdots,s_n,s_r\}$ is factored in an object-oriented way, where $s_r\in\mathcal{S}_r$ is the state of the robot, and $s_i\in\mathcal{S}_i$ is the state of target object $i$. A robot state is defined as $s_r=(p,\mathcal{F})\in\mathcal{S}_r$ where $p$ is the 6D camera pose and $\mathcal{F}$ is the set of found objects. The robot state is assumed to be observable to the robot. In this work, we consider the object state to be specified by one attribute,  the 3D object pose at its center of mass, corresponding to a cell in grid $G$. We denote a state $s_i^l\in\mathcal{S}_i^l$ to be an object state at resolution level $l$, where $\mathcal{S}_i^l=G^l$.

% Observation
\subsection{Observation space $\Ospace$}
\label{sec:3dmos:observation_space}
% The robot receives an observation through
% a projected frustum FOV $V$, which consists of $|V|$ voxels.
The robot receives an observation through a viewing frustum projected from a mounted camera. The viewing frustum forms the FOV of the robot, denoted by $V$, which consists of $|V|$ voxels. Note that the resolution of a voxel should be no lower than that of a 3D grid cell $g$. We assume both resolutions to be the same in this chapter for notational convenience, hence $V\subseteq G$, but in general a voxel with higher
resolution can be easily mapped to a corresponding grid cell.

For each voxel $v\in V$, a \emph{detection function} $d(v)$ labels the voxel to be either an object $i\in\{1,\cdots,n\}$, \textsc{Free}, or \textsc{Unknown} (Figure~\ref{fig:fov}). \textsc{Free} denotes that the voxel is a free space or an obstacle. We include the label $\textsc{Unknown}$ to take into account occlusion incurred by target objects or static obstacles. In this case, the corresponding voxel in $V$ does not give any information about the environment. An observation $o=\{(v,d(v)) | v\in V\}$ is defined as a set of voxel-label tuples. This can be thought of as the result of voxelization and object segmentation given a raw point cloud.

\begin{figure}[t]
\centering
\includegraphics[width=0.95\linewidth]{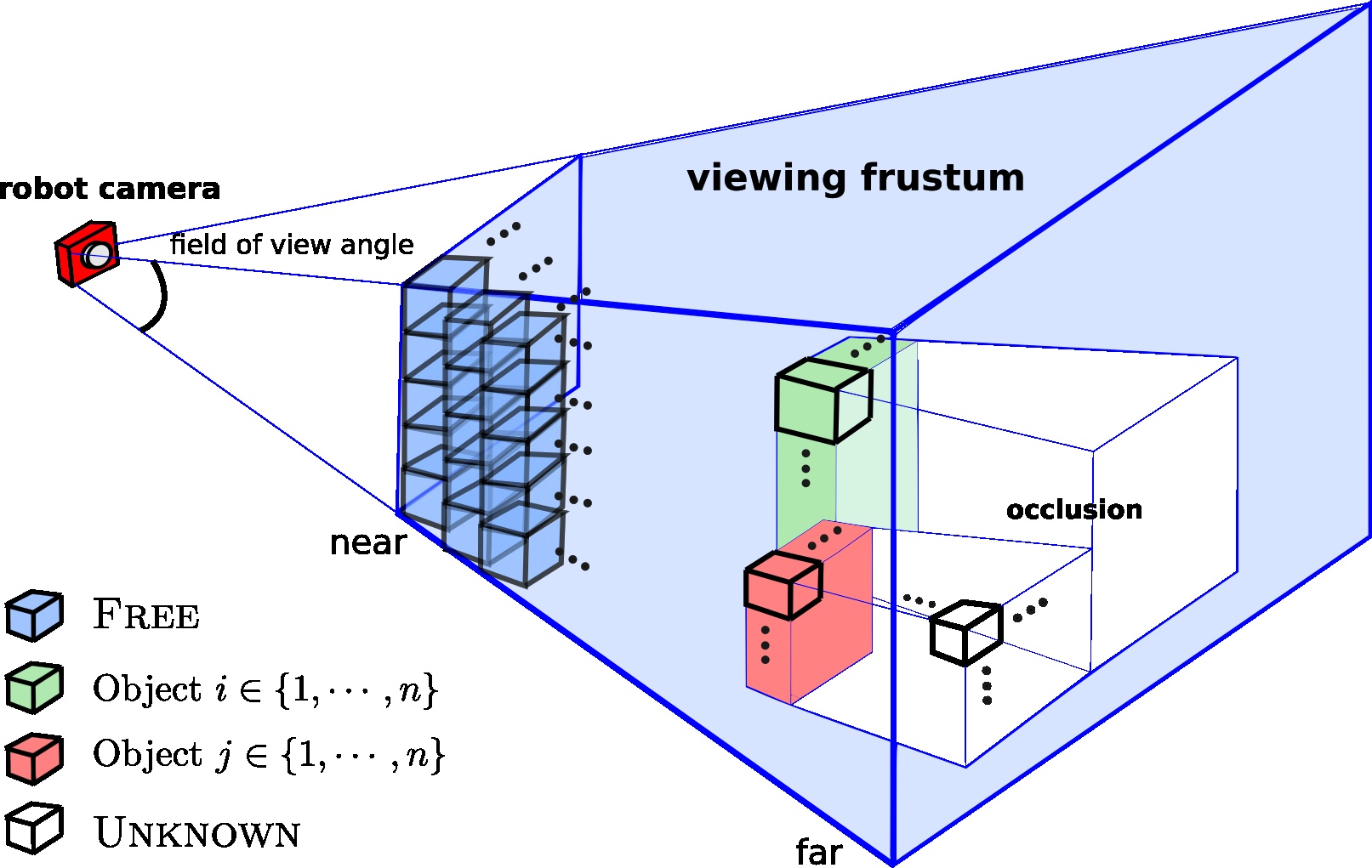}
\caption{Illustration of the viewing frustum and volumetric observation. The
  viewing frustum $V$ consists of $|V|$ voxels, where each $v\in V$ can be
  labeled as $i\in\{1,\cdots,n\}$, \textsc{Free} or \textsc{Unknown}.}
\label{fig:fov}
\vspace{-1em}
\end{figure}

% where $V$ is the viewing frustum that represents the FOV , and $d(v)$ is a \emph{detection function} that labels the voxel $v\in V$ to be either an object $i\in\{1,\cdots,n\}$, \textsc{Free} or \textsc{Unknown}.
% \footnote{This can be thought
% of as the result of voxelization and object segmentation given the raw point
% cloud.}.

% We let $V\subseteq G$ for
% notational convenience, but in general a voxel with higher
% resolution can be easily mapped to a corresponding grid cell.

We can factor $o$ by objects in the following way. First, given the robot state $s_r$ at which $o$ is received, the voxels in $V$ have known locations. Under this condition, $V$ can be reduced to exclude voxels labeled \textsc{Unknown} while still maintaining the same information. Then, $V$ can be decomposed by objects into $V_1,\cdots,V_n$ where for any $v\in V_i$, $d(v)\in\{i,\textsc{Free}\}$ which retain the same information as $V$ for a given robot state.\footnote{The FOV $V$ is fixed for a given camera pose in the robot state, therefore excluding \textsc{Unknown} voxels does not lose information.} Hence, the observation $o=\bigcup_{i=1}^n o_i$ where $o_i=\{(v,d(v))|v\in V_i\}$.

%This is illustrated in Fig.

% Actions
\subsection{Action space $\Aspace$.}
\label{sec:3dmos:action_space}
Searching for objects generally requires three basic
capabilities: \emph{moving}, \emph{looking}, and \emph{declaring} an object to be found at some location.
% The
%   robot receives an observation through looking, and signals a commitment
%   to its belief of the object location through declaring.
Formally, the action space consists of these three types of primitive actions: $\MOVE(s_r,g)$ action moves the robot from pose in $s_r$ to destination $g\in G$ stochastically. $\LOOK(\theta)$ changes the
camera pose to look in the direction specified by $\theta\in\mathbb{R}^3$, and
projects a viewing frustum $V$. $\DETECT(i,g)$ declares object $i$ to be found
at location $g$. The implementation of these actions may vary depending on the
type of search space or robot. Note that this formulation allows macro actions, such as ``look after move'' to be composed for planning.

%%%%%%%%% Do we need the following?
% \todo{needed?}
% This assumption is often used in prior work \cite{wandzel2019multi, xiao_icra_2019,
%   bandyopadhyay2013intention} which separates the localization problem from the
%   already challenging POMDP planning problem.
%All transitions are assumed to be deterministic.

\subsection{Transition function $T$.} Target objects and obstacles are static objects, thus $\Pr(s_i'|s,a)=\bm{1}(s_i' = s_i)$. For the robot, the actions \textsc{Move}($s_r$, $g$) and $\LOOK(\theta)$ change the camera location and direction to $g$ and $\theta$ following a domain-specific stochastic dynamics function. The action $\DETECT(i,g)$ adds $i$ to the set of \emph{found objects} in the robot state only if $g$ is within the FOV determined by $s_r$.
% In our evaluation, as in previous work \cite{}, we consider a domain with deterministic robot state transition to separate the localization problem from the object search problem.

\subsection{Reward function $R$.} The correctness of declarations can only be determined by, for example, a human who has knowledge about the target object or additional interactions with the object; therefore, we consider declarations to be expensive. The robot receives $\rmax\gg 0$ if an object is correctly identified by a $\DETECT$
action, otherwise the $\DETECT$ action incurs a $\rmin\ll 0$ penalty. $\MOVE$ and
$\LOOK$ receive a negative step cost $R_{\text{step}}<0$ dependent on the robot state and the action itself. This is a sparse reward function.

\subsection{Observation Model $O$}
\label{sec:observation_model}
We have previously described how a volumetric observation $o$ can be factored by objects into $o_1,\cdots,o_n$. Here, we describe a method to model $\Pr(o_i|s',a)$, the probabilistic distribution over an observation $o_i$ for object $i$.

Modeling a distribution over a 3D volume is a challenging problem \cite{park2019deepsdf}. To develop an efficient model, we make the simplifying assumption that object $i$ is contained within a single voxel located at the grid cell $g=s_i'$. We address the case of searching for objects of unknown sizes with our planning algorithm (Section~\ref{sec:planning_algorithm}) that plans at multiple resolutions in parallel.

Under this assumption, $d(v)=\textsc{Free}$ deterministically for $v\neq s_i'$, and the uncertainty of $o_i$ is reduced to the uncertainty of $d(s_i')$. As a result, $\Pr(o_i|s',a)$ can be simplified to $\Pr(d(s_i)|s',a)$. When $s_i'\not\in V_i$, either $d(s_i')=\textsc{Unknown}$ (occlusion) or $s_i'\not\in V$ (not in FOV). In this case, there is no information regarding the value of $d(s_i')$ in the observation $o_i$, therefore $\Pr(d(s_i')|s',a)$ is a uniform distribution. When $s_i'\in V_i$, that is, the non-occluded region within the FOV covers $s_i'$, the case of $d(s_i')=i$ indicates correct detection while $d(s_i')=\textsc{Free}$ indicates sensing error. We let $\Pr(d(s_i')=i|s',a)=\alpha$ and $\Pr(d(s_i')=\textsc{Free}|s',a)=\beta$. It should be noted that $\alpha$ and $\beta$ do not necessarily sum to one because the belief update equation (Equation~\ref{eq:pomdp:belief_update}) does not require the observation model to be normalized, as explained in~Section \ref{sec:pomdp:formal}. Thus, hyperparameters $\alpha$ and $\beta$ independently control the reliability of the observation model.

%%%%%%%%%%%%%%%%%%%%% Octree Belief Stuff %%%%%%%%%%%%%%%%%%%%%%%5
\section{Octree Belief Representation}
\label{sec:octree_belief}

\begin{figure}[t]
\centering
\includegraphics[width=0.9\linewidth]{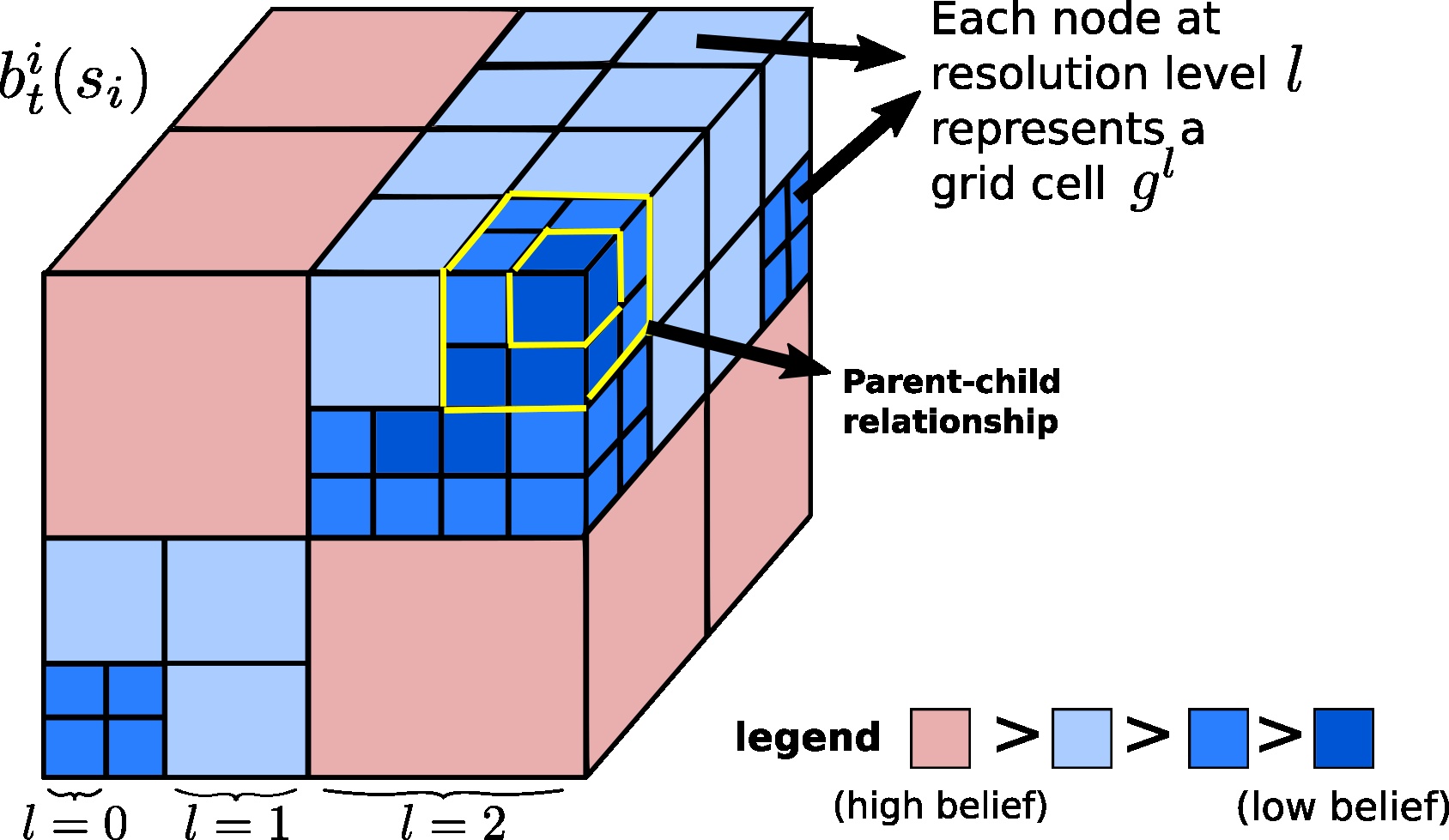}
\caption{Illustration the octree belief representation $b_t^i(s_i)$. The color
  on a node $g^l$ indicates the belief $\Val_t^i(g^l)$ that the object is
  located within $g^l$. The highlighted grid cells indicate parent-child
  relationship between a grid cell at resolution level $l=1$ (parent) and one at
  level $l=0$.}
\label{fig:octree}
\end{figure}

Particle belief representation \cite{silver2010monte,somani2013despot} suffers
from particle depletion under large observation spaces.  Moreover, if the resolution
of $G$ is dense, it may be possible that most of 3D grid cells do not contribute
to the behavior of the robot.

We represent a belief state $b_t^i(s_i)$ for object $i$ as an octree, referred to as an \emph{octree belief}. It can be constructed incrementally as observations are received and it tracks the belief of object state at different resolution levels. Furthermore, it allows efficient belief sampling and belief update using the per-voxel observation model (Sec.~\ref{sec:observation_model}).

An octree belief consists of an octree and a normalizer. An octree is a tree where every node has $8$ children. In our context, a node represents a grid cell $g^l\in G^l$, where $l$ is the resolution level, such that $g^l$ covers a cubic volume of $(2^l)^3$ ground-level grid cells; the ground resolution level is given by $l=0$. The 8 children of the node equally subdivide the volume at $g^l$ into smaller volumes at resolution level $l-1$ (Figure~\ref{fig:octree}). Each node stores a value $\Val_t^i(g^l)\in\mathbb{R}$, which represents the unnormalized belief that $s_i^l=g^l$, that is, object $i$ is located at grid cell $g^l$.  We denote the set of nodes at resolution level $k<l$ that reside in a subtree rooted at $g^l$ by $\Children^{k}(g^l)$. By definition, $b_t^i(g^l)=\Pr(g^l|h_t)=\sum_{c\in\Children^{k}(g^l)}\Pr(c|h_t)$. Thus, with a normalizer $\Norm_t=\sum_{g\in G}\Val_t^i(g)$, we can rewrite the normalized belief as:
\begin{align}
  \label{eq:octbdef}
b_t^i(g^l)=\frac{\Val_t^i(g^l)}{\Norm_t}=\sum_{c\in\Children^{k}(g^l)}\left(\frac{\Val_t^i(c)}{\Norm_t}\right),
\end{align}
which means $\Val_t^i(g^l)=\sum_{c\in\Children^{k}(g^l)}\Val_t^i(c)$. In words, the value stored in a node is the sum of values stored in its children. The normalizer equals to the sum of values stored in the nodes at the ground resolution level.

The octree does not need to be constructed fully in order to query the
probability at any grid cell. This can be achieved by setting a default value
$\Val_0^i(g)=1$ for all ground grid cells $g\in G$ not yet present in
the octree. Then, any node corresponding to $g^l$ has a default value of
$\Val_0^i(g^l)=\sum_{c\in\Children^0(g^l)}\Val_0^i(c)=|\Children^0(g^l)|$.

For clarity (\eg, in Section~\ref{ref:prioroctree}), we provide the definition of \emph{default value} and \emph{initial value} in an octree belief node as follows:
\begin{definition}[\textbf{Default value}]\label{def:3dmos:default_val}
  The \emph{default value} of an octree belief node $\Val_0^i(g^l)$ is the value \emph{before} the node is present in the octree.
\end{definition}

\begin{definition}[\textbf{Initial value}]\label{def:3dmos:initial_val}
  The \emph{initial value} of an octree belief node $\Val_1^i(g^l)$ is the value \emph{when} the node is inserted into the octree.
\end{definition}
\noindent Note that the notation of initial value is consistent with that of initial belief as introduced in Section~\ref{sec:bg:pomdp_motiv}, page \pageref{math:init_belief}.

% Next, we describe the belief update and sampling algorithm for the
% octree belief representation.

\subsection{Belief Update}
\label{sec:octree:belief_update}
We have defined a per-voxel observation model for $\Pr(o_i|s',a)$ that reduces to $\Pr(d(s_i')|s',a)$ if $s_i'\in V_i$, or a uniform distribution if $s_i'\not\in V_i$. This suggests that the belief update need only
happen for voxels that are inside the FOV to reflect the information in the
observation.
% To this end, we propose the following belief update algorithm for $\Val_t^i(s_i)$ if $s_i\in V_i$, and the normalizer $\Norm_t$:

Upon receiving observation $o_i$ within the FOV
$V_i$, belief is updated according to Algorithm \ref{alg:belief_update}.
This algorithm updates the value of the ground-level node $g$
corresponding to each voxel $v\in V_i$ as $\Val_{t+1}^i(g)=\Pr(d(v)|s',a)\Val_{t}^i(g)$. The normalizer is updated to make sure $b_{t+1}^i$ is normalized
\begin{lemma}
  The normalizer $\Norm_t$ at time $t$ can be correctly updated by adding the
  incremental update of values as in Algorithm \ref{alg:belief_update}.
  % In other words, the following holds:;
  % \begin{align}
  %   \label{eq:norm_update}
    % $$\Norm_{t+1} = \Norm_{t} + \sum_{s_i\in V_i}\left(\Val_{t+1}^i(s_i) - \Val_{t}^i(s_i)\right)$$
%   \end{align}
\end{lemma}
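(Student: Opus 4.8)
The plan is to compute the true normalizer $\Norm_{t+1}=\sum_{g\in G}\Val_{t+1}^i(g)$ directly from Equation~\ref{eq:octbdef} and show that it equals $\Norm_t$ plus the sum of the per-node value changes that Algorithm~\ref{alg:belief_update} already makes, so that $\Norm$ can be maintained in $O(|V_i|)$ time without ever re-summing over all of $G$ (which is the whole point of the lemma).

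First I would isolate which ground-level values actually change. By the per-voxel observation model of Section~\ref{sec:observation_model}, for a voxel $v\in V_i$ (recall $V_i\subseteq V\subseteq G$ consists precisely of the in-FOV, non-\textsc{Unknown} voxels assigned to object $i$), Algorithm~\ref{alg:belief_update} sets $\Val_{t+1}^i(v)=\Pr(d(v)|s',a)\,\Val_t^i(v)$, and these updates touch distinct cells since the $v$'s are distinct grid cells. For every other ground cell $g\in G\setminus V_i$ — a cell outside the FOV, or one whose voxel is labeled \textsc{Unknown} — the observation model is uniform, carries no information about $d(s_i')$, and the algorithm leaves the node untouched, so $\Val_{t+1}^i(g)=\Val_t^i(g)$; the same identity holds for cells not yet present in the octree, where both sides are the default value $\Val_0^i(g)$ of Definition~\ref{def:3dmos:default_val}. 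Splitting the defining sum accordingly,
\begin{align}
\Norm_{t+1} &= \sum_{g\in G}\Val_{t+1}^i(g)
= \sum_{v\in V_i}\Val_{t+1}^i(v) + \sum_{g\in G\setminus V_i}\Val_{t+1}^i(g)\\
&= \sum_{v\in V_i}\Pr(d(v)|s',a)\,\Val_t^i(v) + \sum_{g\in G\setminus V_i}\Val_{t}^i(g)\\
&= \Norm_t + \sum_{v\in V_i}\bigl(\Pr(d(v)|s',a)-1\bigr)\Val_t^i(v),
\end{align}
where the last step adds and subtracts $\sum_{v\in V_i}\Val_t^i(v)$ to reassemble $\Norm_t=\sum_{g\in G}\Val_t^i(g)$. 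Since $\bigl(\Pr(d(v)|s',a)-1\bigr)\Val_t^i(v)=\Val_{t+1}^i(v)-\Val_t^i(v)$ is exactly the increment written into node $v$ by Algorithm~\ref{alg:belief_update}, the normalizer is obtained by accumulating these increments, as claimed.

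The main obstacle is not the algebra but the bookkeeping forced by the octree being only partially instantiated and hierarchical. I would have to argue carefully that (i) reading ``the observation model is uniform outside $V_i$'' as ``do not update those nodes'' is legitimate, i.e.\ that leaving their unnormalized values fixed preserves the correct relative belief among them; (ii) when a ground cell $v$ (or one of its ancestors) is created during the update, its pre-update contribution to $\Norm_t$ is its default value $\Val_0^i(v)=1$, so ``new value minus old value'' is still the right increment; and (iii) propagating each increment up to the ancestor nodes to maintain the child-sum invariant $\Val^i(g^l)=\sum_{c\in\Children^{0}(g^l)}\Val^i(c)$ does not affect $\Norm$, because $\Norm$ sums over the ground level $l=0$ only. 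Once these three points are checked, the displayed computation goes through verbatim.
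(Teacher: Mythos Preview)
Your proposal is correct and follows essentially the same argument as the paper: split $\Norm_{t+1}=\sum_{g\in G}\Val_{t+1}^i(g)$ into the $V_i$ and $G\setminus V_i$ parts, use that values outside $V_i$ are unchanged (uniform observation model), and rearrange to get $\Norm_{t+1}=\Norm_t+\sum_{v\in V_i}\bigl(\Val_{t+1}^i(v)-\Val_t^i(v)\bigr)$. Your additional bookkeeping remarks about default values for newly inserted nodes and ancestor propagation not affecting the ground-level sum are sound and go slightly beyond what the paper spells out, but the core proof is the same.
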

\begin{proof}
  \label{pf:norm_proof}
The normalizer must be equal to the sum of node values at the ground level for the next belief $b_{t+1}^i$ to be valid (Equation~\ref{eq:octbdef}). That is, $\Norm_{t+1}=\sum_{s_i\in G}\Val^i_{t+1}(s_i)$. This sum can be decomposed into two cases where the object $i$ is inside of $V_i$ and outside of $V_i$; For object locations $s_i\not\in V_i$, the \emph{unnormalized} observation model is uniform, thus $\Val^i_{t+1}(s_i)=\Pr(d(s_i)|s',a)\Val^i_{t}(s_i)=\Val^i_{t}(s_i)$. Therefore, $\Norm_{t+1}=\sum_{s_i\in V_i}\Val^i_{t+1}(s_i) + \sum_{s_i\not\in V_i}\Val^i_{t}(s_i)$. Note the set $\{s_i | s_i\not\in V_i\}$ is equivalent as $\{s_i | s_i \in G\setminus V_i\}$. Using this fact and the definition of $\Norm_{t}$, we obtain
$\Norm_{t+1}=\Norm_t+\sum_{s_i\in V_i}\left(\Val^i_{t+1}(s_i) - \Val^i_{t}(s_i)\right)$
which proves the lemma.
\end{proof}

This belief update is therefore exact since the objects are static.
The complexity of this algorithm is $O(|V|\log(|G|)$;
% where $|V|$ is the size of the viewing frustum and $|G|$ is the size of the search space;
 Inserting nodes and updating values of nodes can be done by traversing the tree depth-wise.

\begin{algorithm}[t]
\caption{OctreeBeliefUpdate $(b_t^i,a,o_i)\rightarrow b_{t+1}^i$}
\label{alg:belief_update}
\SetKwInOut{Input}{input}
\SetKwInOut{Output}{output}
\Input{$b_t^i$: octree belief for object $i$; $a$: action taken by robot; $o_i=\{(v,d(v)|v\in V_i\}$: factored observation for object $i$}
\Output{$b_{t+1}^i$: updated octree belief}
\tcp{Let $\Psi(b_i^t)$ denote the octree underlying $b_t^i$. }
\For{$v\in V_i$}{
  $s_i\gets v$\tcp*{State at grid cell corresponding to voxel $v$}
  \If{$s_i\not\in\Psi(b_i^t)$}{
    Insert node at $s_i$ to $\Psi(b_i^t)$\;
  }
  % \tcp{Now computing the belief update $b_{t+1}^i(s_i)$}
  $\Val_{t+1}^i(s_i)\gets \Pr(d(v)|s',a)\Val_{t}^i(s_i)$\;
  $\Norm_{t+1}\gets\Norm_{t} + \Val_{t+1}^i(s_i) - \Val_{t}^i(s_i)$\;
}
\end{algorithm}

\subsection{Sampling}
Octree belief affords exact belief sampling at any resolution level in
logarithmic time complexity with respect to the size of the search space
$|G|$, despite not being completely built.
Given resolution level $l$, we sample from $\mathcal{S}_i^l$ by traversing the
octree in a depth-first manner.  Let $l_{max}$ denote the maximum resolution
level for the search space. Let $l_{des}$ be the \emph{desired} resolution level
at which an object state is sampled.\footnote{Recall that sampling a object state $s_i^l$ here
means that the object is considered to be located at $s_i^l$.} If $s_i^{l_{des}}$ is sampled, then all nodes in
the octree that cover $s_i^{l_{des}}$ , i.e, $s_i^{l_{max}},\cdots,s_i^{l_{des}+2},s_i^{l_{des}+1}$, must also be implicitly sampled,
Also, the event that $s_i^{l+k}$ is sampled is independent from other samples given that $s_i^{l+k+1}$ is sampled. Hence,
\begin{align}
\begin{split}
  &\Pr(s_i^{l_{des}}|h_t)\\
  &=\Pr(s_i^{l_{max}},\cdots,s_i^{l_{des}+2},s_i^{l_{des}+1},s_i^{l_{des}}|h_t)
\end{split}\\
&=\Pr(s_i^{l_{des}}|s_i^{l_{des}+1},h_t)\times\Pr(s_i^{l_{des}+1}|s_i^{l_{des}+2},h_t)\times\cdots\times\Pr(s_i^{l_{max}-1}|s_i^{l_{max}},h_t)
\end{align}
Therefore, the task of sampling $s^{l_{des}}$ is translated into sampling a
sequence of samples
$s_i^{l_{max}},\cdots,s_i^{l_{des}+2},s_i^{l_{des}+1},s_i^{l_{des}}$, each
according to the distribution $\Pr(s_i^{l}|s_i^{l+1},h_t)$. We can show that
this distribution can be efficiently obtained using octree belief itself as follows:
\begin{align}
  \Pr(s_i^{l}|s_i^{l+1},h_t)&=\frac{\Pr(s_i^{l},s_i^{l+1}|h_t)}{\Pr(s_i^{l+1}|h_t)}\\
                        &=\frac{\Pr(s_i^{l}|h_t)}{\Pr(s_i^{l+1}|h_t)}\label{eq:octree_sampling:reduction}\\
                        &=\frac{\Val_t^i(s_i^{l}) / \Norm_t} {\Val_t^i(s_i^{l+1})/\Norm_t}\\
                            &=\frac{\Val_t^i(s_i^{l})}{\Val_t^i(s_i^{l+1})}
\end{align}
Step~\ref{eq:octree_sampling:reduction} holds because sampling both $s_i^{l}$
and $s_i^{l+1}$ is equivalent to sampling just $s_i^{l}$ since
the latter (the event that $s_i^{l+1}$ is sampled) is deterministic when the former (the event that $s_i^{l}$ is sampled) happens.  Sampling from this probability
distribution is efficient, as the sample space, i.e. the children of node
$s_i^{l+1}$ is only of size 8. Therefore, this sampling scheme yields a sample
$s^{l_{des}}$ exactly according to $b_t^{i}(s^{l_{des}})$ with time complexity
$O(\log(|G|))$.

% the event of the object $i$ located at $s_i^{l}$ determines the sampling both
% $s_i^{l}$ and $s_i^{l+1}$ is equivalent to sampling just $s_i^{l}$, since $s_i^{l}$
% determines the value of $s_i^{l+1}$.

%%%%%%%%%%%%%%%%%%%%% Multi-Res Planning Stuff %%%%%%%%%%%%%%%%%%%%%%%5
\section{Using Octree Belief for Multi-Resolution Planning} % via Abstractions
  \label{sec:planning_algorithm}

% \stnote{I would move this to where you actually talk about the solver rather than here in % preliminaries.}

POUCT expands an MCTS tree using a \emph{generative function} $(s',o,r)\sim\mathcal{G}(s,a)$,
which is straightforward to acquire since we explicitly define the 3D-MOS models. However,
directly applying POUCT is subject to high branching factor due to the large observation space in our domain.

Our intuition is that octree belief imposes a spatial state abstraction, which
can be used to derive an abstraction over observations, reducing the branching
factor for planning.  Below, we formulate an \emph{abstract 3D-MOS} with smaller
spaces, and propose our multi-resolution planning algorithm.

\subsection{Abstract 3D-MOS}
\label{sec:abstract_3dmos}
We adopt the abstraction scheme in \citet{li2006towards} where in general, the abstract transition and reward functions are weighted sums of the original problem's transition and reward functions, respectively with weights that sum up to~1. We define an abstract 3D-MOS
$\langle \hat{\Sspace}, \hat{\Aspace}, \hat{\Ospace},
\hat{T},\hat{O},R,\gamma,l\rangle$ at resolution level $l$ as follows.

\textbf{State space $\hat{\mathcal{S}}$.} For each object $i$, an abstraction function $\phi_i:\Sspace_i\rightarrow\Sspace_i^l$ transforms the ground-level object state $s_i$ to an abstract object state $s_i^l$ at resolution level $l$. The abstraction of the full state is $\hat{s}=\phi(s)=\{s_r\}\cup\bigcup_i \phi_i(s_i)$ where the robot state $s_r$ is kept as is. The \emph{inverse image} $\phi^{-1}_i(s_i^l)$ is the set of ground states that correspond to $s_i^l$ under $\phi_i$ \cite{li2006towards}.

\textbf{Action space $\hat{\Aspace}$.} Since state abstraction lowers the resolution of the search space, we consider macro move actions that move the robot over longer distance at each planning step. Each macro move action \textsc{MoveOp}$(s_r,g)$ is an \emph{option} \cite{sutton1999between} that moves $s_r$ to goal location $g$ using multiple \textsc{Move} actions. The primitive \textsc{Look} and \textsc{Find} actions are kept.

\textbf{Transition function $\hat{T}.$} Targets and obstacles are still static, and the robot state still transitions according to the ground-level transition function. However, the transition of the found set from $\mathcal{F}$ to $\mathcal{F}'$ is special since the action $\textsc{Find}(i,g)$ operates at the ground level while $s_i^l$ has a lower resolution ($l>0$).
% \footnote{$\textsc{Find}(i,g^l)$ is invalid because the original 3D-MOS task has no reward defined for such an action and it would lead to incorrect search.}.
Let $f_i$ be the binary state variable that is true if and only if object $i\in\mathcal{F}$. Because the action \textsc{Find}$(i,g)$ affects $f_i$ based only on whether object $i$ is located at $g$, and that the problem is no longer Markovian due to state abstraction \cite{bai2016markovian}, $f_i$ transitions to $f_i'$ following
\begin{align}
\begin{split}
&\Pr(f_i'|f_i,s_i^l,h_t,\textsc{Find}(i,g))
% &\ \ =\sum_{s_i\in\phi^{-1}_i(s_i^l)}\Pr(s_i,f'_i | f_i,s_i^l,h_t,\textsc{Find}(i,g))
\end{split}\\
&\ \ =\sum_{s_i\in\phi^{-1}_i(s_i^l)}\Pr(f'_i | s_i, f_i,\textsc{Find}(i,g))\Pr(s_i|s_i^l,h_t).
\end{align}
The above is consistent with the abstract transition function in the works \cite{li2006towards,bai2016markovian} where the first term corresponds to the ground-level deterministic transition function and the second term $\Pr(s_i|s_i^l,h_t)$, stored in the octree belief, is the \emph{weight} that sums up to 1 for all $s_i\in\Sspace_i$.

\textbf{Observation space $\hat{\Ospace}$ and function $\hat{O}$.}
For the purpose of planning, we again use the assumption that an object is contained within a single voxel (yet at resolution level $l$). Then, given state $\hat{s}'$, the abstract observation $o_i^l$ is regarded as a voxel-label pair $(s_i^l, d(s_i^l))$. Since it is computationally expensive to sum out all object states, we approximate the observation model by ignoring objects other than $i$:
\begin{align}
&\Pr(o_i^l|\abst{s}',a,h_t)=\Pr(d(s_i^l)|\hat{s}',a,h_t)\\
&\qquad\approx\Pr(d(s_i^l)|s^l_i,s_r,a,h_t)\\\
&\qquad=\sum_{s_i\in\phi_i^{-1}(s_i^l)}\Pr(d(s_i^l)|s_i,s_r,a)\Pr(s_i|s_i^l,h_t).
\end{align}
This resembles the abstract transition function, where $\Pr(d(s_i^l)|s_i,s_r,a)$
is the ground observation function, and $\Pr(s_i|s_i^l,h_t)$ is again the
weight.

For practical POMDP planning, it can be inefficient to sample from this abstract
observation model if $l$ is large. In our implementation, we approximate this distribution by Monte Carlo sampling \cite{shapiro2003monte}: We sample $k$ ground states from $\phi_i^{-1}(s_i^l)$ according to their weights.\footnote{We tested $k=10$ and $k=40$ and observed similar search performance. We used $k=10$ in our experiments.} Then we set $d(s_i^l)=i$ if the majority of these samples have $d(s_i)=i$, and $d(s_i^l)=\textsc{Free}$ otherwise. A similar approach is used for sampling from the abstract transition model.

\textbf{Reward function $R$.} The reward function is the same as the one in ground 3D-MOS, since computing the reward only depends on the robot state which is not abstracted and the abstract action space consists of the same primitive actions as 3D-MOS. Therefore, solving an abstract 3D-MOS is solving the same task as the original 3D-MOS.

\begin{algorithm}[t]
\caption{MR-POUCT $(\mathcal{P},b_t,d)\rightarrow \abst{a}$}
\label{alg:planning}
\SetKwInOut{Input}{input}
\SetKwInOut{Output}{output}
\SetKwFunction{plan}{Plan}
\SetKwFunction{solve}{PO-UCT}
\SetKwFunction{simulate}{Simulate}
\SetKwFunction{buildgenerator}{GenerativeFunction}
\SetKwFunction{rollout}{Rollout}
\SetKwProg{myproc}{procedure}{}{}
\Input{$\mathcal{P}$: a set of abstract 3D-MOS instances at different resolution levels; $b_t$: belief at time $t$; $d$: planning depth}
\Output{$\hat{a}$: an action in the action space of some $P_l\in\mathcal{P}$}
\myproc{\plan{$b_t$}}{
\ForEach{$P_l\in\mathcal{P}$ in parallel}{
  \tcp{Recall that $P_l=\langle \abst{\Sspace},\abst{\Aspace},\abst{\Ospace},\abst{T},\abst{O},R,\gamma,l \rangle$}
  $\mathcal{G}\gets$ GenerativeFunction($P_l$)\;
  $Q_P(b_t, \abst{a})\gets$ POUCT($\mathcal{G},h_t,d$)\;
}
$\abst{a}\gets\argmax_{\abst{a}}\{Q_P(b_t,\abst{a}) | P\in\mathcal{P}\}$\;
\KwRet{$\abst{a}$}
}
\end{algorithm}

\subsection{Multi-Resolution Planning Algorithm}
Abstract 3D-MOS is smaller than the original 3D-MOS which may provide benefit
in online planning. However, it may be difficult to define a single resolution
level, due to the uncertainty of the size or shape of objects, and the unknown
distance between the robot and these objects.

Therefore, we propose to solve a number of abstract 3D-MOS problems in parallel,
and select an action from $\abst{\Aspace}$ with the highest value for execution.
The algorithm is formally presented in Algorithm \ref{alg:planning}.  The set of
abstract 3D-MOS problems, $\mathcal{P}$, can be defined based on the
dimensionality of the search space and the particular object search
setting. Then, it is straightforward to define a \emph{generative function}
$\mathcal{G}(\hat{s},\hat{a})\rightarrow(\hat{s}',\hat{o},r)$ from an abstract
3D-MOS instance $P$ using its transition, observation and reward functions.
POUCT uses $\mathcal{G}$ to build a search tree and plan the next action. Thus,
all problems in $\mathcal{P}$ are solved online in parallel, each by a separate
POUCT. The final action with the highest value $Q_P(b_t,\hat{a})$ in its
respective POUCT search tree is chosen as the output (see
\cite{silver2010monte} for details on POUCT).
% \footnote{We refer the reader to
% \citet{silver2010monte} for details of the POUCT algorithm.}.
We call this
algorithm Multi-Resolution POUCT (MR-POUCT).

Next, we describe evaluation for this proposed multi-resolution planning algorithm.
We assess the hypothesis that our approach, MR-POUCT, improves the robot's
ability to efficiently and successfully find objects especially in large search
spaces. We conduct a simulation evaluation (Section~\ref{sec:3dmos:sim})
and a study on a real robot (Section~\ref{sec:3dmos:robot}).

%%%%%%%%%% EVALUATION (INSIDE MULTI_RES PLANNING) %%%%%%%%%%%%%%%
\subsection{Evaluation in Simulation}
\label{sec:3dmos:sim}

\subsubsection{Setup} We implement our approach in a simulated environment designed to reflect the essence of the 3D-MOS domain (Figure~\ref{fig:3dmospomdp}). Each simulated problem instance is defined by a tuple $(m,n,d)$, where the search region $G$ has size $|G|=m^3$ with $n$ randomly generated, randomly placed objects. The on-board camera projects a viewing frustum with 45 degree FOV angle, an 1.0 aspect ratio, a minimum range of 1 grid cell, and a maximum range of $d$ grid cells. Hence, we can increase the difficulty of the problem by increasing $m$ and $n$, or by reducing the percentage of voxels covered by a viewing frustum through reducing the FOV range $d$. Occlusion is simulated using perspective projection and treating each grid cell as a point.%; The same occlusion mechanism is employed for all trials.

There are two primitive $\MOVE$ actions per axis (e.g. $+z$, $-z$) that each moves the robot along that axis by one grid cell. There are two $\LOOK$ actions per axis, one for each direction. Finally, a $\DETECT$ action is defined that declares all not-yet-found objects within the viewing frustum as found. Thus, the total number of primitive actions is $13$.
$\MOVE$ and $\LOOK$ actions have a step cost of -1.
% a $\MOVEOP$ accumulates the cost of the corresponding $\MOVE$ actions.
A successful $\DETECT$ receives +1000 while a failed attempt receives -1000. A $\DETECT$ action is successful if part of a new object lies within the viewing frustum.
%more explanation?
If multiple new objects are present within one viewing frustum when the $\DETECT$ is taken, only the maximum reward of $+1000$ is received. The task terminates either when the total planning time limit is reached or $n$ $\DETECT$ actions are taken.

\subsubsection{Baselines} We compare our approach (\emph{MR-POUCT}) with the following baselines:
% \begin{itemize}
% \item
  \emph{POUCT} uses the octree belief but solves the ground POMDP directly using the original POUCT algorithm. %\cite{silver2010monte}
  % without using resolution hierarchy in the octree belief.
\emph{Options+POUCT} uses the octree belief and a resolution hierarchy, but  only the motion action abstraction (i.e. $\MOVEOP$ options) is used, meaning that the agent can move for longer distances per planning step but do not make use of state and observation abstractions.
\emph{POMCP} uses a particle belief representation which is subject to particle deprivation. Uniform random rollout policy is used for all POMDP-based methods. \emph{Exhaustive} uses a hand-coded exhaustive policy, where the agent traverses every location in the search environment. At every location, the agent takes a sequence of $\LOOK$ actions, one in each direction. Finally, \emph{Random} executes actions at uniformly at random.
% \end{itemize}

Each algorithm begins with uniform prior and is allowed a maximum of 3.0s for planning each step. The total amount of allowed planning time plus time spent on belief update is 120s, 240s, 360s, and 480s for environment sizes ($m$) of 4, 8, 16, or 32, respectively. Belief update is not necessary for \emph{Exhaustive} and \emph{Random}. The maximum number of planning steps is 500. The discount factor $\gamma$ is set to $0.99$. For each $(m,n,d)$ setting, 40 trials (with random world generation) are conducted.

\subsubsection{Results}
We evaluate the scalability of our approach with 4 different settings of search space size $m\in\{4,8,16,32\}$ and 3 settings of number of objects $n\in\{2,4,6\}$, resulting in 12 combinations. The FOV range $d$ is chosen such that the percentage of the grids covered by one projection of the viewing frustum decreases as the world size $m$ increases.\footnote{The maximum FOV coverage for $m=4,8,16,$ and $32$ is $17.2\% (d=4), 8.8\% (d=6), 4.7\% (d=10),$ and $2.6\% (d=16)$, respectively.}
The sensor is assumed to be near-perfect, with $\alpha=10^5$ and $\beta=0$.
We measure the discounted cumulative reward, which reflects both the search efficiency and effectiveness, as well as the number of objects found per trial.

Results are shown in Figure~\ref{fig:scalability_overall}. Particle deprivation
happens quickly due to large observation space, and the behavior degenerates to
a random agent, %as in \cite{silver2010monte,wandzel2019multi},
causing {POMCP} to perform poorly. In small-scale domains, the \emph{Exhaustive} approach works well, outperforming the POMDP-based methods. We find that in those environments, the FOV can capture a significant portion of the environment, making exhaustive search desirable.
The POMDP-based approaches are competitive or better in the two largest search environments ($m=16$ and $m=32$). In particular, MR-POUCT outperforms \emph{Exhaustive} in all test cases in the larger environments, with greater margin in discounted cumulative reward; \emph{Exhaustive} takes more search steps but is less efficient.
When the search space contains fewer objects, {MR-POUCT} and {POUCT} show more resilience than {Options+POUCT}, with {MR-POUCT} performing consistently better.
This demonstrates the benefit of planning with the resolution hierarchy in octree belief especially in large search environments.

% \vspace{2.0in}
% \begin{center}
%   THE TABLE STARTS FROM THE NEXT PAGE.
% \end{center}

% $\qquad$\vspace{1in}
\begin{figure}[H]
  \centering
  %could make figure bigger up to 0.18 to make 1inch margin per side: (8.5 - 2)/(8.5-3)
\makebox[\textwidth][c]{\includegraphics[width=1.18\textwidth]{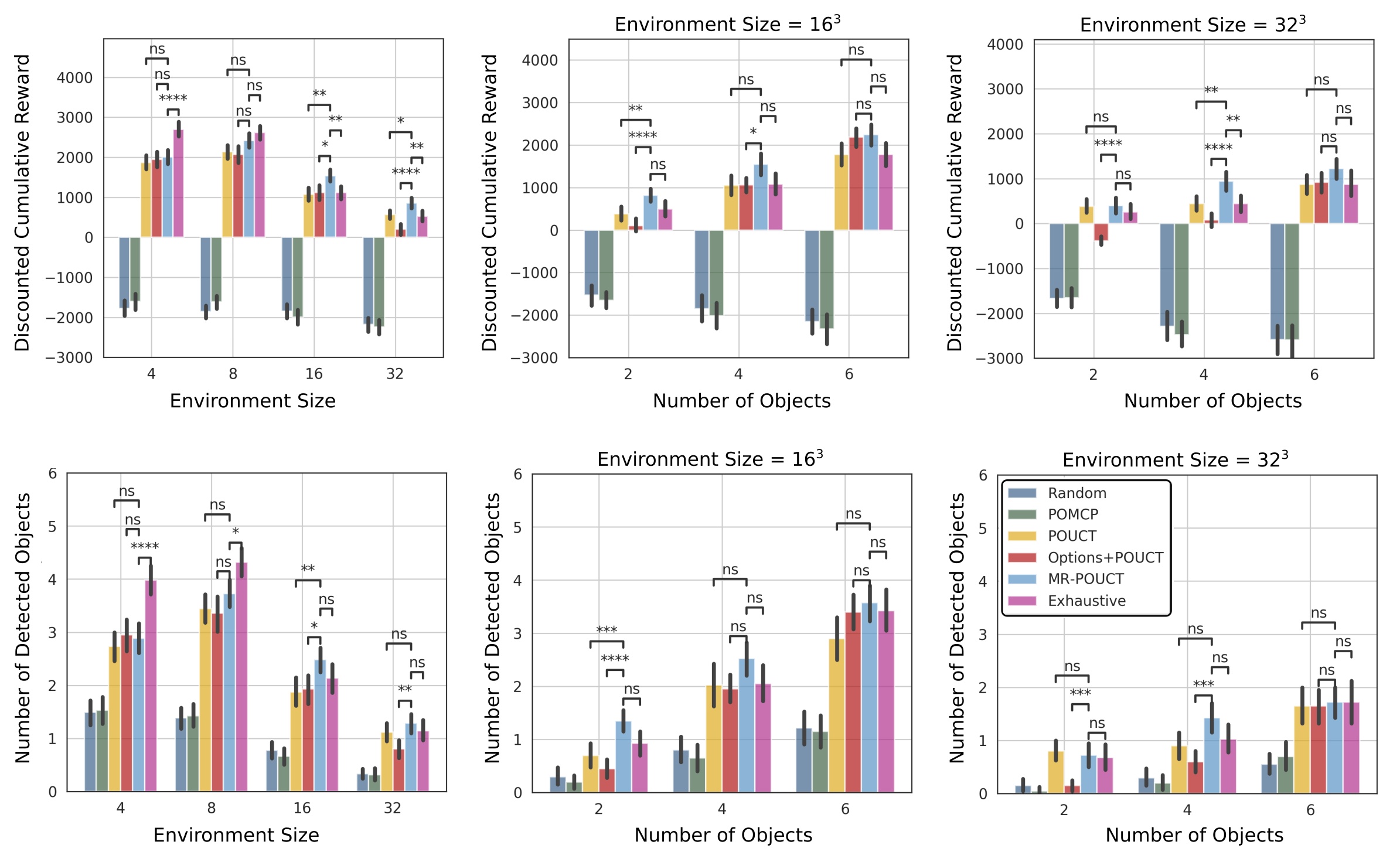}}
\caption{Discounted cumulative reward and number of detected objects as the environment size ($m$) increases and as the of number of objects ($n$) increases.  Exhaustive search performs well in small-scale environments (4 and 8) where exploration strategy is not taken advantage of. In large environments, our method MR-POUCT performs better than the baselines in most cases. The error bars are 95\% confidence intervals. The level of statistical significance is shown, comparing MR-POUCT against POUCT, Options+POUCT, and Exhaustive, respectively, indicated by \texttt{ns} ($p>0.05$), * ($p\leq 0.05$), ** ($p\leq 0.01$), *** ($p\leq 0.001$), **** ($p\leq 0.0001$).}
\label{fig:scalability_overall}
\end{figure}

\newpage
We then investigate the performance of our method with respect to changes in sensing uncertainty, controlled by the parameters $\alpha$ and $\beta$ of the observation model. According to the belief update algorithm in Section~\ref{sec:octree:belief_update}, a noisy but functional sensor should increase the belief $\Val_t^i(g)$ for object $i$ if an observed voxel at $g$ is labeled $i$, while decrease the belief if labeled \textsc{Free}. This implies that a properly working sensor should satisfy $\alpha > 1$ and $\beta < 1$. We investigate on 5 settings of $\alpha\in\{10,100,500,10^3,10^4,10^5\}$ and 2 settings of $\beta\in\{0.3, 0.8\}$. A fixed problem difficulty of $(16,2,10)$ is used to conduct this experiment.
Results in Figure~\ref{fig:quality_results} show that {MR-POUCT} is consistently better in all parameter settings. We observe that $\beta$ has almost no impact to any algorithm's performance as long as $\beta < 1$, whereas decreasing $\alpha$ changes the agent behavior such that it must decide to $\LOOK$ multiple times before being certain.

\begin{figure}[t]
\centering
\includegraphics[width=\linewidth]{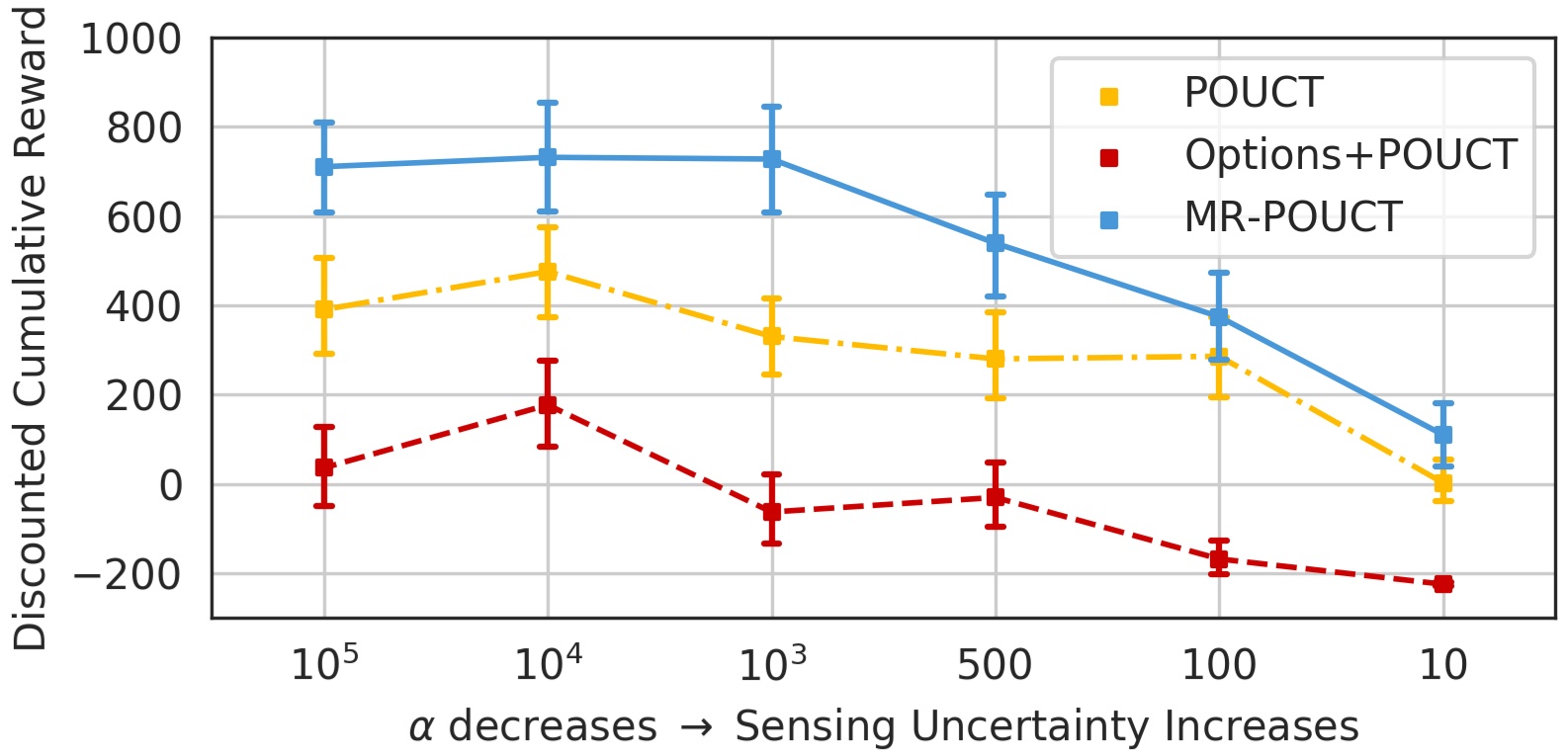}
\caption{Discounted cumulative reward with 95\% confidence interval as the sensing uncertainty increases, aggregating over the $\beta$ settings.}
\label{fig:quality_results}
\end{figure}

% \afterpage{%
%     \clearpage% Flush earlier floats (otherwise order might not be correct)
%     \thispagestyle{empty}% empty page style (?)
%     \begin{landscape}% Landscape page
% \begin{figure*}[t]
% \centering
% \makebox[\textwidth][c]{\includegraphics[width=\linewidth]{}}%
% %\includegraphics[width=\linewidth]{figs/movo_seq.jpg}
% \caption{ Example action sequence produced by the proposed approach. The mobile robot
%   first navigates in front of a portable table (1-2). It then takes a $\LOOK$ action to observe the space in front (3), and no target is observed since the torso is too high. The robot then decides to lower its torso (4), takes another $\LOOK$ action in the same direction, and then $\DETECT$ to mark the object as found (5). This sequence of actions demonstrate that our algorithm can produce efficient search strategies in real world scenarios.}
% \label{fig:seq}
% % \vspace{-0.4cm}
% \end{figure*}
%  \end{landscape}
%     \clearpage% Flush page
% }

%%%%%%%%%% ROBOT DEMO (INSIDE MULTI_RES PLANNING) %%%%%%%%%%%%%%%
\subsection{Demonstration on Real-Robot}
\label{sec:3dmos:robot}

We demonstrate that our approach is scalable to real world settings by implementing the 3D-MOS problem as well as MR-POUCT for a mobile robot setting. We use the Kinova MOVO Mobile Manipulator robot, which has an
actuated torso with an extension range between around 0.05m and 0.5m, which facilitates a 3D action space. The robot operates in a lab environment, which is decomposed into two \emph{search regions} $G_1$ and $G_2$ of size roughly 10m$^2\times$ 2m (Figure.~\ref{fig:movo_seq}), each with a semantic label (``shelf-area'' for $G_1$ and ``whiteboard-area'' for $G_2$). The robot is tasked to look for $n_{G_1}$ and $n_{G_2}$ objects in each search region sequentially, where objects are represented by paper AR tags that could be in clutter or not detectable at an angle. The robot instantiates an instance of the 3D-MOS problem once it navigates to a search region. In this 3D-MOS implementation, the $\MOVE$ actions are implemented based on a topological graph on top of a metric occupancy grid map.
\begin{figure}[h]
\centering
\makebox[\textwidth][c]{\includegraphics[width=\linewidth]{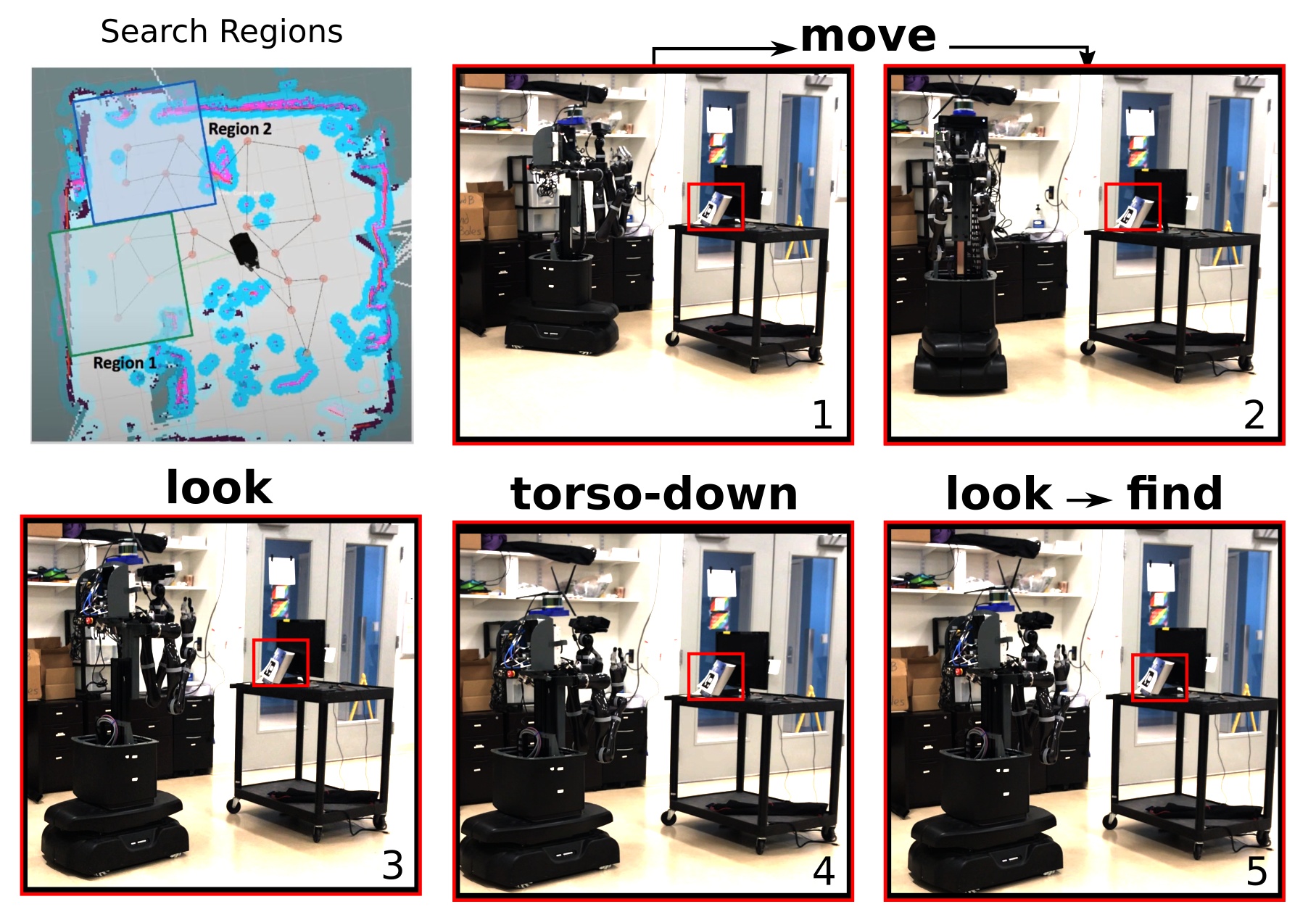}}%
\caption{Example action sequence produced by the proposed approach that enables
  a Kinova MOVO robot to perform 3D object search in two search regions separately (top left). The mobile robot
  first navigates in front of a portable table (1-2). It then takes a $\LOOK$ action to observe the space in front (3), and no target is observed since the torso is too high. The robot then decides to lower its torso (4), takes another $\LOOK$ action in the same direction, and then $\DETECT$ to mark the object as found (5). This sequence of actions demonstrate that our algorithm can produce efficient search strategies in real world scenarios. }
\label{fig:movo_seq}
\end{figure}
%%%%%%%%%%%%%%%%%%%%%%%%%%%%%%%%%%%%%%%%%%%%%%%%%%%%%%%%%%%%%%%%%%%%%
The neighbors of a graph node form the motion action space when the robot is at that node.
% Since this motion action space is already an abstraction over the metric grid map, we do not impose $\MOVEOP$ to the Abstract 3D-MOS in this case.
The robot can take $\LOOK$ action in 4 cardinal directions in place and receive volumetric observations; A volumetric observation is a result of downsampling and thresholding points in the corresponding point cloud. The robot was able to find 3 out of 6 total objects in the two search regions in around 15 minutes. One sequence of actions (Figure~\ref{fig:movo_seq}) shows that the robot decides to lower its torso in order to $\LOOK$ and $\DETECT$ an object.\footnote{Video footage with visualization of volumetric observations and octree belief update is available at \href{https://zkytony.github.io/3D-MOS/}{https://zkytony.github.io/3D-MOS/}.} A failure mode is that the object may not be covered by any viewpoint and thus not detected; this can be improved with a denser topological map, or by considering destinations of $\MOVE$ actions sampled from the continuous search region.

In the next chapter, we present a system for generalized 3D multi-object search, the first of its kind, and discuss its integration with different robots performing object search in different environments.

\vspace{3.0in}
\begin{center}
  THIS IS THE END OF THIS CHAPTER.
\end{center}

\chapter{GenMOS: A System for Generalized 3D Multi-Object Search}
\label{ch:genmos}
%%%%%%%%%%%%%%%%%%%%%%%%%%%%%%%%%%%%%%%%%%%%%%%%%%%%%%%%%%%%%%%%%%%%%
\vspace{-0.6in}
\begin{figure}[H]
  \centering
\includegraphics[width=\linewidth,draft=false]{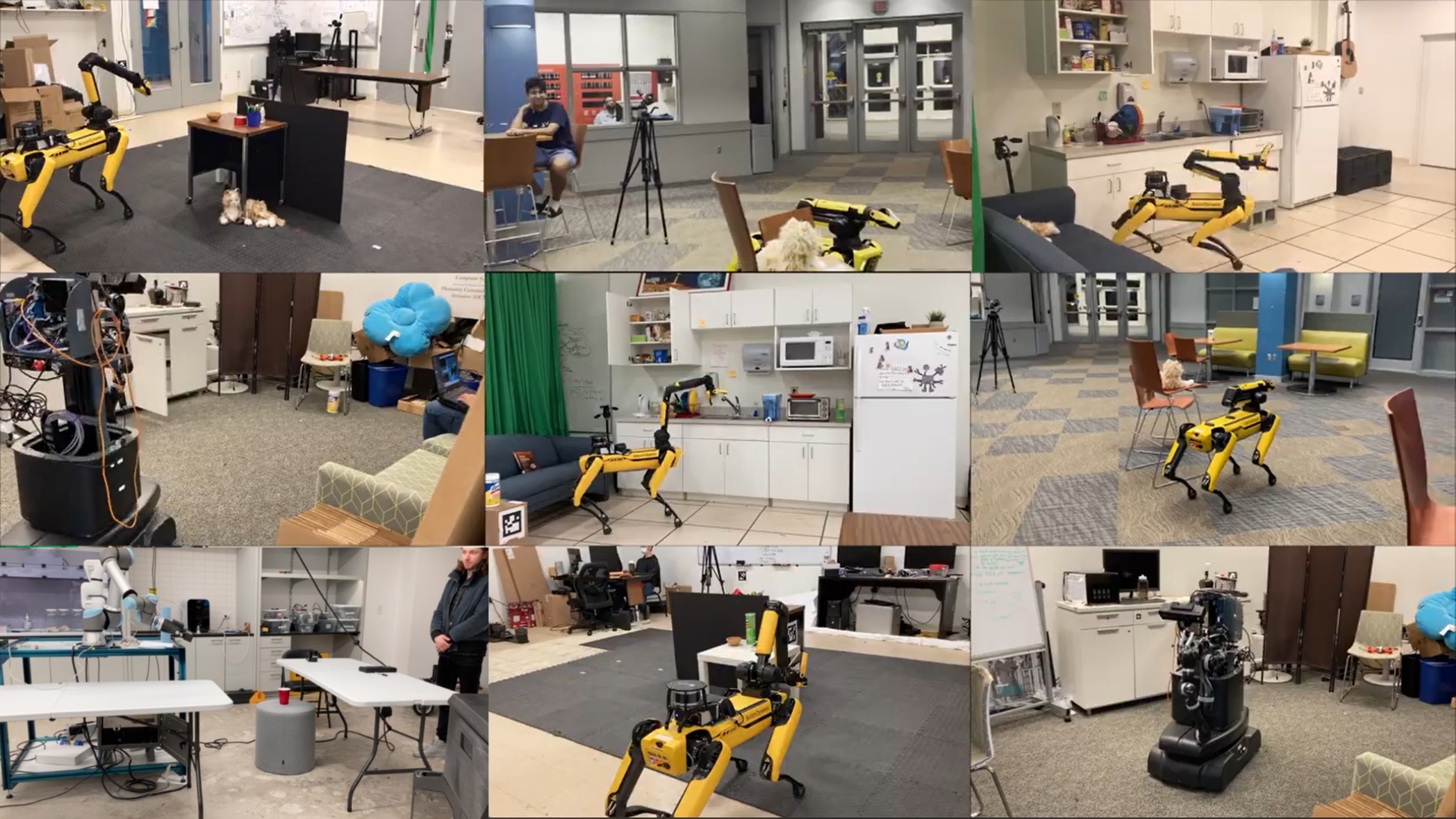}
  \caption{GenMOS enables different robots to search for objects in various 3D regions.}
  \label{fig:genmos:demogrid}
\end{figure}

\section{The GenMOS System}
\label{sec:genmos:sysov}
%and package
\lettrine{T}{owards} the goal of making object search an off-the-shelf capability for any robot,~we present GenMOS (Generalized Multi-Object Search), the first general-purpose object search system that is robot-independent and environment-agnostic (Figure~\ref{fig:genmos:demogrid}). GenMOS builds upon the methodology for 3D multi-object search described in the previous chapter, while significantly improving its practicality in the real world. Our system enables a Boston Dynamics Spot to find, for example, a cat underneath the couch, as shown in Figure~\ref{fig:teaser}.

\begin{figure}[t]
  \centering
\includegraphics[width=\linewidth,draft=false]{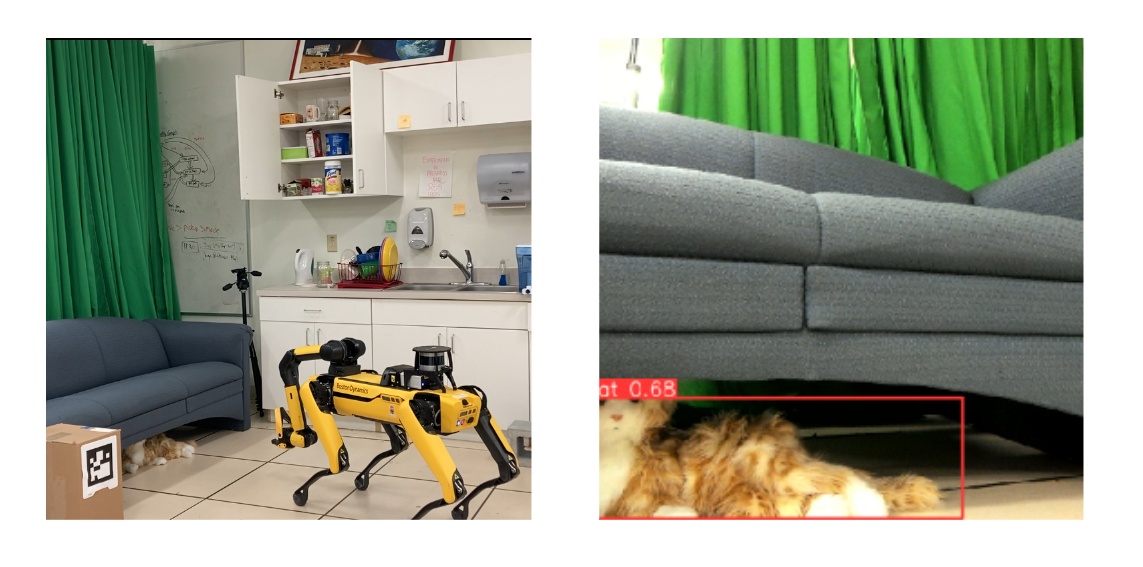}
  \caption{Our system, GenMOS, enables a Boston Dynamics Spot robot to successfully find a toy cat underneath the couch. The left image shows a third-person view of the scene. The right image shows the RGB image from the gripper camera, along with the object detection bounding box for the cat labeled.}
  \label{fig:teaser}
  \vspace{-0.2in}
\end{figure}

% This chapter is organized as follows. We
% \subsubsection{Chapter Overview}
This chapter begins with an overview of the system's design, illustrated in Figure~\ref{fig:genmos:system}. In particular, we describe three ways that point cloud observations are used in GenMOS:
\begin{enumerate}[topsep=0pt,noitemsep,label=(\arabic*)]
\item to simulate occlusion (Figure~\ref{fig:genmos:fov});
\item to inform occupancy and initialize octree belief (Figure~\ref{fig:sim_env});
\item to sample a belief-based graph of view positions (Figure~\ref{fig:spot_cat_seq}, right column).
\end{enumerate}
Then, we describe novel algorithmic contributions regarding (2) and (3): For (2), we propose an algorithm for initializing octree beliefs given arbitrary prior distributions over object locations;
For (3), we propose an algorithm which samples a belief-dependent graph of view positions, allowing the output space of GenMOS to be the continuous space of reachable viewpoints. Subsequently, we describe the gRPC protocol of our implementation of GenMOS as well as a few useful parameters one can configure to adapt GenMOS to a specific scenario. Finally, we discuss our evaluation of GenMOS, first in a simulation domain, then integrated on three robot platforms: Boston Dynamics Spot, Kinova MOVO, and Universal Robotics UR5e.

\textbf{Contributions.} The contributions of this chapter were described in Section~\ref{sec:3dmos:contribs}. We emphasize here that the algorithms and evaluation in this chapter serve to improve and demonstrate the practicality of the octree-based 3D multi-object search approach introduced in the previous chapter, which was only evaluated in an idealistic simulation with cardinal action space and on a MOVO with a proof-of-concept system.

% multi-resolution planning was not used

% We discuss them
% Notably, the system considers perceptual inputs including point cloud observations of the local search region, 3D object detection bounding boxes, and localization of the robot camera pose. The server may also actively request information (such as additional observation about the search region's occupancy), which enables our implementation of hierarchical planning in Section~\ref{sec:hier}. A key aspect of the system is how point cloud observations are used (1) to simulate occlusion; (2) to inform occupancy and initialize octree belief; and (3) to sample a belief-based graph of view positions.

% In this chapter, we begin with an overview of the system's design. Then, we describe

\begin{figure}[t]
  \centering
  \includegraphics[width=\linewidth]{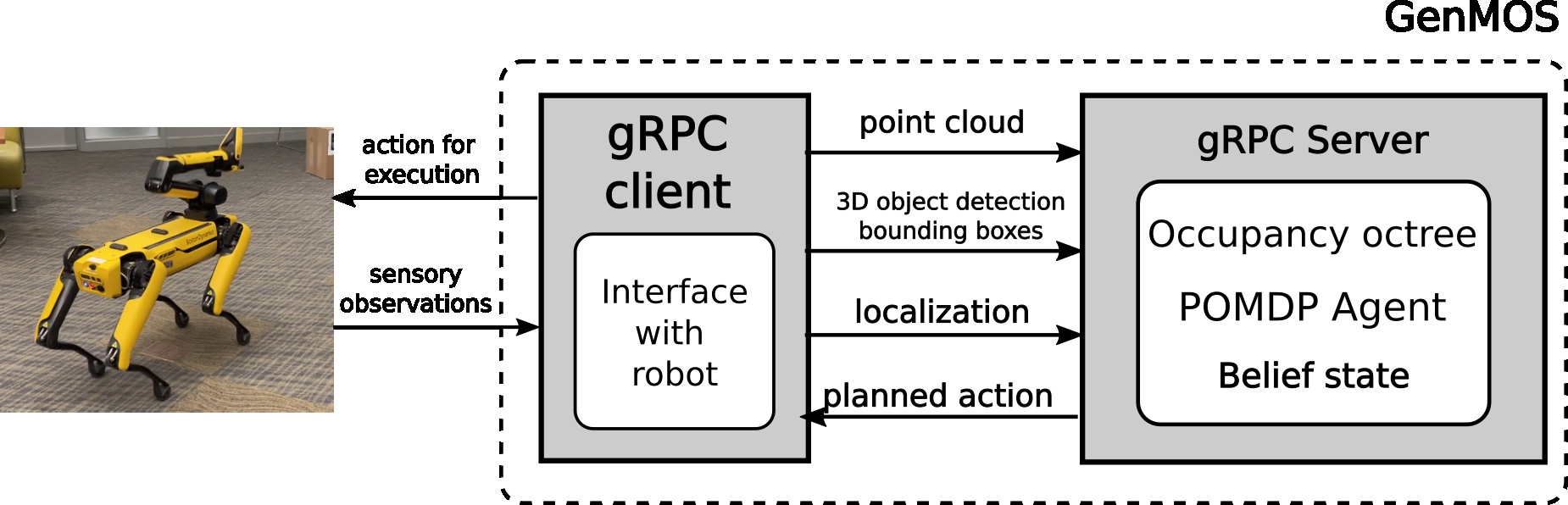}
  \caption{Overview of the GenMOS system. See Section~\ref{sec:genmos:system} for description.}
  \label{fig:genmos:system}
\end{figure}

\subsection{System Overview}
\label{sec:genmos:system}

\textbf{A gRPC-based system.} GenMOS is a client-server construct, designed and implemented based on gRPC~\cite{grpc}, a high-performance, cross-platform, and open source framework for remote procedural call (RPC). As a gRPC-based system (Figure~\ref{fig:genmos:system}), GenMOS is independent of, thus integrable to any particular robot middleware such as ROS \cite{quigley2009ros} or ROS 2 \cite{ros2}.

\textbf{Inputs and outputs.} GenMOS considers perceptual inputs including point cloud observations of the local region, 3D object detection bounding boxes (if detection occurs), and localization of robot camera pose, and it outputs a viewpoint to move to as the result of sequential online planning.

% The server may also actively request information (such as additional observation about the search region's occupancy), which enables our implementation of hierarchical planning in Section~\ref{sec:hier}.

\subsubsection{Server}
\label{sec:genmos:server}

Here, I describe several important aspects of the GenMOS server.

\textbf{3D-MOS.} The server internally maintains a POMDP model of the search task, which is a 3D-MOS with an instantiation of the action space based on a graph of view positions (Section~\ref{sec:vgsample}). The definition and implementation of this model, based on \texttt{pomdp\_py} \cite{pomdp-py-2020}, is general and does not depend on any particular environment. Importantly, the server handles coordinate conversion: the client only needs to send data in the metric world frame and the server properly converts them into the POMDP's frame.

\begin{figure}[t]
  \centering
  \includegraphics[width=\linewidth]{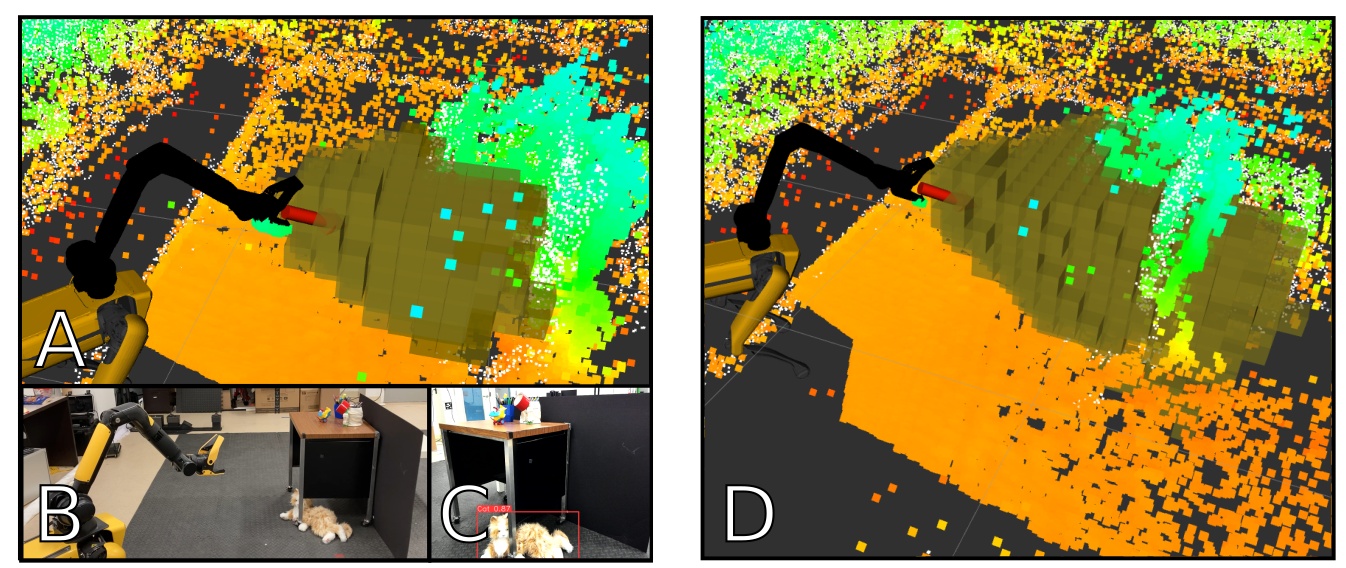}
  \caption{For belief update, GenMOS samples a volumetric observation (a set of labeled voxels within the viewing frustum) that considers occlusion based on the occupancy octree dynamically built from point cloud (A). Not enabling occlusion (D) leads to mistaken invisible locations as free. The robot is looking at a table corner (B) with its view blocked by the table and the board (C).}
  \label{fig:genmos:fov}
\end{figure}

\textbf{Occupancy octree from point cloud.} Internally, the server maintains an octree representation of the search region's occupancy, used to simulate occlusion-enabled observations for belief update.
Point cloud observations can be sent to the server to update the server's model of the search region. Specifically, the server converts the point cloud into an \emph{occupancy octree} (similar to OctoMap \cite{hornung2013octomap}), where a leaf node in the tree has an associated value of occupancy (0 for free, and 1 for occupied). The occupancy octree is used by the server for both sampling the view positions graph to avoid collision, as well as for constructing volumetric observations during belief update, where occupied nodes block the FOV and cause occlusion (Figure.~\ref{fig:genmos:fov}).

\textbf{Additional uses of point cloud.} A key aspect of GenMOS is how point cloud observations are used in three ways: (1) to simulate occlusion; (2) to inform occupancy and initialize octree belief; and (3) to sample a belief-based graph of view positions. We have illustrated (1) above. For (2) and (3), we discuss in more detail in Section~\ref{ref:prioroctree} and Section~\ref{sec:vgsample}, respectively.

% \textbf{Occupancy octree (first use of point cloud).} Internally, the server maintains an octree representation of the search region's occupancy, called the \emph{occupancy octree} (similar to OctoMap \cite{hornung2013octomap}), used to simulate occlusion-enabled observations for belief update.
% At first, the server converts an initial point cloud observation into an occupancy octree, where a leaf node in the tree has an associated value of occupancy (0 for free, and 1 for occupied). The occupancy octree can be dynamically updated given new point cloud observations from the region.

% Point cloud observations can be sent to the server to update the server's model of the search region.

\textbf{Object detection.} The server can also take in 3D object detection bounding boxes, which represent the output of a generic object detector or perception pipeline capable of estimating the 3D locations of detected objects. The bounding box's size plays a role in the octree belief update, as it influences the volumetric observation, where voxels overlapping with the bounding box are labeled by the detected object and leads to an increase in the octree belief at the corresponding locations.

\label{sec:genmos:server:2d_detection}
When 3D object detection is not available on the robot, the system can also consume label-only detections based on just images. Such label-only detections essentially correspond to a volumetric observation within the FOV where all voxels are labeled by the object, which usually covers a sizable volume. This is still useful for search, as subsequent search steps can reduce uncertainty by looking from different viewpoints.

% The occupancy octree is used by the server for both sampling the viewpoint positions graph to avoid collision, as well as for constructing volumetric observations during belief update, where occupied nodes block the FOV and cause occlusion (Figure.~\ref{fig:genmos:fov}).

\textbf{Server requests.} The server may also actively request information (such as additional observation about the search region's occupancy), which enables our implementation of hierarchical planning in Section~\ref{sec:hier}; there, 3D local search is triggered by a high-level action to ``search locally'' and the server would request point cloud data on the fly in order to instantiate 3D-MOS.

\vspace{-0.2em}
\subsubsection{Client}
Here, I describe several important aspects of the GenMOS client.

\textbf{Client's role.} The client sends to the server configurations of the POMDP agent, perception data, and planning requests, and executes the action returned by the server (Figure~\ref{fig:genmos:system}).  All data transmitted between the client and the server are represented as Protocol Buffer (protobuf) messages \cite{protobuf} of generic, robot-independent message types (\eg, point cloud, 3D bounding box, 6D pose, etc.).  The client is responsible for integrating with the robot hardware,  obtaining sensor data and converting them into the protobuf message types, and physically executing the actions to reach the planned viewpoints. This makes the server code independent of any specific robot.

\textbf{Planning requsts.} When planning requests are sent from the client, the server performs online planning using an asymptotically optimal, Monte Carlo Tree Search-based online POMDP planning algorithm called POUCT \cite{silver2010monte}.\footnote{See Section~\ref{sec:bg:pomdp:pouct}, page~\pageref{sec:bg:pomdp:pouct} for an introduction of POUCT.} The server converts the planned camera viewpoint $q'\in\mathcal{R}$ to metric coordinates in the frame of the search region. The client then handles moving the robot to that viewpoint. If the server plans a $\textsc{Find}$ action, the client should send back the detected target objects (if any)\footnote{The client may choose to control the robot to physically signal when $\textsc{Find}$ is taken. For example, with Spot, I let the robot close and reopen its gripper, indicating the robot's commitment to the found location.} The client is also responsible for sending new observations upon action completion.\\

Next, I follow through with explaining the algorithmic contributions of this chapter that enable the two additional uses of point cloud in GenMOS: belief initialization and view position graph sampling.

\begin{figure}[t]
  \centering
\includegraphics[width=\linewidth]{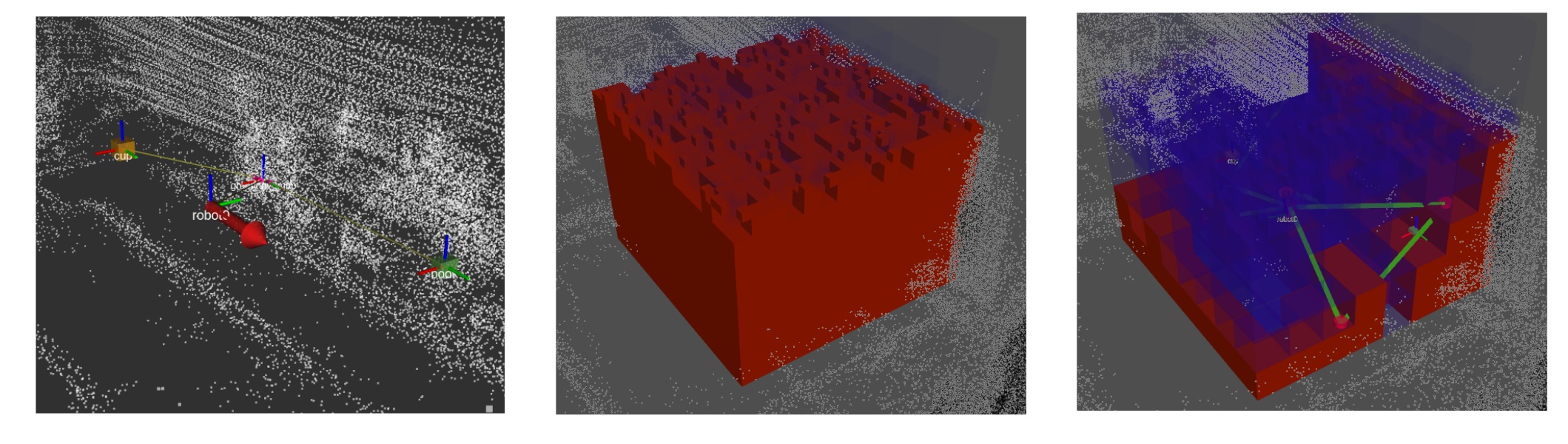}
  \caption{Left: A simulation environment where the pose of the robot's viewpoint is represented by the red arrow, and the two target objects are represented by orange and green cubes. Middle: initialized octree belief given uniform prior within a 10.2m$^2\times$2.4m region; Right: initialized octree belief within the same region, given occupancy-based prior constructed from point cloud. Colors indicate strength of belief, from red (high) to blue (low).}
  \label{fig:sim_env}
  \vspace{-0.2in}
\end{figure}

\subsection{Prior Initialization of Octree Belief}
\label{ref:prioroctree}
Octree belief covers, by definition, a cubic volume; However, the actual feasible search region is likely not cubic, and often irregular. This causes
the robot to believe constantly that the target objects are outside of the actual search region, at places imperceivable by the robot, which can impact search behavior.\footnote{This is an issued I observed with the proof-of-concept system in Chapter~\ref{ch:3dmos}.}
% for which the values of the corresponding octree nodes could never be updated.

To address this problem, I propose an efficient algorithm for initializing an octree belief over an arbitrary search region, presented in Algorithm~\ref{alg:belief_init}. Recall that $G$ denotes the entire 3D grid map at ground resolution level underlying an octree belief. Suppose $G_*\subseteq G$ is the subset of grids in $G$ that make up the search region.\footnote{$G_*$ could be an arbitrary subset, not necessarily forming, for example, a cuboid.} Recall the definition of \emph{default value} and \emph{initial value} in Definition~\ref{def:3dmos:default_val} and Definition~\ref{def:3dmos:initial_val}, respectively (Section~\ref{sec:octree_belief}, page~\pageref{def:3dmos:default_val}).
The high-level idea of the proposed algorithm is as follows:

\begin{enumerate}[itemsep=0.5pt,topsep=0pt]
\item First, set the default value of all ground-level nodes in the octree belief to 0.

\item Then, through a sample-based procedure (with $N$ samples), ground-level nodes whose 3D positions lie within the given search region $G_*$ have their default values changed to 1.
\end{enumerate}

\noindent This effectively reduces the sample space of the octree belief to be within the search region $G_*$. Besides reducing the sample space, if we are given a prior distribution $\PriorVal^i:G_*^l\rightarrow\mathbb{R}$, we can initialize the octree belief accordingly as follows, during step 2 above:

\begin{enumerate}[itemsep=0.5pt,topsep=0pt]
\item[3.] If a prior probability $\PriorVal^i(g^l)$ is defined at octree belief node $g^l\in G_*^l$, and $g^l$ is the parent (or self) of some ground level node $g\in G$ sampled during step 2, then $\Val_1^i(g^l)$, the initial value at $g^l$ is set as $\Val_1^i(g^l)\gets\PriorVal^i(g^l)$.
\end{enumerate}

\noindent The algorithm is given below in Algorithm~\ref{alg:belief_init}. The assignment of initial value at a node is done in lines 8-10 and the tree can be pruned as in lines 12-14. This proposed algorithm has a complexity of $O(N(\log(|G|))^2)$.

\begin{algorithm}[H]
\caption{Initialize Octree Belief $(m, G_*, \PriorVal^i)\rightarrow b_1^i$}
\label{alg:belief_init}
% \SetAlgoNoLine
\LinesNumbered
\SetKwInOut{Input}{input}
\SetKwInOut{Output}{output}
\SetKwInOut{Parameter}{param}
\Input{$m$: octree dimension (power of 2, such that $|G|=m^3$); \hspace{3cm} $G_*$: the actual search region, satisfying $G_*\subseteq G$; \hspace{4cm}  $\PriorVal^i$: Prior distribution of octree node values.}
\Parameter{$N$: number of samples; $B$: a 3D box, satisfying $G_*\subseteq B \subseteq G$.}
\Output{$b_1^i$: the initialized octree belief.}
Initialize octree $\Psi(b_0^i)$; Set $\Val_0^i(g)=0$ (instead of 1)\;
\For{$i\in \{1,\cdots,N\}$}{
  Set $l=0$; Sample $g^l\sim B$ \tcp*{ground resolution location}
  \While{$l\leq \log_2m$}{
    \If{$g^l\in G_*^l$}{
      Add $g^l$ to $\Psi(b_1^i)$\tcp*{insert $g^l$ to octree underlying $b_1^i$}
      Set default value $\Val_0^i(g^l)=|\textsc{Ch}^0(g^l)|$\;
      \If{$g^l\in \PriorVal^i$ }{
        Set initial value $\Val_1^i(g^l)\gets\PriorVal^i(g^l)$\;
        \tcp{otherwise $\Val_1^i(g^l)\gets \Val_0^i(g^l)$}
      }  % NOTE IN PRACTICE THIS PART IS SEPARTELY IMPLEMENTED AND IS NOT SUBJECT TO SAMPLING - BUT WE ARE RUNNING OUT OF SPACE AND THIS DOES NOT REALLY AFFECT THE CORRECTNESS
      \tcp{ensure parent value is sum of children}
      Update values of all parent nodes at $g^{l+1}\cdots g^{m}$\;
      \If{$\Val_1^i(g^l)=\Val_0^i(g^l)$}{remove children of $g^l$ \tcp*{Pruning}}
    }
    $l \gets l + 1$\;
  }
}
$\Norm_1\gets\Val_1^i(g^m)$ \tcp*{normalizer set to root node's value}
\end{algorithm}

In practice, the server can optionally determine the search region $G^*$ based on the occupancy octree constructed from point cloud observations. This avoids believing that the objects lie in midair. In our experiments, we assign a prior value of $100\times ((2^k)^3)$ to occupied nodes in the octree at the resolution level $k=2$, and we set the number of samples $N=3000$. Figure~\ref{fig:sim_env} visualizes an octree belief with occupancy-based prior.
%for octree of size $32^3$.

\subsection{Belief-based Sampling for View Position Graph}
\label{sec:vgsample}
The evaluation in Sections~\ref{sec:3dmos:sim} and Sections~\ref{sec:3dmos:robot} of the previous chapter only considers moving the camera in cardinal directions, or over a fixed topological map. To enable planning over the continuous space of viewpoints $\mathcal{R}\subseteq \mathcal{P}\times\SO(3)$, GenMOS samples a view position graph $\mathcal{G}_t=(\mathcal{P}_V, \mathcal{E}_M)$ based on the current octree belief. At a high level, given an occupancy octree, we first sample a set of non-occupied positions $\mathcal{P}_V$ from $\mathcal{P}$ with a minimum separation threshold, (\eg, 0.75m)
and associate with each position the \emph{belief} by querying the octree belief at that position at a higher resolution level to cover more space. Then, we select top-$K$
(\eg, $K=10$)
nodes ranked by their beliefs and insert edges such that each node has a limited degree. A $\textsc{Move}(s_r,p_v)$ action then moves the robot to a viewpoint position $p_v\in\mathcal{P}_v$ on the graph. We implicitly enforce a $\textsc{Look}(\phi)$ action after a $\textsc{Move}$ action through the transition model where $\phi$ is the orientation facing the an unfound object (contained in $s$, input to the transition model). At time $t+1$, the graph is resampled \emph{if} the sum of the probability covered by positions in $\mathcal{G}_t$ is below a threshold.
(\eg, 0.4).

\subsection{The gRPC Protocol in GenMOS}
\label{sec:grpc}
In the gRPC framework, remote procedural calls (RPCs) are defined as Protocol Buffer messages \cite{protobuf}. In particular, the key RPCs in GenMOS are as follows:
\begin{itemize}[leftmargin=*,noitemsep]
\item \texttt{CreateAgent}: Upon receiving the POMDP agent configurations from the client, the server prepares for agent creation pending the first \texttt{UpdateSearchRegion} call.

\item \texttt{UpdateSearchRegion}: The client sends over a point cloud of the local search region, and the server creates or updates the occupancy octree about the search region.

\item \texttt{ProcessObservation}: The client requests belief update by sending observations such as object detection and robot pose estimation.

  \item \texttt{CreatePlanner}: The client provides hyperparameters of the planner, and the server creates a planner instance accordingly (\eg, POUCT planner in \texttt{pomdp\_py} \cite{pomdp-py-2020}).

\item \texttt{PlanAction}: The client requests the server to plan an action for an agent. An action is planned only if the last planned action has been executed successfully.

\item \texttt{ListenServer}: This is a bidirection streaming RPC that establishes a channel of communication of messages or status between the client and the server.
  \end{itemize}

\subsection{Example Configuration Parameters}
The table below lists some parameters that the GenMOS server is able to handle.
\begin{longtable}{>{\raggedright}p{1.0in}p{4.2in}}
  \toprule
  \texttt{octree\_size} & Dimension of the octree representing the search region (\eg, 32 means the octree occupies a $32^3$ grid)\\

  \midrule
  \texttt{res}  &  Resolution of a grid, \ie, length of the grid's side in meters (\eg, 0.1)\\

  \midrule
  \texttt{region\_size}  &  Defines the dimensions of a box $(w,\ell,h)$ in meters (\eg (4.0, 3.0, 1.5))\\

  \midrule
  \texttt{center}  &  Defines the XYZ location (metric)of the search region's center (\eg (-0.5, -1.65, 0.25)))\\

  \midrule
  \multicolumn{2}{l}{\texttt{prior\_from\_occupancy}$\ $ ``True'' to use occupancy-based prior}\\

  \midrule
  \multicolumn{2}{l}{\texttt{occupancy\_fill\_height}$\ $ ``True'' to consider the space below obstacles  }\\
  &$\qquad\qquad$ into search space\\

  \midrule
  \texttt{num\_nodes} & maximum number of view positions on graph (\eg, 10)\\

  \midrule
  \texttt{sep} & minimum separation between nodes (in meters) (\eg, 0.4)\\

  \midrule
  \texttt{inflation} & radius to blow up obstacles for view position sampling \\

  \midrule
  \texttt{num\_sims} & number of samples for MCTS-based online POMDP planning.\\

  \bottomrule

  % &\texttt{prior\_from\_occupancy}  \\

  \caption{Example configuration parameters in GenMOS}
  \label{tab:genmos:params}
\end{longtable}

\section{Evaluation of GenMOS}

We implemented the gRPC protocols of the GenMOS system described in Section~\ref{sec:grpc}. The result is a single package for multi-object search in 3D regions that can provide the object search functionality as long as the perception inputs are given, which are generic point cloud and object detection results that a robot typically should be able to provide.

There are two hypotheses that we test through our evaluation: (1) The octree belief-based planning algorithm that the package implements is effective for 3D object search; (2) The package does enable real robots to search for and find objects in 3D regions in different environments within a reasonable time budget.

To test the first hypothesis, we conduct an experiment in simulation (Section~\ref{sec:genmos:sim}).
To test the second hypothesis, we deploy our system for object search with a Boston Dynamics Spot robot in two different local regions: a region of arranged tables and a kitchen region (Section~\ref{sec:spot}), and we also implement a preliminary hierarchical planning algorithm for a demonstration over a larger lobby area (Section~\ref{sec:hier}). We further integrate GenMOS with Kinova MOVO and Universal Robotics UR5e robotic arm and test search behavior enabled by GenMOS.

\subsection{Evaluation in Simulation}
\label{sec:genmos:sim}
We tasked a simulated robot (represented as an arrow for its viewpoint)
to search for two virtual objects (cubes) with volume 0.002m$^3$ each uniformly randomly placed in a region of size 10.2m$^2\times$ 2.4m. The robot's frustum camera model had a FOV angle of 60 degrees, minimum range of 0.2m and maximum range of 2.0m.

We experimented with three types of priors, groundtruth, uniform, and occupancy-based prior, at two different resolution levels, 0.001$m^3$ (octree size 32$\times$32$\times$32) and 0.008$m^3$ (octree size 16$\times$16$\times$16) representing search granularity.  For the best-performing setting (non-groundtruth), we also compared the use of the POUCT planner against two baselines: Random moves to a uniformly sampled view position graph node (Section~\ref{sec:vgsample}), and Greedy is a next-best view planner that moves to the view position graph node that is closest to the highest belief location for some target. Both baseline planners take $\textsc{Find}$ upon target detection.

\begin{table}[t]
  \centering
    \centering
  \begin{tabular}{lllll}
        \toprule
        Prior type (resolution)     & Length     & Planning  & Total      & success \\
        \multicolumn{1}{r}{with POUCT}  & (m)        & time (s)  & time (s)   & rate \\
       \midrule
        Uniform (0.008m$^3$)   &  22.13    & 24.28          & 166.18        & 50\%\\
        Occupancy (0.008m$^3$) &  23.89    & 22.66       & 159.10        & 60\%\\
        Uniform (0.001m$^3$)   &  6.42     & 10.47          & 99.66         & 90\%\\
    Occupancy (0.001m$^3$)$^*$ &  3.22    & 7.42            & 64.12         & 100\%\\
    Groundtruth            &  0.44     & 1.97           & 17.82         & 100\%\\
    \midrule
        % \hdashline
    \multicolumn{1}{r}{ $^*$with Random}  & 12.18  & 0.19 & 167.20 & 55\%\\
    \multicolumn{1}{r}{ $^*$with Greedy}  & 3.48  & 0.12 & 81.80 & 85\%\\
        \bottomrule
  \end{tabular}
  \caption{Simulation results. We compare the search performance between different prior
    belief and resolution settings. The results for the first three colums are averaged over 20 trials. }
  \label{tab:genmos:results}
\end{table}

% \begin{table}[t]
%   \centering
%   \begin{tabular}{lllll}
%         \toprule
%                                & Length     & Planning  & Total      & success \\
%                                & (m)        & time (s)  & time (s)   & rate \\
%         \midrule
%         Groundtruth            &  1.87     & 2.17            & 17.90         & 100\%\\
%         Uniform (0.001m$^3$)   &  7.95     & 12.89           & 80.25         & 90\%\\
%         Occupancy (0.001m$^3$) &  11.08    & 14.92           & 88.21         & 100\%\\
%         Uniform (0.008m$^3$)   &  14.86    & 29.75           & 118.55        & 70\%\\
%         Occupancy (0.008m$^3$) &  12.44    & 18.51           & 85.10         & 90\%\\
%         \bottomrule
%       \end{tabular}
%   \caption{Simulation results. We compare the search performance between different prior
%     belief and resolution settings. The result is averaged over 10 trials. }
%   \label{tab:genmos:results}
% \end{table}

We evaluate the search performance by four metrics: total path length traversed during search (Length), \emph{total} time used for POMDP planning (Planning time), total system time (Total time), and success rate. Total system time included time for planning, executing navigation actions, receiving observations, belief update and visualization; the simulated robot has a translational velocity of 1.0m/s, and a rotational velocity of 0.87rad/s.

We perform 20 search trials per method and report the average of each metric in Table~\ref{tab:genmos:results}.\footnote{Simulation experiments were run on a computer with i7-8700 CPU.} Each trial was allowed 180s total system time (excluding the time for visualization). Results indicate that the system achieved high success rate especially at high resolution under occupancy-based prior. We observed that searching with a resolution level more coarse than the target size hurts performance, while having occupancy-based prior improves. Additionally, Greedy was much faster than POUCT in planning time yet lead to lower success rate within the time budget and longer total time than using POUCT.  Our intuition is that, while Greedy prioritizes looking at a location with the highest belief, POUCT considers the search of multiple objects in a sequence.

% We observe an
% \stnote{rephrase; increased time is worse search performance right?  I would expect longer time for higher resolution, but maybe higher success rate?  But I am not sure what I expect with occupancy vs uniform...there should be some metric where occupancy wins right?   Try adding a sentence explaing the difference in performance between occupancy and uniform.}
% and all methods are able to successfully find the objects in the majority of trials.

% \stnote{Why is the path length shorter for uniform? I would have thought it would be longer because it wastes time looking for objects floating in the air?}

\subsection{Deployment on the Boston Dynamics Spot}
\label{sec:spot}
We deploy our system to the Boston Dynamics Spot \cite{bdspot} by writing a client for GenMOS
that interfaces with the Spot SDK.\footnote{We integrated Spot SDK with
  ROS \cite{quigley2009ros} to use RViZ \cite{kam2015rviz};
  Our computer that ran GenMOS for Spot has an i7-9750H CPU with an RTX 2060 GPU.} Spot is a mobile robot that is robust at
navigation while avoiding obstacles. Our Spot robot is equipped with an arm that has a
gripper with an RGB-D camera, which has a depth range of around 1.5m. However, motion planning of the arm does not
have collision checking.
Nevertheless, our package is able to output viewpoints
that are of safe distances from obstacles to enable collision-free search,
leveraging the point cloud received from the Spot's on-board cameras. We use
Spot's off-the-shelf GraphNav service to map the search region (without the
presence of the target objects) and then localize the robot within it.

$\qquad$\vspace{0.3in}
\begin{figure}[H]
  \centering
  \includegraphics[width=0.8\linewidth]{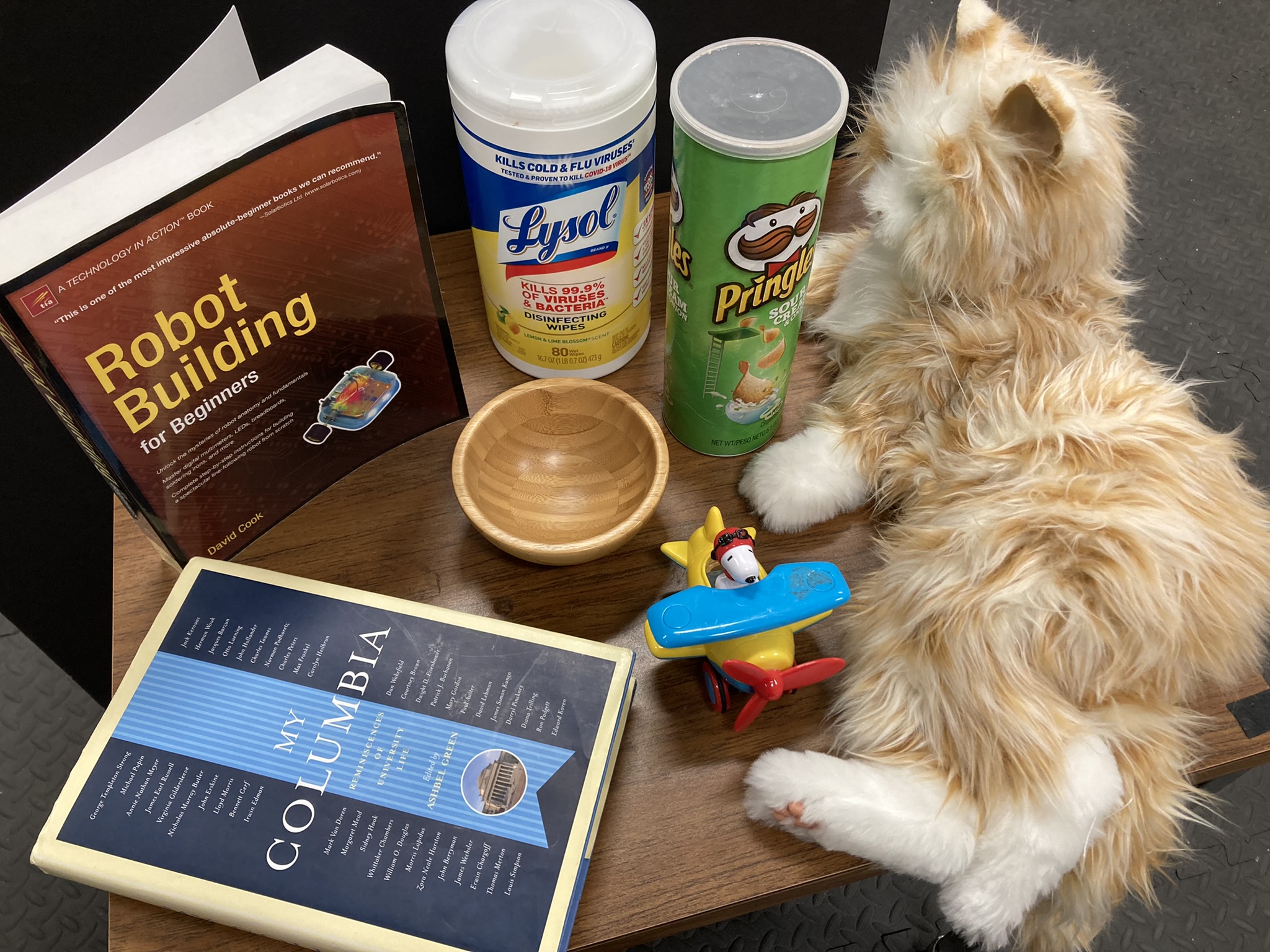}
  \caption{Candidate target objects in our evaluation. From left to right, the object labels are: Columbia Book, Robot Book, Bowl, Lysol, ToyPlane, Pringles, and Cat.}
  \label{fig:objects}
\end{figure}

We task the robot to search in 2 different local regions in different rooms of our lab (Figure~\ref{fig:regions}). The first region (of size 9m$^2\times 1.5$m) consists of two tables and a separation board which creates occlusion; The target objects can be on the floor, or on or under tables; Note that our system is given only point cloud observations to infer potential target locations. The second region (of size 7.5m$^2\times 2.2$m)  is a kitchen area,
where target objects can be on the countertop, on or underneath the couch, on the shelf, or in the sink. In both environments, the resolution of the octree belief is set to 0.001$m^3$ with a size of 32$\times$32$\times$32. The robot is given at most 10 minutes to search. We collected a dataset of 230 images and trained a YOLOv5 detector \cite{glenn_jocher_2020_4154370} with 1.9 million parameters for the objects of interest (Figure~\ref{fig:objects}). We project the 2D bounding box to 3D using the depth image from the gripper camera.

\newpage
$\qquad$\vspace{0.3in}
\begin{figure}[H]
  \centering
  \includegraphics[width=\linewidth]{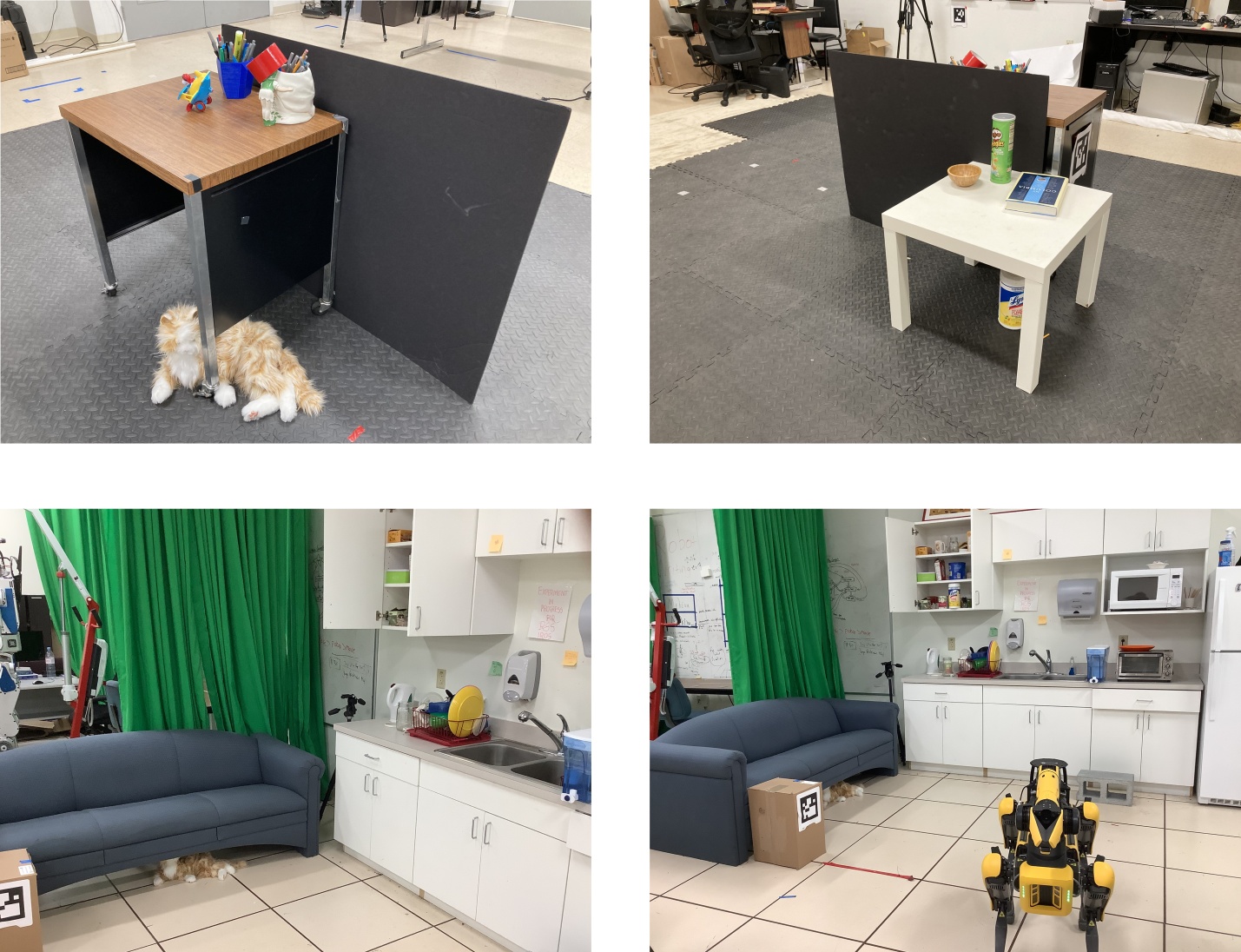}
  \caption{Local regions in our evaluation with Spot. Upper two: two views of
    the arranged tables region. A black board separates the two tables to block
    the view from one side to the other. Bottom two: two views of the kitchen region, with a couch, a countertop, and a shelf.
  }
  \label{fig:regions}
\end{figure}

Figure~\ref{fig:examples} contains illustrations of several key frames during the search trials in both regions. Video footages of the search together with belief state visualization are available in the supplementary video. In the arranged tables region, our system enables Spot to simultaneously search for four objects (Cat, Pringles, Lysol, and ToyPlane), and successfully find three objects in 6.5 minutes. In the kitchen region, our system enables Spot to find a Cat placed underneath the couch within one minute.  However, we do observe that search success deteriorates due to false negatives from the object detector, as well as conservative viewpoint sampling for obstacle avoidance, which prevents the robot to plan top-down views from above the countertop, for example. Overall, our system enables the robot to search for objects in different environments within a moderate time budget.

\begin{figure}[H]
  \centering
\makebox[\textwidth][c]{\includegraphics[width=1.18\textwidth]{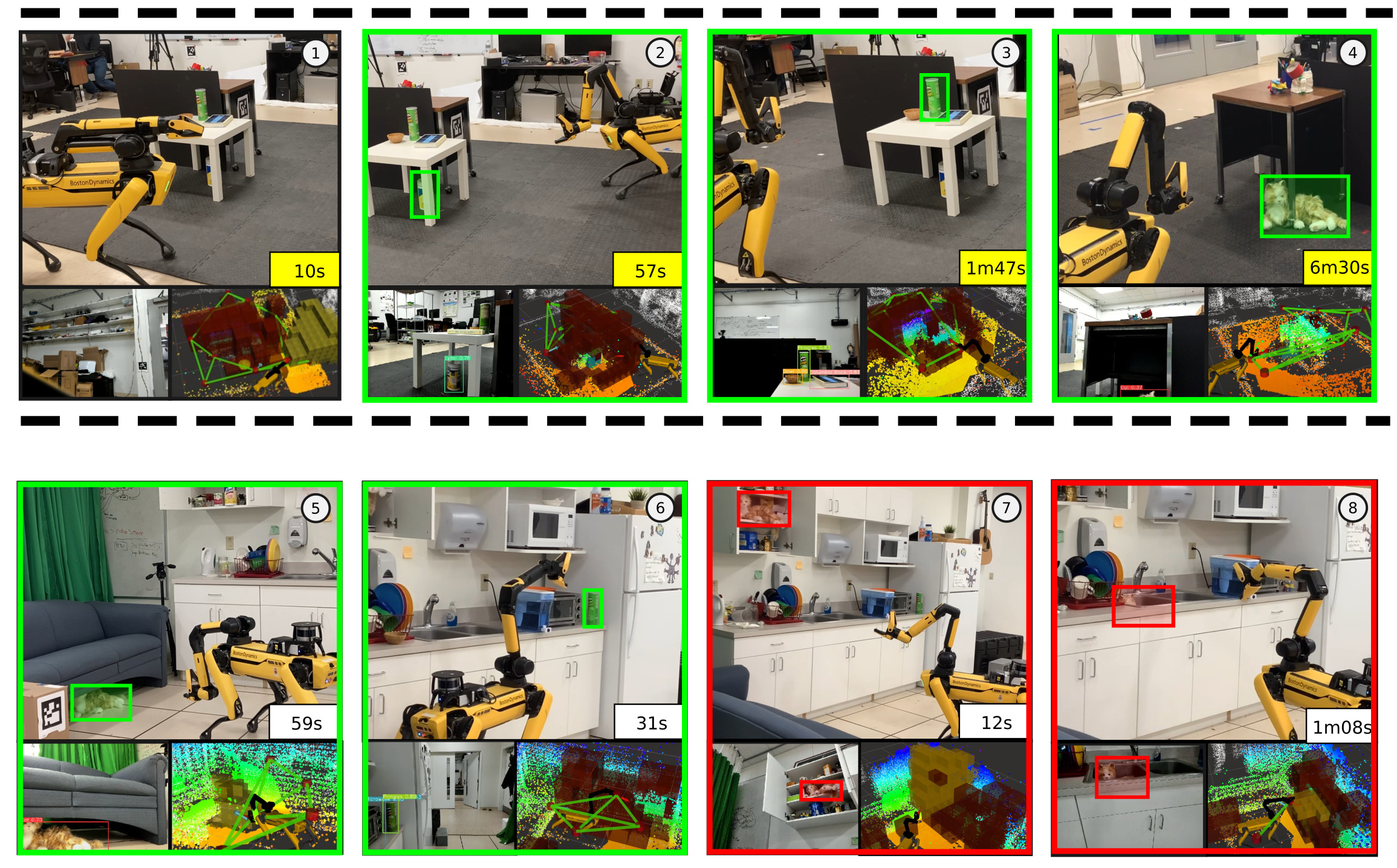}}
  \caption{Key frames from local region search trials. Each frame consists of three images: a third-person view (top), an image from Spot's gripper camera with object detection (bottom left), and a combined visualization of the octree belief, viewpoint graph, and local point cloud observations. Green boxes indicate successfully finding the marked object. Red boxes indicate failure of finding the object due to false negatives in object detection. The yellow or white box on the right of each frame indicates the amount of time passed since the start of the search. Frames at the top row belong to a single trial in the table region, while frames at the bottom row belong to distinct trials in the kitchen region. The top row (1-4) shows that GenMOS enables Spot to successfully find multiple objects in the table region: Lysol under the white (2), Pringles on the white table (3), and the Cat on the fllor under the wooden table (4). The bottom row shows that GenMOS enables Spot to find a Cat underneath the couch (5), and the Pringles at the countertop corner (6). (7-8) shows a failure mode, where the GenMOS plans a reasonable viewpoint, while the object detector fails to detect the object (Cat) on the shelf or in the sink. Video: \url{https://youtu.be/TfCe2ZVwypU}}
  \label{fig:examples}
\end{figure}

\begin{figure}[H]
  \centering
  \includegraphics[width=\linewidth]{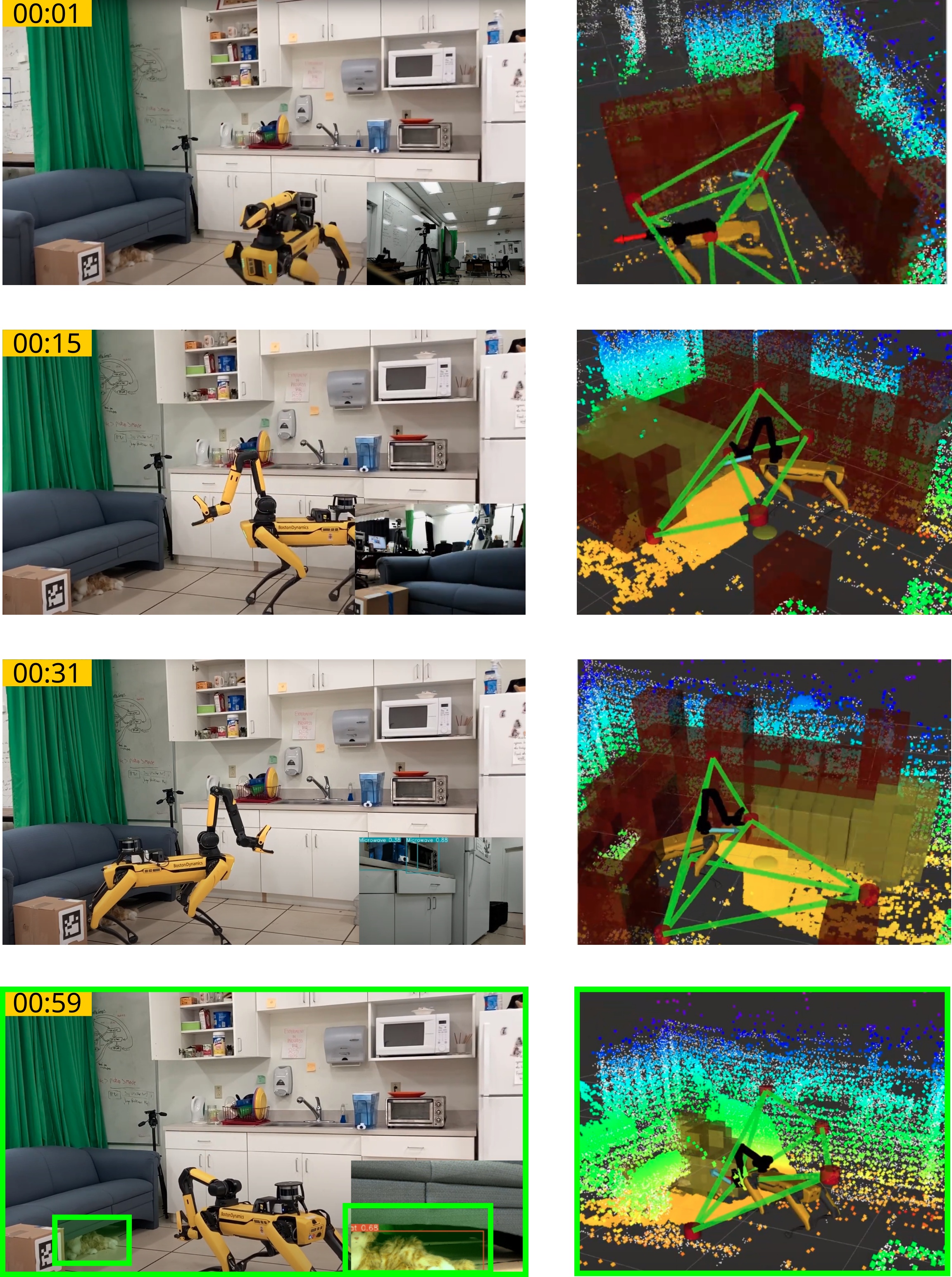}
  \caption{Sequence of frames from the search trial where Spot is tasked to find the toy cat under the couch. GenMOS enables Spot to find the hidden cat under one minute. Red boxes represent octree belief, initialized based on occupancy.
    % Each visualization contains the octree belief (red boxes), point cloud, the view position graph, and the robot model.
    % Each frame is paired with a visualization of the internals.x
    % The robot has no prior knowledge about the location of the cat or the environment except for occupancy informed by point cloud observations.
  }
  \label{fig:spot_cat_seq}
\end{figure}

\begin{figure}[H]
  \centering
\makebox[\textwidth][c]{\includegraphics[width=1.18\textwidth,draft=false]{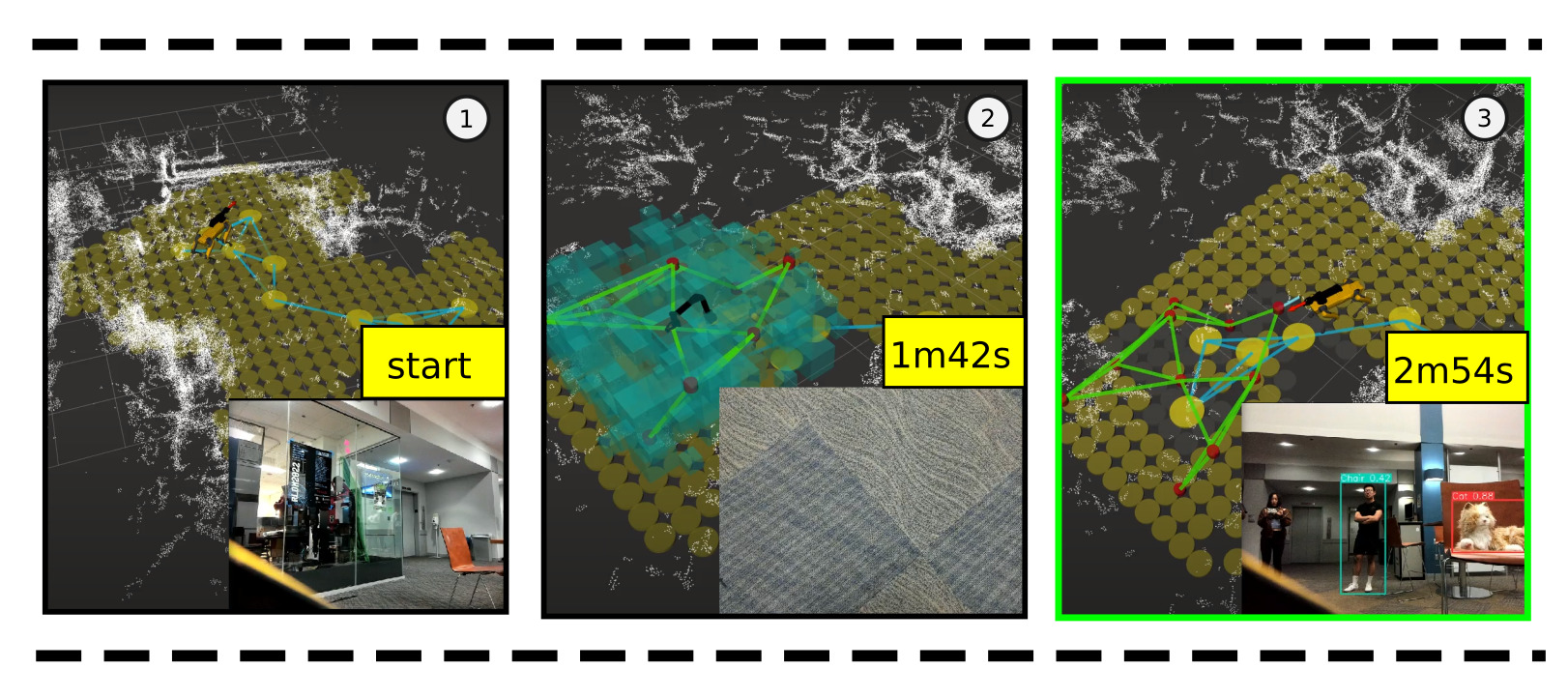}}
  \caption{Demonstration of hierarchical planning where a 2D global search is integrated
    with 3D local search through the \emph{stay} action \cite{zheng2022towards}. This system enables the Spot robot to find a Cat in a lobby area within 3~minutes. (1)~Initial state; (2)~searching in a 3D local region; (3)~the robot detects the Cat and the search finishes.}
  \label{fig:lobby}
\end{figure}

\subsubsection{Extension to Hierarchical Planning}
\label{sec:hier}
We envision the integration of our 3D local search algorithm with a global search algorithm so that a larger search space can be handled. To this end, we implemented a hierarchical planning algorithm that contains a 2D global planner (with the same multi-object search POMDP model as in Chapter \ref{ch:3dmos} but in 2D), where the global planner has a \emph{stay} action (no viewpoint change) which triggers
the initialization of a 3D local search agent. In particular, our implementation uses the \texttt{ListenServer} streaming RPC; when the planner \emph{decides} to search locally, we let the server send a message that triggers the client to send over an \texttt{UpdateSearchRegion} request to initialize the local 3D search agent.

The starting belief of the 3D local agent is initialized based on the 2D global belief; the 2D global belief is in turn updated by projecting the 3D field of view down to 2D. We set the resolution of 2D search to be 0.09$m^2$, and the resolution of 3D search to be 0.001$m^3$. We test this system in a lobby area of size 25m$^2\times 1.5$m, where the robot is tasked to find the toy cat on a tall chair (Figure~\ref{fig:lobby}). The search succeeded within three minutes, covering roughly 15m$^2$.

\newpage
$\qquad$\vspace{0.3in}
\begin{figure}[H]
  \centering
  \includegraphics[width=\linewidth]{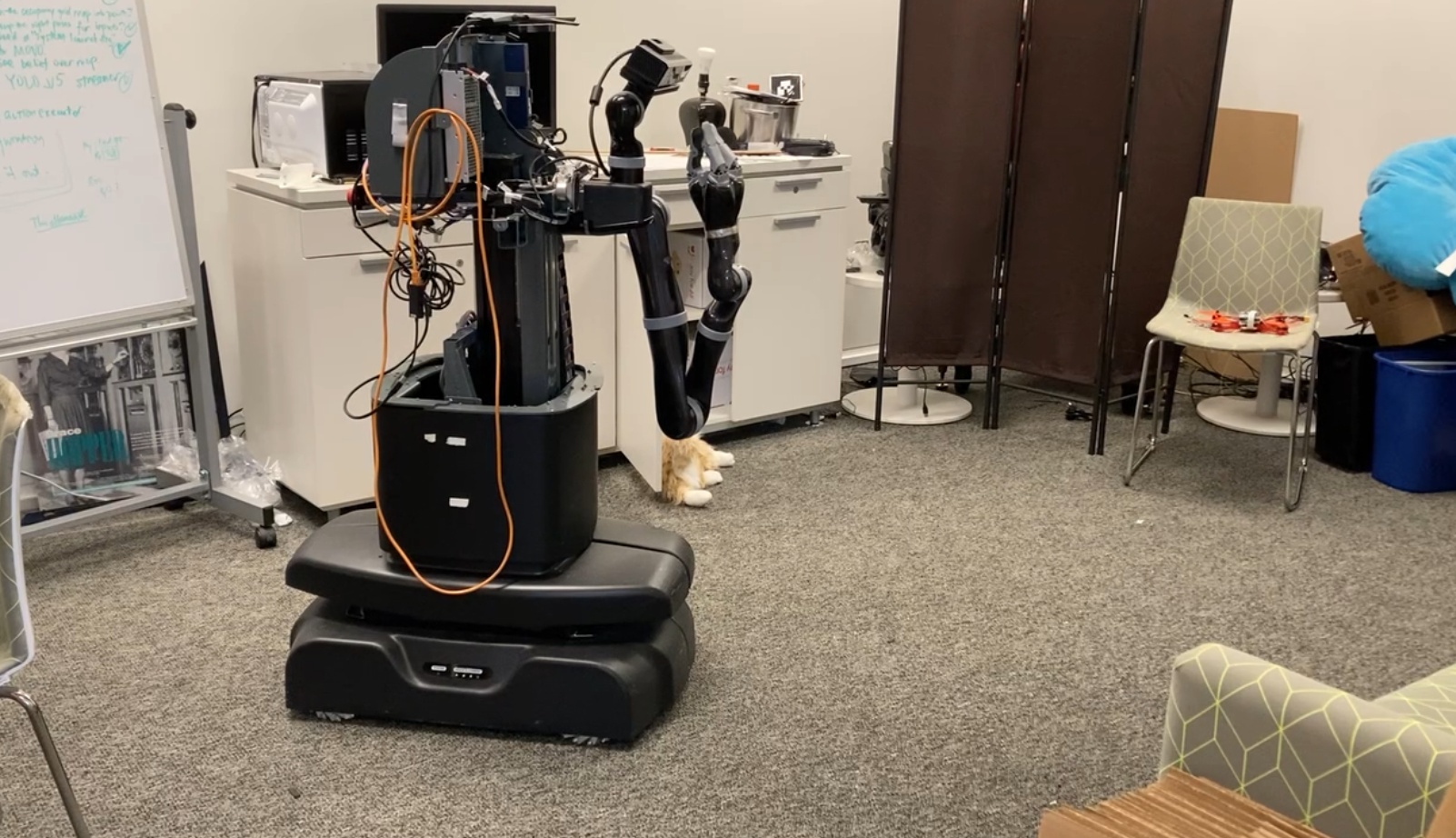}
  \caption{Test environment for object search with MOVO using GenMOS. The target object is, again, the toy cat. In this case, it is lying on the floor next to the opened door.
  }
  \label{fig:movo_setup}
\end{figure}

\subsection{Deployment on the Kinova MOVO Robot}
We additionally deployed GenMOS to the Kinova MOVO mobile manipulator, a robot
with a mobile base, an extensible torso, and a head that can pan and tilt, and
it is equipped with a Kinect V2 RGBD camera.  Similar to Spot, we deployed
GenMOS to MOVO by integrating the GenMOS gRPC client with the perception,
navigation and control stacks of MOVO, which is based on ROS Kinetic. Since the maintenance of MOVO by Kinova has terminated since 2019, deploying GenMOS on MOVO poses a greater challenge compared to Spot. Nevertheless, through Docker \cite{merkel2014docker}, GenMOS was successfully integrated through implementing a client for MOVO, and it enabled MOVO to do object search.

We evaluated the resulting object search
system in a small living room environment (Figures~\ref{fig:movo_example1} and \ref{fig:movo_example2}). The robot is able to perform search
and successfully finds a toy cat on the floor in around 2 minutes. Compared to Spot,
however, MOVO is less agile and prone to collision with obstacles while
navigating between viewpoints during the search.

\begin{figure}[H]
  \centering
  \includegraphics[width=0.81\linewidth,draft=false]{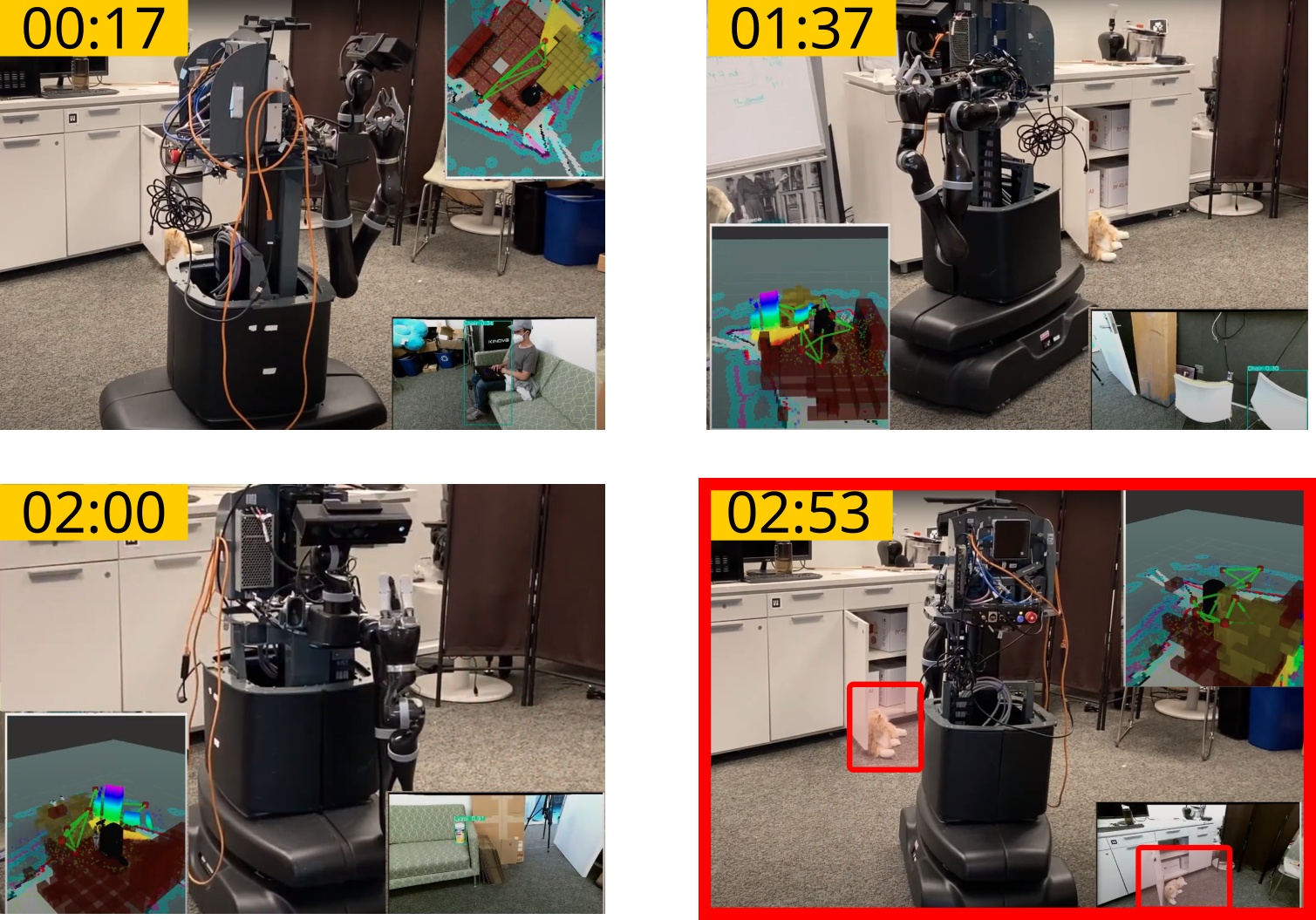}
  \caption{Here, the toy cat lies on the floor next to the opened door. MOVO eventually looked in the right direction, but the object detector failed to recognize it.
  }
  \label{fig:movo_example1}
\end{figure}

\begin{figure}[H]
  \centering
  \includegraphics[width=0.81\linewidth,draft=false]{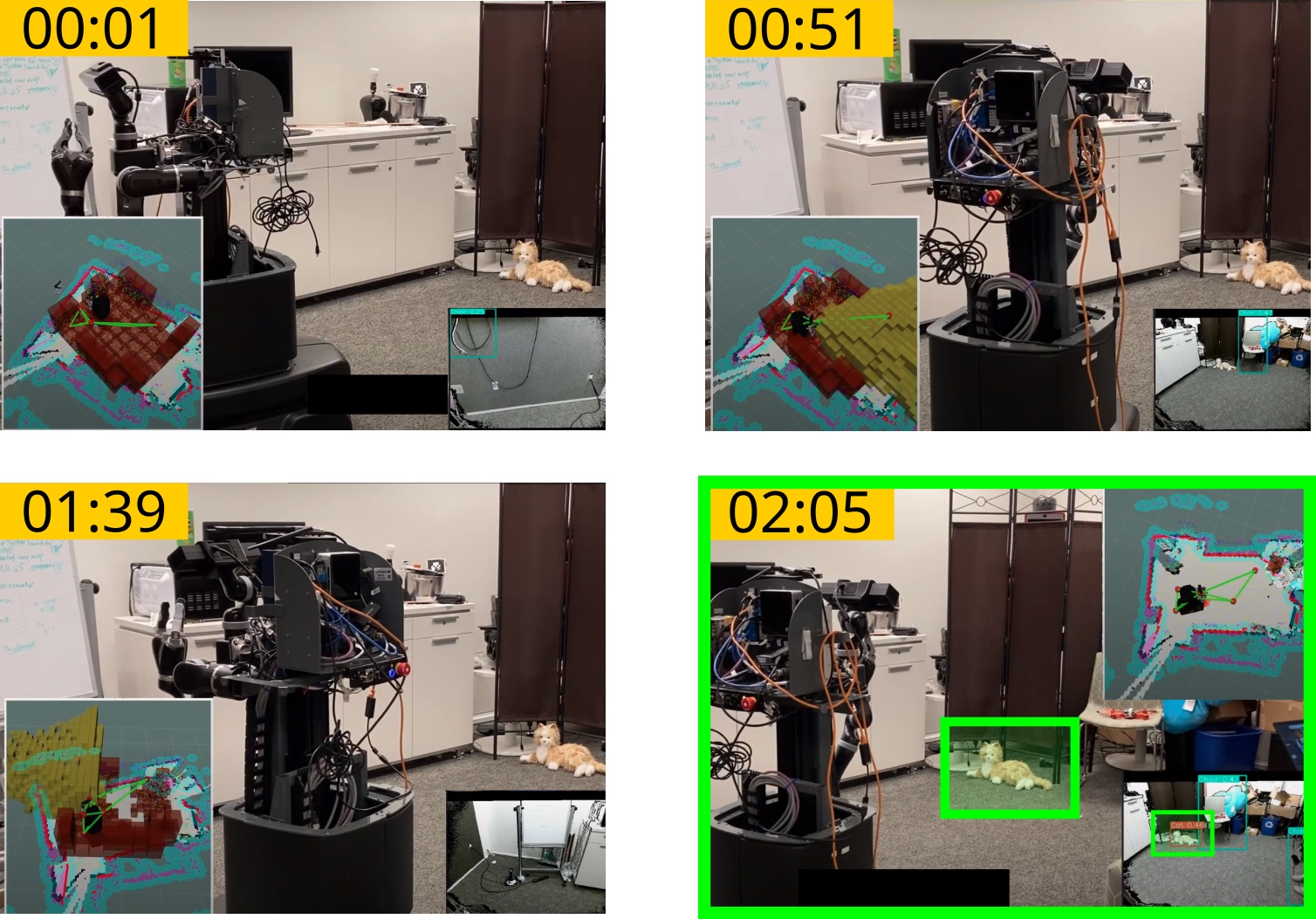}
  \caption{Here, the toy cat is in front of the room divider. Although the detector failed at first (2), MOVO recovered and found the target on the second try (4).
  }
  \label{fig:movo_example2}
\end{figure}

\newpage
\begin{figure}[H]
  \centering
  \includegraphics[width=0.5\linewidth]{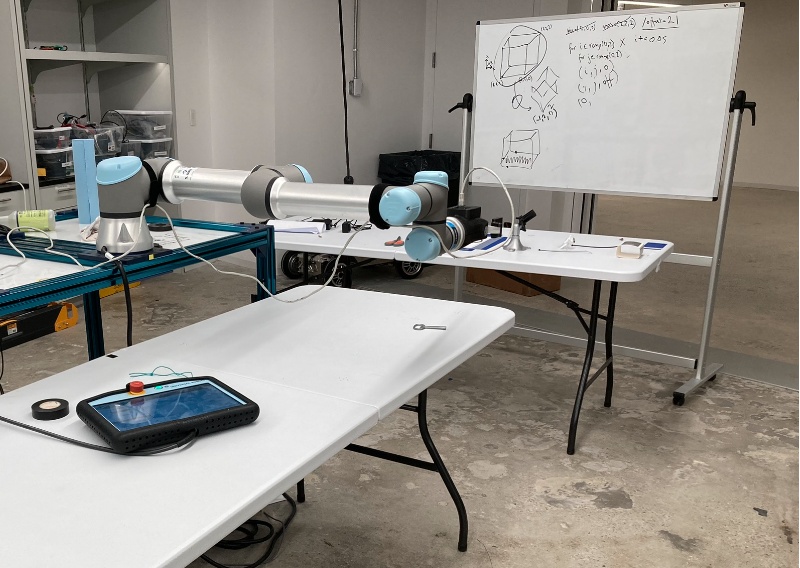}
  \caption{Test environment for UR5e. The robot's gripper had a camera. The target object is a cup placed either on or slightly under the farther table, initially out of sight for the gripper camera.
  }
  \label{fig:ur5e_setup}
  \vspace{-0.3in}
\end{figure}

\subsection{Deployment on the Universal Robotics UR5e Robotic Arm}
Finally, we integrate GenMOS with the UR5e robotic arm on yet a different middleware, Viam.\footnote{As an alternative to ROS, Viam (\url{https://www.viam.com/}) aims to provide fast configuration of robot hardware and distributed robot systems through an extensive web interface that accelerates collaboration as well as standardized services for vision and motion planning as building blocks.}. The UR5e arm is mounted on a table and it is equipped with a camera on its gripper. We applied an off-the-shelf RGB object detection model trained on MS-COCO \cite{lin2014microsoft} and tasked the arm to find a red cup. The cup is initially out of sight, either on or below a different table. Since object detection lacks depth, we considered label-only detection (\ie, discarding the 2D bounding box and only keeping the event that a certain object is detected). As discussed in Section~\ref{sec:genmos:server} (page \pageref{sec:genmos:server:2d_detection}), GenMOS can accommodate to such a scenario as it is expected to look from different viewpoints to reduce the region of uncertainty, once a detection is made. Indeed, we observed this type of behavior on UR5e (see Figure~\ref{fig:ur5e_example1}).

A few caveats should be noted about this particular system. I expected the arm to be able to reliably motion plan to viewpoints produced by GenMOS. However, in practice, motion planning frequently fails.\footnote{This is in part due to the fact that motion services with Viam were in early phases of development when I visited, but also that GenMOS does not internally consider kinematic constraints; it works if motion planning fails a few times as GenMOS would simply replan, but the failure was too frequent at the time to keep trying motion planning to any viewpoint produced by GenMOS .} As a work-around that still tests GenMOS's object search planning ability, I predefined a set of viewpoints for which motion planning works, and when a viewpoint is planned by GenMOS, the arm is moved to the closest viewpoint in this set. Another issue that I worked around is that the detector works poorly if the image is rotated due to gripper rotation. So, I made the gripper automatically level every time it reaches a destination view pose, which was done by commanding the last joint to offset the end effector's rotation around the wrist.

\begin{figure}[ph!]
  \centering
  \includegraphics[width=\linewidth]{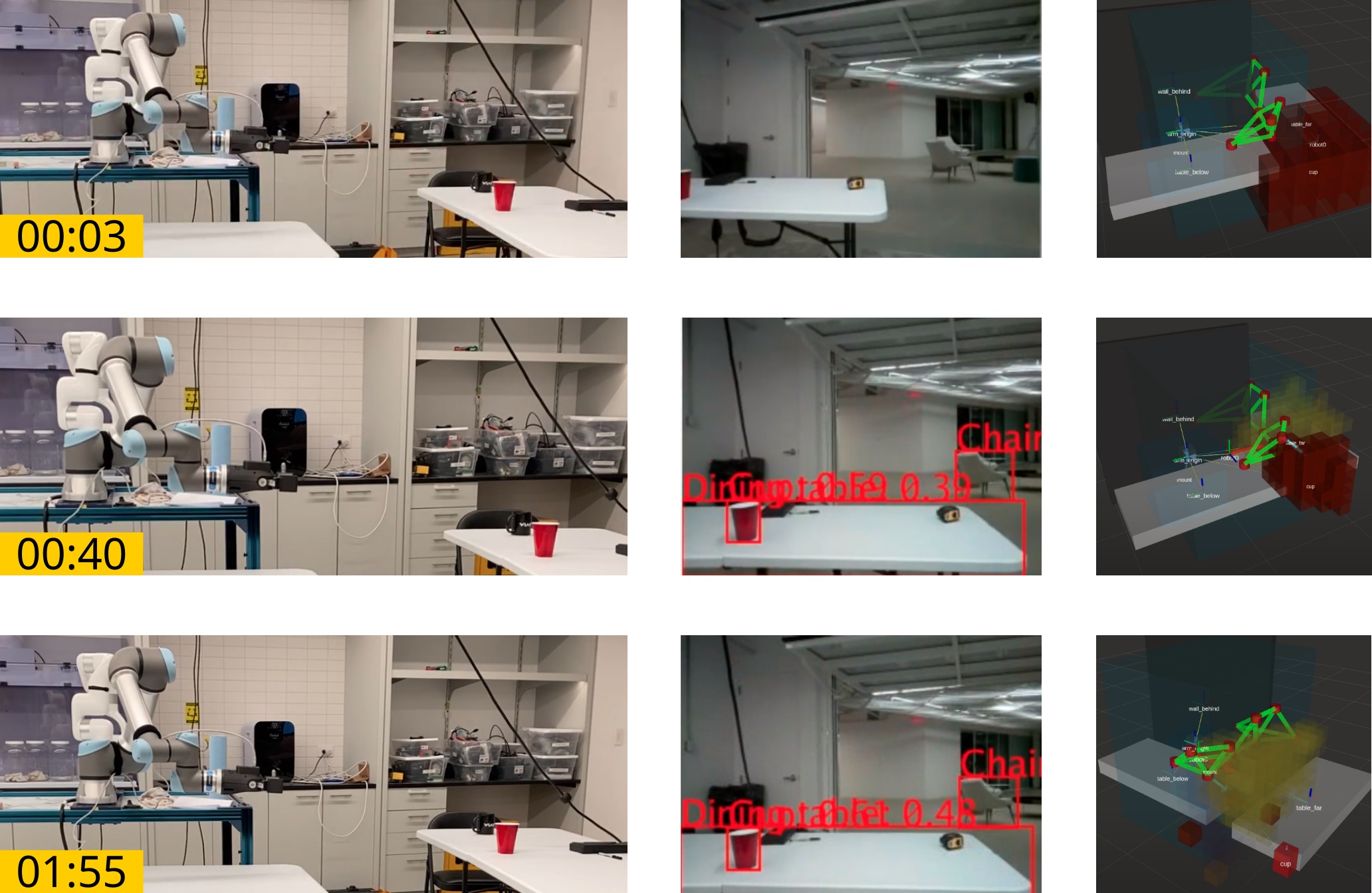}
  \caption{The UR5e arm moves back and forth, reducing uncertainty to a few grids.
  }
  \label{fig:ur5e_example1}
\end{figure}

\begin{figure}[ph!]
  \centering
  \includegraphics[width=\linewidth]{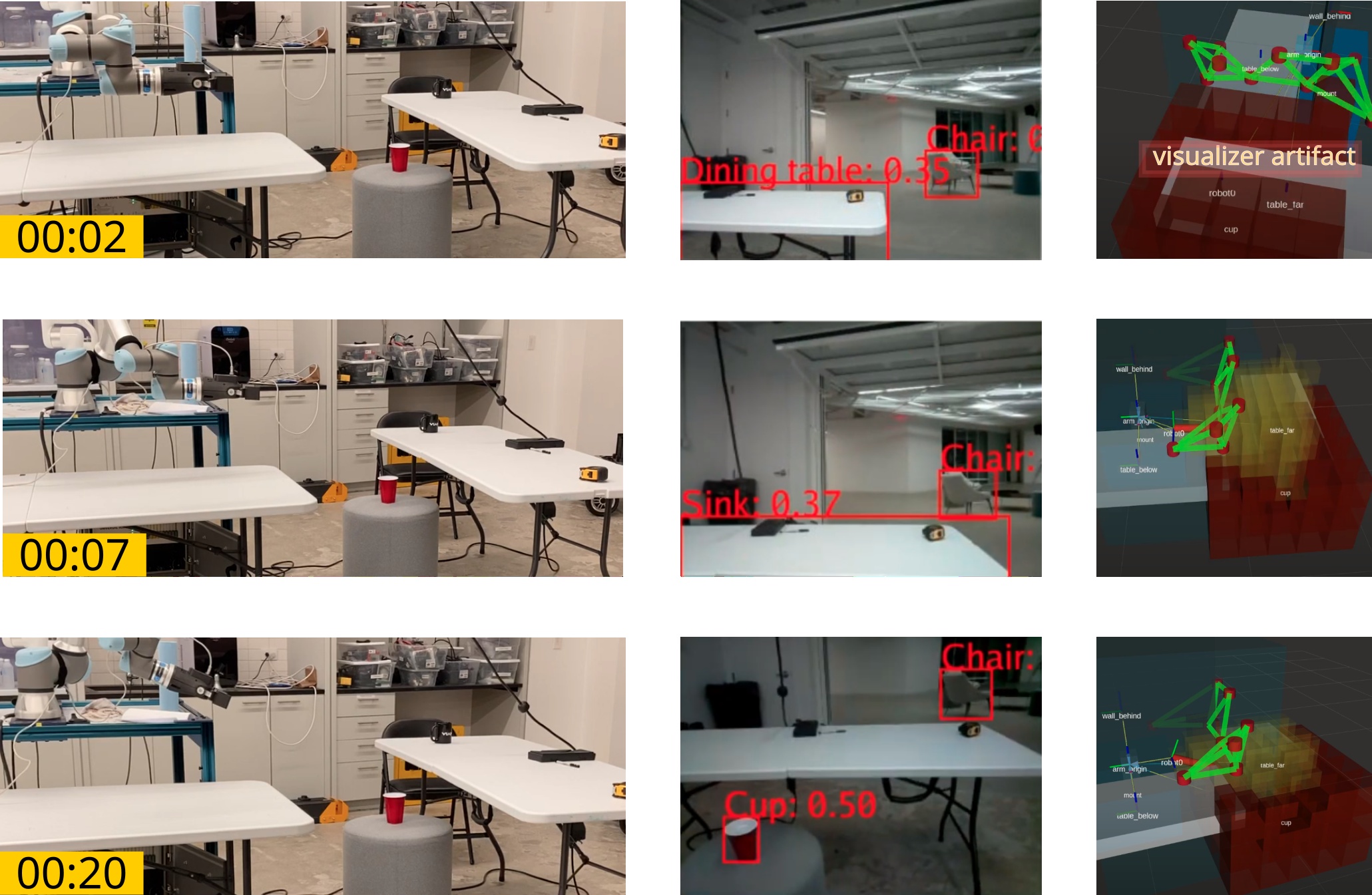}
  \caption{The UR5e arm looks down; yet belief update missed the detection.
  }
  \label{fig:ur5e_example2}
\end{figure}

% \vspace{1.0in}
% \begin{center}
%   THIS IS THE END OF THIS CHAPTER.
% \end{center}

\chapter{Correlational Object Search}
\label{ch:cospomdp}
%%%%%%%%%%%%%% COS-POMDP Chapter %%%%%%%%%%%%%%%
\section{Motivation - Finding Hard-to-Detect Objects}
\lettrine{O}{bject} search can make a difference in many applications
including domestic services \cite{sprute2017ambient,zeng2020semantic}, search
and rescue \cite{eismann2009automated,sun2016camera}, and elderly care
\cite{idrees2020robomem,loghmani2018towards}. In realistic settings, however, the object being searched for will often be small, outside the current field of view, and hard to detect. For example, a household robot must be very close to a fork in order to be able to detect it; likewise, a warehouse robot may have to locate a particular machine within a very large factory. To be effective, the robot must generate efficient search strategies that require as few timesteps as possible.

\vspace{-0.2in}
\subsection{Why Correlations?}

In such settings, when the target object is hard to detect, \emph{correlational information} can be extremely useful. Specifically, suppose the robot is equipped with a prior about the relative spatial locations of object types (\eg, refrigerators tend to be near forks). Then, it can leverage this information as a powerful heuristic to narrow down or ``focus'' the search space, by first focusing its efforts on locating easier-to-detect objects that are highly correlated with the target object, and only then focusing on locating the target object itself. Doing so has the potential to greatly improve search efficiency, as the robot no longer needs to waste time considering strategies that, \eg, search for a fork in a bathroom. Unfortunately, previous approaches to object search with correlational information tend to resort to ad-hoc or greedy search strategies~\cite{aydemir2013active,kollar2009utilizing,zeng2020semantic}, which may not scale well to complex environments.

\begin{figure}[t]
  \centering
  \noindent
    \includegraphics[width=\columnwidth, draft=false]{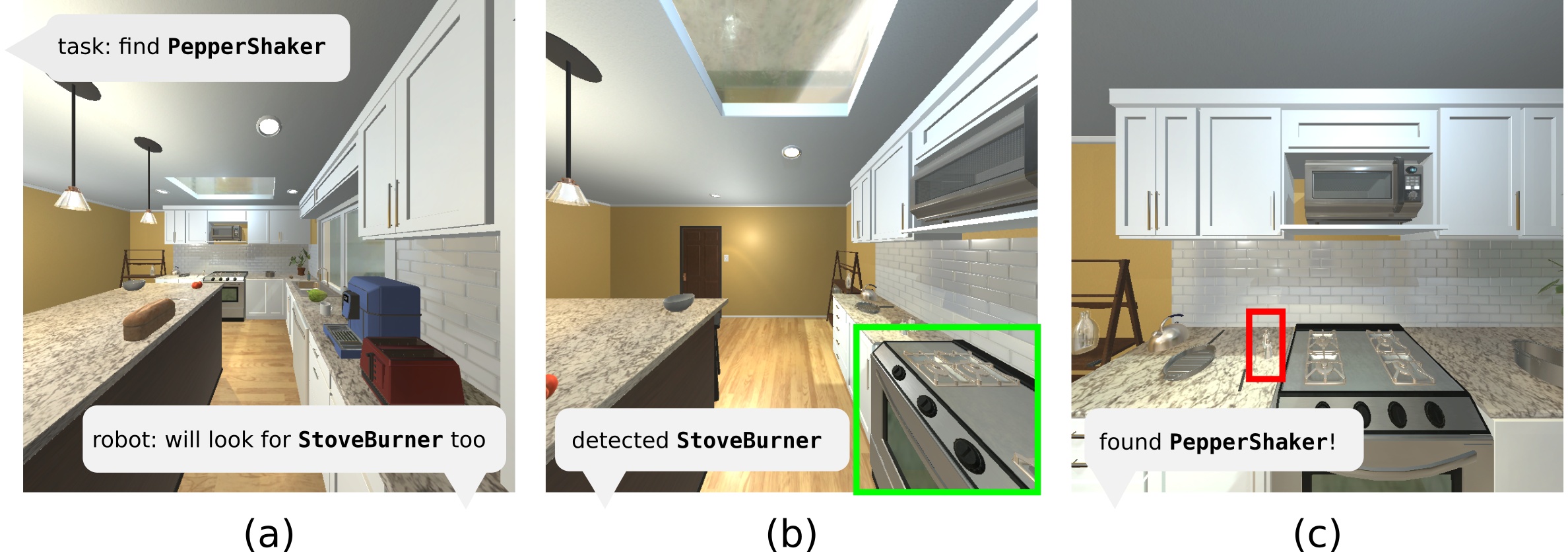}
    \caption{We study the problem of object search using correlational information about spatial relations between objects. This example illustrates a desirable search behavior in an AI2-THOR scene, where the robot leverages the detection of a \texttt{StoveBurner} to more efficiently find a hard-to-detect \texttt{PepperShaker}.}
  \label{fig:teaser}
\end{figure}

\subsection{Remark on Previous Work}

We follow a long line of work that models the object search problem as a partially observable Markov decision process (POMDP) \cite{aydemir2013active,li2016act,xiao_icra_2019,wandzel2019multi,zheng2021multi}. This formalization is useful because object search over long horizons is naturally a sequential, partially observed decision-making problem: (1) the robot must search for the target object by visiting multiple viewpoints in the environment sequentially, and (2) the robot must maintain and update a measure of uncertainty over the location of the target object, via its belief state.

\citet{garvey1976} and \citet{wixson1994using} pioneered the paradigm of \emph{indirect search}, where an intermediate object (such as a desk) that is typically easier to detect is located first, before the target object (such as a keyboard). More recently, probabilistic graphical models have been used to model object-room or object-object spatial correlations \cite{aydemir2013active, zeng2020semantic,kollar2009utilizing,lorbach2014prior}.
% Structure (such as places, scenes, rooms) are often employed to use prior knowledge to reduce the problem complexity.
In particular, \citet{zeng2020semantic}~proposed a factor graph representation for different types of object spatial relations. Their approach produces search strategies in a greedy fashion by selecting the next-best view to navigate towards, based on a hybrid utility of navigation cost and the likelihood of detecting objects. In our evaluation, we compare our sequential decision-making approach with a greedy, next-best view baseline based on that work \cite{zeng2020semantic}.

% \subsection{Deep Learning for Object Search}
Recently, the problem of semantic visual navigation \cite{zhu2017target,batra2020objectnav,wortsman2019learning,qiu2020learning,mayo2021visual} received a surge of interest in the deep learning community. In this problem, an embodied agent is placed in an unknown environment and tasked to navigate towards a given semantic target (such as ``kitchen'' or ``chair''). The agent typically has access to behavioral datasets for training on the order of millions of frames and the challenge is typically in generalization. Our work considers the standard evaluation metric (SPL \cite{anderson2018evaluation}) and task success criteria (object visibility and distance threshold \cite{batra2020objectnav}) from this body of work. However, our setting differs fundamentally in that the search strategy is not a result of training but a result of solving an optimization problem.
% We do use the same train-validation scene splits to train vision-based object detectors and use the statistics of the detector (true and false positive rates and average detection distance) as parameters to our sensor model.

\section{Contributions}
In this work, we make the following contributions:
\begin{itemize}[itemsep=0.5pt,topsep=0pt]

\item We propose COS-POMDP, which contains a correlation-based observation model that captures spatial relations between objects

\item We prove that COS-POMDPs produce equivalent solutions to a naive formulation where the state space is a joint over all object locations, despite COS-POMDPs having a much smaller state space;

\item We address scalability by proposing a hierarchical
  planning algorithm, where a high-level COS-POMDP plans subgoals, each fulfilled by a low-level planner that plans with low-level actions (\ie, given primitive actions); both levels plan online based on a shared and updated COS-POMDP belief state, enabling closed-loop planning;

\item We investigate the influence of correlational information when searching for hard-to-detect targets, and the benefit of optimizing for a sequence of actions as opposed to selecting the next-best view.

\item We conduct experiments in AI2-THOR \cite{kolve2017ai2}, a realistic simulator of household environments, and use YOLOv5 \cite{redmon2016you,glenn_jocher_2020_4154370} as the object detector.

  \begin{itemize}[itemsep=0.5pt,topsep=0pt, label=$\circ$]
  \item Our results show that, when the given correlational information is accurate, COS-POMDP leads to more robust search perfomance when the target object is hard to detect. In particular, for target objects with a true positive detection rate below 40\%,
  COS-POMDP improves the POMDP baseline that ignores correlational information by 70\% and a greedy, next-best view baseline by 170\%, in terms of the SPL \cite{anderson2018evaluation} metric, commonly used for evaluating navigation agents in simulated environments \cite{wortsman2019learning,qiu2020learning,batra2020objectnav}.
  \end{itemize}

\end{itemize}

%%%%%%%%%%%%%%%%%%%%%%%%%%%%%%%%%%%%%%%%%%%%%%%%%%%%%%%%%%%%%%%%%%%%%%%%%%%%%%%%%%%%%%%%%%%%%%%

%%%%%%%%%%%%%%%%%%%%%%%%%%%%%%%%%%%%%%% Problem Definition %%%%%%%%%%%%%%%%%%%%%%%%%%%%%%%%%%%%
\section{Problem Formulation}
\label{sec:prob}
We formulate correlational object search as a planning problem, where a robot must search for a target object given correlational information with other objects in the environment. We begin by describing the underlying search environment and the capabilities of the robot. Then we define the inputs to the robot and the solution expected to be produced by the robot to solve this problem.

\subsection{Search Environment and Robot Capabilities}
\label{sec:cos_envrob}
The search environment contains a target object and $n$ additional static objects. The set of possible object locations is discrete, denoted as $\mc{X}$. The locations of the target object $\xtarget\in\mc{X}$ and other objects $x_1,\ldots, x_n\in \mc{X}$ are unknown to the robot, and follow a latent joint distribution $\Pr(x_1,\ldots,x_n,\xtarget)$. The robot is given as input a factored form of this distribution, defined later in Sec.~\ref{sec:cos}.

The robot can observe the environment from a discrete set of viewpoints, where each viewpoint is specified by the position and orientation of the robot's camera. These viewpoints form the necessary state space of the robot, denoted as $\Srobot$. The initial viewpoint is denoted as $\srobotinit$. By taking a primitive move action $a$ from the set $\mc{A}_m$, the robot changes its viewpoint subject to transition uncertainty $T_m(\srobot',\srobot,a)=\Pr(\srobot'|\srobot,a)$. Also, the robot can decide to finish a task at any timestep by choosing a special action \DECLARE, which deterministically terminates the process.

At each timestep, the robot receives an observation $z$ factored into two independent components $z=(\zrobot,\zobjects)$. The first component $\zrobot\in\Srobot$ is an estimation of the robot's current viewpoint following the observation model $\Orobot(\zrobot,\srobot)=\Pr(\zrobot|\srobot)$. The second component $\zobjects=(z_1,\ldots, z_n,\ztarget)$ is the result of performing object detection. Each element, $z_i\in\mc{X}\cup\{\texttt{null}\}$, $i\in\{1,\ldots,n,\text{target}\}$, is the detected location of object $i$ within the field of view, or \texttt{null} if not detected. The observation $z_i$ about object $i$ is subject to limited field of view and sensing uncertainty captured by a \emph{detection model} $D_i(z_i,x_i,\srobot)=\Pr(z_i|x_i,\srobot)$; Here, a common conditional independence assumption in object search is made \cite{zeng2020semantic,wandzel2019multi}, where $z_i$ is conditionally independent of the observations and locations of all other objects given its location and the robot state $\srobot$. The set of detection models for all objects is $\mc{D}=\{D_1,\ldots,D_n,D_{\text{target}}\}$. In our experiments, we obtain parameters for the detection models based on the performance of the vision-based object detector (Sec.~\ref{sec:exp:detectors}). %per object category

\subsection{The Correlational Object Search Problem}\label{sec:cos}
Although the joint distribution of object locations is latent, the robot is assumed to have access to a factored form of that distribution, that is, $n$ conditional distributions, $\mc{C}=\{C_1,\ldots,C_n\}$ where $C_i(x_i,\xtarget)=\Pr(x_i|\xtarget)$ specifies the spatial correlation between the target and object $i$. We call each $C_i$ a \emph{correlation model}. This model can be learned from data or specified based on environment-specific knowledge.

The robot performs search by taking a sequence of move actions to observe different parts of the environment, and terminates the search by taking \DECLARE. We are now ready to define the \emph{correlational object search} problem:

\textbf{Problem 1 (Correlational Object Search).} Given as input a tuple $$(\mc{X},\mc{C},\mc{D},\srobotinit, \Srobot, \Orobot, \mc{A}_m,T_m),$$ the robot must perform a sequence of actions, $a_{1:T}=(a_1,\ldots,a_T)$ of length $T\geq 1$, where $a_1,\ldots,a_{T-1}\in\mc{A}_m$ and $a_T$ is \DECLARE. The action sequence $a_{1:T}$ is called a \emph{solution}. A solution is \emph{successful} if the robot state sequence and the target location satisfy certain criteria upon taking the \DECLARE{} action. In our evaluation in AI2-THOR, we use the success criteria recommended by \citet{batra2020objectnav} and the commonly-used SPL metric proposed by \citet{anderson2018evaluation} to measure the efficiency of successful searches.
The objective is to produce a successful solution that reaches the target object while minimizing the total distance traveled by the robot.

%%%%%%%%%%%%%%%%%%%%%%%%%%%%%%%%%%%%%%%%%%%%%%%%%%%%%%%%%%%%%%%%%%%%%%%%%%%%%%%%%%%%%%

%%%%%%%%%%%%%%%%%%%%%%%%%%%%%%%%%%%%%%% COS-POMDP %%%%%%%%%%%%%%%%%%%%%%%%%%%%%%%%%%%%
\section{Correlational Object Search as a POMDP}

The POMDP is an extensively studied framework for optimizing
% , with respect to expected return, of
sequential decisions under partial observability and uncertainty in motion and
sensing.  Both challenges (partial observability and uncertainty) considered in POMDP arise naturally in object search.
In addition, the objective of minimizing the navigation distance while
successfully finding the target can be represented by the POMDP objective of
maximizing the discounted cumulative rewards.
% Both aspects are central to our problem setting.
Therefore, we model the correlational object search task as a POMDP.

We first provide a condensed review of POMDPs; for more information, we refer
the reader to
\cite{kaelbling1998planning,shani2013survey,somani2013despot,silver2010monte}. Then,
we present the COS-POMDP, a POMDP formulation that addresses the correlational
object search problem, followed by a discussion on its optimality. COS-POMDP expands the observation space of the overarching object search POMDP (Chapter~\ref{ch:overarching}) by considering observations about objects other than the target object, and formulates the corresponding observation model using spatial correlation.

% can be expanded as follows:
% \begin{align}
%   V^{\pi}(b) &= \sum_{s\in S}b(s)R(s,a) + \gamma\sum_{o\in \mc{O}}\Pr(o|b,a)V^{\pi}(b^{a,o})
% \end{align}
% The value function $V:\mc{B}\rightarrow\mathbb{R}$, $V(b)=\E[\sum_{t=1}^{\infty}\gamma r_t|b]$ can be written as:
% \begin{align*}
%   V(b)&=\max_{a\in A} Q(b,a),\\
%     Q(b,a) &= \sum_{s\in S}\Pr(s \mid b)R(s,a) + \gamma\sum_{z\in Z}\Pr(z\mid b,a)V(b^{z,a}).
% \end{align*}

\subsection{COS-POMDP}
\label{sec:cospomdps}

Given an instance of the correlational object search problem defined in Sec.~\ref{sec:cos}, we define the \textbf{C}orrelational \textbf{O}bject \textbf{S}earch POMDP (COS-POMDP) as follows:
\begin{itemize}
\item\textbf{State space.} The state space $\mc{S}$ is factored to include the robot state $\srobot\in\Srobot$ and the target state $\xtarget\in\mc{X}$. A state $s\in\mc{S}$ can be written as $s=(\srobot,\xtarget)$. Importantly, no other object state is included in $\mc{S}$.

\item\textbf{Action space.}
 The action space is $\mc{A}=\mc{A}_m\cup\{\DECLARE\}$. %is introduced in Sec.~\ref{sec:prob}. % Actions $\amove$ change the location of the robot, and are parameterized by possible robot motions in the environment.

\item\textbf{Observation space.}
The observation space $\mc{Z}$ is factored over the objects, and each $z\in \mc{Z}$ is written as $z=(\zrobot,\zobjects)$, where $\zobjects=(z_1,\ldots,z_n,\ztarget)$.
%, as defined in Sec~\ref{sec:cos_envrob}.
% $\mc{Z}$ is factored by objects. Each $z\in \mc{Z}$ is
% written as $z=(\ztarget,z_1,\ldots,z_n)$, where $z_i\in z$ is the observation for object $i$
% as described in Sec.~\ref{sec:cos-problem}.

\item\textbf{Transition model.} The objects are assumed to be static.  Actions
$a_m\in\mc{A}_m$ change the robot state from $\srobot$ to $\srobot'$ according to $T_m$, and taking the
$\DECLARE$ action terminates the task deterministically.

\item\textbf{Observation model.} By definition of $z$, we have
\begin{align}
\Pr(z|s)&=\Pr(\zrobot|\srobot)\Pr(\zobjects|s)\\
&=\Orobot(\zrobot,\srobot)\Pr(\zobjects|s)
\end{align}
Under the conditional independence
assumption in Sec.~\ref{sec:prob}, $\Pr(\zobjects|s)$ can be compactly factored
as:
\begin{align}
&\Pr(\zobjects|s)=\Pr(z_1,\ldots,z_n,\ztarget|\xtarget,\srobot)\\
&\ \ =\Pr(\ztarget|\xtarget,\srobot)\prod_{i=1}^n\Pr(z_i|\xtarget,\srobot)\label{eq:factoring}
\end{align}
%&\ \ =\Dtarget(\ztarget,\xtarget,\srobot)\prod_{i=1}^n\Pr(z_i|\xtarget,\srobot)
The first term in Eq~(\ref{eq:factoring}) is defined by $\Dtarget$, and each $\Pr(z_i|\xtarget,\srobot)$ is called a \emph{correlational observation model},
written as:
\begin{align}
  &\Pr(z_i|\xtarget,\srobot)=\sum_{x_i\in\mc{X}}\Pr(x_i,z_i|\xtarget,\srobot)\\
  &\qquad=\sum_{x_i\in\mc{X}}\Pr(z_i|x_i,\srobot)\Pr(x_i|\xtarget)\label{eq:corr_obz_model}
\end{align}
where the two terms in Eq~(\ref{eq:corr_obz_model}) are the detection model $D_i\in\mc{D}$ and correlation model $C_i\in\mc{C}$, respectively.

\item\textbf{Reward function.} The reward function, $R(s,a)=R(\srobot,\xtarget,a)$, is defined as follows. Upon taking \DECLARE, the task outcome is determined based on $\srobot, \xtarget$,
which is successful if the robot orientation is facing the target and its position is within a distance threshold to the target.
  % by measuring a distance between a declared location and the ground truth location of the target object. If the distance is smaller than a predetermined distance threshold, the task is successful, and failed otherwise.
% $\Gamma(\srobot,\xtarget)$.
If successful, then the robot receives $\rmax\gg 0$, and $\rmin\ll 0$ otherwise.
  Taking a move action from $\mc{A}_m$ receives a negative reward which corresponds to the action's cost. In our experiments, we set $\rmax=100$ and $\rmin=-100$. Each primitive move action (\eg, \texttt{MoveAhead})  receives a step cost of -1.
\end{itemize}

%%%%%%%%%%%%%%%%%%%%%%%%%%%%%%%%%%%%%%% COS-POMDPs %%%%%%%%%%%%%%%%%%%%%%%%%%%%%%%%%%%%
\subsection{Optimality of COS-POMDPs}
The state space of a COS-POMDP involves only the robot and target object states. A natural question arises: have we lost any necessary information? In this section, we show that
COS-POMDPs are optimal, in the following sense. If we imagine solving a ``full'' POMDP corresponding  to the COS-POMDP, whose state space contains all object states, then the solutions to the COS-POMDP are equivalent. Note that a belief state in this ``full'' POMDP scales exponentially in the number of objects.

We begin by precisely defining the ``full'' POMDP, henceforth called
the F-POMDP, corresponding to a COS-POMDP. The F-POMDP has identical action space, observation space, and transition model as the COS-POMDP. The reward function is also identical since it only depends on the target object state, robot state, and the action taken. F-POMDP differs in the state space and observation model:
\begin{itemize}
\item \textbf{State space:} The state is $s=(\srobot,\xtarget,x_1,\ldots,x_n)$.
\item \textbf{Observation model:} Under the conditional independence assumption stated in Sec.~\ref{sec:prob}, the model for observation $z_i$ of object $x_i$ involves just the detection model: $\Pr(z_i| s)=\Pr(z_i| x_i, \srobot)$.
  \end{itemize}

  Since the COS-POMDP and the F-POMDP share the same action and observation spaces, they have the same history space as well. We first show that given the same policy, the two models have the same distribution over histories.

  \textbf{Theorem 1.}
    Given any policy $\pi:h_t\rightarrow a$, the distribution of histories is identical between the COS-POMDP and the F-POMDP.

  \begin{proof} See Appendix \ref{appdx:proof1} (page~\pageref{appdx:proof1}).
  \end{proof}

    Using Theorem 1, we are equipped to make a statement
    about the value of following a given policy in either the COS-POMDP or the
    F-POMDP.

  \textbf{Corollary 1.} Given any policy $\pi:h_t\rightarrow a$ and $h_t$, the value $V_{\pi}(h_t)$ is identical between the COS-POMDP and the F-POMDP.

    \begin{proof}
      By definition, the value of a POMDP at a history is the expected discounted cumulative reward with respect to the distribution of future action-observation pairs. Theorem~1 states that the COS-POMDP and F-POMDP have the same distribution of histories given $\pi$. Furthermore, the reward function depends only on the states of the robot and the target object. Thus, this expectation is equal for the two POMDPs at any $h$.
    \end{proof}
    % \begin{proof} This follows from the definition of the POMDP value function and Theorem 1.\end{proof}
    %   % By definition, the value of a POMDP at a history is the expected discounted cumulative reward with respect to the distribution of future action-observation pairs. Theorem \ref{thm:historyequiv} tells us that the COS-POMDP and F-POMDP have the same distribution of histories given $\pi$. Furthermore, the reward function depends only on the robot and target object states. So, this expectation is equal for the two POMDPs at any $h$.
    % \end{proof}

  Finally, we can show that COS-POMDPs are optimal in the sense as discussed before.

 \textbf{Corollary 2.}
An optimal policy $\pi^*$ for either the COS-POMDP or the F-POMDP is also optimal for the other.

\begin{proof}
  Suppose, without loss of generality, that $\pi^*$ is optimal for the COS-POMDP but not the F-POMDP. Let $\pi'$ be the optimal policy for the F-POMDP. By the definition of optimality, for at least some history $h$ we must have $V_{\pi'}(h) > V_{\pi^*}(h)$. By Corollary 1, for any such $h$ the COS-POMDP also has value $V_{\pi'}(h)$, meaning $\pi^*$ is not actually optimal for the COS-POMDP; this is a contradiction.
\end{proof}
%%%%%%%%%%%%%%%%%%%%%%%%%%%%%%%%%%%%%%%%%%%%%%%%%%%%%%%%%%%%%%%%%%%%%%%%%%%%%%%%%%%%%%%%%%%%%%%%%

%%%%%%%%%%%%%%%%%%%%%%%%%%%%%%% HIERARCHICAL PLANNING %%%%%%%%%%%%%%%%%%%%%%%%%%%%%%%%%%%%%%%%%%%
\section{Hierarchical Planning}
Despite the optimality-preserving reduction of state space in a COS-POMDP, directly planning over the primitive move actions is not scalable to practical domains even for state-of-the-art online POMDP solvers \cite{silver2010monte}. This is especially the case when in-place rotation actions are considered, since identical viewpoints may be repeatedly visited at different depth levels in the search tree, limiting the size of the search region considered during planning. At the same time, however, planning POMDP actions at the low level has the benefit of controlling fine-grained movements, allowing goal-directed behavior to emerge automatically at this level. Therefore, we seek an algorithm that can reason about both searching over a large region as well as careful search in the area around the robot. This is practical because typical mobile robots can
be controlled both at the low level of motor velocities and the high level of navigation goals \cite{zheng-ros-navguide, macenski2020marathon}.

\begin{figure*}[t]
    \centering
    \makebox[\textwidth][c]{\includegraphics[width=1.2\textwidth]{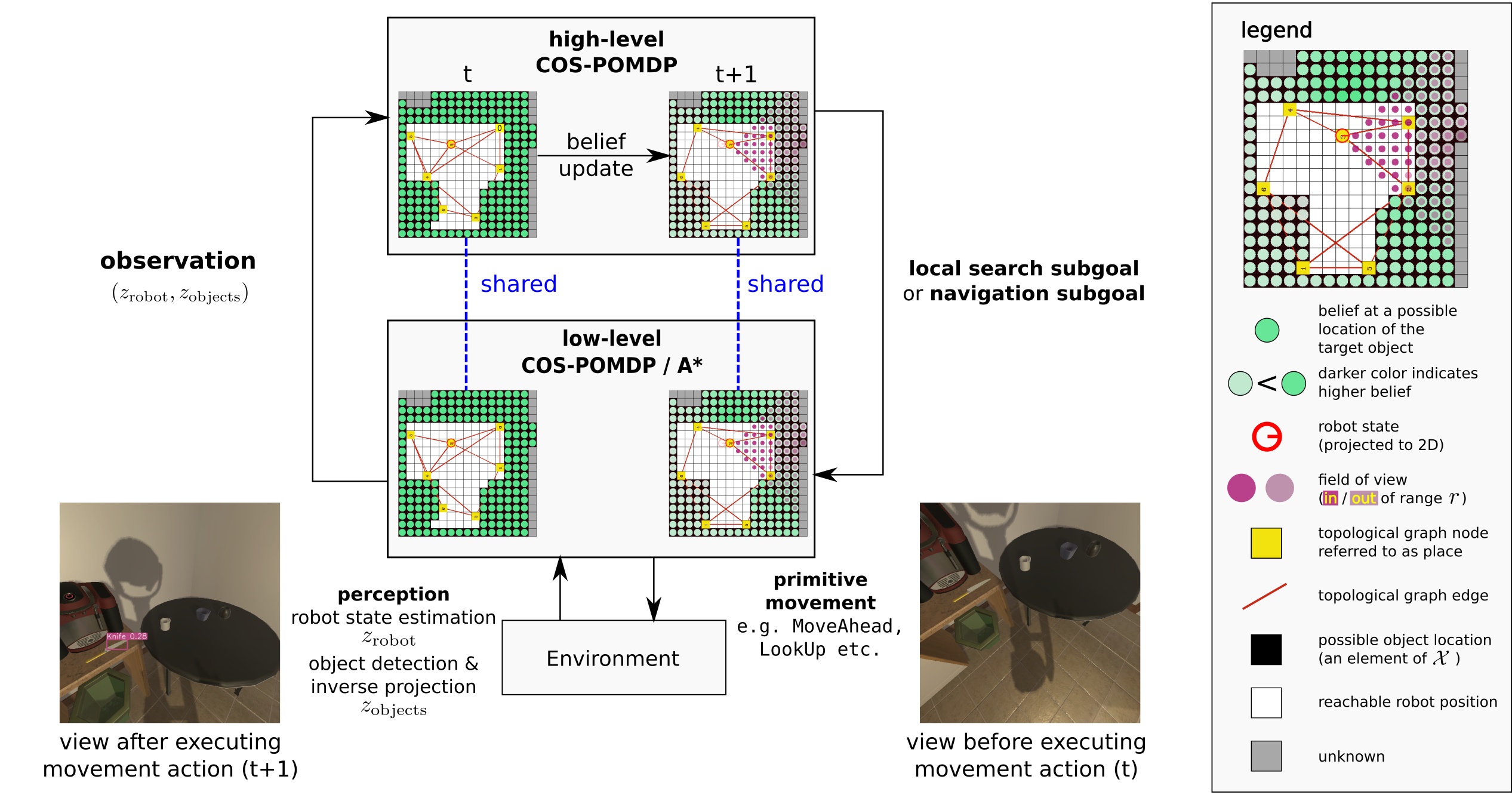}}
    \caption{\textbf{Illustration of the Hierarchical Planning Algorithm.} A high-level COS-POMDP plans subgoals that are fed to a low-level planner to produce low-level actions. The belief state is shared across the levels. Both levels plan with updated beliefs at every timestep.
    }
    \label{fig:method}
\end{figure*}

\begin{algorithm}[t]
\caption{OnlineHierarchicalPlanning}
\label{alg:hierplanning}
\SetArgSty{textup}
\textbf{Input:} {$P=(\mc{X},\mc{C},\mc{D},\srobotinit, \Srobot, \Orobot, \mc{A}_m,T_m)$}.\\
\textbf{Parameter:} {maximum number of steps $T_{\text{max}}$.}\\
\textbf{Output:} {A solution $a_{1:T}$ (Problem 1)}.\\
$\btarget^1(\xtarget)\gets$ Uniform($\mc{X}$)\;
$\brobot^1(\srobot)\gets\indicator(\srobot=\srobotinit)$\;
$b^1\gets(\btarget^1,\brobot^1)$\;
$t\gets 1$\;
\While{$t \leq T_{\text{max}}$ and $a_{t-1}\neq\texttt{Done}$}{
    $(\mc{V},\mc{E})\gets$ SampleTopoGraph($\mc{X},\Srobot,\btarget^t$)\;
    $P_{\text{H}}\gets$ HighLevelCOSPOMDP($P,\mc{V},\mc{E},b^t$)\;
    subgoal $\gets$ plan POMDP online for $P_{\text{H}}$\;
    \uIf{subgoal is \emph{navigate to a node in $\mc{V}$}}{
        $\srobot\gets \argmax_{\srobot}\brobot(\srobot)$\;
        $a_t \gets$ A$^*$(subgoal, $\srobot$, $\mc{A}_m,T_m$)\;
    }
    \uElseIf{subgoal is \emph{search locally}}{
        $P_{\text{L}}\gets$ LowLevelCOSPOMDP($P$, $b^t$)\;
        $a_t \gets$ plan POMDP online for $P_{\text{L}}$\;
    }
    \ElseIf{subgoal is \emph{Done}}{
        $a_t\gets\texttt{Done}$
    }
    $z_t\gets$ execute $a_t$ and receive observation\;
    $b^{t+1}\gets \text{BeliefUpdate}(b^t,a_t,z_t)$\;
    $t\gets t+1$\;
}
\end{algorithm}

Hence, we propose a hierarchical planning algorithm to apply COS-POMDPs in realistic domains. The algorithm is presented in Algorithm~\ref{alg:hierplanning} and illustrated in Fig~\ref{fig:method}. To enable the planning of searching over a large region, we first generate a topological graph, where nodes are places accessible by the robot, and edges indicate navigability between places \cite{toponets-iros-2019}. This is done by the SampleTopoGraph procedure (Appendix \ref{appdx:algs}). In this procedure, the nodes are sampled based on the robot's current belief in the target location $\btarget^t$, and edges are added such that the graph is connected and every node has an out-degree within a given range, which affects the branching factor for planning. An example output is illustrated in Fig~\ref{fig:method}.

%An example output is illustrated in Fig~\ref{fig:method}.

% to reflect the robot's current belief in the target location $\btarget^t$. In our implementation (Appendix \ref{appdx:algs}), the nodes are sampled based on $\btarget^t$ , and edges are added to ensure connectivity of nodes as well as the out-degree is between a given range, which affects the branching factor for planning. An example output is illustrated in Fig~\ref{fig:method}.

% This is done by the SampleTopoGraph procedure. In our implementation (Appendix \ref{appdx:algs}), a distribution over $\Srobot$ that reflects the likelihood of observing the target object at a viewpoint is generated based on the belief over target locations $\btarget$. The nodes, representing places accessible by the robot, are then sampled based on this distribution. The edges, indicating navigability between places, are then added between nodes so that the graph is connected and the out-degree at each node is between 3 and 5. An example output is illustrated in Fig~\ref{fig:method}.

Then, a high-level COS-POMDP $P_{\text{H}}$ is instantiated. The state and observation spaces, the observation model, and the reward model, are as defined in Sec \ref{sec:cospomdps}. The move action set and the corresponding transition model are defined according to the generated topological graph. Each move action represents a subgoal of \emph{navigating} to another place, or the subgoal of \emph{searching locally} at the current place.  Both types of subgoals can still be understood as viewpoint-changing actions, except the latter keeps the viewpoint at the same location. For the transition model $T(s',g,s)$ where $g$ represents the subgoal, the resulting viewpoint (\ie, $\srobot'\in s'$) after completing a subgoal is located at the destination of the subgoal with orientation facing the target object location ($\xtarget\in s$). The \texttt{Done} action is also included as a dummy subgoal to match the definition of the COS-POMDP action space (Sec~\ref{sec:cospomdps}).
% either navigating to another place, or searching locally at the current
% place. Both types of subgoals can be viewed as viewpoint-changing actions,
% except the latter keeps the viewpoint at the same location. The \texttt{Done} action is also included as a dummy subgoal to match the definition of the COS-POMDP action space (Sec~\ref{sec:cospomdps}).

At each timestep, a subgoal
is planned using an online POMDP planner, and a low-level planner is instantiated
corresponding to the subgoal. This low-level planner then plans to output an
action $a_t$ from the action set $\mc{A}=\mc{A}_m\cup\{\DECLARE\}$, which is used for
execution.  In our implementation, for \emph{navigation} subgoals, an A$^*$ planner is used, and for \emph{searching locally}, a low-level COS-POMDP $P_{\text{L}}$ is instantiated with the primitive movements $\mc{A}_m$ in its action space. (We use PO-UCT \cite{silver2010monte} as the online POMDP solver in our experiments.)

Upon executing the low-level action $a_t$, the robot receives an observation $z_t\in\mc{Z}$ from its on-board perception modules for robot state estimation and object detection. This observation is used to update the belief of the high-level COS-POMDP, which is shared with the low-level COS-POMDP.

Finally, the process starts over from the first step of sampling a topological graph. If the high-level COS-POMDP plans a new subgoal different the current one, then the low-level planner is re-instantiated.

This algorithm plans actions for execution in an online, closed-loop fashion, allowing reasoning about viewpoint changes both at the level of places in a topological graph as well as fine-grained movements.
%%%%%%%%%%%%%%%%%%%%%%%%%%%%%%%%%%%%%%%%%%%%%%%%%%%%%%%%%%%%%%%%%%%%%%%%%%%%%%%%%%%%%%

%%%%%%%%%%%%%%%%%%%%%%%%%%%%%%%%%%%%%%%%% Experiments %%%%%%%%%%%%%%%%%%%%%%%%%%%%%%%%
\section{Evaluation}

\subsection{Experimental Setup}
We test the following hypotheses through our experiments: (1) Leveraging correlational information with easier-to-detect objects can benefit the search for hard-to-detect objects; (2)~Optimizing over an action sequence improves performance compared to greedily choosing the next-best view.

\subsubsection{AI2-THOR}
We conduct experiments in AI2-THOR \cite{kolve2017ai2}, a realistic simulator of in-household rooms. It has a total of 120 scenes divided evenly into four room types: \emph{Bathroom}, \emph{Bedroom},
\emph{Kitchen}, and \emph{Living room}.
For each room type, we use the first 20 scenes for training a vision-based object detector and learning object correlation models (used in some experiments), and the last 10 for evaluating performance.

The robot can take primitive move actions from the set:

$\{\texttt{MoveAhead},
\texttt{RotateLeft}, \texttt{RotateRight}, \texttt{LookUp}, \texttt{LookDown}\}$.

\noindent \texttt{MoveAhead} moves
the robot forward by 0.25m. \texttt{RotateLeft}, \texttt{RotateRight} rotate the robot
in place by 45$^\circ$. \texttt{LookUp}, \texttt{LookDown} tilt the camera up or down by 30$^\circ$. The transition function of the robot's viewpoint when taking primitive move actions is deterministic. All methods (Sec~\ref{sec:baselines}) receive observations of the robot's viewpoint without noise, that is $\Orobot(\zrobot,\srobot)=\indicator(\zrobot=\srobot)$. To be successful, when the robot takes \texttt{Done}, the robot must be within a Euclidean distance of 1.0m from the target object while the target object is visible in the camera frame. The maximum number of steps $T_{\text{max}}$ is 100.

% \subsection{Actions}

% Success Criteria and

\subsubsection{Object Detector}\label{sec:exp:detectors}
Unlike previous work in object search evaluated using a ground truth object
detector \cite{qiu2020learning} or detectors with synthetic noise and detection
ranges \cite{zeng2020semantic}, we use a vision-based object detector, YOLOv5
\cite{glenn_jocher_2020_4154370}, since it is
more realistic and suitable for our motivation. We collect training data by randomly placing the agent in
the training scenes. Table~\ref{tab:objects} and Table~\ref{tab:detcorr} contain detection statistics of the target objects and correlated objects in validation scenes, respectively. The pixel coordinates within the bounding boxes returned by YOLOv5 are downsampled and inverse projected to positions in the 3D world frame, using the provided depth image.
% We use , a widely used, state-of-the-art object
% detector.  The YOLOv5 model used in our evaluation has 7 million parameters and
% is pretrained on the COCO dataset \cite{lin2014microsoft}.
% We collect training sets by randomly placing the object in
% the training scenes
% % and collect object detection annotations provided by the
% % AI2-THOR system.
% We collected a dataset of around 2500 images per room type
% (with multiple object detection labels per image) to train the detectors.
% % for 30 epochs.
% For each room type, One detector is trained to detect object classes
% that exist in all scenes of that category. Table~\ref{tab:objects} contains detection statistic of the target objects.

\textbf{Detection Model.} Vision detectors can sometimes detect small objects from far away. Therefore, we consider a line-of-sight detection model with a limited field of view angle to enable POMDP planning:
\begin{align*}
  &D(z_i,x_i,\srobot)=\Pr(z_i|x_i,\srobot)\\
  &\qquad=\begin{cases}
    1.0 - \text{TP} & s_i\in \mc{V}(\srobot) \land z_i = \texttt{null}\\
    \delta \text{FP} / |\mc{V}_{E}(r)| & s_i\in \mc{V}(\srobot) \land \norm{z_i- x_i} > 3\sigma\\
    \delta \mc{N}(z_i;x_i,\sigma^2) & s_i\in \mc{V}(\srobot) \land \norm{z_i- x_i}\leq 3\sigma\\
    1.0 - \text{FP} & s_i\not\in \mc{V}(\srobot) \land z_i = \texttt{null}\\
    \delta \text{FP} / |\mc{V}_{E}(r)|  & s_i\not\in \mc{V}(\srobot) \land z_i \neq \texttt{null}\\
  \end{cases}
\end{align*}
This detection model is parameterized by: TP, the true positive rate; FP, the false positive
rate; $r$, the average distance between the robot and the object for true positive detections;
$\sigma$, the width of a small region around the true object location where a detection
made within that region, though not exactly accurate, is still accepted as a true positive detection. We set $\sigma=0.5$m. The notation $\mc{N}(\cdot;x_i,\sigma^2)$ denotes a Gaussian distribution with mean $x_i$ and covariance $\sigma^2\mathbf{I}$. The $\mc{V}(\srobot)$ denotes the line-of-sight field of view with a 90$^\circ$ angle. The $\mc{V}_E(r)$ denotes the region inside the field of view that is within distance $r$ from the robot.
The weight $\delta=1$ if the detection is within $\mc{V}_E(r)$, and otherwise $\delta=\exp(-\norm{z_i-\srobot}-r)^2$.

\subsubsection{Target Objects}
The list of target object classes and other correlated classes for each room type is listed below (with no particular order). For detection statistics, please refer to Table~\ref{tab:main} and Table~\ref{tab:detcorr} (Appendix~\ref{appdx:detcorr}).
% The target classes are chosen such
% that, in the validation scenes, the trained detector has around 10\% false
% positive rate (or more), or an average true positive detection distance of around or less than 2.5m,
% while the correlated object classes are chosen to have around or below 5\% false
% positive rate and a true positive detection distance of around 2m or more.

% We choose the correlated object classes based on detection
% Because objects in household environments could all
% participate in a joint distribution of spatial correlations, we do not delibrately
% pick correlated object classes based on spatial relationship, as our approach
% can work with arbitrary models of correlation.
%purely based on the detection statistic,
% and not based on semantics, because objects in household environments can arguably all have some latent
% The class lists are as follows (no particular order):
\begin{itemize}
\item \emph{Bathroom} - Targets: \texttt{Fauct}, \texttt{Candle}, \texttt{ScrubBrush};
  Correlated objects: \texttt{ToiletPaperHanger}, \texttt{Towel}, \texttt{Mirror}, \texttt{Toilet}, \texttt{SoapBar}.
\item \emph{Bedroom} - Targets: \texttt{AlarmClock}, \texttt{Pillow}, \texttt{CD};
  Correlated objects: \texttt{Laptop}, \texttt{DeskLamp}, \texttt{Mirror}, \texttt{LightSwitch}, \texttt{Bed}.
\item \emph{Kitchen} - Targets are \texttt{Bowl}, \texttt{Knife}, \texttt{PepperShaker};
  Correlated classes are: \texttt{Lettuce}, \texttt{LightSwitch}, \texttt{Microwave}, \texttt{Plate}, \texttt{StoveKnob}
\item \emph{Living room} - Targets are \texttt{CreditCard}, \texttt{RemoteControl}, \texttt{Television};
  Correlated classes are: \texttt{LightSwitch}, \texttt{Pillow}, \texttt{HousePlant}, \texttt{Laptop}, \texttt{FloorLamp}, \texttt{Painting}.
\end{itemize}

\subsubsection{Correlation Model}
We consider a binary correlation model that takes into account whether the correlated object and the target are close or far. Specifically, we define:
\begin{align}
  &C(\xtarget,x_i)=\Pr(x_i|\xtarget)\\
  &=\begin{cases}
    1 & \text{Close}(i,\text{target}) \land \norm{x_i - \xtarget} < d(i,\target)\\
    0 & \text{Close}(i,\text{target}) \land \norm{x_i - \xtarget} \geq d(i,\target)\\
    1 & \text{Far}(i,\text{target}) \land \norm{x_i - \xtarget} > d(i,\target)\\
    0 & \text{Far}(i,\text{target}) \land \norm{x_i - \xtarget} \leq d(i,\target)\\
    \end{cases}
\end{align}
where Close$(\cdot,\cdot)$ and Far$(\cdot,\cdot)$ are class-level predicates, $\norm{\cdot}$ denotes the Euclidean distance, and $d(\cdot,\cdot)$ is the expected distance between the two objects. This model is applicable between arbitrary object classes and can be estimated based on instances of object classes. In Sec.~\ref{sec:ablation}, we conduct an ablation study where $d(\cdot,$target$)$ is estimated under different scenarios: \textbf{accurate}: based on object ground truth locations in the deployed scene; \textbf{learned} (lrn): based on instances in training scenes; \textbf{wrong} (wrg): same as accurate except we flip the close/far relationship between the objects so that they do not match the scene.

\subsubsection{Implementation Detail of COS-POMDP}
Objects exist in 3D space in AI2-THOR scenes, and the robot can rotate its
camera both horizontally and vertically. Our implementation of COS-POMDP allows
for search in such a setting by estimating, in the belief over target locations,
both the 2D position of the target as well as the height of the target. Since the
robot can tilt only its camera within a small range of angles, we consider a
discrete set of possible height values, \texttt{Above}, \texttt{Below}, and \texttt{Same},
which indicates the object is above, below, or at the same level with respect
to the camera's current tilt angle. Our implementation is based on the \texttt{pomdp\_py} \cite{pomdp-py-2020} library.

% In addition, although the COS-POMDP reward function does not directly take into
% account the image, we found that the robot can still plan to achieve the desired
% success criteria, because objects not within the frame will likely not produce a
% positive detection which reduces the robot's belief at the locations it is facing
% and search in other places.

% We evaluate our method as well as baselines (described in the next section)
\subsubsection{Evaluation Metric}
We use three metrics: (1) success weighted by inverse path length (SPL) \cite{anderson2018evaluation}; (2) success rate (SR) and (3) dicounted cumulative rewards (DR).
The SPL of each search trial is defined as $SPL=S\cdot \ell / \max(p, \ell)$
where $S$ is the binary success outcome of the search, $\ell$ is the shortest
path between the robot and the target, and $p$ is the actual search path. The SPL measures the search performance by taking into account both the success and efficiency of the search. It
is a difficult metric because $\ell$ uses information about the true object location.
However, it does not penalize excessive rotations \cite{batra2020objectnav}.
Therefore, we also include discounted cumulative rewards ($\gamma=0.95$) which takes such actions
into account.

\begin{figure*}[t]
    \centering
    \makebox[\textwidth][c]{\includegraphics[width=1.2\textwidth]{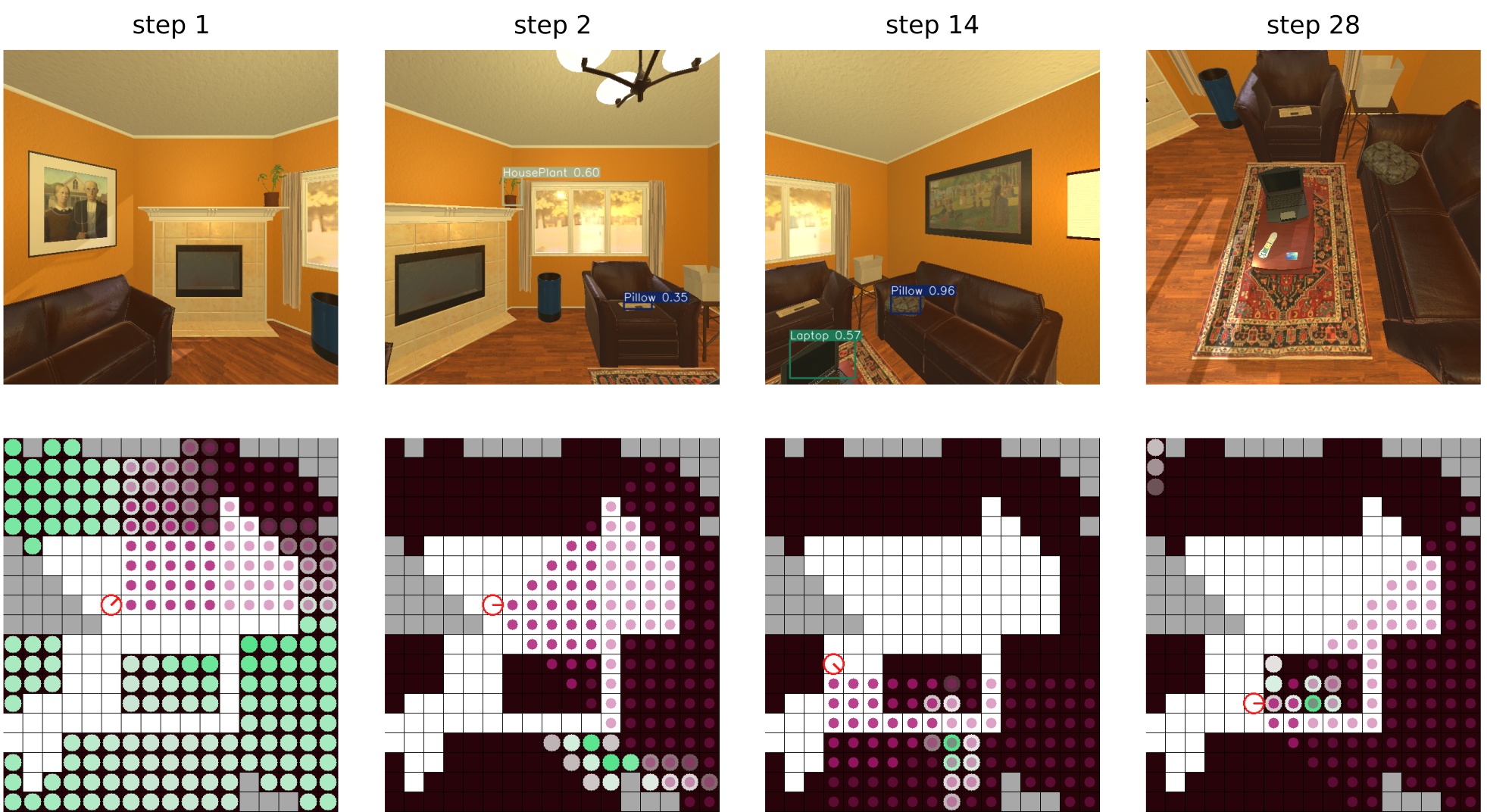}}
    \caption{\textbf{Example Sequence.} Top: first-person view with object detection bounding boxes. Bottom: Visualization of belief state corresponding to each view. See Fig~\ref{fig:method} for the legend of the belief state visualization. Our method (COS-POMDP) successfully finds a \texttt{CreditCard} in a living room scene, leveraging the detection of other objects such as \texttt{FloorLamp} and \texttt{Laptop}. For more examples, please refer to the video at \url{https://youtu.be/wd1tmD0mckY}.}
    \label{fig:example}
    \vspace{-0.2cm}
\end{figure*}
\label{sec:experiments}
\begin{figure}[t]
    \centering
    \includegraphics[width=\linewidth]{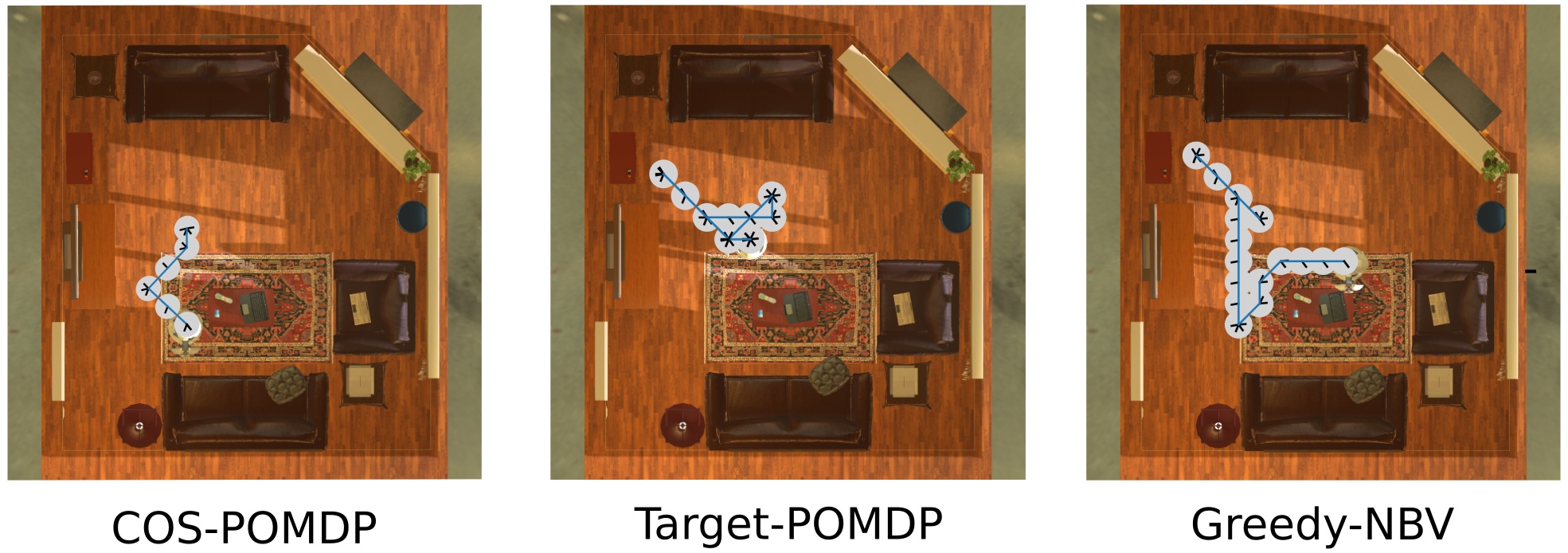}
    \caption{
    Visualization of robot trajectory produced by different methods for the example shown in Fig.~\ref{fig:example}. Each gray circle represents the position of a viewpoint, and each black line segment indicates the orientation of the robot's camera at a viewpoint. The path traversed during the search is shown in blue.
    }
    \label{fig:paths}
    \vspace{-0.3cm}
\end{figure}
%%%%%%%%%%%%%%%%%%%%%%%%%%%%%%%%%%%%%%%%%%%%%%%%%%%%%%%%%%%%%%%%%%%%%%%%%%%%%%%%%%%%

\subsubsection{Baselines}
\label{sec:baselines}
Baselines are defined in the caption of Table~\ref{tab:main}.
% In our main evaluation, we equip the robot with the vision detector as well
% as \textbf{correct} correlational information generated based on distances between
% objects in the deployed scene. Our method, \textbf{COS-POMDP}, is compared
% against several baselines: \textbf{Target-POMDP} uses the same hierarchical
% planning approach for object search with only the ability to detect the target object. Comparing with this indicates the benefit or disadvantage as a result of detecting more objects and using correlations.
Note that for \textbf{Greedy-NBV}, based on \cite{zeng2020semantic}, a weighted particle belief is used to maintain the belief over the joint state over all object locations. During planning,
the agent selects the next best viewpoint to navigate towards based on a cost function that considers
both navigation distance and the probability of detecting any object.
% The particle weights are updated upon receiving an observation using the same correlational observation model as in COS-POMDP.
This provides a baseline that is in contrast to the sequential decision-making paradigm considered by COS-POMDPs and the modeling of only robot and target states.
\footnote{I attempted a comparison with deep reinforcement learning methods \cite{wortsman2019learning,qiu2020learning}, yet I was blocked by the fact that their codebases were developed for earlier versions of AI2-THOR (v1.0) and use different configurations (\eg~rotating at 90 degrees instead of 45 degrees). The trained models performed poorly on the newer version I was using (v3.3.4) due to backwards incompatibility.}

\subsection{Results and Discussions}
Our main results are shown in Table~\ref{tab:main}. The performance of \textbf{COS-POMDP} is the most consistent compared to other baselines, with \textbf{COS-POMDP} performing either the best or the second best in the four room types.  The performance is broken down by target classes in Table~\ref{tab:objects}. \textbf{Greedy-NBV} performs well in \emph{Bedroom}; it appears to experience less inaccuracy in the particle-based belief over all objects as a result of particle reinvigoration in bedroom compared to the other room types.
\textbf{COS-POMDP} appears to be the most robust when the target object has significant noise of being correctly detected, including  \texttt{ScrubBrush}, \texttt{CreditCard}, \texttt{Candle} \texttt{RemoteControl}, \texttt{Knife}, and \texttt{CD}. An example search trial of \textbf{COS-POMDP} for \texttt{CreditCard} is shown in Fig~\ref{fig:example} and the search paths of the methods under comparison are visualized in Fig~\ref{fig:paths}. For target objects with a true positive detection rate below 40\%, \textbf{COS-POMDP} improves the POMDP baseline that ignores correlational information by 70\% in terms of the SPL metric, and is more than 1.7 times better than the greedy baseline. Indeed, when the target object is reliably detectable, such as \texttt{Television}, the ability to detect multiple other objects may actually hurt performance, compared to \textbf{Target-POMDP}, due to the noise from detecting those other objects and the influence on search behavior. These results demonstrate that COS-POMDPs can be applied to search for hard-to-detect objects leveraging the more reliable detection of correlated objects.

\subsubsection{Ablation Studies}\label{sec:ablation}
We also conduct two ablation studies. First, we equip \textbf{COS-POMDP} with a groundtruth object detector, as done in \cite{qiu2020learning}, henceforth called \textbf{COS-POMDP (gt)}. This shows the performance when the detections of both the target and correlated objects involve no noise at all. We observe better or competitive performance from using groundtruth detectors across all metrics in all room types.

Additionally, we use correlations obtained by learning from data (\textbf{COS-POMDP (lrn)}) as well as incorrect correlation information that is the reverse of the correct one (\textbf{COS-POMDP (wrg)}).  Indeed, using accurate correlations provides the most benefit, while correlations learned through this simple method could offer benefit compared to using incorrect correlations in some cases (\emph{Bathroom} and \emph{Bedroom}), but can also backfire and hurt performance in other others. Therefore, properly learning correlation is important, while leveraging a reliable source of information, for example, from a human at the scene, may offer the most benefit.

%%%%%%%%%%%%%%%%%%%%%%%%%%%%% TABLES %%%%%%%%%%%%%%%%%%%%%%%%%%
% \
\begin{sidewaystable}[ph!]
    \setlength{\tabcolsep}{2pt}
    \renewcommand{\arraystretch}{1.0}{
      \resizebox{\textwidth}{!}{%
      \centering
% \begin{adjustwidth}{2in}{5in}
% \makebox[\textwidth][c]{
% \begin{minipage}[t]{1.5\textwidth}\centering
\begin{tabular}{l|ccc|ccc|ccc|ccc}
\specialrule{.8pt}{2pt}{0pt}
                      & \multicolumn{3}{c|}{Bathroom} & \multicolumn{3}{c|}{Bedroom} & \multicolumn{3}{c|}{Kitchen} & \multicolumn{3}{c}{Living room}                                                                                                                                                                         \\
Method                & SPL (\%)                      & DR                           & SR (\%)                      & SPL (\%)               & DR                     & SR (\%)        & SPL (\%)               & DR                     & SR (\%)        & SPL (\%)               & DR                      & SR (\%)        \\
\specialrule{.4pt}{0pt}{0pt}
Random                & 0.00 (0.00)                   & -82.75 (3.43)                & 0.00                         & 0.00 (0.00)            & -85.27 (3.82)          & 0.00           & 6.90 (9.81)            & -68.51 (15.61)         & 6.90           & 0.00 (0.00)            & -82.37 (3.62)           & 0.00           \\
Greedy-NBV    & 14.85 (9.40)                  & -18.86 (12.14)               & 35.71                        & \textbf{31.10 (17.86)} & \textbf{-6.97 (14.20)} & \textbf{40.91} & 12.03 (9.01)           & -17.16 (12.85)         & 32.14          & 7.13 (7.11)            & -21.41 (8.21)           & 20.00          \\
Target-POMDP       & 24.17 (12.03)                 & \textbf{-2.60 (17.02)}       & \textbf{66.67}               & 14.70 (12.86)          & -26.74 (13.27)         & 31.58          & 14.82 (9.22)           & -20.06 (13.85)         & 37.04          & \textbf{29.23 (15.34)} & -30.65 (13.60)          & 48.00          \\
COS-POMDP     & \textbf{30.03 (13.59)}        & -14.92 (12.76)               & 56.00                        & 28.54 (17.63)          & -16.02 (14.03)         & 40.00          & \textbf{20.95 (13.10)} & \textbf{-4.67 (14.71)} & \textbf{44.00} & 27.76 (15.21)          & \textbf{-12.32 (15.71)} & \textbf{48.15} \\
\specialrule{.4pt}{0pt}{0pt}
  COS-POMDP (gt) & \textbf{33.38 (13.92)}        & -11.69 (13.24)               & 62.96                        & \textbf{39.22 (19.56)} & -13.50 (17.28)         & \textbf{56.25} & \textbf{36.92 (14.33)} & \textbf{-2.92 (16.46)} & \textbf{64.00} & \textbf{35.71 (16.05)} & \textbf{-9.31 (13.09)}  & \textbf{62.50} \\
\specialrule{.4pt}{0pt}{0pt}
COS-POMDP (lrn)    & 19.77 (12.07)                 & -21.53 (13.13)               & 45.83                        & 16.43 (14.53)          & -33.73 (11.05)         & 23.81          & 6.29 (6.93)            & -32.72 (13.62)         & 17.86          & 14.76 (11.41)          & -43.09 (13.57)          & 25.00          \\
COS-POMDP (wrg)    & 10.83 (7.79)                  & -19.00 (10.21)               & 28.00                        & 14.54 (14.29)          & -32.54 (14.87)         & 27.78          & 8.80 (7.38)            & -20.49 (10.63)         & 25.93          & \textbf{29.34 (16.10)} & -16.15 (11.96)          & \textbf{54.17} \\
\specialrule{.8pt}{0pt}{0pt}
\end{tabular}
% \end{minipage}
}
% \end{adjustwidth}
}
  \caption{\textbf{Main and Ablation Study Results.} Unless otherwise specified, all methods use the YOLOv5 \cite{glenn_jocher_2020_4154370} vision detector and are given accurate correlational information. \textbf{Target-POMDP} uses the hierarchical planning except only the target object is detectable. \textbf{Greedy-NBV} is a next-best view approach based on \cite{zeng2020semantic}. \textbf{Random} chooses actions uniformly at random. The highest value of each metric per room type is bolded. Parentheses contain 95\% confidence interval. Ablation study results are bolded if it outperforms the best result from the main evaluation. Metrics are success weighted by inverse path length (SPL) \cite{anderson2018evaluation}, discounted cumulative reward (DR), and success rate (SR). \textbf{COS-POMDP} is more consistent, performing either the best or the second best across all room types and metrics.
  }
  \label{tab:main}
\end{sidewaystable}

\begin{sidewaystable}[ph!]
\setlength{\tabcolsep}{4pt}
\renewcommand{\arraystretch}{1.2}{% for the vertical padding
  \resizebox{\textwidth}{!}{%
  \centering
  \begin{tabular}{ll|ccc|ccc|ccc|ccc}
\specialrule{.8pt}{2pt}{0pt}
                              &               &      &      &         & \multicolumn{3}{c|}{Greedy-NBV }                                          & \multicolumn{3}{c|}{Target-POMDP }                                        & \multicolumn{3}{c}{COS-POMDP }                            \\
        Room Type             & Target Class  & TP   & FP   & $r$ (m) & SPL (\%)                                 & DR                      & SR (\%)        & SPL (\%)                              & DR                     & SR (\%)        & SPL (\%)               & DR                      & SR (\%)        \\
\specialrule{.4pt}{0pt}{0pt}
\multirow{3}{*}{Bathroom}     & Faucet        & 56.1 & 8.0  & 2.16    & 31.45 (20.79)                            & 6.11 (21.02)            & \textbf{77.78} & \textbf{40.21 (27.65)}                & \textbf{13.97 (30.40)} & 75.00          & 24.59 (29.84)          & -28.23 (26.54)          & 50.00          \\
                              & Candle        & 29.4 & 2.4  & 1.81    & 12.52 (20.12)                            & -22.81 (20.80)          & 22.22          & 18.63 (14.51)                         & -6.61 (33.50)          & \textbf{75.00} & \textbf{33.89 (21.83)} & \textbf{-2.94 (19.08)}  & 66.67          \\
                              & ScrubBrush    & 64.3 & 9.9  & 1.71    & 2.00 (4.52)                              & -37.79 (17.36)          & 10.00          & 7.38 (13.80)                          & -22.68 (34.63)         & 40.00          & \textbf{31.12 (32.00)} & \textbf{-15.08 (28.68)} & \textbf{50.00} \\
\cline{1-14}
\multirow{3}{*}{Bedroom}      & AlarmClock    & 79.6 & 7.4  & 2.77    & \textbf{48.10 (43.23)}                   & \textbf{-0.80 (26.52)}  & \textbf{57.14} & 14.64 (46.61)                         & -17.37 (44.62)         & 25.00          & 35.07 (33.97)          & -15.52 (23.95)          & 44.44          \\
                              & Pillow        & 88.3 & 5.2  & 2.43    & \textbf{30.04 (49.28)}                   & \textbf{-13.59 (38.13)} & 33.33          & 5.16 (14.32)                          & -29.49 (30.07)         & \textbf{40.00} & 21.02 (90.44)          & -14.85 (98.69)          & 33.33          \\
                              & CD            & 48.6 & 4.5  & 1.70    & 18.59 (21.89)                            & \textbf{-7.36 (26.46)}  & 33.33          & 19.49 (22.66)                         & -29.12 (21.97)         & 30.00          & \textbf{24.01 (28.21)} & -17.03 (24.75)          & \textbf{37.50} \\
\cline{1-14}
\multirow{3}{*}{Kitchen}      & Bowl          & 60.6 & 11.5 & 1.75    & \textbf{19.88 (26.57)}                   & -15.76 (32.76)          & 33.33          & 16.33 (16.00)                         & -10.06 (27.39)         & \textbf{55.56} & 16.27 (20.22)          & \textbf{-0.19 (37.01)}  & 42.86          \\
                              & Knife         & 37.7 & 8.7  & 1.68    & 8.22 (12.85)                             & -17.74 (26.85)          & 33.33          & 5.13 (11.84)                          & -37.99 (17.17)         & 11.11          & \textbf{23.97 (25.58)} & \textbf{-2.59 (25.33)}  & \textbf{50.00} \\
                              & PepperShaker  & 38.1 & 9.4  & 1.43    & 8.39 (10.53)                             & -17.90 (17.39)          & 30.00          & \textbf{22.99 (23.22)}                & -12.14 (31.40)         & \textbf{44.44} & 21.26 (30.90)          & \textbf{-11.17 (30.04)} & 37.50          \\
\cline{1-14}
\multirow{3}{*}{Living room} & Television    & 85.3 & 5.2  & 2.59    & 8.98 (18.36)                             & -22.86 (13.31)          & 20.00          & \textbf{59.56 (25.42)}                & \textbf{-6.50 (19.73)} & \textbf{88.89} & 44.53 (34.78)          & -10.79 (31.79)          & 55.56          \\
                              & RemoteControl & 69.6 & 4.5  & 1.93    & 9.24 (13.99)                             & -13.21 (20.44)          & 30.00          & 26.67 (35.38)                         & -29.18 (20.47)         & 42.86          & \textbf{33.49 (31.89)} & \textbf{8.94 (27.67)}   & \textbf{66.67} \\
                              & CreditCard    & 42.9 & 4.3  & 1.48    & 3.18 (7.19)                              & \textbf{-28.15 (11.70)} & 10.00          & 0.91 (2.09)                           & -55.95 (22.44)         & 11.11          & \textbf{5.26 (8.08)}   & -35.12 (24.57)          & \textbf{22.22} \\

\specialrule{.8pt}{0pt}{0pt}
\end{tabular}
  }
  }
  \caption{
\textbf{Detection Statistics and Object Search Results Grouped by Target Classes.} TP: true positive rate (\%); FP: false positive rate (\%); $r$: average distance to the true positive detections (m). We estimated these values by running the vision detector at 30 random camera poses per validation scene. Target objects are sorted by average detection range.  Parentheses contain 95\% confidence interval. Metrics are success weighted by inverse path length (SPL) \cite{anderson2018evaluation}, discounted cumulative reward (DR), and success rate (SR).  \textbf{COS-POMDP} performs more robustly for hard-to-detect objects, such as \texttt{ScrubBrush}, \texttt{CD}, \texttt{Candle}, \texttt{Knife}, and \texttt{CreditCard}.
  }
    \label{tab:objects}
\end{sidewaystable}
%%%%%%%%%%%%%%%%%%%%%%%%%%%%%%%%%%%%%%%%%%%%%%%%%%%%%%%%%%%%%%%%

%%%%%%%%%%%%%%%%%%%%%%%%%%%%%% Conclusion %%%%%%%%%%%%%%%%%%%%%%%%%%%%%%%%%%%%%
\section{Summary}
In this chapter, we formulated the problem of correlational object search and proposed COS-POMDP, a POMDP-based approach to model this problem. Our quantitative evaluation, conducted in AI2-THOR \cite{kolve2017ai2} using the YOLOv5 \cite{redmon2016you,glenn_jocher_2020_4154370} detector, demonstrates the benefit of our approach in exploiting the correlational information with easier-to-detect objects to find hard-to-detect objects. Future work directions include studying the search behavior given different kinds of learned correlation models as well as in more complex settings that involve \eg, container opening and dynamic objects.
%%%%%%%%%%%%%%%%%%%%%%%%%%%%%%%%%%%%%%%%%%%%%%%%%%%%%%%%%%%%%%%%%%%%%%%%%%%%%%%%%%%%

%%%%%%%%%%%%%%%%%%%%% Acknowledgements %%%%%%%%%%%%%%%%%%%%%%%%%%%%%%%%%%%%%%%%%%%%%
% \section*{Acknowledgements}
% We sincerely thank Leslie Kaelbling and Tomás Lozano-Pérez for their critical and invaluable advice. We also thank Mitchell Wortsman, Yiding Qiu, and Anwesan Pal for clarifications about their works, as well as Matt Deitke and the AI2-THOR developers for patiently answering our questions about AI2-THOR. This work was supported by the National Science Foundation under grant number IIS1652561, the US Army under grant number W911NF1920145, and the Office of Naval Research under the PERISCOPE MURI Contract N00014-17-1-2699.
% We also thank Sam Lobel, Ifrah Idrees, Eric Rosen, and Michael Littman for the various discussions related to this project.
%%%%%%%%%%%%%%%%%%%%%%%%%%%%%%%%%%%%%%%%%%%%%%%%%%%%%%%%%%%%%%%%%%%%%%%%%%%%%%%%%%%%

\newpage
\section{Appendix}
\subsection{Proof of Theorem 1}
\label{appdx:proof1}
\textbf{Theorem 1.}
    Given any policy $\pi:h_t\rightarrow a$, the distribution of histories is identical between the COS-POMDP and the F-POMDP.

\begin{proof}
  We prove this by induction. When $t=1$, the statement is true because both histories are empty. The inductive hypothesis assumes that the distributions $\Pr(h_t)$ is the same for the two POMDPs at $t\geq 1$. Then, by definition, $\Pr(h_{t+1})=\Pr(h_t,a_t,z_t)=\Pr(z_t|h_t,a_t)\Pr(a_t|h_t)\Pr(h_t)$. Since $\Pr(a_t|h_t)$ is the same under the given $\pi$, we can conclude $\Pr(h_{t+1})$ is identical if the two POMDPs have the same $\Pr(z_t|h_t,a_t)$. We show that this is true as follows.

  First, we sum out the state $s_t$ at time $t$:
  \begin{align}
    &\Pr(z_t|h_t,a_t)= \sum_{s_t}\Pr(s_t,z_t|h_t,a_t)
      \intertext{By definition of conditional probability,}
    &\ \ =\sum_{s_t}\Pr(z_t|s_t,h_t,a_t)\Pr(s_t|h_t,a_t)
      \intertext{Since $s_t$ is does not depend on $a_t$ (which affects $s_{t+1}$),}
    &\ \ =\sum_{s_t}\Pr(z_t|s_t,h_t,a_t)\Pr(s_t|h_t)\label{eq:shared}
  \end{align}
  Suppose we are deriving this distribution for COS-POMDP, denoted as $\PrCOS(z_t|h_t,a_t)$. Then, by definition, the state $s_t=(\xtarget,\srobot)$. Therefore, we can write:
  \begin{align}
      \begin{split}
        &\PrCOS(z_t|h_t,a_t)\\
        &=\sum_{\cosstate}\Pr(z_t|\cosstate,h_t,a_t)\\
        &\quad\quad\quad\quad\quad\quad\times\Pr(\cosstate|h_t)
      \end{split}
      \intertext{Summing out $x_1,\ldots, x_n$,}
      \begin{split}
        &=\sum_{\cosstate}\sum_{x_1,\cdots, x_n}\Pr(x_1,\ldots,x_n,z_t|\cosstate,h_t,a_t)\\
        &\quad\quad\quad\quad\quad\quad\times\Pr(\cosstate|h_t)
      \end{split}
          \intertext{Merging sum,}
      \begin{split}
        &=\sum_{\fullstate}\Pr(x_1,\ldots,x_n,z_t|\cosstate,h_t,a_t)\\
        &\quad\quad\quad\quad\times\Pr(\cosstate|h_t)
      \end{split}
          \intertext{By the definition of conditional probability,}
      \begin{split}
        &=\sum_{\fullstate}\Pr(z_t|x_1,\ldots,x_n,\cosstate,h_t,a_t)\\
        &\quad\quad\quad\quad\times\Pr(x_1,\ldots,x_n|\cosstate,h_t,a_t)\\
        &\quad\quad\quad\quad\times\Pr(\cosstate|h_t)
      \end{split}
          \intertext{Again, because the object locations are independent of $a_t$,}
      \begin{split}
        &=\sum_{\fullstate}\Pr(z_t|x_1,\ldots,x_n,\cosstate,h_t,a_t)\\
        &\quad\quad\quad\quad\times\Pr(x_1,\ldots,x_n|\cosstate,h_t)\\
        &\quad\quad\quad\quad\times\Pr(\cosstate|h_t)
      \end{split}
      \intertext{By the definition of conditional probability again,}
      \begin{split}
        &=\sum_{\fullstate}\Pr(z_t|x_1,\ldots,x_n,\cosstate,h_t,a_t)\\
        &\quad\quad\quad\quad\times\Pr(x_1,\ldots,x_n,\cosstate|h_t)\label{eq:connection}
      \end{split}
    \intertext{Note that $(\xtarget,\srobot,x_1,\ldots,x_n)$ is a state in F-POMDP. Denote the state space of F-POMDP as $\SFPOMDP$. According to Eq (\ref{eq:shared}), we can write the above Eq (\ref{eq:connection}) as}
        &=\sum_{s_t\in\SFPOMDP}\Pr(z_t|s_t,h_t,a_t)\Pr(s_t|h_t)\\
        &=\PrFPOMDP(z_t|h_t,a_t)
  \end{align}
\end{proof}

%%%%%%%%%%%%%%% ------- Auxiliary Procedures --------%%%%%%%%%%%
\newpage
\subsection{Auxiliary Procedures}
\label{appdx:algs}

Algorithm~\ref{alg:sampletopo} is the pseudocode of the SampleTopoGraph algorithm, implemented for our experiments in AI2-THOR. We set $M=10$, $\dsep=1.0$m, $\mindeg=3$, $\maxdeg=5$. In our implementation, the topological graph is resampled only if the cumulative belief captured by the nodes in the current topological graph, $\sum_{\srobot\in\mc{V}}p(\srobot)$, is below 50$\%$. Otherwise, the same topological graph will be returned. Note that depending on the application domain, a different algorithm for forming the topological graph may be used in the place of SampleTopoGraph in the OnlineHierarchicalPlanning algorithm (Algorithm~\ref{alg:hierplanning}).

\begin{algorithm}[hbt]
\caption{SampleTopoGraph}
\label{alg:sampletopo}
\SetArgSty{textup}
\SetKwInOut{Input}{Input}
\SetKwInOut{Output}{Output}
\SetKwInOut{Parameter}{Parameter}
\textbf{Input:} {$\mc{X},\Srobot,\btarget$}\\
\textbf{Parameter:} {maximum number of nodes $M$, minimum separation between nodes $\dsep$, minimum and maximum out-degrees $\mindeg,\maxdeg$}\\
\textbf{Output:} {A topological graph $(\mc{V},\mc{E})$}\\
%$\mc{V}\gets\emptyset$\;
%$\mc{E}\gets\emptyset$\;
%$U(\srobot)\gets\emptyset$\
\tcp{Obtain mapping from $\srobot$ to a set of closest locations}
\ForEach{$x\in\mc{X}$}{
    $\srobot^{\text{closest}} \gets \argmin_{s_r\in\Srobot}\norm{s_r\text{.pos}-x}$\;
    $U(\srobot^{\text{closest}}) \gets U(\srobot^{\text{closest}}) \cup\{x\}$\;
}
\tcp{Construct probability distribution over $\Srobot$ using $\btarget$}
\ForEach{$\srobot\in\Srobot$}{
    $p(\srobot)\gets\sum_{x\in U(\srobot)}\btarget(x)$
}
\tcp{Construct nodes and edges}
$\mc{V}\gets$ sample $\leq M$ nodes from $\Srobot$ according to $p$ such that any pair of nodes has a minimum distance of $\dsep$\;
$\mc{E}\gets$ add edges between nodes in $\mc{V}$ so that the graph is connected and each node has an out-degree between $\mindeg$ and $\maxdeg$\;
\Return{($\mc{V},\mc{E})$}
\end{algorithm}

%%%%%%%%%%%%%%% ------- More detection statistics --------%%%%%%%%%%%
\newpage
\subsection{Detection Statistics for Correlated Object Classes}
Table~\ref{tab:detcorr} shows the detection statistics for correlated object classes. The detection statistics of target object classes can be found in Table~\ref{tab:objects}.
\label{appdx:detcorr}
\begin{table}[H]
  \setlength{\tabcolsep}{4pt}
  \centering
  \begin{tabular}{ll|ccc}
\specialrule{.8pt}{2pt}{0pt}
        Room Type             & Correlated Object Class  & TP   & FP   & $r$ (m) \\
\specialrule{.4pt}{0pt}{0pt}
    \multirow{5}{*}{Bathroom} & Mirror                   & 76.9 & 3.7  & 2.10\\
                              & ToiletPaperHanger        & 84.4 & 1.5  & 1.96\\
                              & Towel                    & 79.4 & 2.7  & 1.88\\
                              & Toilet                   & 86.3 & 3.5  & 1.81\\
                              & SoapBar                  & 73.2 & 1.8  & 1.53\\
\cline{1-5}
    \multirow{5}{*}{Bedroom}  & DeskLamp                 & 89.5 & 2.6  & 2.41\\
                              & Bed                      & 63.5 & 0.6  & 2.39\\
                              & Mirror                   & 86.0 & 0.6  & 2.27\\
                              & LightSwitch              & 76.3 & 2.8  & 2.26\\
                              & Laptop                   & 75.9 & 1.2  & 2.19\\
\cline{1-5}
    \multirow{5}{*}{Kitchen}  & LightSwitch              & 90.0 & 3.9  & 2.57\\
                              & Microwave                & 75.3 & 5.6  & 2.31\\
                              & StoveKnob                & 82.8 & 5.6  & 2.00\\
                              & Lettuce                  & 98.6 & 0.3  & 1.98\\
                              & Plate                    & 60.6 & 3.2  & 1.90\\

\cline{1-5}
    \multirow{6}{*}{Living room}  & FloorLamp            & 71.7 & 5.1 & 3.44\\
                              & Painting                 & 85.2 & 4.0 & 3.18\\
                              & LightSwitch              & 80.6 & 1.5 & 3.10\\
                              & HousePlant               & 82.9 & 3.9 & 3.00\\
                              & Pillow                   & 67.4 & 2.8 & 2.84\\
                              & Laptop                   & 66.3 & 2.6 & 2.24\\
\specialrule{.8pt}{0pt}{0pt}
\end{tabular}
  \caption{ \textbf{Detection Statistics for Correlated Object Classes.} TP: true positive rate (\%); FP: false positive rate (\%); $r$: average distance to the true positive detections (m). We estimated these values by running the vision detector at 30 random camera poses per validation scene. The correlated object classes for each room type are sorted by average detection range.
  }
  \label{tab:detcorr}
\end{table}

\newpage
\subsection{Evaluation on a Toy Domain: HallwaySearch}

\subsubsection{HallwaySearch}
This domain is designed to be minimal and allows the use of an offline POMDP solver until convergence. We evaluate COS-POMDPs on this domain to investigate the influence that detecting spatially correlated objects has on the expected return. In HallwaySearch, there are two objects in a hallway: a target object and a spatially correlated object, both at unknown locations sampled from a joint distribution. The robot has two detectors, one for each object, that return a binary observation indicating successful or \texttt{null} detection of the object. The detector for the target object has a small range that only returns a successful detection if the robot is directly on top of the object. The detector for the spatially correlated object has a larger range that can also return a successful detection from the two adjacent locations. Both detectors are noisy, and false negatives and false positives may occur.

\subsubsection{Baselines}
\begin{itemize}
\item \emph{Corr}. Solve the COS-POMDP.

\item \emph{Target}. Rather than solving the COS-POMDP, we solve a minimal POMDP for the object search task that ignores the correlational information and assumes the object locations are independent. As a result, the robot uses the target object detector, but never needs its other detectors.
\end{itemize}

\begin{figure}[t]
  \centering
  \begin{subfigure}[b]{\textwidth}
    \centering
\includegraphics[width=\textwidth]{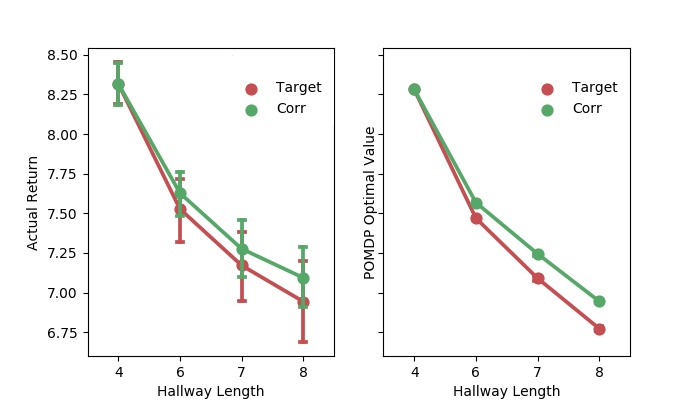}
  \end{subfigure}
  \hfill
  \begin{subfigure}[b]{\textwidth}
    \centering
    \includegraphics[width=\textwidth]{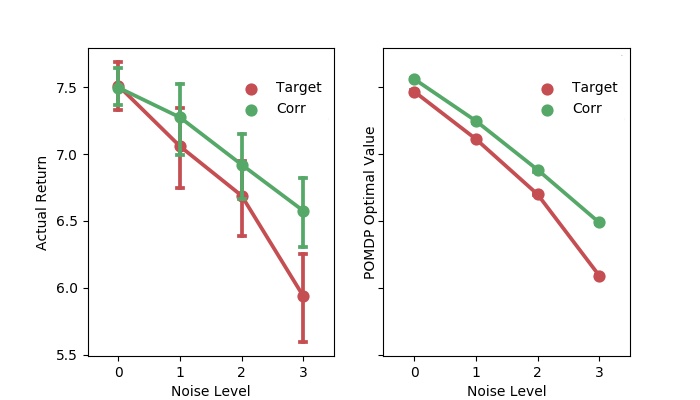}
  \end{subfigure}
  \caption{Results in the HallwaySearch domain. Top row shows estimated returns (left) and POMDP optimal value (right) as a function of hallway length; bottom row shows estimated returns (left) and POMDP optimal value (right) as a function of detector noise levels. The \emph{Target} baseline (red) does not consider correlational information, and we can see that it always performs worse than \emph{Corr} (green), which does. Thus, this experiment supports our first hypothesis.}
  \label{fig:hallway}
\end{figure}

\subsubsection{Experimental Procedure and Results}
For HallwaySearch, we conduct two sets of experiments. In the first experiment, the hallway length varies from 4 to 8 while the robot has a perfect detector for both objects. In the second experiment, the hallway length is fixed at 4 and the noise of the target detector varies, specified by pairs of (false positive, false negative) rates that range from zero to 10\%. We report both the optimal POMDP value as given by SARSOP and the approximate discounted cumulative reward calculated over 100 trials.

The results for the HallwaySearch domain are shown in Figure~\ref{fig:hallway}. We observe that considering the correlational information (green curves) leads to greater or equal optimal POMDP value than not considering it (red curves) for all experimental settings. This suggests that the optimal COS-POMDP policy makes use of the detector for the correlated object, improving the expected returns. The actual return follow a similar pattern. In this domain, the impact due to detector noise is significant, and using the correlational information leads to more robust performance. The variance in the estimates of the actual returns (left plots) is due to the stochasticity of the observation model and object locations. Overall, this experiment supports our first hypothesis: that using correlational information can improve the performance of object search, both in expectation and in empirical returns.

\chapter{Spatial Language Understanding for Object Search}
\label{ch:sloop}
%%%%%%%%%% SLOOP CHAPTER %%%%%%%%%%%%%%

\section{Motivation - Why Spatial Language?}
\label{sec:intro}
\vspace{-2em}
\blfootnote{Project website: \url{https://h2r.github.io/sloop/}}
\lettrine{C}{onsider} the scenario in which a tourist is looking for an ice cream truck in an amusement park. She asks a passer-by and gets the reply \emph{the ice cream truck is behind the ticket booth}. The tourist looks at the amusement park map and locates the ticket booth. Then, she is able to infer a region corresponding to that statement and find the ice cream truck, even though the spatial preposition \emph{behind} is inherently ambiguous and subjective to the passer-by. Robots capable of understanding spatial language can leverage prior knowledge possessed by humans to search for objects more efficiently, and interface with humans more naturally. Such capabilities can be useful for applications such as autonomous delivery and search-and-rescue, where the customer or people at the scene communicate with the robot via natural language.

\subsection{Challenges}
This problem is challenging because humans produce diverse spatial language phrases based on their observation of the environment and knowledge of target locations, yet none of these factors are available to the robot. In addition, the robot may operate in a different area than where it was trained. The robot must generalize its ability to understand spatial language across environments.

\begin{figure}[ph!]
    \centering
    \includegraphics[width=\linewidth]{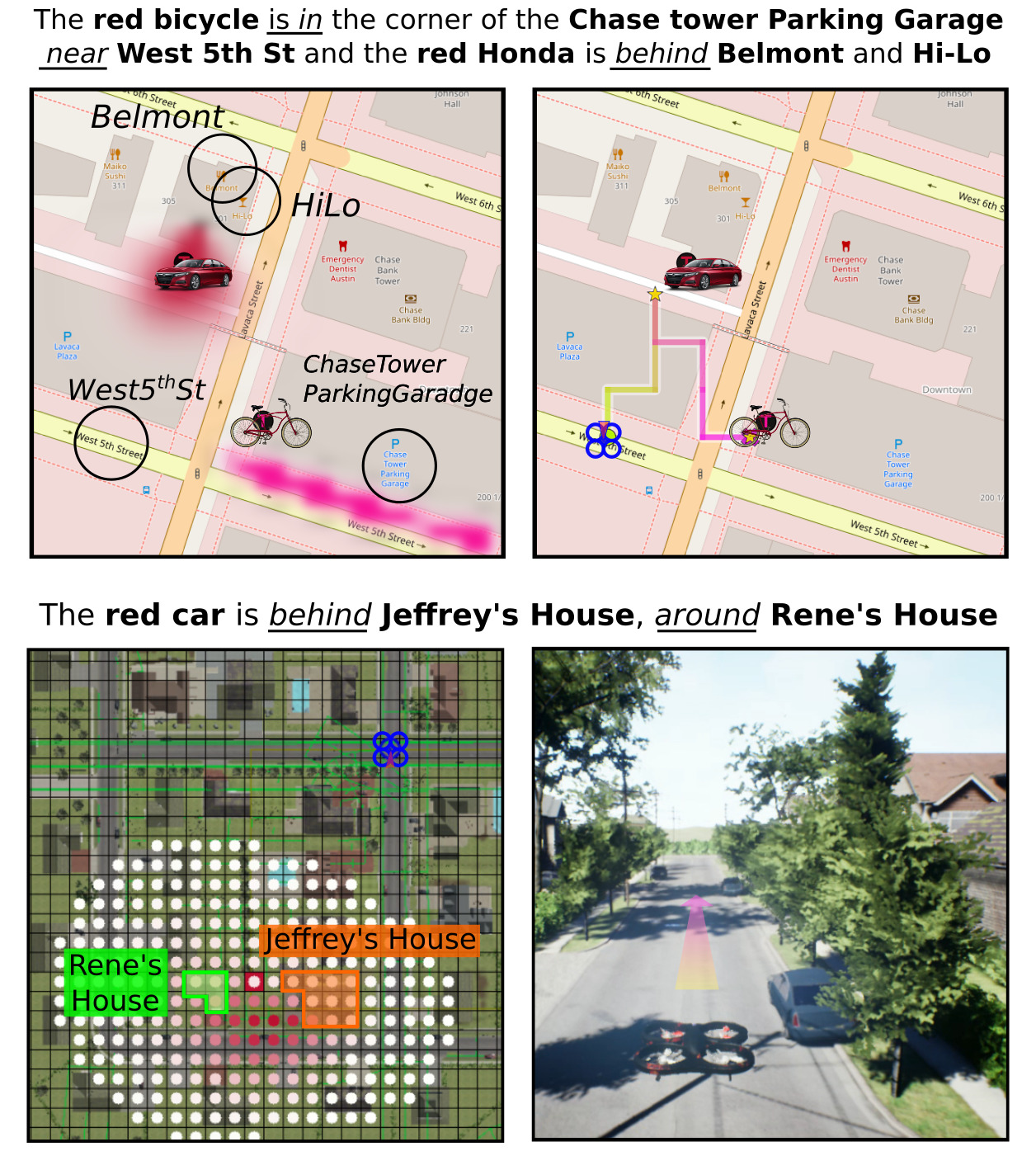}
    \caption{%\todo{caption}
      Given a spatial language description, a drone with limited field of view must find target objects in a city-scale environment. Top row: example trial from OpenStreetMap \cite{OpenStreetMap}. Bottom row: example trial from AirSim \cite{airsim2017fsr}. Left side: belief over target location after incorporating spatial language using the proposed approach. Right side: Screenshot of simulation with search path.
    }
    \vspace{-0.5cm}
    \label{fig:firstfig}
  \end{figure}

\subsection{Remark on Previous Work}

Prior works on spatial language understanding assume referenced objects already exist in the robot's world model~\cite{tellex2011forklift, Fasola2013UsingSS,janner2018representation} or within the robot's field of view~\cite{Blukis:18drone}. Works that consider partial observability do not handle ambiguous spatial prepositions \cite{hemachandra2015learning,wandzel2019oopomdp} or assume a robot-centric frame of reference \cite{bisk2018learning,patki2020language}, limiting the ability to understand diverse spatial relations that provide critical disambiguating information, such as \emph{behind the ticket booth}.
For downstream tasks, existing works primarily consider spatial language as goal or trajectory specification \cite{vogel-jurafsky-2010-learning,Kollar2010TowardUNInstructionFollowing}. They require large datasets of spatial language paired with reference paths to learn a policy by end-to-end reward-based learning \cite{sotp2019acl,wang2019reinforced} or imitation learning \cite{blukis2018mapping}, which is expensive to acquire for realistic environments (such as urban areas), and generalization to such environments is an ongoing challenge \cite{Blukis:18drone,bisk2016natural,blukis2019learning}.

Partially Observable Markov Decision Process (POMDP) \cite{kaelbling1998planning} is a principled decision making framework widely used in the object search literature \cite{aydemir2013active,xiao_icra_2019,zheng2021multi}, due to its ability to capture uncertainty in object locations and the robot's perception. \citet{wandzel2019oopomdp} proposed Object-Oriented POMDP (OO-POMDP), which factors the state and observation spaces by objects and is designed to model tasks in human environments.

Spatial language is a way of communicating spatial information of objects and their relations using spatial prepositions (e.g. \emph{on}, \emph{between}, \emph{front}) \cite{hayward1995spatial}. Understanding spatial language for decision making requires the robot to map symbols of the given language description to concepts and structures in the world, a problem referred to as language grounding. Most prior works on spatial language grounding assume a fully observable domain \cite{tellex2011forklift, Fasola2013UsingSS, Blukis:18drone, vogel-jurafsky-2010-learning, Kollar2010TowardUNInstructionFollowing}, where the referenced objects have known locations or are within the field of view, and the concern is synthesizing navigation behavior faithful to a given instruction (e.g. \emph{Go to the right  side of the rock} \cite{Blukis:18drone}). Recent works aim to map such instructions directly to low-level controls or navigation trajectories leveraging deep reinforcement learning \cite{vogel-jurafsky-2010-learning,sotp2019acl,Blukis2020:fewshot-drone} or imitation learning \cite{Blukis:18drone, wang2019reinforced,blukis2018mapping}, requiring large datasets of instructions paired with demonstrations. In this work, the referenced target objects have unknown locations, and the robot has a limited field of view. We regard the spatial language description as observation and obtain policy through online planning.

Spatial language understanding in partially observable environments is an emerging area of study \cite{hemachandra2015learning, wandzel2019oopomdp, patki2020language, thomason2019visiondialogue}. \citet{thomason2019visiondialogue} propose a domain where the robot, tasked to reach a goal room, has access to a dialogue with an oracle discussing the location of the goal during execution. \citet{hemachandra2015learning} and \citet{patki2020language} infer a distribution over semantic maps for instruction following then plan actions through behavior inference. These instructions are typically FoR-independent or involve only the robot's own FoR. In contrast, we consider language descriptions with FoRs relative to referenced landmarks. \citet{wandzel2019oopomdp} propose the Object-Oriented POMDP framework for object search and a proof-of-concept keyword-based model for language understanding in indoor space. Our work handles diverse spatial language using a novel spatial language observation model and focuses on search in cityscale environments. We evaluate our system against a keyword-based baseline similar to the one in \cite{wandzel2019oopomdp}.

Cognitive scientists have grouped FoRs into three categories: absolute, intrinsic, and relative \cite{majid2004can,shusterman2016frames}. Absolute FoRs (e.g. for \emph{north}) are fixed and depend on the agreement between speakers of the same language. Intrinsic FoRs (e.g. for \emph{at the door of the house}) depend only on properties of the referenced object.  Relative FoRs (e.g. for \emph{behind the ticket booth}) depend on both properties of the referenced object and the perspective of the observer. In this chapter, spatial descriptions are provided by human observers who may impose relative FoRs or absolute FoRs.

\section{Contributions}
In this work, we make the following contributions:
\begin{itemize}[itemsep=0.5pt,topsep=0pt]
\item We introduce SLOOP (Spatial Language Object-Oriented POMDP), which extends OO-POMDP by considering spatial language as an additional perceptual modality. We derive a probabilistic model to capture the uncertainty of the language through referenced objects and landmarks. This enables the robot to incorporate into its belief state spatial information about the referenced object via belief update.

\item We apply SLOOP to object search in city-scale environments given a spatial language description of target locations. Search begins after the initial belief update upon receiving the spatial language. Note that in general, because SLOOP regards spatial language as an observation, the language can be received during task execution.

\item We collected a dataset of five city maps from OpenStreetMap \cite{OpenStreetMap} as well as spatial language descriptions through Amazon Mechanical Turk (AMT).

\item To understand ambiguous, context-dependent prepositions (e.g.~\emph{behind}), we develop a simple convolutional neural network that infers the latent frame of reference (FoR) given an egocentric synthetic image of the referenced landmark and surrounding context. This FoR prediction model is integrated into the spatial language observation model in SLOOP.

\item We evaluate both the FoR prediction model and the object search
  performance under SLOOP using the collected dataset.  Results show that our
  method leads to search strategies that find objects faster with higher success
  rate by exploiting spatial information from language compared to a
  keyword-based baseline used in prior work \cite{wandzel2019oopomdp}. We also
  report results for varying language complexity and spatial prepositions to
  discuss advantages and limitations of our approach.

\item We demonstrate SLOOP for object search in AirSim
  \cite{airsim2017fsr}, a realistic drone simulator shown in
  Fig.~\ref{fig:firstfig}, where the drone is tasked to find cars in a
  neighborhood
  environment. %For the demo and additional material, we refer the reader to the project website.
  We also demonstrate SLOOP on a real-robot by integrating it with the Boston Dynamics Spot robot searching for objects in a lab environment.

\end{itemize}

%===============================================================================
\section{Problem Formulation}
\label{sec:setting}
We are interested in the problem setting similar to the opening scenario in the \nameref{sec:intro}. A robot is tasked to search for $N$ targets in an urban or suburban area represented as a discrete 2D map of landmarks $\mathcal{M}$. The robot is equipped with the map and can detect the targets within the field of view.  However, the robot has no knowledge of object locations a priori, and the map size is substantially larger than the sensor's field of view, making brute-force search infeasible. A human with access to the same map and prior knowledge of object locations provides the robot with a natural language description. For example, given the map in Fig.~\ref{fig:firstfig}, one could say \emph{the red car is behind Jeffrey's House, around Rene's House.} The language is assumed to mention some of the target objects and their spatial relationships to some of the known landmarks, yet there is no assumption about how such information is described.
To be successful, the robot must incorporate information from spatial language to efficiently search for the target object.

This problem can be formulated as the multi-object search (MOS) task \cite{wandzel2019oopomdp}, modeled as an OO-POMDP. The state $s_i=(x_i,y_i)$ is the location for target $i$, $1\leq i\leq N$. The robot state $s_r=(x,y,\theta,\mathcal{F})$ consists of the robot's pose $(x,y,\theta)$ and a set of found targets $\mathcal{F}\subseteq\{1,\cdots,N\}$. There are three types of actions: \textsc{Move} changes the robot pose (possibly stochastically); \textsc{Look} processes sensory information within the current field-of-view; \textsc{Find}($i$) marks object $i$ as found. In our implementation of MOS, a \textsc{Look} action is automatically taken following every \textsc{Move}. Upon taking \textsc{Find}, the robot receives reward $R_{\text{max}}\gg 0$ if an unfound object is within the field of view, and $R_{\text{min}}\ll 0$ otherwise. Other actions receive $R_{\text{step}}<0$. The desired policy accounts for the belief over target locations while efficiently exploring the map.

Note that in our evaluation, we use a synthetic detector that returns observations conditioned on the ground truth object locations for belief update during execution. Our POMDP-based framework can easily make use of a realistic observation model instead, for example, based on processing visual data \cite{xiao_icra_2019, monso2012pomdp}. Training vision-based object detectors is outside the scope of this chapter. Our focus is on spatial language understanding for planning object search strategies.

%===============================================================================

In the next section, we introduce SLOOP, with a particular focus on the observation space and observation model. Then, to apply SLOOP to our problem setting, we describe our implementation of the spatial language observation model on 2D city maps, which includes a convolutional network model for FoR prediction.

\begin{figure}[t]
    \centering
    \makebox[\textwidth][c]{\includegraphics[width=\textwidth]{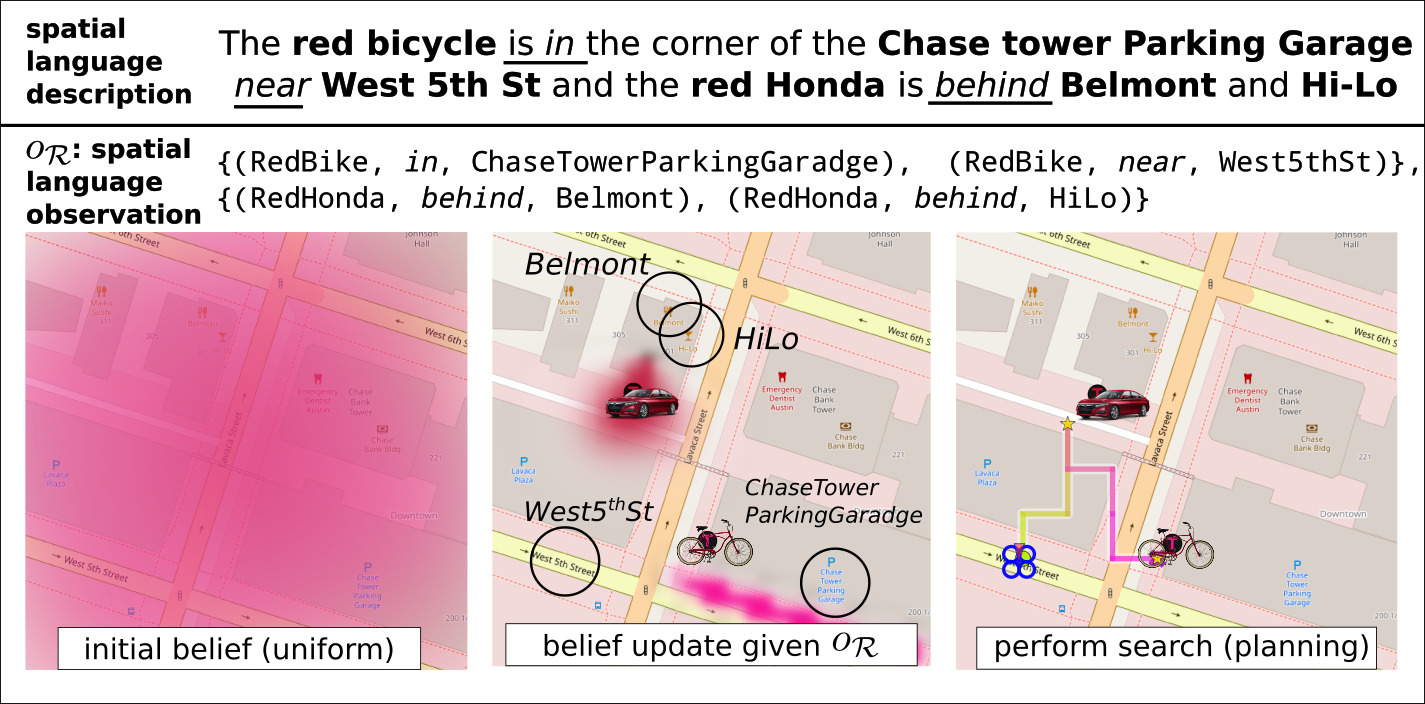}}
    \caption{%\todo{caption}
      We consider a spatial language description as an observation $o_{\sprl}$, which is a set of $(f, r, \gamma)$ tuples, obtained through parsing the input language. We propose an observation model that incorporates the spatial information in $o_{\sprl}$ into the robot's belief about target locations, which benefits subsequent object search performance.
    }
    \vspace{-0.5cm}
    \label{fig:method}
  \end{figure}
% , starting with the latter.

%% ============
\vspace{-0.5em}
\section{Spatial Language Object-Oriented POMDP (SLOOP)}

SLOOP augments an OO-POMDP defined over a given map $\mathcal{M}$ with a spatial language observation space and a spatial language observation model. The map $\mathcal{M}$ consists of a discrete set of locations and contains landmark information (e.g. landmark's name, location and geometry), such that $N$ objects exist on the map at possibly unknown locations. Thus, the state space can be factored into a set of $N$ objects plus the given map $\mathcal{M}$ and robot state $s=(s_1,\cdots,s_N,s_r,\mathcal{M})$. SLOOP does not augment the action space, thus the action space of the underlying OO-POMDP is left unchanged. Because the transition and reward functions are, by definition, independent of observations, they are also kept unchanged in SLOOP. Next, we introduce spatial language observations.

\label{sec:integrate_pomdp}

\subsection{Spatial Language Observation}

According to \citet{landau1993whatwhere}, the standard linguistic representation of an object's place requires three elements: the object to be located (\emph{figure}), the reference object (\emph{ground}), and their relationship (\emph{spatial relation}). We follow this convention and represent spatial information from a given natural language description in terms of atomic propositions, each represented as a tuple of the form $(f,r,\gamma)$, where $f$ is the \emph{figure}, $r$ is the \emph{spatial relation} and $\gamma$ is the \emph{ground}. %The notations here follow \cite{tellex2011forklift}.
% \texttt{(Figure, preposition, Ground)} that can be interpreted as atomic propositions.
In our case, $f$ refers to a target object, $\gamma$ refers to a landmark on the map, and $r$ is a predicate that is true if the locations of $f$ and $\gamma$ satisfy the semantics of the spatial relation.
As an example, given spatial language \emph{the red Honda is behind Belmont, near Hi-Lo}, two tuples of this form can be extracted (parentheses indicate role): \{(\texttt{RedCar}($f$), \texttt{behind}($r$), \texttt{Belmont}($\gamma$)), (\texttt{RedCar}($f$), \texttt{near}($r$), \texttt{HiLo}($\gamma$))\}.

We define a \emph{spatial language observation} as a set of $(f,r,\gamma)$ tuples extracted from a given spatial language. We define the spatial language observation space to be the space of all possible tuples of such form for a given map, objects, and a set of spatial relations.

\subsection{Spatial Language Observation Model}

We denote a spatial language observation as $o_{\sprl}$. Our goal now is to derive $\Pr(o_{\sprl}|s',a)$, the observation model for spatial language. We can split $o_{\sprl}$ into subsets, $o_{\sprl}=\cup_{i=1}^No_{\sprl_i}$, where each $o_{\sprl_i}=\cup_{k=1}^{L}(f_i,r_k,\gamma_k)$, $L=|o_{\sprl_i}|$ is the set of tuples where the figure is target object $i$. Since the human describes the target objects with respect to landmarks on the map, the spatial language is conditionally independent from the robot state and action given map $\mathcal{M}$ and the target locations $s_1,\cdots,s_N$. Therefore,
\begin{align}
    \Pr(o_{\sprl}|s',a)&=\Pr(\cup_{i=1}^N o_{\sprl_i}|s_1',\cdots,s_N',\mathcal{M})
\end{align}
We make a simplifying assumption that $o_{\sprl_i}$ is conditionally independent of all other $o_{\sprl_j}$ and $s_{j}'$ ($j\neq i$) given  $s_i'$ and map $\mathcal{M}$. We make this assumption because the human observer is capable of producing a language $o_{\sprl_i}$ to describe target $i$ given just the target location $s_i$ and the map $\mathcal{M}$. Thus,
\begin{align}
  \Pr(\cup_{i=1}^N o_{\sprl_i}|s_1',\cdots,s_N',\mathcal{M})=\prod_{i=1}^N\Pr(o_{\sprl_i} | s_i', \mathcal{M})
\end{align}
where $\Pr(o_{\sprl_i} | s_i', \mathcal{M})$ models the spatial information in the language description for object $i$. For each spatial relation $r_j$ in $o_{\sprl_i}$ whose interpretation depends on the FoR imposed by the human observer (e.g. \emph{behind}), we introduce a corresponding random variable $\Psi_j$ denoting the FoR vector that distributes according to the indicator function $\Pr(\Psi_j=\psi_j) = \mathds{1}(\psi_{j}=\psi_{j}^*)$, where $\psi_j^*$ is the one imposed by the human, unknown to the robot. Then, our model for $\Pr(o_{\sprl_i} | s_i', \mathcal{M})$ becomes:
\begin{align}
  \Pr(o_{\sprl_i} | s_i', \mathcal{M}) =\prod_{j=1}^L \Pr(r_j | \gamma_j, \psi_j^*, f_i, s_i', \mathcal{M}) \label{eq:sprls}
\end{align}
The step-by-step derivation can be found in the supplementary material. It is straightforward to extend this model as a mixture model $\Pr(o_{\sprl_i} | s_i', \mathcal{M})=\sum_{k=1}^mw_k\Pr_k(o_{\sprl_i} | s_i', \mathcal{M})$, $\sum_{k=1}^mw_k=1$, where multiple interpretations of the spatial language are used to form separate distributions then combined into a weighted-sum. This effectively smooths the distribution under individual interpretations, which improves object search performance in our evaluation (Figure~\ref{fig:detections}),

However, to proceed modeling $\Pr(r_j | \gamma_j, \psi_j^*, f_i, s_i', \mathcal{M})$ in Eq.~(\ref{eq:sprls}), we notice that it depends on the unknown FoR $\psi_j^*$. Therefore, we consider two subproblems instead: The approximation of $\psi_j^*$ by a predicted value $\hat{\psi}$ and the modeling of $\Pr(r_j | \gamma_j, \hat{\psi_j}, f_i, s_i', \mathcal{M})$. Next, we describe our approach to these two subproblems for object search in city-scale environments.

\subsection{Learning to Predict Latent Frame of Reference}
\label{sec:for_prediction}
% Subproblem, and background

\begin{figure*}[t]
  \centering
  \makebox[\textwidth][c]{\includegraphics[width=1.15\textwidth]{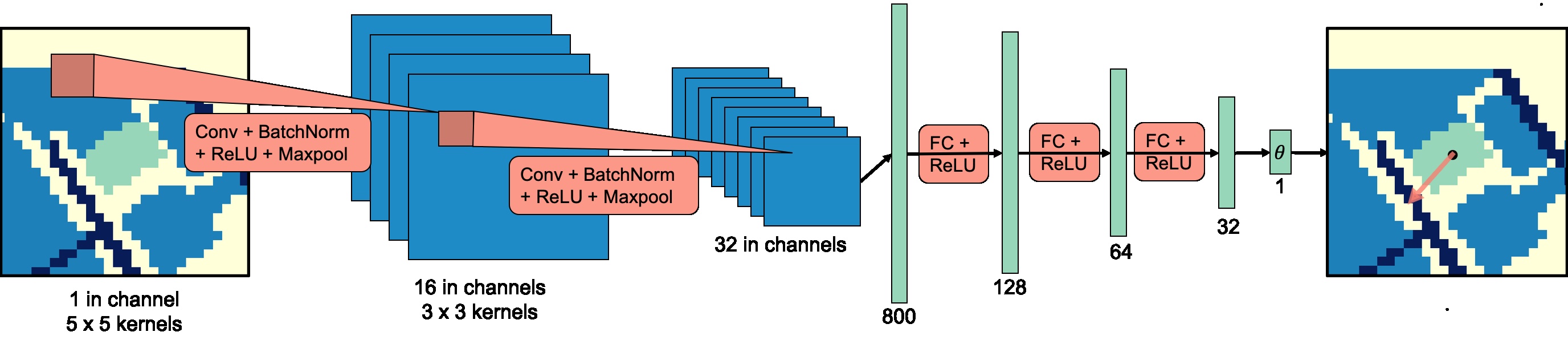}}
  \caption{\textbf{Frame of Reference Prediction Model Design.} In this example taken from our dataset, the model is predicting the frame of reference for the preposition \emph{front}. The grayscale image is rendered with color ranging from blue (black), green (gray) to yellow (white). Green highlights the referenced landmark, dark blue the streets, blue the surrounding buildings, and yellow the background.}
  %Our model uses strides of 1 and no padding.}
  \label{fig:model_design}
\end{figure*}
Here we describe our approach to predict $\psi_j^*$ corresponding to a given $(f_i,r_j,\gamma_j)$ tuple, which is critical for correct resolution of spatial relations. Taking inspiration from the ice cream truck example where the tourist can infer a potential FoR by looking at the 2D map of the park, we train a model that predicts the human observer's imposed FoR based on the environment context embedded in the map.

We define an FoR in a 2D map as a single vector $\psi_{j}=(x,y,\theta)$ located at $(x,y)$ at an angle $\theta$ with respect to the $+x$ axis of the map.
We use the center-of-mass of the ground as the origin $(x,y)$. We make this approximation since our data collection shows that when looking at a 2D map, human observers tend to consider the landmark as a whole without decomposing it into parts. Therefore, the FoR prediction problem becomes a regression problem of predicting the angle $\theta$ given a representation of the environment context.

We design a convolutional neural network, which
takes as input a grayscale image representation of the environment context where the ground in the spatial relation is highlighted. Surrounding landmarks are also highlighted with different brightness for streets and buildings (Figure~\ref{fig:model_design}). The image is shifted to be egocentric with respect to the referenced landmark, and cropped into a 28$\times$28 pixel image. The intuition is to have the model focus on immediate surroundings, as landmarks that are far away tend not to contribute to inferring the referenced landmark's properties. The model consists of two standard convolution modules followed by three fully connected layers.
These convolution modules extract an 800-dimension feature vector, feeding into the fully connected layers, which eventually output a single value for the FoR angle. We name this model \textbf{EGO-CTX} for egocentric shift of the contextual representation.

% Need a figure for this
Regarding the loss function,
a direct comparison with the labeled FoR angle is not desirable. For example, suppose the labeled angle is $0$ (in radians). Then, a predicted angle of $0.5$ is qualitatively the same as another predicted angle of $-0.5$ or $2\pi - 0.5$.
For this reason, we apply the following treatment to the difference between predicted angle $\theta$ and the labeled angle $\theta^*$. Here, both $\theta$ and $\theta^*$ have been reduced to be between $0$ to $2\pi$:
\begin{align}
\label{eq:lossf}
\ell(\theta,\theta^*)=
    \begin{cases}
      2\pi - \abs{\theta-\theta^*}, & \text{if}\  \abs{\theta-\theta^*} > \pi,\\
      \abs{\theta-\theta^*}, & \text{otherwise}\\
    \end{cases}
\end{align}
This ensures that the angular deviation used to compute the loss ranges between $0$ to $\pi$.
The learning objective is to reduce such deviation to zero. To this end, we minimize the mean-squared error loss $L(\bm{\theta},\bm{\theta}^*) = \frac{1}{N}\sum_{i=1}^N\left( \ell (\theta_i,\theta^*_i)\right)^2$,
where $\bm{\theta},\bm{\theta}^*$ are predicted and annotated angles in the training set of size $N$. This objective gives greater penalty to angular deviations farther away from zero.

%All neural network baselines are trained using the blindly annotated FoR dataset.
In our experiments, we combine the data by antonyms and train two models for each baseline: a \textbf{front} model used to predict FoRs for \emph{front} and \emph{behind}, and a \textbf{left} model used for \emph{left} and \emph{right}\footnote{We do not train a single model for all four prepositions since \emph{left} and \emph{right} often also suggest an \emph{absolute} FoR used by the language provider when looking at 2D maps, while \emph{front} and \emph{behind} typically suggest a \emph{relative} FoR.}.
We augment the training data by random rotations for \textbf{front} but not  for \textbf{left}.\footnote{Again, because \emph{left} and \emph{right} may imply either absolute or relative FoR.} We use the Adam optimizer~\cite{kingma2014adam} to update the network weights with a fixed learning rate of $1\times 10^{-5}$. Early stopping based on validation set loss is used with a patience of 20 epochs~\cite{prechelt1998early}.
% The training data for \textbf{front} is augmented by random rotations, whereas

% ===============================================================================

\subsection{Modeling Spatial Relations}
We model $\Pr(r_j | \gamma_j, \hat{\psi_j}, f_i, s_i', \mathcal{M})$ as a Gaussian following prior work and evidence from animal behavior \cite{fasola2013iros, o1996geometric}:
\begin{align}
\begin{split}
&\Pr(r_j | \gamma_j,\hat{\psi}_j,f_i, s_i', \mathcal{M})\\
&\qquad=\abs{u(s_i',\gamma_j,\mathcal{M})\bigcdot v(f_i,r_j,\gamma_{j},\hat{\psi}_{j})}\\
&\qquad\qquad\times\exp\left(-dist(s_i',\gamma_j,\mathcal{M})^2/2\sigma^2\right)
\end{split}
\end{align}
where $\sigma$ controls the steepness of the distribution based on the spatial relation's semantics and landmark size, and $dist(s_i',\gamma_j,\mathcal{M})$ is the distance between $s_i'$ to the closest position within the ground $\gamma_j$ in map $\mathcal{M}$, and $u(s_i',\gamma_j,\mathcal{M})\bigcdot v(f_i,r_j,\gamma_{j},\hat{\psi}_{j})$ is the dot product between $u(s_i',\gamma_j,\mathcal{M})$, the unit vector from $s_i'$ to the closest position within the ground $\gamma_j$ in map $\mathcal{M}$, and $v(f_i,r_j,\gamma_{j},\hat{\psi}_{j})$, a unit vector in in the direction that satisfies the semantics of the proposition $(f_i,r_j,\gamma_j)$ by rotating $\hat{\psi}_{j}$.
The dot product is skipped for prepositions that do not require FoRs (e.g. \emph{near}). We refer to \citet{landau1993whatwhere} for a list of prepositions meaningful in 2D that require FoRs: \emph{above, below, down, top, under, north, east, south, west, northwest, northeast, southwest, southeast, front, behind, left, right}.

%=========
\section{Data Collection}
\label{sec:data_collect}
In this section, we describe our data collection process as well as a pipeline for spatial information extraction from natural language.
We use maps from OpenStreetMap (OSM), a free and open-source database of the world map with voluntary landmark contributions \cite{OpenStreetMap}.
We scrape landmarks in 40,000m$^2$ grid-regions with a resolution of 5m by 5m grid cells in five different cities leading to a dimension of 41$\times$41 per grid map\footnote{Because of the curvature of the earth, the grid cells and overall region is not perfectly square, which is why the grid is not perfectly 40x40}: Austin, TX; Cleveland, OH; Denver, CO; Honolulu, HI, and Washington, DC. Geographic coordinates of OSM landmarks are translated into grid map coordinates.

%%%%%%%%%%%% DATA FIGURE
% \begin{figure}[t]
% \centering
% % ~ GO INTO SUPPLEMENTARY
% \begin{subfigure}[t]{0.49\textwidth}
%     \centering
%     \includegraphics[width=\linewidth]{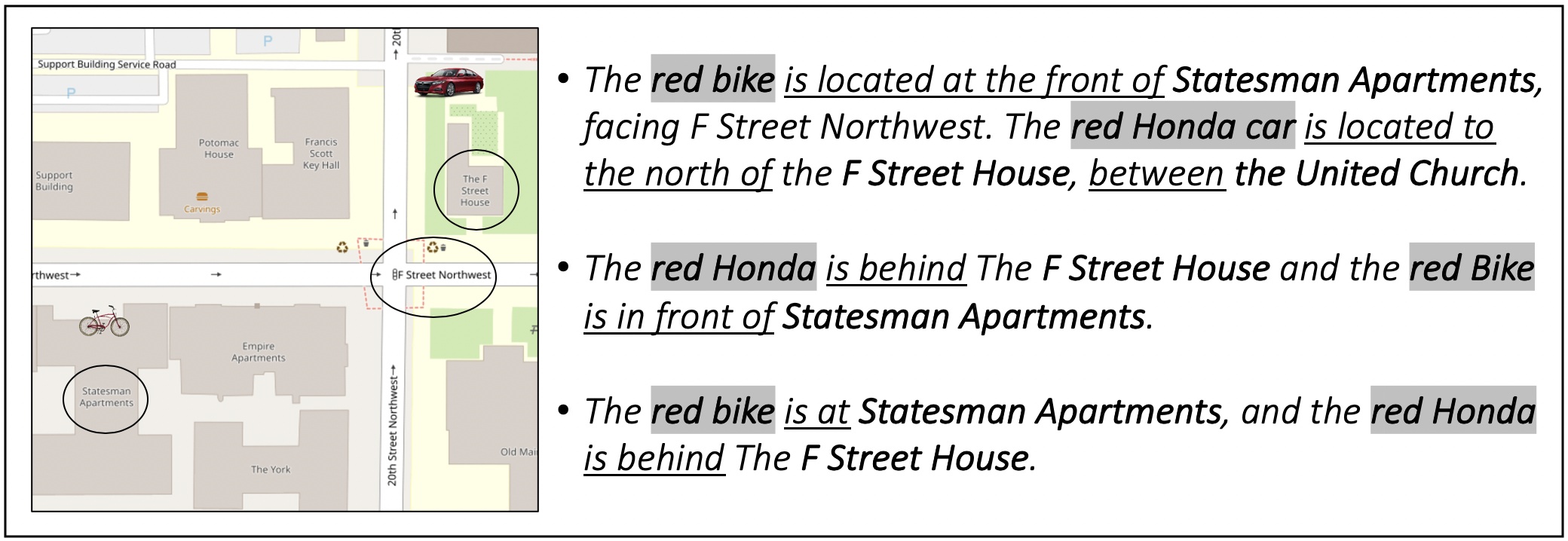}
% \end{subfigure}
% \begin{subfigure}[t]{0.49\textwidth}
%     \centering
%     \includegraphics[width=\linewidth]{figs/sloop/3_pred_histogram.png}
% \end{subfigure}
% \caption{Top: Map screenshot shown to AMT workers paired with collected spatial language descriptions. Bottom: Distribution of predicates from collected spatial language descriptions. \rmnote{Bottom fig is pretty small, hard to read}}
% \label{fig:data_info}
% \vspace{-0.5cm}
% \end{figure}

To collect a variety of spatial language descriptions from each city, we randomly generate 30 environment configurations for each city, each with two target objects.
We prompt Amazon Mechanical Turk (AMT) workers to describe the location of the target objects and specify that the robot knows the map but does not know target locations.
Each configuration is used to obtain language descriptions from up to eleven different workers.
The descriptions are parsed using our pipeline described next in Sec.~\ref{sec:spatial_info_extraction}. Examples are shown in Fig.~\ref{fig:data_info}. Screenshots of the survey and statistics of the dataset are provided in the supplementary material.

The authors annotated FoRs for \emph{front, behind, left} and \emph{right} through a custom annotation tool which displays the AMT worker's language alongside the map without targets.
We manually infer the FoR used by the AMT worker, similar to what the robot is tasked to do.
This set of annotations are used as data to train and evaluate our FoR prediction model. Prepositions such as \emph{north, northeast} have absolute FoRs with known direction. Others are either difficult to annotate (e.g. \emph{across}) or have too little samples (e.g. \emph{above}, \emph{below}).

\subsection{Spatial Information Extraction from Natural Language}
\label{sec:spatial_info_extraction}
% Need a figure for this section
% Given a natural language description, our first step is to extract spatial information from the language and represent it in terms of groundings and spatial relations. Groundings are symbols that represent targets and landmarks on the map. Typically, they appear as noun phrases (NPs). Spatial relations are triplets of the form \texttt{(TargetSymbol, relation, LandmarkSymbol)}.  For example,

% %\vspace{0.1in}
% {
% \centering
% \begin{tabular}{ll}
% \multicolumn{2}{c}{\textbf{Language} \emph{the red Honda is behind Belmont, near Hi-Lo.}}\\
% \multicolumn{2}{l}{\textbf{Groundings} \texttt{RedCar}, \texttt{Belmont}, \texttt{HiLo}}\\
% \textbf{Spatial relations} & \texttt{(RedCar, behind, Belmont)},\\
%                            & \texttt{(RedCar, near, HiLo)}
% \end{tabular}
% }

% % \stnote{what you are calling spatial entities has been called the ``ground'' by talmey or ``landmark'' or ``reference object.''  I prefer ``ground'' or ``landmark.''  I don't want to invent yet another technical term for this concept so we should use one of the existing terms and not make up a new one.}
We designed a pipeline to extract spatial relation triplets from the natural language using the spaCy library \cite{spaCy2} for noun phrase (NP) identification and dependency parsing, as it achieves good performance on these tasks. Extracted NPs are matched against synonyms of target and landmark symbols using cosine similarity. All paths from targets to landmarks in the dependency parse tree are extracted to form the $(f,r,\gamma)$ tuples used as spatial language observations (Sec.~\ref{sec:integrate_pomdp}).
%\footnote{Further details of this pipeline are described in the supplementary material. \todo{video:)}}.
% Section~\ref{sec:data_collect} on data collection reports details regarding parsing quality of this pipeline.

Our spatial language understanding models assume as input language that has been
parsed into $(f,r,\gamma)$ tuples, but is not dependent on this exact pipeline for
doing so. Future work could explore alternative methods for parsing and entity
linking, including approaches optimized for the task of spatial language
resolution. In our end-to-end experiments, we report the task performance both when using this parsing pipeline and when using manually annotated $(f,r,\gamma)$ tuples to indicate the influence of parsing on search performance.

%===============================================================================

\begin{figure}[t]
\centering
% ~ GO INTO SUPPLEMENTARY
\includegraphics[width=\linewidth]{}
\caption{Map screenshot shown to AMT workers paired with collected spatial language descriptions.}
\label{fig:data_info}
\end{figure}

%===============================================================================

\section{Evaluation}
\label{sec:evaluation}

% Our experiments consist of two parts.
% We first test the accuracy of our FoR prediction model and its ability to generalize across environments. \todo{We hypothesize that the fine-grained location information output by our model will lead to faster search.} We conduct an end-to-end evaluation of object search using our spatial language understanding framework. We begin by describing our data collection process, then describe experiment setup and discuss results.
% We aim to first investigate the generalizability of the our frame of reference prediction model across environments. Then, we evaluate the effect of our spatial language understanding approach on object search performance in an end-to-end fashion, i.e. from language input to cumulative reward of the task. We begin by describing our data collection process.
% \todo{intro}  \epnote{Make sure you say you have both intrinsic and extrinsic (end-to-end) evals, and give forward pointers to those sections. People tend to care mostly about the end-to-end eval, so let them know its coming so they can be impressed in advance. :)} \tnnote{+1!}

\subsection{Evaluation of Frame of Reference Prediction Model}
\label{sec:foref_eval}
We test the generalizability of our FoR prediction model (\textbf{EGO-CTX}) by cross-validation. The model is trained on maps from four cities and tested on the remaining held-out city for all possible splits. We evaluate the model by the angular deviation between predicted and annotated FoR angles, in comparison with three baselines and human performance:
% two variations of the convolution model with different inputs, a random baseline, and a baseline designed to reflect the disagreement between human annotators.
The first is a variation (\textbf{CTX}) that uses a synthetic image with the same kind of contextual representation
%(i.e. highlighting of surrounding landmarks)
yet without egocentric shift. The second is another variation (\textbf{EGO}) that performs egocentric shift and also crops a $28\times 28$ window, but only highlights the referenced landmark at the center without contextual information. The random baseline (\textbf{Random}) predicts the angle at random uniformly between $[0,2\pi]$. The \textbf{Human} performance is obtained by first computing the differences between pairs of annotated FoR angles for the same landmarks (Eq.~\ref{eq:lossf}), then averaging over all such differences for landmarks per city.
Each pair of FoRs may be annotated by the same or different annotators.
Taking the average gives a sense of the disagreement among the annotators' interpretation of spatial relations.

\begin{figure}[ph!]
\centering
\vspace{-0.3cm}
  \includegraphics[width=\linewidth]{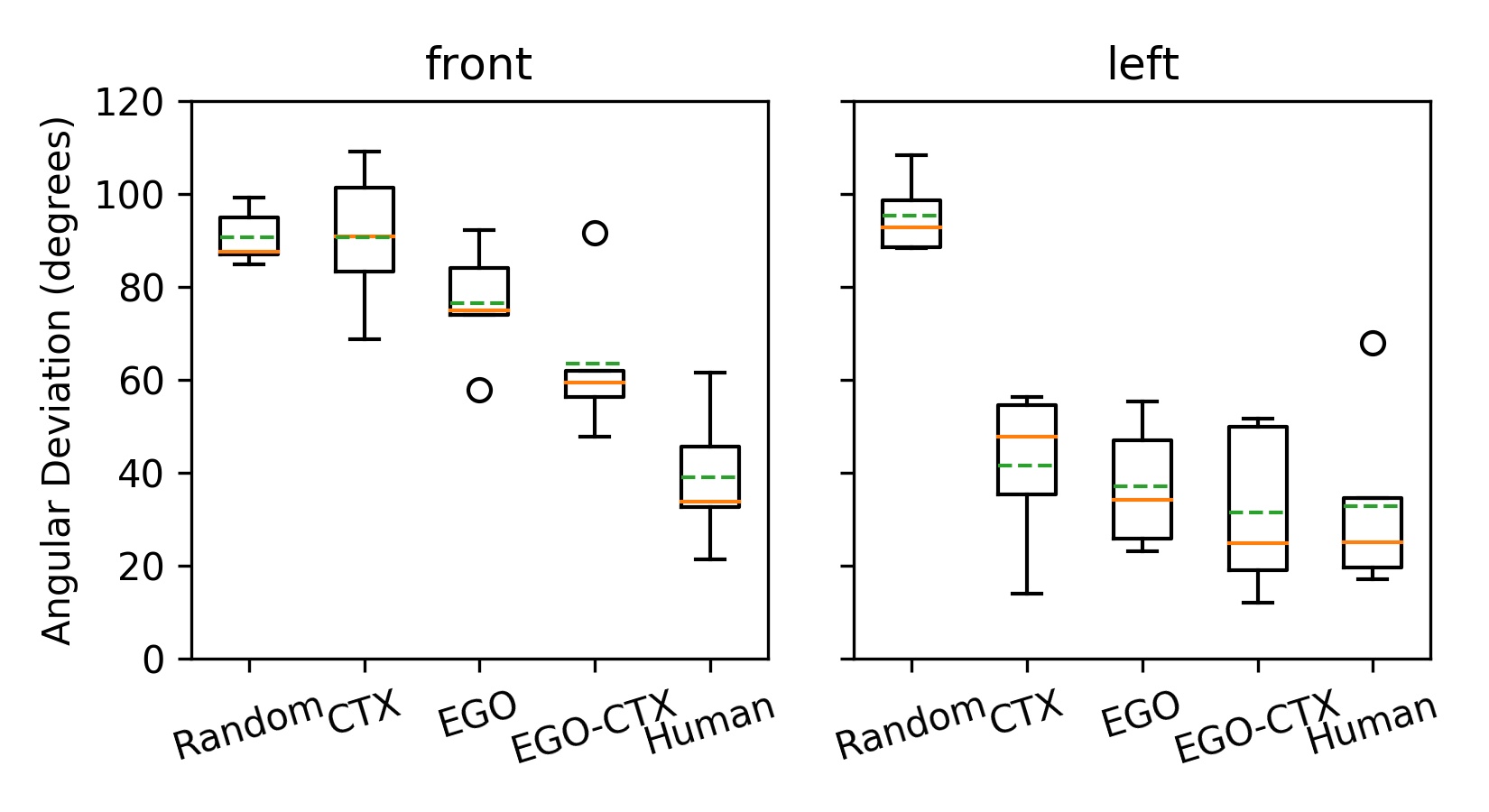}
  \caption{FoR prediction results. The solid orange line shows the median, and the dotted green line shows the mean. The circles are outliers. Lower is better.}
  \label{fig:foref_boxplot}
\end{figure}

\begin{figure}[ph!]
  \includegraphics[width=\linewidth]{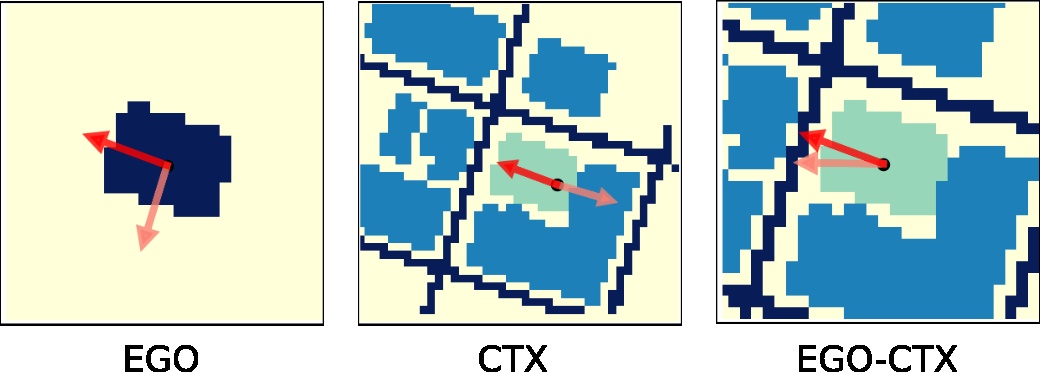}
  \caption{Visualization of FoR predictions for \emph{front}. Darker arrows indicate labeled FoR, while brighter arrows are predicted FoR.}
  \label{fig:foref_examples}
\vspace{-0.3cm}
\end{figure}

The results are shown in Figure~\ref{fig:foref_boxplot}. Each boxplot summarizes the means of baseline performance in the five cross-validation splits.
% A city-wise breakdown of the performance is provided in the supplementary.
The results demonstrate that \textbf{EGO-CTX} shows generalizable behavior close to the human annotators, especially for \textbf{front}. We observe that our model is able to predict \emph{front} FoRs roughly perpendicular to streets against other buildings, while the baselines often fail to do so (Figure~\ref{fig:foref_examples}). The competitive performance of the neural network baselines in \textbf{left} indicates that for \emph{left} and \emph{right}, the FoR annotations are often absolute, i.e. independent of the context. Our model as well as baselines are limited in determining, for example, whether the speaker refers to the left side of the map (absolute FoR), or the left side of the street relative to a perceived forward direction (relative FoR).

% For \textbf{front}, the results indicate that a network with egocentric contextual representation of the environment (\textbf{EGO-CTX}) leads to significantly more generalizable behavior, closest to annotators.
% For \textbf{left}, the city-wise results show that \textbf{EGO-CTX} still consistently achieves smaller angular deviation in four out of five splits, yet all baselines but random show a comparable performance with respect to human annotators. From visualization of the test cases, we observe that for \emph{left} and \emph{right}, the FoR annotations are often absolute, i.e. independent of the context, as shown in the example in Figure~\ref{fig:foref_examples}.

\subsection{End-to-End Evaluation}
\label{sec:end-to-end-eval}
We randomly select 20 spatial descriptions per city.  We task the robot to search for each target object mentioned in every description separately, resulting in a total of 40 search trials per city, 200 in total. Cross-validation is employed such that for each city, the robot uses the FoR prediction model trained on the other four cities. For each step, the robot can either move or mark an object as detected. The robot can move by rotating clockwise or counterclockwise for 45 degrees, or move forward by 3 grid cells (15m). The robot receives observation through an on-board fan-shaped sensor after every move. The sensor has a field of view with an angle of 45 degrees and a varying depth of 3, 4, 5 (15m, 20m, 25m). As the field of view becomes smaller, the search task is expected to be more difficult. % (Figure~\ref{fig:sensor_ranges})\kznote{Part of me still wants to have this figure that shows sensor range size w.r.t. map building sizes to give reviewer a clearer idea of these sensor range values. They can be combined innto a single figure though instead of three like before.} \dbnote{I agree I think it's something fair to report since real drones have long perception but bad detection? could it fit?}\kznote{Let's do this in the video supplementary}.
The robot receives $R_{\text{step}}=-10$ step cost for moving and $R_{\max}=+1000$ for correctly detecting the target, and $R_{\min}=-1000$ if the detection is incorrect.  The rest of the domain setup follows~\cite{wandzel2019oopomdp}.

\emph{Baselines.} \textbf{SLOOP} uses the spatial language observation model without mixture, that is, for each object, it computes the observation distribution in Eq.~(\ref{eq:sprls}) by multiplying the distributions for each spatial relation; With the same observation distribution, \textbf{SLOOP (m=2)} mixes in one distribution computed by treating all prepositions as \emph{near} with weight 0.2; Also with the same observation distribution, \textbf{SLOOP (m=4)} mixes in three additional distributions: one ignores FoR-dependent prepositions, one treating all prepositions as \emph{near}, one treating all prepositions as \emph{at}, with weights 0.25, 0.1, 0.05, respectively. The baseline \textbf{MOS (keyword)} uses a keyword-based model based on \cite{wandzel2019oopomdp} that assigns a uniform probability over referenced landmarks in a spatial language but does not incorporate information from spatial prepositions. Finally, \textbf{informed} and \textbf{uniform} are upper and lower bounds: for the \textbf{informed}, the agent has an initial belief that has a small Gaussian noise over the groundtruth location\footnote{The noise is necessary for object search, otherwise the task is trivial.}; \textbf{uniform} uses a uniform prior. We also report the performance with annotated spatial relations and landmarks to show search performance if the languages are parsed correctly.
% , especially for FoR preposition and language complexity results to show the search performance in the case with

% Our parsing pipeline obtains a precision/recall of $100\%/88.9\%$ for recognizing referenced landmarks and $88.9\%/76.2\%$ for recognizing spatial relation propositions (i.e. $(f,r,\gamma)$ tuples) for selected languages.
 %\kznote{Report precisition/recall of parsing in text}

% \subsubsection{Planning}
For all baselines, we use an online POMDP solver, POMCP \cite{silver2010monte} but with a histogram belief representation to avoid particle depletion.
The number of simulations per planning step is 1000 for all baselines.  The discount factor is set to 0.95.  The robot is allowed to search for 200 steps per search task, since search beyond this point will earn very little discounted reward and is not efficient.

\begin{figure}[ph!]
  \centering
  \includegraphics[width=\linewidth]{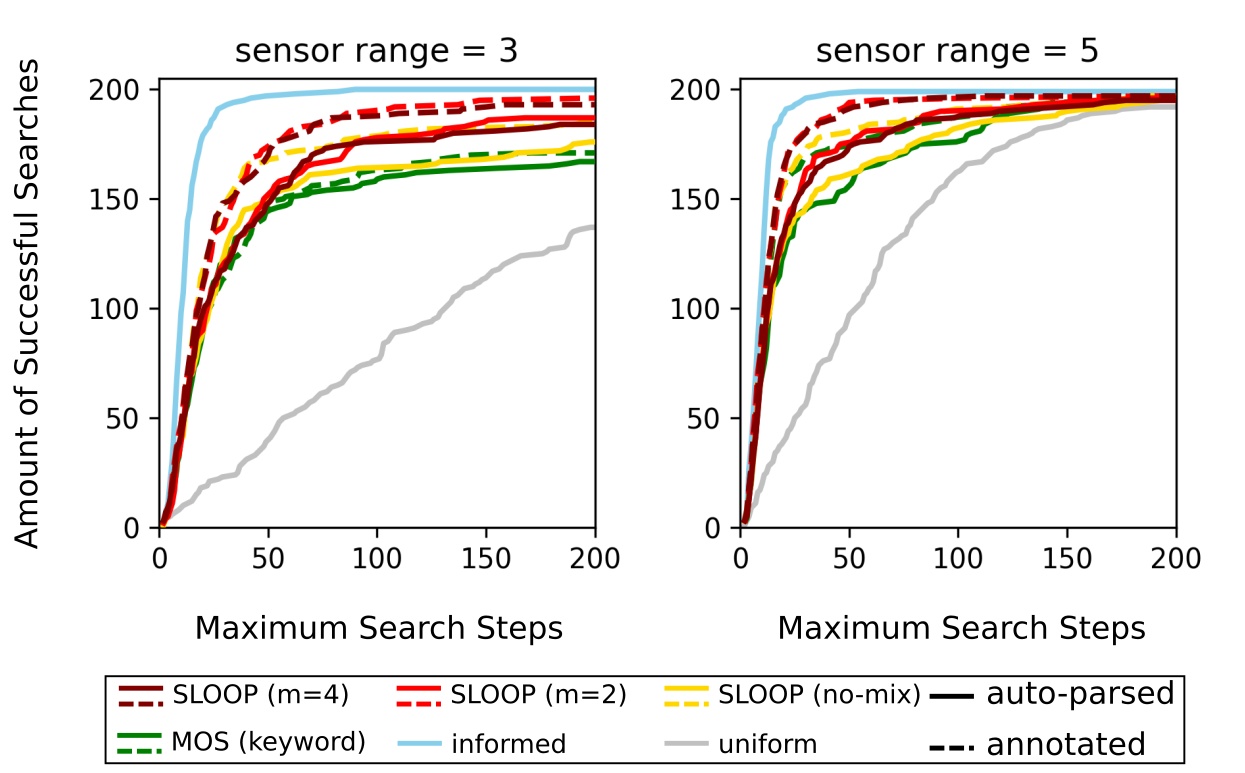}
  \caption{Number of completed search tasks as the maximum search step increases. Steeper slope indicates greater efficiency and success rate of search.}
    % Our system \textbf{slu} performs better compared to the \textbf{keyword} baseline with increasing margin as the sensor range decreases.}
  \label{fig:detections}
  \vspace{-1.0cm}
\end{figure}

\begin{figure}[ph!]
  \centering
  \includegraphics[width=\linewidth]{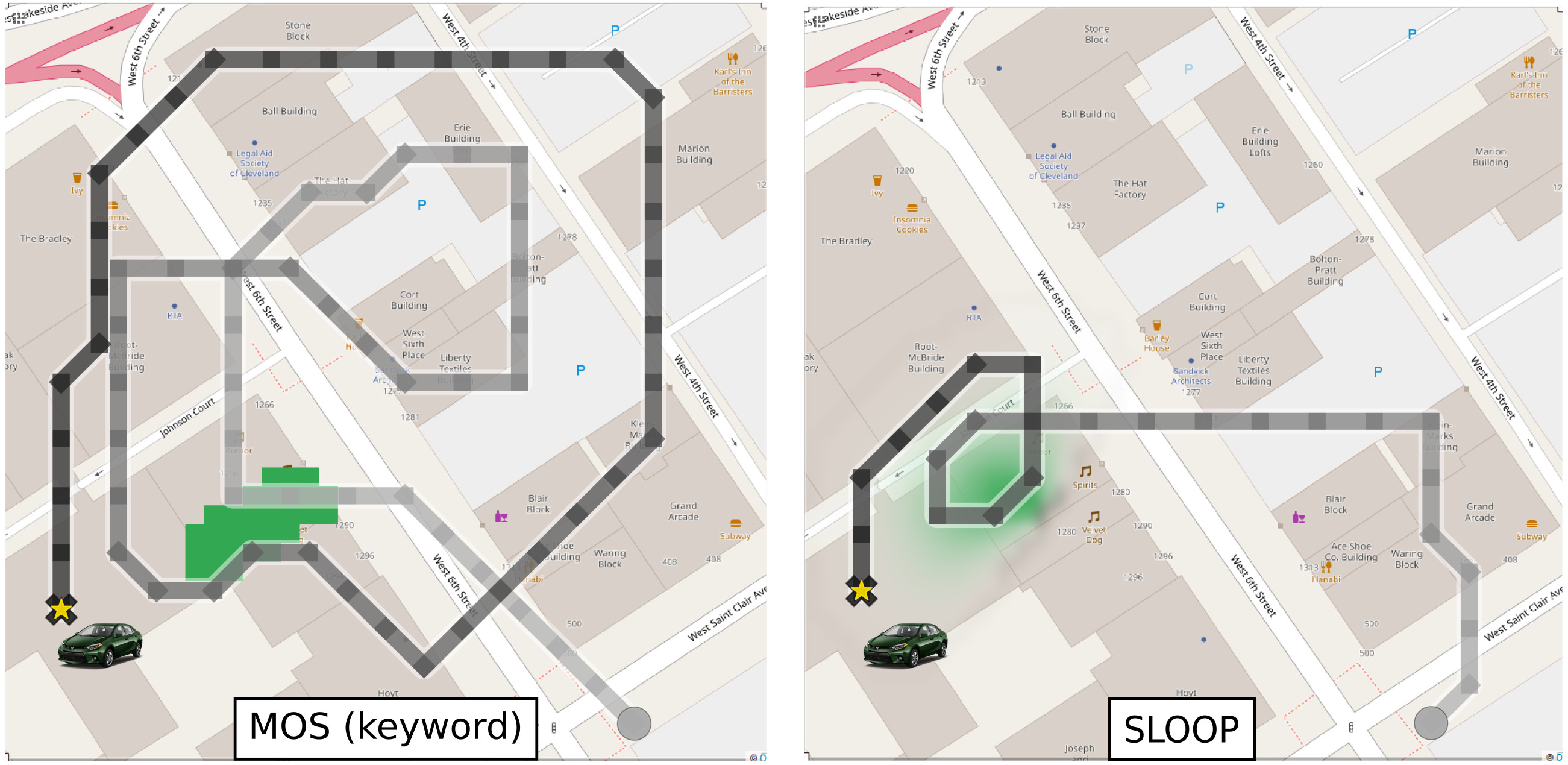}
  \caption{Example object search trial for description ``the green toyota is
    \emph{behind} velvet dog'' from AMT.  The green region shows the
    distribution over the object location after interpreting the description.
    Our method enables probabilistic interpretation of the spatial language
    leading to more efficient search strategy.}
% The search path is overlayed on top of the initial belief formed by interpreting the given spatial language. The robot eventually finds the target. However, our system leads to more efficient search policies.}
  \label{fig:osmexample}
  \vspace{-0.2cm}
\end{figure}

\emph{Results.} We evaluate the effectiveness and efficiency of the search by the amount of search tasks the robot completed (i.e. successfully found the target) under a given limit of search steps (ranging from 1 to 200).
Results are shown in Figure~\ref{fig:detections}. The results show that using spatial language with SLOOP outperforms the keyword-based approach in MOS. The gain in the discounted reward is statistically significant over all sensor ranges comparing \textbf{SLOOP} with \textbf{MOS (keyword)}, and for sensor range 3 comparing the annotated versions. We observe that using mixture models for spatial language improves search efficiency and success rate over \textbf{SLOOP}. We observe improvement when the system is given annotated spatial relations. This suggests the important role of language parsing for spatial language understanding.
Figure~\ref{fig:osmexample} shows a trial comparing \textbf{SLOOP} and \textbf{MOS (keyword)}.

%%%%%%%%%%%%%%%%%%%%%%%%%%%%%%%%%%%%%%%%%%%%%% RESULTS BY PREPOSITION %%%%%%%%%%%%%%%%%%%%%%%%%%%%%%%%%%%%%%%%%%%%%%%%%%%%%
  % Need to count the trials for good/bad (post submission)

\begin{table}[ph!]
  \centering
    \begin{tabular}{rr|rrr}
      \toprule
      spatial     &  No.            & MOS (keyword)                          & SLOOP                         & SLOOP (m=4)\\              %SLOOP (m=2) \\
      preposition &  trials    & annodated                                              & annotated                     & annotated\\                %annotated\\
      \midrule                                                                                                                                       %
      on	& 59	       & 200.65 (78.06)	                                        & 267.00 (76.45)	        &  \textbf{290.05 (70.51)}\\                          %\textbf{292.49 (73.32)}\\
      at	& 58	       & 179.42 (81.46)	                                        & 237.36 (81.03)	        &  \textbf{238.24 (80.50)} \\                         %235.07 (79.38)\\
      near	& 35	       & 97.64 (135.39)	                                        & \textbf{280.69 (109.00)}	&  249.35 (113.60) \\                         %272.08 (124.46)\\
      between	& 25	       & 21.48 (116.93)	                                        & 172.93 (143.77)	&  \textbf{175.59 (136.71)}  \\                         %      & 162.38 (159.97)\\
      in	& 22	       & 302.05 (151.42)	                                & \textbf{398.88 (119.32)}	&  307.45 (141.67) \\                         %350.14 (149.88)\\
      \midrule
      north	& 9	       & 222.28 (291.13)	                                & 201.88 (296.49)	        & \textbf{365.14 (246.05)}\\
      southeast	& 7	       & 306.77 (341.43)	                                & 553.82 (174.04)	        & \textbf{549.43 (165.83)}\\
      southwest	& 7	       & -75.67 (205.98)	                                & \textbf{1.37 (271.46)}	                & -27.63 (281.89)\\
      east	& 6	       & 56.57 (337.58)	                                        & 290.68 (303.53)	        & \textbf{439.99 (276.85)}\\
      northwest	& 6	       & \textbf{385.41 (320.82)}	                                & 43.57 (282.88)	        & -1.82 (256.71) \\
      south	& 6	       & 79.12 (289.54)	                                        & 310.29 (410.04)	        & \textbf{494.26 (161.60)}\\

      west	& 4	       & -160.91 (188.02)	                                & 234.93 (587.57)	        & \textbf{327.13 (245.28)}\\

      northeast	& 2	       & -167.99 (660.76)	                                & 206.42 (138.93)	        & \textbf{213.17 (977.62)}\\
      \midrule
      front	& 25	       & \textbf{246.96 (142.45)}	                        & 168.91 (150.41)	        & 160.55 (136.88)  \\                         %151.31 (120.65)\\
      behind    & 8            & 128.47 (356.25)	                                & 101.20 (333.38)	        & \textbf{140.92 (333.61)}\\
      right	& 4	       & 19.75 (697.88)	                                        & 160.14 (601.54)	        & \textbf{336.00 (725.84)}\\
      left	& 3	       & \textbf{247.35 (363.32)}	                        & 192.93 (393.75)	        & 231.75 (330.33)\\
      \midrule
      front (good) & 15	       & 255.85 (210.46)		                        & \textbf{421.83 (143.65)}	& 222.67 (264.50) \\                          %248.23 (217.29)\\
      behind (good) & 6	       & 145.26 (489.55)		                        & 207.58 (430.52)		& \textbf{359.80 (753.15)}     \\        %\textbf{277.14 (788.30)}\\
      front (bad)  & 10        & \textbf{281.04 (226.47)}				& -208.92 (11.39)	        & 93.80 (176.11)             \\              %73.98 (162.49)\\
      behind (bad) & 2	       & \textbf{78.11 (3771.84)}	                        & -217.95 (23.53)	        & -77.97 (223.71)                \\           %-179.41 (54.62)\\
      % left (good)  & 2         & 262.35 (1828.82)	                                & 101.92 (209.62)	        & \textbf{316.95 (-)}           % Need to count the trials for good/bad (post submission)
      \bottomrule
\end{tabular}
\caption{Mean (95\% CI) of discounted cumulative reward for different prepositions
  evaluated on language descriptions with annotated spatial relations. The value with highest mean per row is bolded.
  % per sensor range. The
  % different prior types are the keyword model, SLOOP, and SLOOP m$=$4, \todo{a
  %   variety where our model mixes 4 arguments' distribution for a given
  %   predicate together}. The FoR prediction model was trained on examples where
  % the annotator had knowledge of the ground truth object locations. The highest
  % cumulative rewards per predicate are shown in bold.}
  }
\label{tab:prepositions}

\end{table}

We analyze the performance with respect to different spatial prepositions. We report results for annotated languages as they reflect the performance obtained if the prepositions are correctly identified. Results for the smallest sensor range of 3 is shown in Table~\ref{tab:prepositions}.
\textbf{SLOOP} outperforms the baseline for the majority of prepositions.
% Results over all sensor ranges are given in Table~\ref{tab:prepositions}.
% Our system outperforms \textbf{keyword} for most of the prepositions.
For prepositions \emph{front}, \emph{behind}, \emph{left}, and \emph{right}, our investigation shows the performance of \textbf{SLOOP} polarizes where trials with ``good'' FoR (i.e. ones in the correct direction towards the true target location) leads to a much greater performance than the counterpart (``bad'' FoR). Yet, \textbf{MOS (keyword)} is not subject to such polarization and the target often appears close to the landmark for these prepositions. We observe that \textbf{SLOOP (m=4)} using mixture is able to consistently improve the reward for most of the prepositions, indicating the benefit of modeling multiple interpretations of the spatial language.

% % %%%%%%%%%%%%%%%%%%%% RESULTS BY COMPLEXITY %%%%%%%%%%%%%%%%%%%%%%%
\begin{table}[ph!]
  \centering
\begin{tabular}{rr|rrr}
\toprule
  No. spatial~      &   No.~   & MOS (keyword)    & SLOOP                         & SLOOP (m=4)\\              %SLOOP (m=2) \\\\
  prepositions      & trials   & annotated                                  & annotated                     & annotated\\
\midrule
1	          &    100           & 234.32 (72.64)	                        & \textbf{320.91 (64.23)}		& 289.34 (66.42)\\
2                 &    83            & 179.18 (68.60)	                        & 264.08 (62.39)		& \textbf{286.19 (60.83)}\\
3	          &    14            & 26.96 (165.98)	                        & 115.44 (202.42)		& \textbf{215.30 (200.99)}\\
\bottomrule
\end{tabular}
\caption{Mean (95\% CI) of discounted cumulative reward for completed search tasks as language complexity (number of spatial relations) increases.}
  % The FoR prediction model was trained on examples where the annotator didn't have knowledge of the ground truth object locations. The highest cumulative rewards per relation number are shown in bold.}
\label{tab:numrels}
\vspace{-1em}
\end{table}

Finally, we analyze the relationship between the performance and varying complexity of the language description, indicated by the number of spatial relations used to describe the target location.  Again, we used annotated languages for this experiment for the smallest sensor range of 3. Results in Table~\ref{tab:numrels} indicate that understanding spatial language can benefit search performance, with a wider gain as the number of spatial relations increases.
Again, using mixture model in \textbf{SLOOP (m=4)} improves the performance even more.

\subsection{Demonstration on AirSim}
We implemented \textbf{SLOOP (m=4)} on AirSim \cite{airsim2017fsr}, a realistic drone simulator built on top of Unreal Engine \cite{unrealengine}. Similar to our evaluation in OpenStreetMap, we discretize the map into 41$\times$41 grid cells. We use the same fan-shaped model of on-board sensor as in OpenStreetMap. As mentioned in Sec.~\ref{sec:setting}, sensor observations are synthetic, based on the ground-truth state. Additionally, although the underlying localization and control is continuous, the drone plans discrete navigation actions (move forward, rotate left 90$^{\circ}$, rotate right 90$^{\circ}$). We annotated landmarks (houses and streets) in the scene on the 2D grid map. Houses with heights greater than flight height are subject to collision and results in a large penalty reward (-1000). Checking for collision in the POMDP model for this domain helped prevent such behavior during planning. We found that the FoR prediction model trained on OpenStreetMap generalizes to this domain, consistently producing reasonable FoR predictions for \emph{front} and \emph{behind}. This shows the benefit of using synthetic images of top-down street maps. The drone is able to plan actions to search back and forth to find the object, despite given inexact spatial language description. Please refer to the video demo on our project for the examples shown in Fig.~\ref{fig:firstfig} and \ref{fig:airsimexample}.

\begin{figure}[H]
  \centering
  \includegraphics[width=\linewidth]{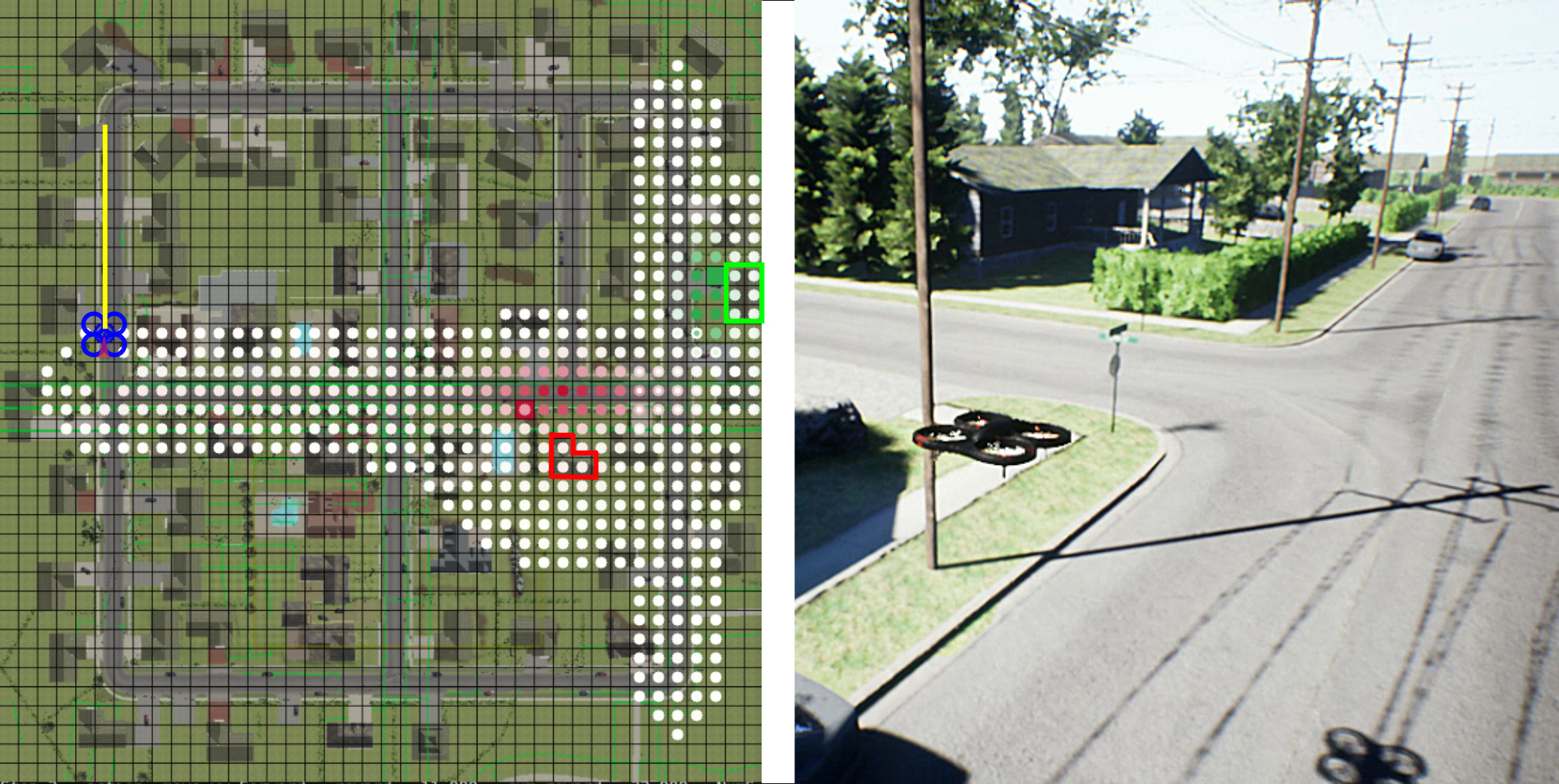}
  \caption{Example trial from AirSim demonstration. Given spatial language description: \emph{The red car is by the side of Main street in front of Zoey's House (red), while the green car is within Annie's House on the right side of East street (green).} Left: belief over two target objects (red and green car). Right: screenshot from AirSim. Video: \url{https://youtu.be/V54RY8v8VmA}}

  \label{fig:airsimexample}
\vspace{-0.1cm}
\end{figure}

\subsection{Demonstration on Boston Dynamics Spot }
We take one step further and demonstrate SLOOP on a physical robot. In particular, we task a Boston Dynamics Spot robot to find a book in a lab environment. We developed a 2D MOS-based object search system integrated with ROS that can interact with Spot through the Spot SDK. This system uses histogram-based 2D belief, and the object search POMDP model is the same 2D MOS model used for simulation experiments.\footnote{In fact, this system is the npredecessor to GenMOS for 3D-MOS in Chapter~\ref{ch:genmos}, GenMOS is robot-independent (does not rely on ROS).} Object detection was done through projecting segmentation masks obtained with Mask-RCNN \cite{he2017mask} to 2D grid cells using depth from Spot's gripper camera. To obtain the same kind of 2D grid map representation as in simulation experiments, we first use Spot's built-in capability to create a 3D point cloud map of the lab and then project it down to 2D, where points are filtered to separate free space from obstacles. Then,  in order to apply SLOOP for 2D MOS, we build a map of landmarks similar to OpenStreetMap by driving the robot around the lab, and recording object detection results projected onto a 2D grid, and then assigning a unique name to each detected object. Since our FoR prediction model works over synthetic images, they can be directly applied to this landmark map. The end result is an interface that allows a person to type in a free-form spatial language with respect to the landmarks, and the belief gets updated after the system interprets the spatial language, and the robot goes off to search under the updated belief;  Planning is based on the hierarchical planning algorithm  introduced for COS-POMDP in Chapter~\ref{ch:cospomdp} that combines local, fine-grained search actions with navigation subgoals over a topological graph.  In Figure~\ref{fig:spot_sloop}, we contrast the system's behavior with and without spatial language. SLOOP enables the robot to quickly narrow down the search region given \emph{``the book is in front of the Monitor,''} whereas the baseline without language results in searching all over the lab.

\begin{figure}[H]
  \centering
  \includegraphics[width=\linewidth]{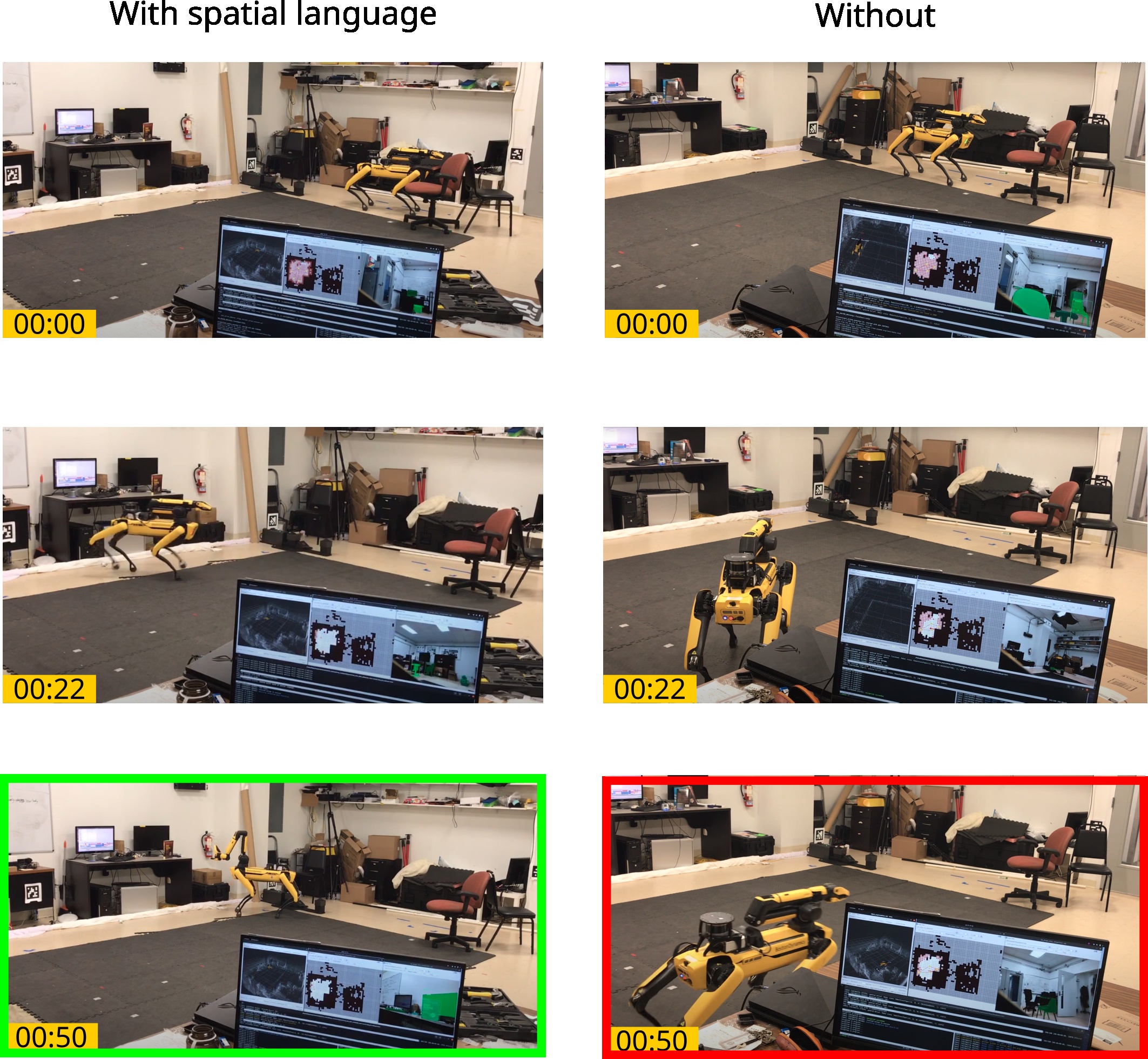}
  \caption{Demonstration of SLOOP on Spot, given the spatial language \emph{``the book is in front of the Monitor.''} The total time used for ``with spatial language'' includes the time to initially type in the language, while the baseline without language proceeds to search right off the bat. Nevertheless, SLOOP quickly narrows down the search space and the robot successfully finds the book.}
% The search path is overlayed on top of the initial belief formed by interpreting the given spatial language. The robot eventually finds the target. However, our system leads to more efficient search policies.}
  \label{fig:spot_sloop}
  \vspace{-0.2cm}
\end{figure}

\newpage
\section{Summary}
\label{sec:conclusion}
This chapter first presents a formalism for integrating spatial language into the POMDP belief state as an observation, then a convolutional neural network for FoR prediction shown to generalize well to new cities. Simulation experiments show that our system significantly improves object search efficiency and effectiveness in city-scale domains
through understanding spatial language. For future work, we plan to investigate compositionality in spatial language for partially observable domains.

\section{Appendix}
\subsection{Derivation of Spatial Language Observation Model}

Here we provide the derivation for Eq.~(\ref{eq:sprls}).
Using the definition of $o_{\sprl_i}$, % $\Pr(o_{\sprl_i} | s_i', \mathcal{M})$,
\begin{align}
\Pr(o_{\sprl_i} | s_i', \mathcal{M})&=\Pr(f_i, r_1,\gamma_1,\cdots,r_L,\gamma_L | s_i', \mathcal{M})\\
&=\Pr(r_1,\cdots,r_L | \gamma_1,\cdots,\gamma_L,f_i, s_i', \mathcal{M})
\times\Pr(\gamma_1,\cdots,\gamma_L, f_i| s_i', \mathcal{M})\label{eq:por1}\\
&=\frac{1}{Z}\prod_{j=1}^L\Pr(r_j | \gamma_j,f_i, s_i', \mathcal{M})\label{eq:por2}
\end{align}
The first term in (\ref{eq:por1}) is factored by individual spatial relations, because each $r_j$ is a predicate that, by definition, involves only the figure $f_i$ and the ground $\gamma_j$, therefore it is conditionally independent of all other relations and grounds given $f_i$, its location $s_i'$, and the landmark $\gamma_j$ and its features contained in $\mathcal{M}$. Because the robot has no prior knowledge regarding the human observer's language use,\footnote{In general, the human observer may produce spatial language that mentions arbitrary landmarks and figures whether they make sense or not.}  the second term in (\ref{eq:por1}) is uniform with probability $1/Z$ where $Z$ is the constant size of the support for $\gamma_1,\cdots,\gamma_L,f_i$. This constant can be canceled out during POMDP belief update upon receiving the spatial language observation, using the belief update formula in  Section~\ref{sec:pomdp:formal}. We omit this constant in Eq.~(\ref{eq:sprls}).

For predicates such as \emph{behind}, its truth value depends on the relative FoR imposed by the human observer who knows the target location. Denote the FoR vector corresponding to $r_{j}$ as a random variable $\Psi_{j}$ that distributes according to the indicator function $\Pr(\Psi_{j}=\psi_{j})=\mathbbm{1}(\psi_{j}=\psi_{j}^*)$, where $\psi_{j}^*$ is the one imposed by the human. Then regarding $\Pr(r_{j}|\gamma_j,f_i,s_i',\mathcal{M})$, we can sum out $\Psi_j$:
\begin{align}
  \Pr(r_{j}|\gamma_j,f_i,s_i',\mathcal{M})&=\frac{\sum_{\psi_j}\Pr(r_{j},\gamma_j,f_i,s_i',\mathcal{M}|\psi_j)\Pr(\psi_j)}{\Pr(\gamma_j,f_i,s_i',\mathcal{M})}\\
  \intertext{Since the distribution for $\Psi_j$ is an indicator function,}
                                          &=\frac{\Pr(r_{j},\gamma_j,f_i,s_i',\mathcal{M}|\psi_j^*)}{\Pr(\gamma_j,f_i,s_i',\mathcal{M})}\label{eq:por3}\\
  \intertext{By the law of total probability,}
% &=\frac{\Pr(r_j|\gamma_j,\psi_j^*,f_i,s_i',\mathcal{M})\Pr(\gamma_j,f_i,s_i',\mathcal{M}|\psi_j^*)}{\Pr(\gamma_j,f_i,s_i',\mathcal{M})}\\
                                          &=\frac{\Pr(r_j|\gamma_j,\psi_j^*,f_i,s_i',\mathcal{M})\Pr(\gamma_j,f_i|s_i',\mathcal{M},\psi_j^*)\Pr(s_i',\mathcal{M}|\psi_j^*)}{\Pr(\gamma_j,f_i|s_i',\mathcal{M})\Pr(s_i',\mathcal{M})}\\
  \intertext{Using the fact that $s_i',\mathcal{M}$ is independent of $\psi_j^*$,}
                                          &=\frac{\Pr(r_j|\gamma_j,\psi_j^*,f_i,s_i',\mathcal{M})\Pr(\gamma_j,f_i|s_i',\mathcal{M},\psi_j^*)\Pr(s_i',\mathcal{M})}{\Pr(\gamma_j,f_i|s_i',\mathcal{M})\Pr(s_i',\mathcal{M})}\\
  \intertext{Canceling out $\Pr(s_i',\mathcal{M})$,}
                                          &=\frac{\Pr(r_j|\gamma_j,\psi_j^*,f_i,s_i',\mathcal{M})\Pr(\gamma_j,f_i|s_i',\mathcal{M},\psi_j^*)}{\Pr(\gamma_j,f_i|s_i',\mathcal{M})}\\
  \intertext{Similar to (\ref{eq:por1})-(\ref{eq:por2}), $\Pr(\gamma_j,f_i|s_i',\mathcal{M},\psi_j^*)$ and $\Pr(\gamma_j,f_i|s_i',\mathcal{M})$ are uniform with the same support. Canceling them out,}
% &=\frac{\Pr(r_j|\gamma_j,\psi_j^*,f_i,s_i',\mathcal{M})\frac{1}{K}}{\Pr(\gamma_j,f_i|s_i',\mathcal{M})}\\
&=\Pr(r_j | \gamma_j,\psi_j^*,f_i, s_i', \mathcal{M})\label{eq:por4}
\end{align}
% (\ref{eq:por3}) becomes (\ref{eq:por4}) by law of total probability and then canceling the numerator and denominator terms, using the fact that $s_i',\mathcal{M}$ is independent of $\psi_j^*$
% %because the target location and the map are not influenced by the FoR imposed by the human
% , and that no prior knowledge of ground $\gamma_j$ and figure $f_i$ used in the spatial language is given, similar to the justification from (\ref{eq:por1}) to (\ref{eq:por2}).

\subsection{Data Collection Details}
% In this section we expand on the details of our data collection process.

%%%%%%%%%%%%%%%%%%%%%%%%%%%%%%%%%%%%%%%%%%%%%%%%%%%%%%%%%%%%%

\subsubsection{Amazon Mechanical Turk Questionnaire}
As described in Section~\ref{sec:data_collect}, we collect a variety of spatial language descriptions from five cities. We randomly generate 10 unique configurations of two object locations for every pair of object symbols from \{\texttt{RedBike}, \texttt{RedCar}, \texttt{RedCar}\}. Each configuration is used to obtain language descriptions from up to eleven different workers.
By showing a picture of the objects placed on the map screenshot, we prompt AMT workers to describe the location of the target objects. We first show an example task as shown in Fig~\ref{fig:amt-questionnaire} (top). Then we prompt them with the actual task and a text box to submit their response, as shown in Fig~\ref{fig:amt-questionnaire} (bottom). Note that we specify that the robot does not know where the target objects are, but that it knows the buildings, streets and other landmarks available on the map. We encourage them to use the information available on the map in their description.

\begin{figure}[H]
\begin{subfigure}{\textwidth}
  \centering
  \includegraphics[width=0.75\linewidth]{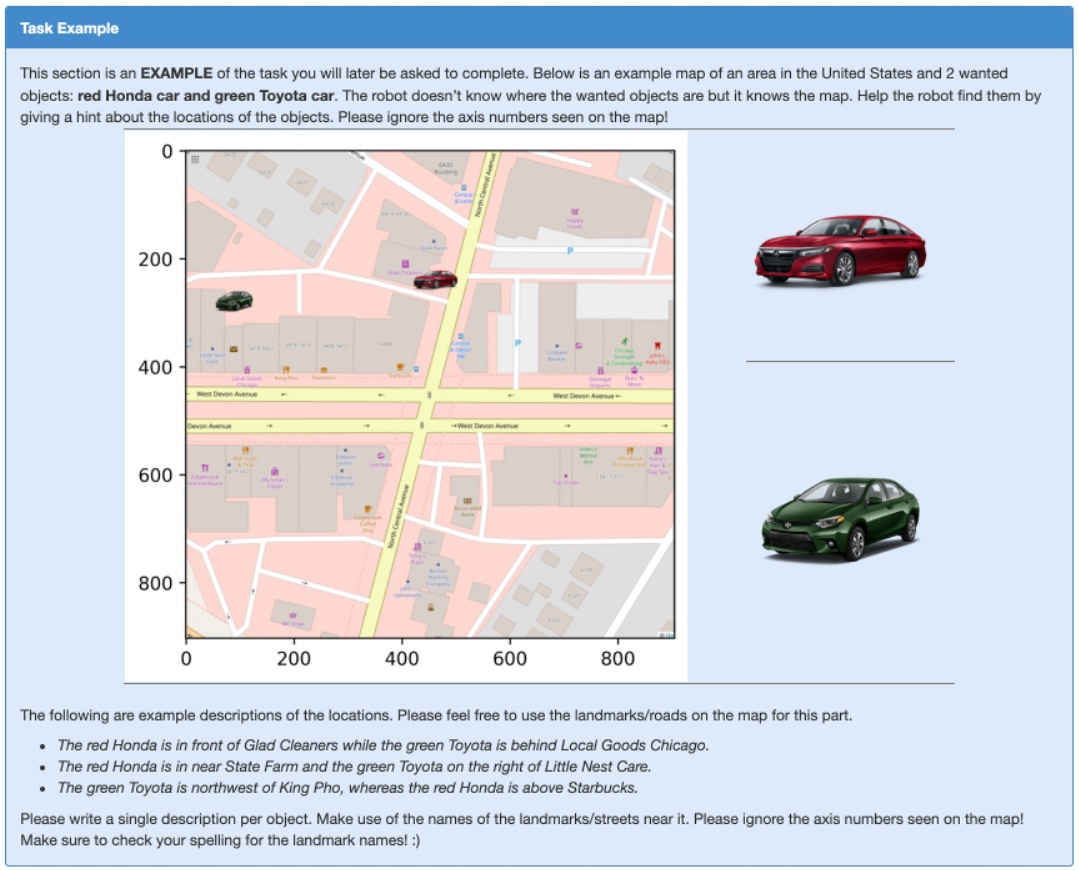}
\end{subfigure}
\begin{subfigure}{\textwidth}
  \centering
  \includegraphics[width=0.75\linewidth]{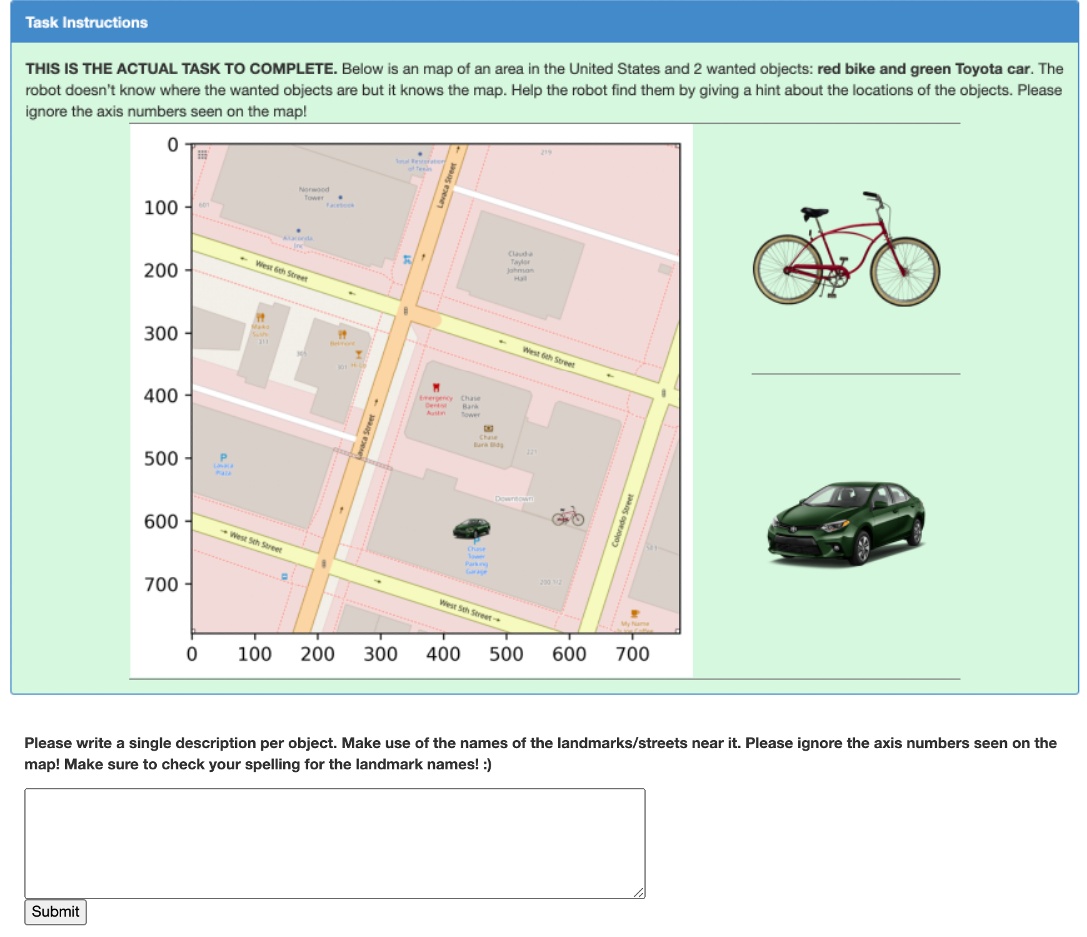}
\end{subfigure}
\caption{AMT Questionnaire Screenshot. Top: an example task shown to the AMT workers prior to the actual task that doesn't vary from prompt to prompt. Bottom: the actual task shown to the AMT workers. The objects and the locations change in every prompt and are all unique.}
\label{fig:amt-questionnaire}
\end{figure}
%%%%%%%%%%%%%%%%%%%%%%%%%%%%%%%%%%%%%%%%%%%%%%%%%%%%%%%%%%%%%

\subsection{Distribution of Collected Predicates}
Each description is parsed using our pipeline described in Section~\ref{sec:spatial_info_extraction}. 1,521 out of 1,650 gathered descriptions were successfully parsed; meaning at least one spatial relation was extracted from the sentence. The distribution of all spatial predicates are shown in Fig~\ref{fig:pred-dist}. Note that we include the word ``is'' on the list since it often appears in  ``is in'' or ``is at'', yet the parser sometimes skip the word after it due to an artifact. We excluded language descriptions parsed with such artifact from the ones used in the end-to-end object search evaluation.

%%%%%%%%%%%%%%%%%%%%%%%%%%%%%%%%%%%%%%%%%%%%%%%%%%%%%%%%%%%%%

\begin{figure}[H]
  \centering
\makebox[\textwidth][c]{\includegraphics[width=1.15\textwidth]{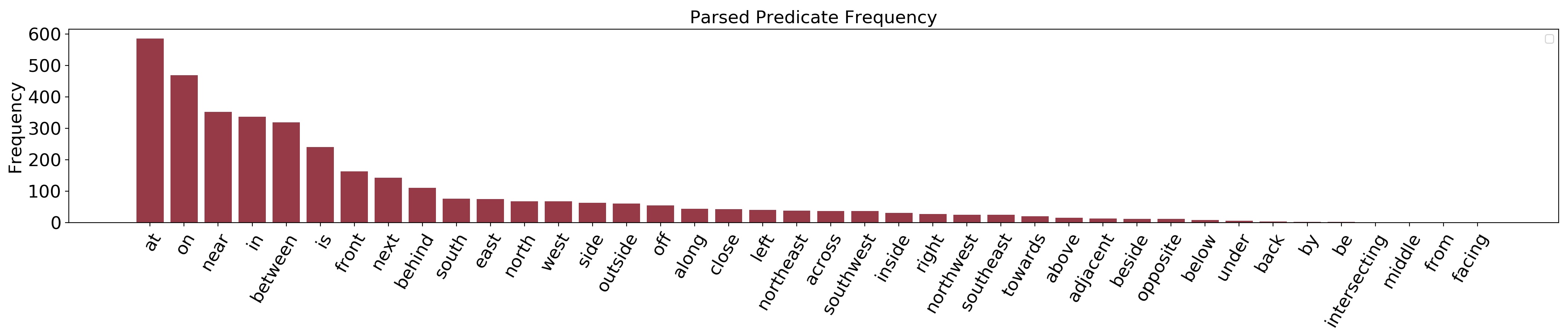}}
\caption{Distribution of collected parsable predicates sorted from most frequent to least.}
\label{fig:pred-dist}
\end{figure}
%%%%%%%%%%%%%%%%%%%%%%%%%%%%%%%%%%%%%%%%%%%%%%%%%%%%%%%%%%%%%

% TODO: table
\begin{table}[H]
\centering
\begin{tabular}{rr}
\toprule
Spatial preposition & \# of FoR annotations \\
\midrule
front            & 121              \\
behind           & 54               \\
left             & 51               \\
right            & 47               \\
\bottomrule
\end{tabular}
\caption{Number of FoR annotations per spatial relation.}
\label{tab:number-for-annot}
\end{table}

%%%%%%%%%%%%%%%%%%%%%%%%%%%%%%%%%%%%%%%%%%%%%%%%%%%%%%%%%%%%%
\newpage
\subsection{FoR Annotation}
To collect the FoR annotation, we create our custom annotation tool. The FoR consists of a front (purple) and right (green) vector that show how the speaker considers the direction of the ground according to their given spatial description. The interface (Fig~\ref{fig:annotator-gui}) works as follows: (1) The annotator first clicks the ``Annotate'' button. (2) The interface prompts the annotator the language phrase corresponding to a spatial relation to be annotated (e.g. ``\texttt{RedBike} \texttt{is in front of} \texttt{EmpireApartments}''), which is composed using the parsed $(f,r,\gamma)$ tuple. (3) to annotate an FoR, the annotator clicks on the map as the origin of the FoR, and then clicks on another point on the map as the end point of the \emph{front} vector. The vector for \emph{right} is automatically computed to be 90 degrees clockwise with respect to the \emph{front} vector. (4) After annotating one FoR, the annotator clicks ``Next'' to move on, and the process starts over again from step (2). In total we have 273 FoR annotations. Table~\ref{tab:number-for-annot} shows the amount of annotation per spatial relation. You can download the dataset by visiting the website linked in the footnote on the first page.

%%%%%%%%%%%%%%%%%%%%%%%%%%%%%%%%%%%%%%%%%%%%%%%%%%%%%%%%%%%%%

\begin{figure}[t]
% \begin{subfigure}{\textwidth}
%   \centering
%   \includegraphics[width=0.8\linewidth]{figs/annotator-revealed.png}
%   \vspace{0.5cm}
% \end{subfigure}
% \begin{subfigure}{\textwidth}
  \centering
  \includegraphics[width=0.67\linewidth]{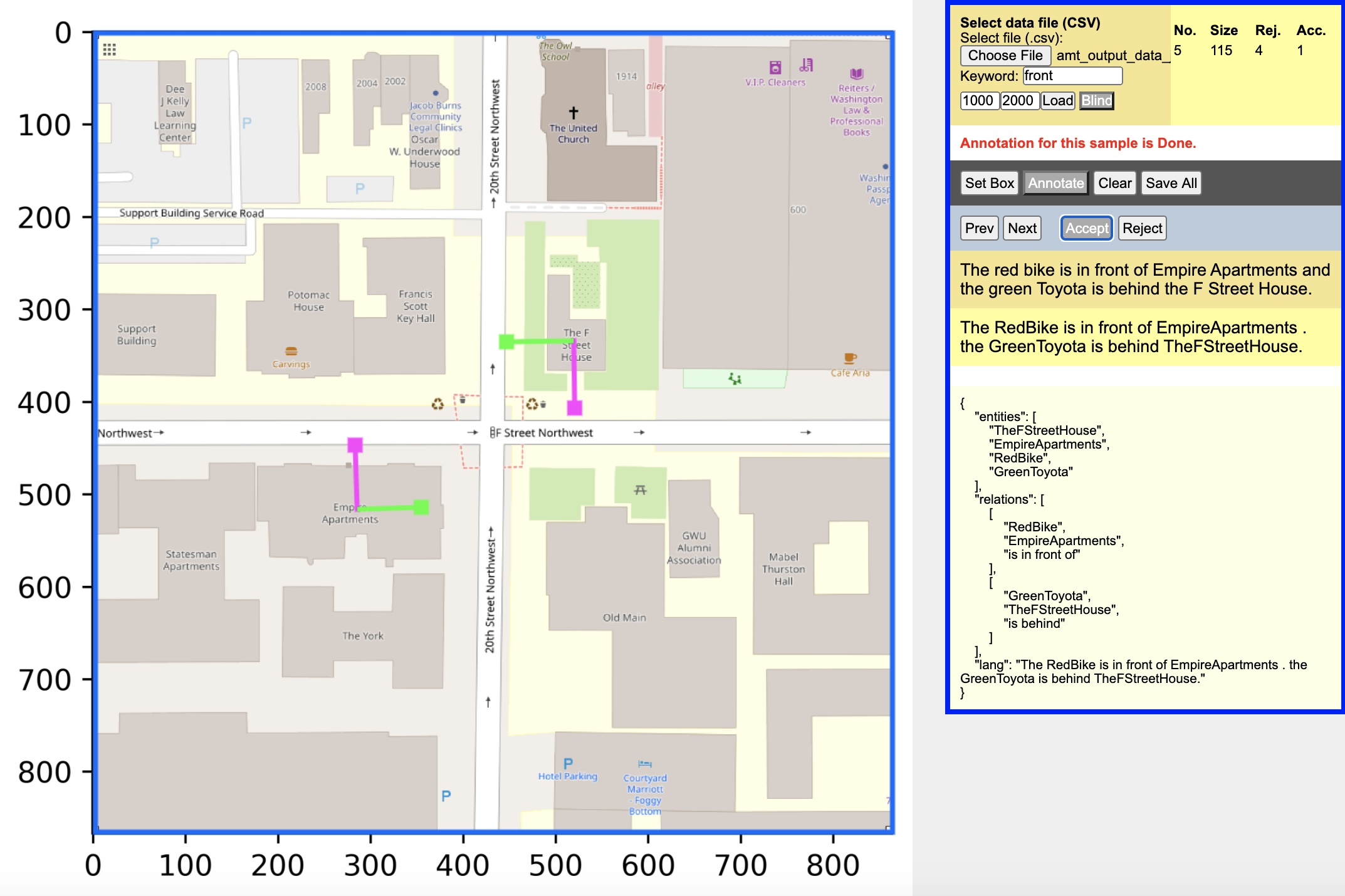}
% \end{subfigure}
\caption{The FoR annotator GUI interface. The FoR consists of a front (purple) and right (green) vector. The target objects are not shown, and the annotator only has access to the spatial language description, and the map image. This mimics the situation faced by the robot in our task.}
\label{fig:annotator-gui}
\end{figure}

\chapter{Dialogue Object Search: Preliminary Work}
\label{ch:dialog}
%%%%%%%%%%%%% Dialogue Object Search %%%%%%%%%%
$\qquad$\vspace{-1.1in}

  \begin{figure}[H]
\centering
\includegraphics[width=\linewidth]{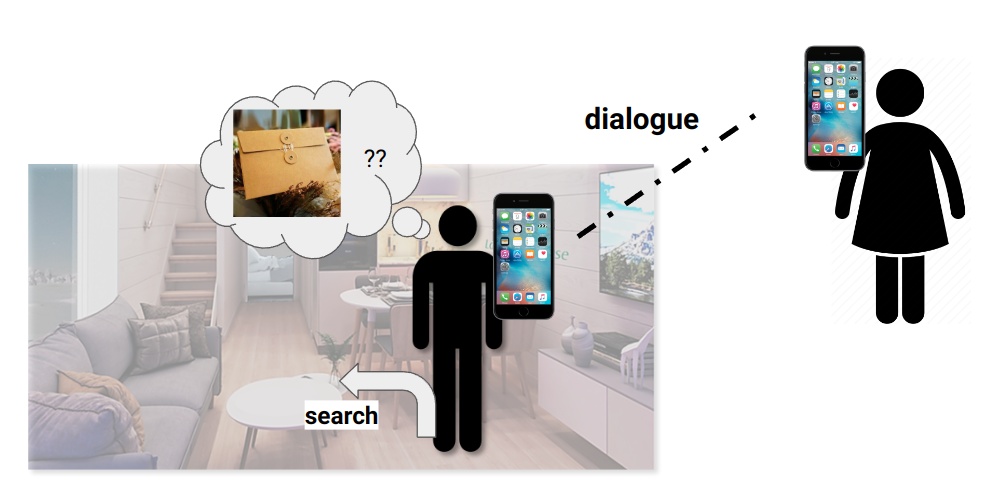}
\caption{The motivating scenario for dialogue object search. Imagine one person is searching for a file at home. Not knowing where it could be, he calls a family member who knows more. Then, he is able to verbally engage in the dialogue over the phone while performing search physically. We view dialogue object search as one route towards realizing such ability to decide what to say and how to act simultaneously, fundamental for future collaborative robots.}
\label{fig:mos3d}
\end{figure}

\section{Motivation}
  \lettrine{W}{e} envision robots that can collaborate and communicate seamlessly with
  humans.  It is necessary for such robots to decide both what to say and how to
  act, while interacting with humans.  To this end, we introduce a new task,
  \emph{dialogue object search}: A robot is tasked to search for a target object
  (e.g. fork) in a human environment (e.g., kitchen), while engaging in a ``video
  call'' with a remote human who has additional but inexact knowledge about the
  target's location. That is, the robot conducts speech-based dialogue with the human,
  while sharing the image from its mounted camera. This task is challenging at
  multiple levels, from data collection, algorithm and system development, to
  evaluation. Despite these challenges, we believe such a task blocks the path
  towards more intelligent and collaborative robots.  In this extended abstract,
  we motivate and introduce the dialogue object search task and analyze examples
  collected from a pilot study. We then discuss our next steps and conclude with
  several challenges on which we hope to receive feedback.

  Humans can act in the physical world (such as walking, looking, or opening a cabinet) while having a conversation with others. As robots enter homes and care centers, we envision them to have such capability as well when collaborating and communicating with humans. To achieve this, robots must decide both what to say and how to act towards a goal. This involves combining task-oriented dialogue systems with decision making under uncertainty for embodied agents. Traditionally, dialogue systems have involved users interacting with a virtual agent for tasks such as technical support \cite{mouromtsev2015spoken}, personal assistance (e.g., Siri) and booking reservations~\cite{wen2016network,wei2018airdialogue}. While recent works have proposed datasets that combine dialogue and dynamic, embodied decision making \cite{de2018talk,thomason2020vision}, the investigated problems over these datasets are limited to prediction tasks that bypass the challenges of evaluating a conversational embodied agent. For example, the Navigation from Dialog History Task \cite{thomason2020vision} asks the agent to predict the next navigation action, given a history of dialogue and past navigation actions. The tourist localization task \cite{de2018talk} asks the system to predict a location given a language description.

Our goal is to enable robots to naturally engage in a dialogue with a human
while completing a task autonomously. We believe a task that captures the
sequential nature of both the dialogue and physical decision making is necessary
for in-depth study towards this goal. We choose to focus on object search, a
useful and widely-studied problem~\cite{aydemir2013active,kollar2009utilizing,sloop-roman-2021,zheng2021multi}

\section{Contributions}
The main contribution is we introduce a new task called \emph{dialogue object search}.
% This design decision comes from an observation from our pilot study (Sec.~\ref{sec:pilot}) that the language and behavior generated by participants using text-based vs. speech-based dialogue is drastically different, and the latter is more natural. \jlnote{
From the pilot study (Sec.~\ref{sec:pilot}), we observed that participants produced language and behavior that are more natural using speech, because text-based dialogue requires users to decide whether to type or act at every step. Additionally, we summarize the set of intent types that are observed to succinctly capture the intents behind the unstructured utterances in the dialogue object search task.

\section{Dialogue Object Search}
A robot is tasked to search for a target object in a human
environment (e.g., kitchen) while engaging in an audio
dialogue with a remote human assistant, who possesses inexact prior knowledge
about the target object's location. In our pilot study, this is given in the form of a 2D scatter plot (Fig.~\ref{fig:timeline}). The robot
has a mounted RGB-D camera, and shares its view with the human assistant. We
assume the robot and the human assistant have access to two different sequences
of RGB-D images of the scene, which represent their prior experiences of living in
that environment. Target objects are excluded from these images.
% \kznote{\textbf{point of discussion:} This poses a more realistic challenge to
%   the robot. Having access to RGB-D images of the environment means the robot
%   can build a (semantic) map beforehand, and then localize itself. Otherwise we
%   could just assume the robot can localize itself and have access to a map. It's
%   not a huge assumption.}
The robot must decide what to say and how to act, in
order to efficiently find the target object while naturally interacting and
collaborating with the human assistant.
% have access to a map of the environment's layout, which contains
% the locations of large furniture, such as tables and cabinets. The .

% a human assistant is available for
% the robot to engage in an audio dialogue.

% engaging in audio dialogue with a human assistant.
Our inspiration for the above setting comes from the following scenario
between two people living together (family or friends).  One person is searching
for something, such as a document or a key, but not sure where it is. They
decide to video call the other home member who is currently out of the house but may have a better idea.  They then engage in a dialogue while the first person conducts the search for the target object.
We envision that in the future, this could happen between a home assistant robot and a human user.

\afterpage{%
    \clearpage% Flush earlier floats (otherwise order might not be correct)
    \thispagestyle{empty}% empty page style (?)
    % \begin{landscape}% Landscape page
    \begin{sidewaysfigure}
    \centering
    \makebox[\textwidth][c]{\includegraphics[width=\linewidth]{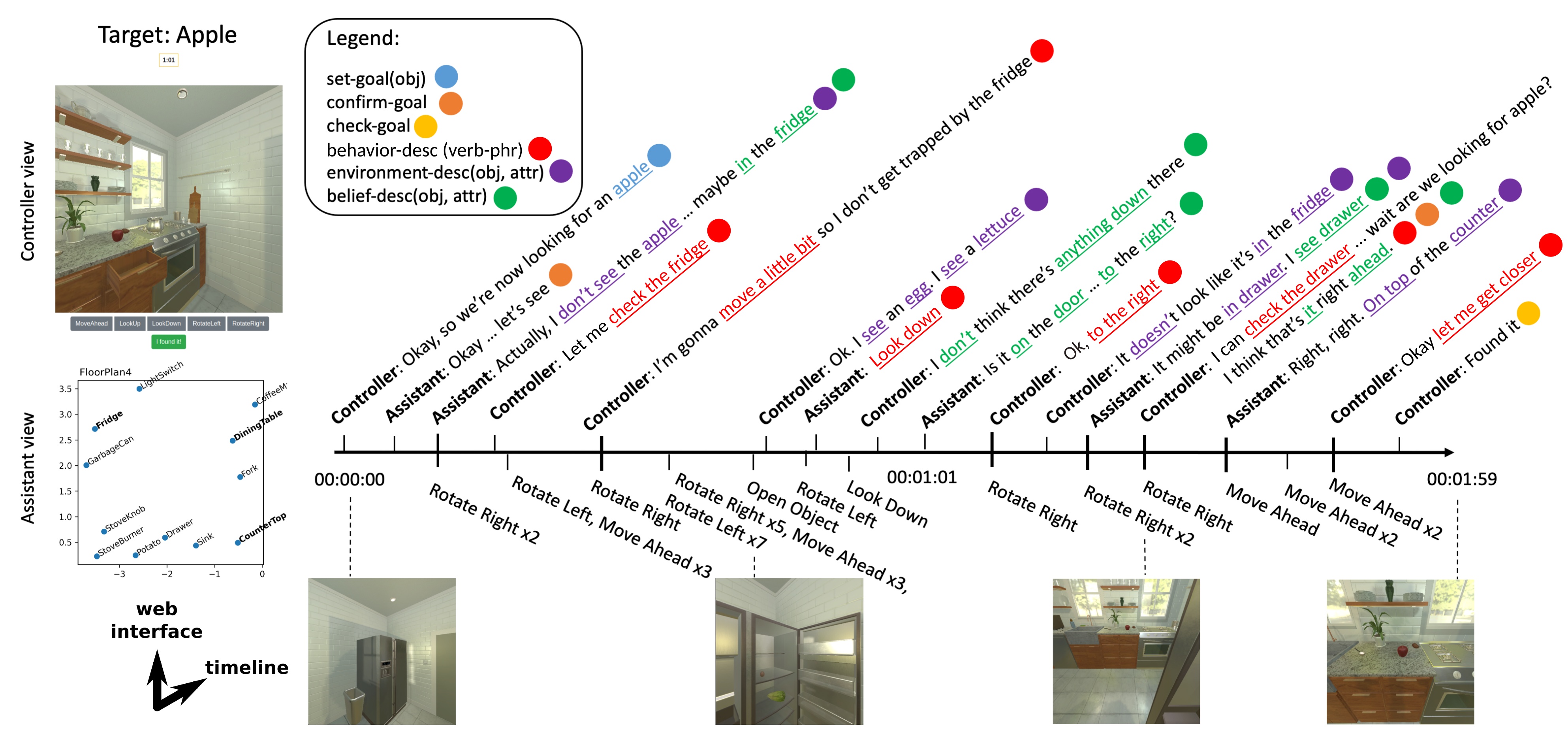}}
    \caption{We conducted a pilot study to understand desirable behavior for the dialogue object search task. Shown here is a screenshot of the web interface (left) and the dialogue and actions organized onto a timeline (right), for an object search trial where the target object is \texttt{Apple}. We classified the dialogue utterances into a preliminary set of parameterized intents, indicated by the colors.}
    \label{fig:timeline}
    % \vspace{-0.5cm}
    \end{sidewaysfigure}
% \end{landscape}
    \clearpage% Flush page
}
%%%%%%%%%%%%%%%%%%%%%%%%% %%%%%%%%%%%%%%%%%%%%%%%%%%%%%%%%%%%%%%
\section{Pilot Study}
\label{sec:pilot}
% Our setup in this pilot data collection trial that matches our problem setting
% What we did (we implemented a tool + collect (toy examples).
% analyze an example dialogue (talk about observations of what people say and do in these example trials, introduce our preliminary intent labels)
% What we learned: Limitations of this collection ...

To investigate the above task, we first attempted to understand how a human would behave if they are in the robot's position. We designed and conducted a pilot study among three pairs of people (authors' lab members) using AI2-THOR~\cite{kolve2017ai2} as the simulated home environments. In this study, we designate two roles according to the above problem setting. The \emph{Assistant} is the person assisting in the process as the robot searches for a given target object. The \emph{Controller} is the person who is taking on the role of the robot. Due to the pandemic, we used Zoom to record the audio and create transcripts of the dialogue. We implemented a web-based data collection tool where the \emph{Controller} controls the agent in AI2-THOR through the web interface, and the \emph{Assistant} has access to a 2D scatter plot of a subset of objects in the scene (Fig.~\ref{fig:timeline}). Each pair of participants are assigned three object search trials in one environment. They have 90 seconds to explore the environment (with target objects removed) and 180 seconds to complete each trial. In addition to dialogue audio and transcripts, we collected data about the scene per view, the action executed, and the agent's groundtruth pose as provided by the AI2-THOR framework. We considered a discrete action space of \{\emph{MoveAhead}(0.25m), \emph{RotateLeft}(45$\degree$), \emph{RotateRight}(45$\degree$), \emph{LookUp}(30$\degree$), \emph{LookDown}(30$\degree$), \emph{Open}, \emph{Close}\}.\footnote{We first experimented with a rotation angle of 90$\degree$ following \cite{gordon2018iqa,ye2021hierarchical}, but experienced sudden jumps that are unnatural as felt by the participants. Therefore, we switch to 45$\degree$, also used by some existing works \cite{wortsman2019learning,qiu2020learning}}.

Despite the small scale of our pilot dataset, we observed some interesting behaviors shared between trials. For example, at the beginning of the object search trials the \emph{Assistant} would specify the target object and the \emph{Controller} would confirm. Additionally, as the task progresses, both roles would describe behaviors, beliefs about the environment and location of objects, and visual observations. We codified these into a set of preliminary intent types; some examples are given in the figure above. Using this pilot dataset, we have started to explore the development of an autonomous agent (\emph{Controller}), both modular and end-to-end that can plan actions for this task.

As mentioned in the introduction, we experimented with both speech-based dialogue and text-based dialogue, using the recording and chat features of Zoom. With speech, participants typically engage in frequent back-and-forth, as the \emph{Controller} controls the agent. Such exchanges involve discussing, for example, the scene and possible target locations. Participants report that when using text, the \emph{Controller} must decide between controlling the agent in AI2-THOR versus typing in the chat. Consequently, they would try to search for the object themselves without interacting with the \emph{Assistant}, who, as a result, finds it difficult to tell if their input is being considered by the \emph{Controller}. This suggests collecting dialogue data through text is unnatural and misaligned with our goal.

\section{Discussion \& Next Steps}
Though truthful to the task, our pilot data collection procedure is currently not scalable. We plan to implement a system that can be deployed on the crowdsourcing platform Amazon Mechanical Turk (AMT), to pair up Turkers to participate in the task entirely through their web browsers for accessibility. AMT a powerful platform, yet not designed for multi-user tasks. Due to audio communication and running AI2-THOR servers, we face a more difficult situation than ~\citet{das2017visual} who had to implemented a live chatbot on AMT. We also need solutions to scalable and accurate transcription of the collected audio as well as intent labling. We seek suggestions for strategies to collect such data at scale. In terms of evaluation, we believe both experiment with simulated assistants and real human assistants are necessary. For the simulated assistant, we are considering an oracle agent that communicates using template-based language. The goal of this simulated agent is to facilitate efficient and repeatable evaluations during algorithm development for the embodied dialogue agent, which could be a long-term effort. Ultimately, the agent should be deployed to perform the task with real human subjects. We plan to consider objective metrics for both object search performance (e.g., success rate and discounted total return and dialogue quality~\cite{venkatesh2018evaluating}, and, eventually, subjective metrics such as naturalness~\cite{hung2009towards}. We believe finding solutions to scalable speech-based dialogue data collection for embodied tasks and plausible evaluation protocol are daunting, yet unavoidable challenges towards future collaborative robots.
% \footnote{Because we consider open/close actions, the SPL metric \cite{batra2020objectnav} widely used in the object-goal navigation task is not applicable.})

\vspace{3.0in}
\begin{center}
  THIS IS THE END OF THIS CHAPTER.
\end{center}

% \chapter{Beyond Basic Object Search}
% \input{chapters/openprob/openprob}

\chapter{Conclusion}
\label{ch:closing}

\lettrine{T}his thesis was motivated by the practical challenges for object search and the goal of making object search an off-the-shelf ability for robots. It argues for using POMDP to model object search and exploiting structure in the human world and human-robot interaction to achieve practical and effective object search systems.

We have provided a few examples in supporting this argument: octrees for 3D multi-object search, spatial correlations for searching for hard-to-detect objects, and structure in spatial language for searching with a hint. We were able to demonstrate our proposed algorithms in realistic simulators and on real robots. Notably, we built GenMOS, the first robot-independent, environment agnostic system for object search in 3D, and integrated it with three different robots in different environments. As a preliminary work, we also identified patterns in the intents behind utterances for dialogue object search.

We have also proposed taxonomies for three main aspects of object search studies: problem settings, solution methods, and the systems
and applied them in the literature review of more than 125 papers related to object search. % that the method is deployed in.

\paragraph{Future Work}
\noindent
Chapter~\ref{ch:overarching} has provided a generic POMDP model for object search, which has been the parent model for all the POMDP models we developed for specific problems. However, the problems studied in this thesis are still basic in the sense that they have involved static target objects and no environment interaction. Achieving generalized object search beyond the basic setting is the big open problem and ultimate pursuit in this field. Additionally, it is beneficial to study ways to learn models of the environment which can be used for object search planning. This is likely more useful for search involving environment interactions.

If the question is whether a generic POMDP for object search model in Chapter~\ref{ch:overarching} is extensible to handle the additional challenges, there is both potential to explore and foreseeable hurdles to worry. In terms of potential, the transition function in the POMDP can be easily extended to consider motion of target objects. Besides, the high-level $\LOOK$ action can be extended to be an abstraction over not only navigation-based search actions, but also manipulation-based, which all serve the purpose of perceiving some part of the environment.  However, trouble arises as the consequences of manipulation to the environment may become intractable to model. For example, suppose that a book might be underneath a bed. In this case, a good way to search underneath the bed is not to bend over and look, but to find a tool, such as a stick, to sweep underneath the bed. A robot that can reason at this level faces the challenge of considering consequences of sweeping in the environment. How to make a robot that can perform search while being aware of the consequence of its own actions to the environment? This is a valuable direction of future work.

\vspace{3.0in}
\begin{center}
  THIS IS THE END OF THIS CHAPTER.
\end{center}

\setcounter{chapter}{0}
\renewcommand{\thechapter}{\Alph{chapter}}
\renewcommand{\chaptertitlename}{\textsc{Appendix}}
\renewcommand{\thesection}{\arabic{section}}
\addtocontents{toc}{\vspace{1\baselineskip}}  % This adds an empty line in TOC
\addcontentsline{toc}{chapter}{Appendix}

% \graphicspath{{../papers/pomdppy/}}
\chapter{The \texttt{pomdp\_py} library}
\label{ch:pomdp-py}

 In this appendix, we present \texttt{pomdp\_py}, a general purpose Partially Observable Markov Decision Process (POMDP) library written in Python and Cython. Existing POMDP libraries often hinder accessibility and efficient prototyping due to the underlying programming language or interfaces, and require extra complexity in software toolchain to integrate with robotics systems. \texttt{pomdp\_py} features simple and comprehensive interfaces capable of describing large discrete or continuous (PO)MDP problems. Here, we summarize the design principles and describe in detail the programming model and interfaces in \texttt{pomdp\_py}. We also describe intuitive integration of this library with ROS (Robot Operating System), which enabled our torso-actuated robot to perform object search in 3D.  Finally, we note directions to improve and extend this library for POMDP planning and beyond.

\section{Introduction}

Partially Observable Markov Decision Processes (POMDP) are a sequential decision-making framework suitable to model many robotics problems, from localization and mapping \cite{ocana2005indoor} to human-robot interaction \cite{whitney2017reducing}. Early efforts in developing tools for POMDPs attempt to separate solvers from domain description by creating specialized file formats to specify POMDPs \cite{pomdpfileformat,pomdpxformat}, which are not designed for large and complex problems. Among libraries under active development, Approximate POMDP Planning Toolkit (APPL) \cite{somani2013despot} and AI-Toolbox \cite{aitoolbox} are implemented in C++ and contain numerous solvers. However, the learning curve for these libraries is steep as C++ is less accessible to current researchers in general compared to Python \cite{virtanen2020scipy}. POMDPs.jl \cite{egorov2017pomdps} is a POMDP library with a suite of solvers and domains, written in Julia. Though promising, Julia has yet to achieve a wide recognition and creates language barrier for many researchers. POMDPy \cite{emami2015pomdpy} is implemented purely in Python. Yet with an original focus on POMCP implementation, it assumes a blackbox world model in its POMDP interface, limiting its extensibility. Finally, a promising toolchain is to use Relational Dynamic Influence Diagram Language (RDDL) \cite{sanner2010relational} to describe factored POMDPs and solve them via ROSPlan \cite{cashmore2015rosplan}, recently demonstrated for object fetching \cite{canal2019probabilistic}. Nevertheless, using this set of tools adds overhead of using a classical fluent-based planning paradigm, which is not required to describe and solve POMDPs in general.

This leads to our belief that there lacks a POMDP library with simple interfaces that brings together both accessibility and performance. We address this demand by presenting \texttt{pomdp\_py}, a framework to build and solve POMDP problems written in Python and Cython \cite{behnel2011cython}. It features simple and comprehensive interfaces to describe POMDP or MDP problems, and can be integrated with ROS \cite{quigley2009ros} intuitively through \texttt{rospy}. In the rest of this appendix, we first review POMDPs, then illustrate the design principles and key features of \texttt{pomdp\_py}, including integration with ROS. Finally, we note directions to improve and extend this library, in hope of cultivating an open-source community for POMDP-related research and development. The documentation of \texttt{pomdp\_py} is available at: \url{https://h2r.github.io/pomdp-py/html/}. Tutorials on example domains can be found in the documentation.
% Figure~\ref{fig:firstfig} shows several different domains that are implemented on top of the \texttt{pomdp\_py} framework.
This library is currently actively developed as we continue our POMDP-related research.

\section{POMDPs}
POMDPs \cite{kaelbling1998planning} model sequential decision making problems where the agent must act under partial observability of the environment state. Refer to Background (Section~\ref{sec:pomdp:formal}, page~\pageref{sec:pomdp:formal}) for a formal introduction of POMDPs.

\textbf{Solvers.} Most recent POMDP solvers are \emph{anytime algorithms}~\cite{zilberstein1996using,ross2008online}, due to the intractable computation required to solve POMDPs exactly \cite{madani1999undecidability}.
%This means that the solver can be stopped at any time and an approximate solution would be produced. The quality of the solution improves as the time budget increases.
There are currently two major camps of anytime solvers, point-based methods \cite{kurniawati2008sarsop,shani2013survey} which approximates the belief space by a set of reachable $\alpha$-vectors, and Monte-Carlo tree search-based methods \cite{silver2010monte,somani2013despot} that explores a subset of future action-observation sequences.

Currently, \texttt{pomdp\_py} contains an implementation of POMCP and PO-UCT \cite{silver2010monte}, as well as a naive exact value iteration algorithm without pruning \cite{kaelbling1998planning}. The interfaces of the library support implementation of other algorithms; We hope to cultivate a community to implement more solvers or create bridges between \texttt{pomdp\_py} and other libraries.

\textbf{Belief representation} The partial observability of environment state implies that the agent has to maintain a posterior distribution over possible states \cite{thrun2005probabilistic}. The agent should update this belief distribution through new actions and observations. A tabular belief representation requires nested iterations over the state space to update the belief, which is computationally intractable in large domains. Particle belief representation is a simple and scalable belief representation which is updated through matching simulated and real observations exactly \cite{silver2010monte}. Different schemes of weighted particles have been proposed to handle large or continuous observation spaces where exact matching results in particle depletion \cite{sunberg2018online,garg2019despot}.

\texttt{pomdp\_py} does not commit to any specific belief representation. It provides implementations for basic belief representations and update algorithms, including tabular, particles, and multi-variate Gaussians, but more importantly  allows the user to create their own new or problem-specific representation, according to the interface of a generative probability distribution.

\section{Design Philosphy}

\label{sec:design}
 Our goal is to design a framework that allows simple and intuitive ways of defining POMDPs at scale for both discrete and continuous domains, as well as solving them either through planning or through reinforcement learning. In addition, we implement this framework in Python and Cython to improve accessibility and prototyping efficiency without losing orders of magnitude in performance \cite{behnel2011cython,smith2015cython}. %Having a framework of such clarifies POMDP problems and enables modeling a wide range of robotics problems (and beyond), from localization to human-robot interaction.
% We then implement this framework in Python, a popular language behind numerous advances in artificial intelligence nowadays. We hope that this makes POMDP-related research or projects accessible to more people, and fosters a community where experiences are shared.
We summarize the design principles behind \texttt{pomdp\_py} below:
 \begin{itemize}
     \item Fundamentally, we view the POMDP scenario as the interaction between an \emph{agent} and the \emph{environment}, through a few important generative probability distributions ($\pi$, $T,O,R$ or blackbox model $G$).
     \item The agent and the environment may carry different models to support learning, since for real-world problems especially in robotics, the agent generally does not know the true transition or reward models underlying the environment, and only acts based on a simplified or estimated model.
     \item The POMDP domain could be very large or continuous, thus explicit enumeration of elements in the spaces should be optional.
     \item The representation of belief distribution is decided by the user and can be customized, as long as it follows the interface of a generative distribution.
     \item Models can be reused across different POMDP problems. Extensions of the POMDP framework to, for example, decentralized POMDPs, should also be possible by building upon existing interfaces.
 \end{itemize}

 \section{Programming Model and Features}
 The basis of \texttt{pomdp\_py} is a set of simple interfaces that collectively form a framework for building and solving POMDPs.
 % Figure~\ref{fig:framework} illustrates some of the key components of the framework and the control flow.

 When defining a POMDP, one first defines the \emph{domain} by implementing the \texttt{State}, \texttt{Action}, \texttt{Observation} interfaces. The only required functions for each interface are \texttt{\_\_eq\_\_} and \texttt{\_\_hash\_\_}. For example, the interface for \texttt{State} is simply\footnote{Note that the code snippets here are modified or shortened slightly for display purposes. Please refer to the code on github: \url{https://github.com/h2r/pomdp-py/}}:
 \begin{verbatim}
class State:
   def __eq__(self, other):
       raise NotImplementedError
   def __hash__(self):
       raise NotImplementedError
 \end{verbatim}
 Next, one defines the \emph{models} by implementing the interfaces \texttt{TransitionModel}, \texttt{ObservationModel}, etc. Note that one may define a different transition and reward model for the agent than the environment (e.g. for learning). One also defines a \texttt{PolicyModel} which (1) determines the action space at a given history or state, and (2) samples an action from this space according to some probability distribution.  Implementing these models involves implementing the \texttt{probability}, \texttt{sample} and \texttt{argmax} functions. For example, the interface for \texttt{ObservationModel}, modeling $O(s',a,o)=\Pr(o|s',a)$, is:
 % https://tex.stackexchange.com/questions/63729/how-to-use-mathematical-symbols-in-listing/63731
 %[mathescape=true]
 \begin{verbatim}
class ObservationModel:
   def probability(self, observation, next_state, action):
       """Returns the probability Pr(o|s',a)."""
       raise NotImplementedError
   def sample(self, next_state, action):
       """Returns a sample o ~ Pr(o|s',a)."""
       raise NotImplementedError
   def argmax(self, next_state, action):
       """Returns o* = argmax_o Pr(o|s',a)."""
       raise NotImplementedError
   def get_all_observations(self, *args, **kwargs):
       """Returns a set of all possible
       observations, if feasible."""
       raise NotImplementedError
 \end{verbatim}
 It is up to the user to choose which subset of these functions to implement, depending on the domain. These interfaces aim to remind users the essence of models in POMDPs.

\begin{figure}[t]
  \centering
  \makebox[\textwidth][c]{\includegraphics[width=1.18\textwidth,draft=false]{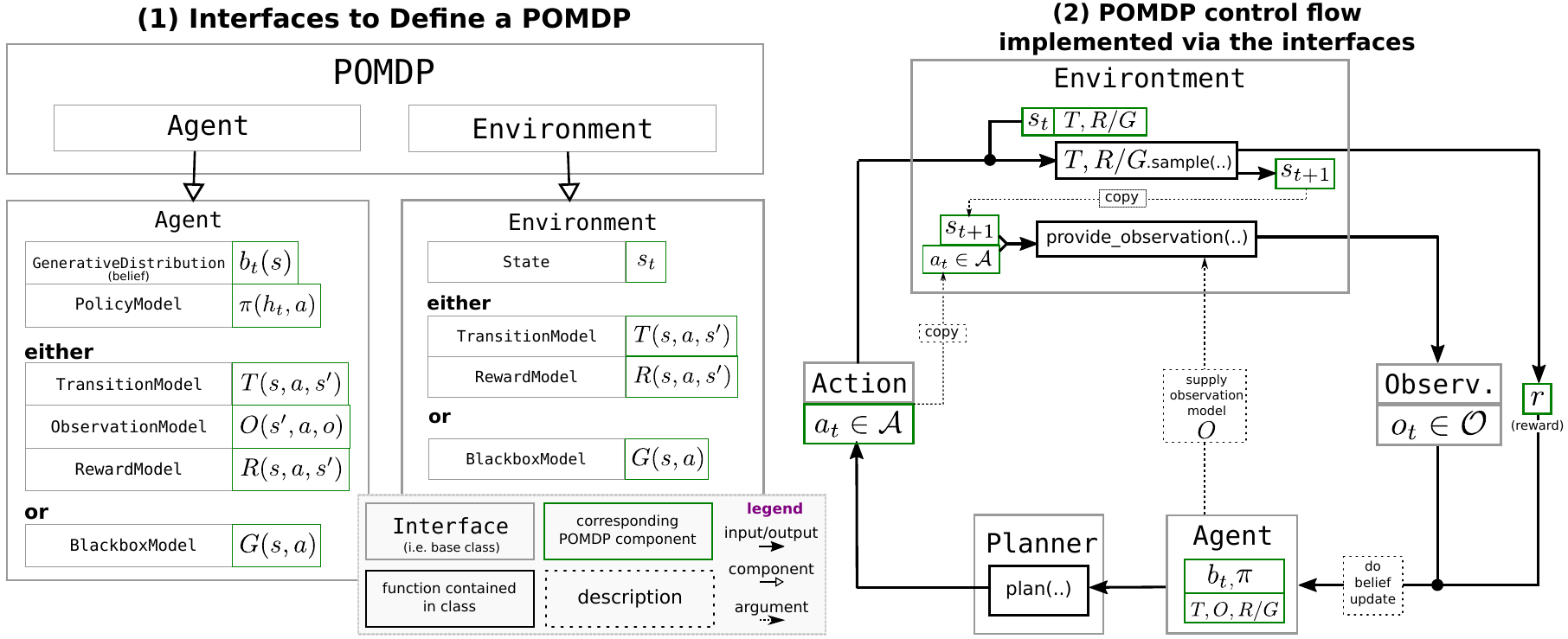}}
    \caption{(1) Core Interfaces in the \texttt{pomdp\_py} framework; (2) POMDP control flow implemented through interaction between the core interfaces.}
    \label{fig:framework}
\end{figure}

 To instantiate a POMDP, one provides parameters for the models, the initial state of the environment, and the initial belief of the agent. For the Tiger problem\footnote{\url{https://h2r.github.io/pomdp-py/html/examples.tiger.html}} \cite{kaelbling1998planning}, for example,
 \begin{verbatim}
 s0 = random.choice(list(TigerProblem.STATES))
 b0 = pomdp_py.Histogram({State("tiger-left"): 0.5,
         State("tiger-right"): 0.5})
 tiger_problem = TigerProblem(..., s0, b0)
 \end{verbatim}
 Here, \texttt{TigerProblem} is a \texttt{POMDP} whose constructor takes care of initializing the \texttt{Agent} and \texttt{Environment} objects, and is instantiated by parameters (omitted), initial state and belief. Note that it is entirely optional to explicitly define a problem class such as \texttt{TigerProblem} in order to program the POMDP control flow, discussed below.

 To solve a POMDP with \texttt{pomdp\_py}, here is the control flow one should implement that contains the basic steps:
\begin{enumerate}
    \item Create a planner (\texttt{Planner}), i.e. a POMDP solver.
    \item Agent plans an action $a\in\Aspace$ through the planner.
    \item Environment state transitions $s_t\rightarrow s_{t+1}$ according to its transition model.
    \item Agent receives an observation $o_t$ and reward $r_t$ from the environment.
    \item Agent updates history and belief. $h_t,b_t\rightarrow h_{t+1},b_{t+1}$, where $h_{t+1}=h_{t}(a_t,o_t)$.
    \item Unless \emph{termination condition} is true, repeat steps 2-5.
\end{enumerate}
The \texttt{Planner} interface is as follows. The planner may be updated given a real action and a real observation, which is necessary for MCTS-based solvers.
\begin{verbatim}
class Planner:
    def plan(self, agent):
        """The agent carries the information:
        Bt, ht, O,T,R/G, pi, needed for planning"""
        raise NotImplementedError
    def update(self, agent, action, observation):
        """Updates the planner based on real action
        and observation. Updates the agent belief
        accordingly if necessary. """
        pass
\end{verbatim}

\noindent\textbf{Code Organization.}  In a more complicated problem such as the Light-Dark domain \cite{platt2010belief} or Multi-Object Search with fan-shaped sensors \cite{wandzel2019multi}, it may be tricky to organize the code base and be consistent across different problems. Below we provide a recommendation of the package structure to use \texttt{pomdp\_py} to guide the development and facilitate code sharing:
\lstset{
  frame=tb,
  language=C,
  basicstyle={\ttm},
  keywordstyle={\ttb\color{deepblue}},
  commentstyle=\color{deepgreen}
}
\begin{verbatim}
 - domain/
    - state.py        // State
    - action.py       // Action
    - observation.py  // Observation
    - ...
 - models/
    - transition_model.py   // TransitionModel
    - observation_model.py  // ObservationModel
    - reward_model.py       // RewardModel
    - policy_model.py       // PolicyModel
    - ...
- agent/
    - agent.py  // Agent
    - ...
- env/
    - env.py  // Environment
    - ...
 - problem.py  // POMDP
\end{verbatim}
The recommendation is to separate code for domain, models, agent and environment, and have simple generic filenames. As in the above tree, files such as \texttt{state.py} or \texttt{transition\_model.py} are self-evident in their role. The \texttt{problem.py} file is where the specific implementation of the \texttt{POMDP} class is defined, and where the logic of control flow is implemented. Refer to the Multi-Object Search example in the documentation for more detail\footnote{\url{https://h2r.github.io/pomdp-py/html/examples.mos.html}}.\\

\noindent\textbf{Object-Oriented POMDPs.}
OO-POMDP \cite{wandzel2019multi} is a particular kind of factored POMDP that factors the state and observation spaces into a set of $n$ objects. For instance, $\Pr(s'|s,a)=\prod_i\Pr(s'_i|s,a)$, $i\in\{1,\cdots,n\}$. The belief space is also factored, which allows the belief space to grow linearly instead of exponentially as the number of objects increases. Each object is of a certain class and has a set of attributes. The values of these attributes constitute the state of an object. In \texttt{pomdp\_py}, we provide interfaces to implement OO-POMDPs, which serves as an example of extending the basic POMDP framework to create another class of model. These interfaces include \texttt{OOState}, \texttt{OOBelief}, \texttt{OOTransitionModel}, etc.\\

\noindent\textbf{Integration with ROS.}
ROS \cite{quigley2009ros} is an open-source system that builds a network connecting computing stations and robots, where \emph{nodes} interact with one another through publishing messages or making service requests. It is typical to separate nodes that manage resources and controls the robot from nodes that runs sophisticated algorithms. This is the case of \texttt{pomdp\_py} as well. The POMDP-related computations can be done on a node that implements the POMDP control flow (see the six steps above). Inside this node, when an action is selected by the \texttt{Planner} (step 1), the node can publish a message to the nodes for robot control so that the robot can execute the action (step 2). The environment state automatically updates in the real world as a result of that action (step 3), and the node receives the sensor measurements or other forms of observations through subscribed topics (step 4), and performs belief update (step 5). This process is repeated until termination condition is met (step 6). ROS provides a package \texttt{rospy} which eases the integration of the POMDP control flow with the robot system.\\

% \noindent\textbf{3D Object Search with Torso-Actuated Robot.} We developed a novel approach to model and solve an OO-POMDP for the task of multi-object search in 3D. Using \texttt{pomdp\_py}, we implemented this approach in a simulated environment, and on Kinova MOVO, a torso-actuated mobile manpulator platform controlled with ROS.
% Figure~\ref{fig:firstfig} shows a sequence of actions that lead to object detection. In this demonstration, each planning step has a time budget of 3 seconds.

%\paragraph{Performance} % Do a test of number of simulations in the PO-UCT algorithm for Tiger and RockSample. Compare between Python, Python + Cython, Python 3.5, Python 3.7.
\section{Summary}
We present a POMDP library, named \texttt{pomdp\_py}, that brings together accessibility to programmers through Python as well as performance through Cython, with an intuitive design and straightforward integration with ROS. The programming model is designed to encourage organized development and code sharing within a community, and it has potential to support research besides POMDP planning, including reinforcement learning, transfer learning, and multi-agent systems.
% We call for support to create bridges between \texttt{pomdp\_py} and other libraries to make use of existing algorithm implementations.

% We believe \texttt{pomdp\_py} has potential to facilitate research besides POMDP planning, including reinforcement learning, transfer learning, and multi-agent systems. For example, mulitple \texttt{Agent} objects could be instantiated, and different \texttt{RewardModel} classes can be created to represent different tasks. Finally, we call for support to create bridges between \texttt{pomdp\_py} and other libraries to make use of existing algorithm implementations.

% Different \texttt{RewardModel} can be created
% This includes (PO)MDP for reinforcement learning where the agent learns a \texttt{PolicyModel} (model-free) or a \texttt{TransitionModel} and an \texttt{ObservationModel} (model-based). Multi-Agent POMDPs as well as task transfer learning are also natural extensions, where multiple \texttt{Agent} objects could be instantiated, and different \texttt{RewardModel} classes can be created to represent different tasks. We also call for support to create bridges between \texttt{pomdp\_py} and other libraries to make use of existing algorithm implementations.
% \bibliographystyle{aaai}
% \bibliography{references}
% \end{document}

\vskip 0.2in
% \renewcommand{\refname}{Proof of Cuteness}
% \renewcommand{\bibsection}{\huge REFERENCES}

%%%%%%%%%%%% REFERENCES with Custom Header %%%%%%%%%%%%%%
\thispagestyle{plain}
\addtocontents{toc}{\vspace{1\baselineskip}}  % This adds an empty line in TOC
\chapter*{{\normalfont \huge REFERENCES \hfill}}
\renewcommand{\bibsection}{}
\phantomsection
\addcontentsline{toc}{chapter}{References}
\fancyhead{} %clears header
\fancyhead[CO]{\hfill\nouppercase{\textsc{References}}} %puts chapter title on even page in lower-case italics
\fancyhead[CE]{\nouppercase{\textsc{References}\hfill}} %puts section title on
\bibliography{dissertation}
\bibliographystyle{apacite}

%%%%%%%%%%%% REVISIONS %%%%%%%%%%%%%%
\newpage
\clearpage
\thispagestyle{plain}
\chapter*{{\normalfont \huge REVISION NOTICE \hfill}}
\phantomsection
\addcontentsline{toc}{chapter}{Revision Notice}
\fancyhead{} %clears header
\fancyhead[CO]{\hfill\nouppercase{\textsc{Revision Notice}}} %puts chapter title on even page in lower-case italics
\fancyhead[CE]{\nouppercase{\textsc{Revision Notice}\hfill}} %puts section title on
The present document reflects the following revisions:
\begin{center}
  \begin{itemize}[itemsep=0.5pt,topsep=0pt]
  \item January 13th, 2023: submitted in full to the Brown University Graduate School.
  \item January 20th, 2023: revised the chapter on GenMOS (Chapter~\ref{ch:genmos}) following paper acceptance to ICRA 2023. In particular, added Figure~\ref{fig:genmos:demogrid}, and clarified contributions in Section~\ref{sec:genmos:sysov}. Added definitions of default value and initial value of an octree belief node in Definition~\ref{def:3dmos:default_val} and \ref{def:3dmos:initial_val} in Chapter~\ref{ch:3dmos}, then referenced in Chapter~\ref{ch:genmos} for clarity.
  \item January 22nd, 2023: completed the acknowledgements. Fixed writing bugs in the \texttt{pomdp\_py} appendix and added a figure based on \cite{pomdp-py-2020}.
  \item March 4th, 2023: updated Chapter \ref{ch:genmos} on GenMOS with additional simulation results (Greedy and Random baselines) and discussion from the camera-ready.
  \item May 3rd, 2023: fixed a typo of index $k$ (should be $0$, but was $1$) in Equation~\ref{eq:pomdp:pull_gamma} in Appendix \ref{appdx:pomdp:derivation} on the derivation of POMDP Bellman equation.
  \end{itemize}
\end{center}
\restoregeometry

\end{document}